\title{Predicting What You Already Know Helps:\\ Provable Self-Supervised Learning}
\author[$\thanks{Princeton University. Email: \url{jasonlee@princeton.edu}}$]{Jason D. Lee}
\author[$\thanks{Princeton University. Email: \url{qilei@princeton.edu} }$]{\ Qi Lei}
\author[$\thanks{Princeton University. Email: \url{nsaunshi@cs.princeton.edu} }$]{\ Nikunj Saunshi}
\author[$\thanks{University of Texas at Austin. Email:  \url{jzhuo@utexas.edu}}$]{\ Jiacheng Zhuo}
\affil[ ]{}
\date{\today}
\newcommand{\Rr}{\mathbb{R}}
\newcommand{\E}{\mathbb{E}}
\newcommand{\Eqref}[1]{Equation~(\ref{#1})}
\newcommand{\brho}{{\bar{\rho}}}
\newcommand{\linspan}{\text{sp}}
\newtheorem{theorem}{Theorem}[section]
\newtheorem{claim}[theorem]{Claim}
\newtheorem{assumption}{Assumption}[section]
\newtheorem{lemma}[theorem]{Lemma}
\newtheorem{corollary}[theorem]{Corollary}
\newtheorem{proposition}[theorem]{Proposition}
\newtheorem{definition}[theorem]{Definition}
\newtheorem{remark}{Remark}[section]
\newtheorem{example}{Example}[section]
\newcommand{\cov}{\mathrm{cov}}
\newcommand{\rank}{\mathrm{rank}}
\newcommand{\ER}{\mathrm{ER}}
\newcommand{\id}{\mathrm{id}}
\def\R{\mathbb{R}}
\def\cA{\mathcal{A}}
\def\cB{\mathcal{B}}
\def\cC{\mathcal{C}}
\def\cE{\mathcal{E}}
\def\cF{\mathcal{F}}
\def\cH{\mathcal{H}}
\def\cS{\mathcal{S}}
\def\cL{\mathcal{L}}
\def\cN{\mathcal{N}}
\def\cO{\mathcal{O}}
\def\cS{\mathcal{S}}
\def\cT{\mathcal{T}}
\def\cZ{\mathcal{Z}}
\def\cX{\mathcal{X}}
\def\cY{\mathcal{Y}}
\def\bP{\mathbf{P}}
\def\vw{\mathbf{w}}
\newcommand{\prob}{\mathbb{P}}
\newcommand{\apx}{\text{apx}}
\newtheorem{fact}{Fact}[section]
\def\approxcorrect{\cmark\kern-1.4ex\raisebox{.30ex}{$\xmark$}}
\newcommand{\idxn}[1][]{\ifthenelse{\equal{#1}{}}{\mathbb{INDQ}_n}{\mathbb{INDQ}_{#1}}}
\newcommand{\beq}{\begin{equation}}
\newcommand{\eeq}{\end{equation}}
\newcommand{\A}{{\mtx{A}}}
\newcommand{\B}{{{\mtx{B}}}}
\newcommand{\tCI}{\text{CI}}
\newcommand{\tHS}{\text{HS}}
\newcommand{\op}{{\text{op}}}
\newcommand{\bmu}{{\boldsymbol{{\mu}}}}
\newcommand{\bx}{{\bm{x}}}
\newcommand{\bX}{{\bm{X}}}
\newcommand{\bw}{{\bm{w}}}
\newcommand{\bL}{{\bm{L}}}
\newcommand{\bE}{{\bm{E}}}
\newcommand{\bW}{{\bm{W}}}
\newcommand{\bI}{{\bm{I}}}
\newcommand{\bM}{{\bm{M}}}
\newcommand{\bC}{{\bm{C}}}
\newcommand{\bV}{{\bm{V}}}
\newcommand{\bN}{{\bm{N}}}
\newcommand{\ba}{{\bm{a}}}
\newcommand{\bu}{{\bm{u}}}
\newcommand{\bv}{{\bm{v}}}
\newcommand{\bz}{{\bm{z}}}
\newcommand{\bZ}{{\bm{Z}}}
\newcommand{\be}{{\bm{e}}}
\newcommand{\by}{{\bm{y}}}
\newcommand{\bY}{{\bm{Y}}}
\newcommand{\bSigma}{{\bm{\Sigma} }}
\newcommand{\bb}{\bm{b}}
\newcommand{\bB}{\bm{B}}
\newcommand{\bA}{\bm{A}}
\newcommand{\bU}{\bm{U}}
\newcommand{\bS}{\bm{S}}
\definecolor{emmanuel}{RGB}{255,127,0}
\newcommand{\Var}{\textrm{Var}}
\newcommand{\mtx}[1]{\bm{#1}}
\newcommand{\X}{{\mtx{X}}}
\newcommand{\longrnote}[1]{ &  &  \\   &  &  &  &  & \notag \text{\footnotesize\llap{#1}}}
\newcommand{\Trace}{{\mathrm{Tr}}}
\DeclareMathOperator*{\argmax}{arg\,max}
\DeclareMathOperator*{\argmin}{arg\,min}
\newcommand{\pre}{\text{pre}}
\newcommand{\HS}{\text{HS}}
\newcommand{\down}{\text{down}}
\newcommand{\jnote}[1]{\textcolor{purple}{[J: #1]}}
\setlist[itemize]{leftmargin=*}
\setlist[enumerate]{leftmargin=*}
\begin{document}

\maketitle


\begin{abstract}

\iftrue
Self-supervised representation learning solves auxiliary prediction tasks (known as pretext tasks) without requiring labeled data to learn useful semantic representations. These pretext tasks are created solely using the input features, such as predicting a missing image patch, recovering the color channels of an image from context, or predicting missing words in text; yet predicting this \textit{known} information helps in learning representations effective for downstream prediction tasks. \\
We posit a mechanism exploiting the statistical connections between certain {\em reconstruction-based} pretext tasks that guarantee to learn a good representation. 
Formally, we quantify how the approximate independence between the components of the pretext task (conditional on the label and latent variables) allows us to learn representations that can solve the downstream task by just training a linear layer on top of the learned representation. We prove the linear layer yields small approximation error even for complex ground truth function class and will drastically reduce labeled sample complexity. Next, we show a simple modification of our method leads to nonlinear CCA, analogous to the popular SimSiam algorithm, and show similar guarantees for nonlinear CCA.

\fi 



\end{abstract}


\section{Introduction} 
\label{sec:intro}
Self-supervised learning revitalizes machine learning models in computer vision, NLP, and control problems (see reference therein \citep{jing2020self,kolesnikov2019revisiting,devlin2018bert,wang2015unsupervised,jang2018grasp2vec}). Training a model with auxiliary tasks based only on input features reduces the extensive costs of data collection and semantic annotations for downstream tasks. 
It is also known to improve the adversarial robustness of models~\citep{hendrycks2019using,carmon2019unlabeled,chen2020adversarial}.
Self-supervised learning creates pseudo labels solely based on input features, and solves auxiliary prediction tasks (or pretext tasks) in a supervised manner.  
However, the underlying principles of self-supervised learning are mysterious since it is a-priori unclear why predicting what we already know should help. We thus raise the following question:
\begin{center} {\em What conceptual connection between pretext and downstream tasks ensures good representations? What is a good way to quantify this?}
\end{center}
As a thought experiment, consider a simple downstream task of classifying desert, forest, and sea images.
A meaningful pretext task is to predict the background color of images (known as image colorization \citep{zhang2016colorful}).
Denote $X_1,X_2,Y$ to be the input image, color channel, and the downstream label respectively.
Given knowledge of the label $Y$, one can possibly predict the background $X_2$ without knowing much about $X_1$.
In other words, $X_2$ is approximately independent of $X_1$ conditional on the label $Y$. 
Consider another task of inpainting \citep{pathak2016context} the front of a building ($X_2$) from the rest ($X_1$).
While knowing the label ``building'' ($Y$) is not sufficient for successful inpainting, adding additional latent variables $Z$ such as architectural style, location, window positions, etc. will ensure that variation in $X_2$ given $Y, Z$ is small.
We can mathematically interpret this as $X_1$ being approximate conditionally independent of $X_2$ given $Y,Z$. 

The main insight that we exploit in this work is that with approximate conditional independence (as in the above examples), a method that predicts $X_2$ from $X_1$ will inadvertently implicitly encode and learn to predict $Y$ (and $Z$) from $X_1$ as an intermediate step, and then predict $X_2$ from $Y$\footnote{This is formally demonstrated in the proof sketch of Lemma \ref{lemma:discrete_case_CI}.}. 
Building upon this insight, we make the following contributions.


\paragraph{Contributions.}  
The goal of this paper, as in statistical learning theory, is to investigate the {\em statistical connections} between the random variables of input features (in this paper $(X_1,X_2)$) and downstream labels $Y$, and show how specific connections can guarantee a successful learning procedure.
For self-supervised learning (SSL), success is measured using the following 2 notions, 1) expressivity, i.e. does the learned representation from SSL have the ability to express the ground truth prediction function for labels $Y$, and 2) sample complexity, i.e. can it do so with way fewer labeled samples than what would be required without SSL. 

In this work, we establish theoretical analysis for self-supervised learning fulfilling these goals.

\begin{itemize} 
\item We provide generalization guarantees for a class of self-supervised algorithms under a statistical assumption of  {\em approximate conditional independence (ACI)}. Specifically, we show
\begin{itemize}
	\item {\em small representation error:} the learned representation can almost linearly separate downstream targets, and
	\item {\em small estimation error:} learning the predictor for downstream tasks only require very few number of samples. 
\end{itemize} 

\item Our analysis focused on {\em reconstruction-based} SSL methods (\cite{zhang2016colorful, pathak2016context,devlin2018bert,grill2020bootstrap}) is presented in sections \ref{sec:CI} and \ref{sec:beyondCI}.
In Section~\ref{sec:topic_model}, we instantiate the bound from the analysis in the topic modeling framework, a standard generative model for text \citep{papadimitriou2000latent,hofmann1999probabilistic}, where $X_1$ and $X_2$ are chosen to be two halves of a text document.
Although data can be sampled from a potentially infinite mixtures of $k$ underlying topics, an appropriate ACI assumption can be shown that leads to a downstream sample complexity of $\cO(k)$.

\item We also build the connection and extend the analysis to a variant of the SimSiam \citep{chen2021exploring} method, a non-linear canonical correlation analysis (CCA) method for self-supervised learning in Section \ref{sec:learn_joint_distribution}.
Further connecting this to alternating conditional expectation (ACE) algorithm \citep{breiman1985estimating}, we show how this problem is related to decomposing the conditional distribution $X_2 \mid X_1$.

\item We quantify our notion of ACI by a certain partial covariance matrix (Definition~\ref{def-approx-CI}) and our risk bound scales linear with it. We verify this and other aspects of our main generalization bound (Theorem \ref{thm:main_result_approximate_CI}) using simulation experiments in Section~\ref{sec:experiments}.
We also find that pretext task experimentally helps when CI is approximately enforced in text domain. We further demonstrate on a real-world image dataset that a pretext task-based linear model performs at least as well as many baselines.
\end{itemize}

\subsection{Related work}
\label{related-work}

\paragraph{Self-supervised learning (SSL) methods in practice:}  There has been a flurry of self-supervised methods lately. 
One class of methods reconstruct images from corrupted or incomplete versions of it, like denoising auto-encoders \citep{vincent2008extracting}, image inpainting \citep{pathak2016context}, and split-brain autoencoder \citep{zhang2017split}.
Pretext tasks are also created using visual common sense, including predicting rotation angle \citep{gidaris2018unsupervised}, relative patch position \citep{doersch2015unsupervised}, recovering color channels \citep{zhang2016colorful}, solving jigsaw puzzle games \citep{noroozi2016unsupervised}, and discriminating images created from distortion \citep{dosovitskiy2015discriminative}. We refer to the above procedures as reconstruction-based SSL. 
Another popular paradigm is contrastive learning \citep{chen2020simple,chen2020big}. The idea is to learn representations that bring similar data points closer while pushing randomly selected points further away \citep{wang2015unsupervised,logeswaran2018efficient,arora2019theoretical} or to maximize a contrastive-based mutual information lower bound between different views \citep{hjelm2018learning,oord2018representation,tian2019contrastive}.
A popular approach for text domain is based on language modeling where models like BERT and GPT create auxiliary tasks for next word predictions \citep{devlin2018bert,radford2018improving}. The natural ordering or topology of data is also exploited in video-based \citep{wei2018learning,misra2016shuffle,fernando2017self}, graph-based \citep{yang2020self,hu2019strategies} or map-based \citep{zhang2019path} SSL. For instance, the pretext task is to determine the correct temporal order for video frames as in \citep{misra2016shuffle}.

\paragraph{Theory for SSL:} 
While we theoretically study reconstruction-based SSL, prior work has different flavors of theoretical results for different kinds of SSL methods.
Most relevant are the guarantees for representation learning using SSL methods on downstream tasks that just learn a linear classifier on top of the learned representations.
\cite{arora2019theoretical} shows guarantees for representations from a contrastive learning objective: $L^{cont}_1(\psi) = \mathbb{E}_{(X_1,X_2), X'_2}[\log(1 + e^{-\psi(X_1)^\top \psi(X_2) + \psi(X_1)^\top \psi(X'_2)})]$.
Under a class conditional independence assumption, i.e. $X_1 \perp X_2 \mid Y$, they show that representation $\psi$ that does well on contrastive objective, i.e. $L^{cont}_1(\psi)\le\epsilon$, will have $\mathcal{O}(\epsilon)$ linear classification loss on the average binary task involving pairs of classes $(y_1,y_2)$.
However, their analysis cannot handle the general case of approximate conditional independence.
Recently, Tosh {\textit {et al.}} \citep{tosh2020contrastive} show that contrastive learning representations can {\em linearly} recover continuous functions of the underlying topic posterior under a topic modeling assumption for text.
While their assumption bears similarity to ours, the assumption of independent sampling of words is strong and does not generalizable to other domains like images.
Most relevant is a concurrent work \citep{tosh2020contrastive_1} that shows guarantees for a contrastive learning objective that looks like $L^{cont}_{2}(\psi,\eta) = \mathbb{E}_{(X_1,X_2), X'_2}\left[\log(1 + e^{-\psi(X_1)^\top \eta(X_2)}) + \log(1 + e^{\psi(X_1)^\top \eta(X'_2)})\right]$, with a multi-view redundancy assumptions that is very similar to our ACI assumption.
We take a closer look at their assumption in Section~\ref{apx:tosh}.
All the above objectives are different from the simple reconstruction-based objective we consider: $L(\psi) = \mathbb{E}_{(X_1,X_2)}\left[\|X_2 - \psi(X_1)\|^2\right]$.
Saunshi {\em et al.} \cite{saunshi2020mathematical} show guarantees for representations learned using language modeling on sentence classification tasks.
Some more recent work \citep{tsai2020demystifying,mitrovic2020representation,tian2020understanding,wang2020understanding} provide theoretical understanding on SSL respectively based on causality, mutual information, gradient-descent dynamics, and alignment/uniformity of representations, without explicit risk bounds for downstream tasks.
There is a mutual information maximization view of contrastive learning, but \cite{tschannen2019mutual} points out issues with it.
Previous attempts to explain negative sampling \citep{mikolov2013distributed} based methods use the theory of noise contrastive estimation \citep{gutmann2010noise,ma2018noise} to show asymptotic guarantees, without explicit connections to downstream tasks.
CI is also used in sufficient dimension reduction \citep{fukumizu2009kernel,fukumizu2004dimensionality}, while CI and redundancy assumptions on multiple views \citep{kakade2007multi,ando2007two} are used to analyze a canonical-correlation based dimension reduction algorithm and also for self-supervised learning algorithms like co-training \citep{blum1998combining}. Finally, \cite{alain2014regularized,vincent2011connection} provide a theoretical analysis for denoising auto-encoder.

%

%

\subsection{Overview of results:}
Section~\ref{sec:prelim} introduces notation, setup, and the self-supervised learning procedure considered in this work.
In Section~\ref{sec:CI}, we analyze downstream sample complexity under exact CI and unlimited labeled data to highlight the key ideas. 
Section~\ref{sec:beyondCI} presents our main result with relaxed conditions: under ACI with latent variables, and assuming finite samples in both pretext and downstream tasks, for various function classes, and both regression and classification tasks. Section~\ref{sec:topic_model} demonstrates our results with an example in the setting of topic modeling.  
 In Section~\ref{sec:learn_joint_distribution} we extend our results to self-supervised tasks that enforce two views of data to have similar representations, or namely SimSiam~\cite{chen2021exploring}. 
  Experiments verifying our theoretical findings are in Section~\ref{sec:experiments}.
 Proofs of most results are in the Appendix.


\section{Preliminary}\label{sec:prelim}
\subsection{Notation}
We use lower case symbols ($x$) to denote scalar quantities, bold lower case symbols ($\bx$) for vector values, capital letters ($X$) for random variables, and capital and bold letters $\bX$ for matrices. $P_X$ denotes the probability law of random variable $X$, and the space of square-integrable functions with probability $P$ is denoted by $L^2(P)$.
We use standard $\cO$ notation to hide universal factors and $\tilde \cO$ to hide log factors. $\|\cdot\|$ stands for $\ell_2$-norm for vectors or Frobenius norm for matrices. 

\textbf{Linear conditional expectation.} $\E^L[Y|X]$ denotes the prediction of $Y$ with linear regression:
\begin{align*}
\E^L[Y|X=\bx] := \bW^*\bx + \bb^*, \ \   & \text{ where }\bW^*,\bb^* :=  \arg\min_{\bW,\bb} \E [\|Y-\bW X-\bb\|^2].
\end{align*}
In other words, $\E^L[Y|X]$ denotes the best linear predictor of $Y$ given $X$. We also note that $\E[Y|X]\equiv \argmin_f\E[\|Y-f(X)\|^2]$ is the best predictor of $Y$ given $X$.  

\textbf{(Partial) covariance matrix.} For random variables $X,Y$, we denote $\bSigma_{XY}$ to be covariance matrix of $X$ and $Y$. For simplicity in most cases, we assume $\E[X]=0$ and $\E[Y]=0$; thus we do not distinguish $\E[XY]$ and $\bSigma_{XY}$. The partial covariance matrix between $X$ and $Y$ given $Z$ is:
\begin{align}
\bSigma_{XY| Z}  := & \cov \{X-\E^L[X|Z], Y-\E^L[Y|Z] \} \equiv  \bSigma_{XY}-\bSigma_{XZ}\bSigma_{ZZ}^{-1}\bSigma_{ZY}\label{eqn:partial_cov},
\end{align}
which captures the correlation between $X$ and $Y$ setting aside the effect of $Z$. 

\textbf{Sub-gaussian random vectors.}
	$X\in \R^d$
	is $\rho^2$-sub-gaussian if for every fixed unit vector $\bv\in\R^d$, the variable $\bv^\top X$ is $\rho^2$-sub-gaussian, i.e., $\E[e^{s\cdot\bv^\top(X-\E[X])}]\le e^{s^2\rho^2/2}$ ($\forall s\in\R$).

\subsection{Setup and methodology}\label{subsec:setup}
We denote by $X_1$ the input variable, $X_2$ the target random variable for the pretext task, and $Y$ the label for the downstream task, with $X_1\in \cX_1 \subset \R^{d_1}, X_2\in \cX_2\subset \R^{d_2}$ and $Y\in \cY \subset \R^k$.
If $\cY$ is finite with $|\cY|=k$, we assume $\cY\subset\R^k$ is the one-hot encoding of the labels.
$P_{X_1X_2Y}$ denotes the joint distribution over $\cX_1\times \cX_2\times \cY$. $P_{X_1Y},P_{X_1}$ denote the corresponding marginal distributions.  
Our proposed self-supervised learning aims to fulfill the following two steps:

\emph{Step 1 (pretext task): } Learn a representation $\psi(\bx_1)$ close to $\psi^*\coloneqq \argmin_{g\in \cH}\E\|X_2-g(X_1)\|^2, $ where $\cH$ can vary for different settings that we will specify and discuss later.

\emph{Step 2 (downstream task): } Perform linear regression on $Y$ with $\psi(X_1)$, i.e.
	$f(\bx_1) :=  (\bW^*)^\top\psi(\bx_1), $ where $\bW^*\leftarrow \argmin_{\bW}\E_{X_1,Y} [\|Y-\bW^\top\psi(X_1)\|^2] $. Namely we learn $f(\cdot) = \E^L[Y|\psi(\cdot)]$. 

We study this simplified version in the main text, where in practice, the SSL procedure may utilize an encoder-decoder structure, while the downstream task uses both $X_1$ and $X_2$ to predict $Y$. We incorporate these extensions in Appendix \ref{sec:auto-encoder} and \ref{sec:y_given_x1_x2}. 
	



With finite samples, performance of a learned representation $\psi$ on the downstream task depends on the following quantities that capture expressivity and sample complexity respectively:

\textbf{Approximation error} indicates whether $Y$ is {\em linearly separable} by the learned representation $\psi$, thus measuring expressivity. We measure this by comparing $\bW\psi(X_1)$ to the optimal predictor $f^*:=\E[Y|X_1=\bx_1]$. 
Denote $e_{\apx}(\psi) = \min_{\bW}\E[\|f^*(X_1) - \bW\psi(X_1)  \|^2]$. 
This gives a measure of how well $\psi$ can linearly predict $Y$ when given infinite samples for the task.

\textbf{Estimation error} measure sample complexity of $\psi$ on the downstream task and assume access to $n_2$ i.i.d. samples $(\bx_1^{(1)}, \by^{(1)}), \cdots,(\bx_1^{(n_2)}, \by^{(n_2)}) $ drawn from $P_{X_1Y}$. 
We express the $n_2$ samples collectively as $\bX_1^{\down} \in \R^{n_2\times d_1}$, $\bY\in \R^{n_2\times k}$ and overload notation to say $\psi(\bX_1^{\down})=\left[\psi(\bx_1^{(1)})|\psi(\bx_1^{(2)})\cdots |\psi(\bx_1^{(n_2)})\right]^\top \in \R^{n_2\times d_2}$.  
We perform linear regression on the learned representation $\psi$ and measure excess risk, that incorporates both approximation and estimation errors.
\begin{align*}
	\hat \bW \leftarrow \argmin_{\bW} \frac{1}{2n_2}\|\bY-\psi(\bX_1)\bW\|_F^2; ~~\ER_{\psi}(\hat \bW) \coloneqq \frac{1}{2}\E \|f^*(X_1) - \hat\bW^\top \psi(X_1) \|_2^2. \label{eqn:W_hat}
\end{align*}



\section{Guaranteed recovery with conditional independence}\label{sec:CI}



In this section, we focus on the case where the input $X_1$ and pretext target $X_2$ are conditionally independent (CI) given the downstream label $Y$. While this is a strong assumption that is rarely satisfied in practice, it helps us understand the role of CI with clean results and builds up to our main results with ACI with latent variables in Section \ref{sec:beyondCI}.
As a warm-up, we show how CI helps when $(X_1,X_2,Y)$ are jointly Gaussian to give us a flavor for the results to follow in Appendix \ref{sec:joint_gaussian}.
We then analyze it for general random variables under two settings: (a) when the function class used for $\psi$ is universal, (b) when $\psi$ is restricted to be a linear function of given features.
For now we assume access to a large amount of unlabeled data so as to learn the optimal $\psi^*$ perfectly and this will be relaxed later in Section~\ref{sec:beyondCI}.
The general recipe for the results is as follows:

1. Find a closed-form expression for the optimal solution $\psi^*$ for the pretext task.\\
2. Use conditional independence to show that optimal $f^*$ is linear in $\psi^*$, i.e., $e_{\apx}(\psi^*)$ is small.\\
3. Exploit the low rank structure of $\psi^*$ to show small estimation error on downstream tasks.


\paragraph{Data assumption.} Suppose $Y=f^*(X_1)+N$, where $f^*=\E[Y|X_1]$ and $\E[N]=0$. We assume $N$ is $\sigma^2$-subgaussian. 
For simplicity, we assume non-degeneracy: $\bSigma_{X_iX_i}$, $\bSigma_{YY}$ are full rank. 

\begin{assumption}
	\label{assump:independence}
	Let $X_1\in \R^{d_1},X_2\in \R^{d_2}$ be random variables from some unknown distribution. Let label $Y\in \cY$ be a discrete random variable with $k=|\cY|<d_2$. We assume conditional independence: $X_1\bot X_2|Y$. 
\end{assumption}
Here $Y$ can be interpreted as the multi-class labels where $k$ is the number of classes. For regression problems, one can think about $Y$ as the discretized values of continuous labels. We do not specify the dimension for $Y$ since $Y$ could be arbitrarily encoded but the results only depend on $k$ and the variance of $Y$ (conditional on the input $X_1$).

\subsection{Universal function class.}
Suppose we learn the optimal $\psi^*$ among all measurable functions
The optimal function $\psi^*$ in this case is naturally given by conditional expectation: $\psi^*(\bx_1) = \E[X_2|X_1=\bx_1]$.
We show that CI implies that $\psi^*$ is good for downstream tasks, which is not apriori clear.
\begin{lemma}[Approximation error]
	\label{lemma:discrete_case_CI}
	If random variables $X_1,X_2,Y$ satisfy Assumption \ref{assump:independence}, and $\bA \in \R^{\cY\times d_2} $ with $\bA_{y,:} := \E[X_2|Y=\by]$ has rank $k=|\cY|$. 
	Then $f^*\equiv \bW^* \psi^*$, i.e., $e_\apx(\psi^*) = 0$. 
\end{lemma}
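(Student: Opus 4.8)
<br>

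The plan is to exploit conditional independence to write the optimal pretext solution $\psi^*$ as a matrix product that factors through the label distribution, and then show $f^*$ lives in the row space of that factorization.

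\textbf{Step 1: closed form for $\psi^*$.} Since $\psi^*(\bx_1) = \E[X_2 \mid X_1 = \bx_1]$, I would condition on $Y$ and use the tower rule together with Assumption~\ref{assump:independence}. Concretely,
\begin{align*}
\psi^*(\bx_1) = \E[X_2 \mid X_1 = \bx_1] = \sum_{\by \in \cY} \Pr[Y = \by \mid X_1 = \bx_1]\, \E[X_2 \mid Y = \by, X_1 = \bx_1] = \sum_{\by \in \cY} \Pr[Y = \by \mid X_1 = \bx_1]\, \E[X_2 \mid Y = \by],
\end{align*}
where the last equality is exactly where $X_1 \bot X_2 \mid Y$ is used. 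Writing $\bp(\bx_1) \in \R^{\cY}$ for the vector with entries $\Pr[Y = \by \mid X_1 = \bx_1]$, this says $\psi^*(\bx_1) = \bA^\top \bp(\bx_1)$, i.e. $\psi^*$ is a fixed linear image of the posterior vector $\bp(\cdot)$.

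\textbf{Step 2: relate $f^*$ to $\bp(\cdot)$.} Because $Y \in \R^k$ is the one-hot encoding, $f^*(\bx_1) = \E[Y \mid X_1 = \bx_1] = \bp(\bx_1)$ directly (each coordinate of the one-hot expectation is the corresponding posterior probability). So $f^*(\bx_1) = \bp(\bx_1)$ and $\psi^*(\bx_1) = \bA^\top \bp(\bx_1) = \bA^\top f^*(\bx_1)$.

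\textbf{Step 3: invert.} Now I use that $\bA \in \R^{\cY \times d_2}$ has rank $k = |\cY|$, so it has a left inverse: $\bA \bA^\top \in \R^{k\times k}$ is invertible, and $(\bA\bA^\top)^{-1}\bA\, \psi^*(\bx_1) = f^*(\bx_1)$. Setting $\bW^* = \bA^\dagger$ (or more precisely $(\bW^*)^\top = (\bA \bA^\top)^{-1}\bA$, a $k \times d_2$ matrix), we get $f^*(\bx_1) \equiv (\bW^*)^\top \psi^*(\bx_1)$ pointwise, hence $e_\apx(\psi^*) = \min_{\bW}\E\|f^*(X_1) - \bW\psi^*(X_1)\|^2 = 0$.

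The argument is short; the only real subtlety — the "main obstacle" such as it is — is being careful about the measure-theoretic validity of the conditional-expectation manipulation in Step 1 (conditioning on the possibly non-discrete $X_1$ and the discrete $Y$ simultaneously, and justifying $\E[X_2\mid Y,X_1] = \E[X_2\mid Y]$ from the stated conditional independence), and making sure the rank condition on $\bA$ is used in exactly the right place to produce a genuine left inverse. Everything after that is linear algebra.
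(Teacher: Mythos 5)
Your proof is correct and follows essentially the same route as the paper's: use the tower rule plus conditional independence to write $\psi^*(\bx_1) = \bA^\top \bp(\bx_1)$ with $\bp(\bx_1)_y = P(Y=y\mid X_1=\bx_1)$, then left-invert the full-rank $\bA$. The only cosmetic difference is that you exploit the one-hot encoding to identify $f^* = \bp$ directly, whereas the paper routes through an intermediate matrix $\bY$ whose columns enumerate $\cY$; under the one-hot convention that matrix is the identity, so the two arguments coincide.
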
	
This tells us that although $f^*$ could be nonlinear in $\bx_1$, it is guaranteed to be linear in $\psi^*(\bx_1)$.
\begin{proof}[Proof Sketch of Lemma \ref{lemma:discrete_case_CI}]
Lemma is proved by law of total expectation:
\begin{align}
\psi^*(\cdot):= \E[X_2|X_1 ] = & \E[\E[X_2|X_1,Y]|X_1] = \E[\E[X_2|Y]|X_1]\tag{uses CI}\\
\notag 
= & \sum_{y}  P(Y=y|X_1)\E[X_2|Y=y] =:  f(X_1)^\top \bA,
\end{align}
where $f(x_1)_y = P(Y=y|X_1=x_1)$, and $\bA \in \R^{\cY\times d_2} $ satisfies $\bA_{y,:} = \E[X_2|Y=y]$. 
One could see that through predicting $X_2$, due to the CI assumption, $\psi^*$ has implicitly encoded the information of $Y|X_1$. Finally due to the fact that matrix $\bA$ is full rank, we get that $f^*$ is linear in $\psi^*$ as well. 
\end{proof} 
We see that besides CI, another important property is $\E[X_2|Y]$ being rank $k$. This means $X_2$ is correlated with every instance of $Y$, and thus captures information of every prediction class. This is naturally a necessary assumption for $X_2$ to be a reasonable pretext task for predicting $Y$.
Note that this assumption does not trivialize the problem and that even though $\psi$ is designed to predict $X_2$, it can still be a better representation than $X_2$ for downstream tasks. 
Note that $Y$ does not have to be linear in $X_2$ but is proven to be linear in $\psi$, since $\psi$ learns to ignore some information in $X_2$ that is irrelevant to $Y$.
We provide this simple example for better understanding:
\begin{example}
	\label{example:mixture_gaussian} 
	Let $Y\in \{-1,1\}$ be binary labels, and $X_1,X_2$ be $2-$mixture Gaussian random variables with $X_1\sim \cN(Y\bmu_1, \mathbf{I}), X_2\sim \cN(Y\bmu_2,\mathbf{I})$.
In this example, $X_1\bot X_2|Y$. Although $\E[Y|X_2]$ and $\E[Y|X_1]$ are not linear, $\E[Y|\psi]$ is linear:
$\psi(\bx_1)=P(Y=1|X_1=\bx_1) \bmu_{2} - P(Y=-1|X_1=\bx_1) \bmu_{2}$ and $f^*(\bx_1)=P(Y=1|X_1=\bx_1)-P(Y=-1|X_1=\bx_1) \equiv \bmu_2^T\psi(\bx_1) /\|\bmu_2\|^2$.	
\end{example}

%

Given that $\psi^*$ is good for downstream, we now care about the sample complexity.
We will need to assume that the representation has some nice concentration properties.
We make an assumption about the whitened data $\psi^*(X_1)$ to ignore scaling factors.
\begin{assumption}
	\label{assumption:subgaussian_psi}
	We assume the whitened feature variable $U:=\bSigma_{\psi}^{-1/2}\psi(X_1)$ is a $\rho^2$-subgaussian random variable, where $\bSigma_{\psi} = \E[\psi(X_1)\psi(X_1)^\top]$. 
\end{assumption}
We note that all bounded random variables satisfy sub-gaussian property. 

\begin{theorem}[General conditional independence]
	\label{thm:CI_nonlinear_sample_complexity}
	Fix a failure probability $\delta\in (0,1)$, under the same assumption as Lemma \ref{lemma:discrete_case_CI} and Assumption \ref{assumption:subgaussian_psi} for $\psi^*$, if additionally $n_2\gg \rho^4(k+\log(1/\delta))$, then the excess risk of the learned predictor $\bx_1\rightarrow \hat\bW^\top \psi^*(\bx_1)$ on the downstream task satsifies
\begin{center}
$\ER_{\psi^*}[\hat \bW] \leq \tilde \cO\left(\frac{k}{n_2}\sigma^2\right)$\footnote{We will use $\tilde O$ to hide log factor $\log(k/\delta)$ or $\log(d_2/\delta)$.}
\end{center}
\end{theorem}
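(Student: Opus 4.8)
The plan is to recognize that Lemma~\ref{lemma:discrete_case_CI} has already done the ``expressivity'' work, and that what remains is a textbook random-design linear-regression bound, with the one twist that the regression effectively lives in $k$ dimensions rather than $d_2$. Lemma~\ref{lemma:discrete_case_CI} gives $f^*=\bW^{*\top}\psi^*$ together with the factorization $\psi^*(X_1)=\bA^\top f(X_1)$, where $f(X_1)$ is the posterior probability vector $(P(Y=y\mid X_1))_y\in\R^{\cY}$ and $\rank(\bA)=k$. Consequently $\psi^*(X_1)$ takes values in a fixed subspace $V\subseteq\R^{d_2}$ of dimension at most $k$, and $\bSigma_\psi:=\E[\psi^*(X_1)\psi^*(X_1)^\top]$ has rank at most $k$; both the min-norm OLS solution $\hat\bW$ and the target $\bW^*$ can be taken supported on $V$, so the population risk only ever sees the $V$-components, and nothing about $d_2$ enters.

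First I would whiten. Set $U:=\bSigma_\psi^{-1/2}\psi^*(X_1)$ (Moore--Penrose inverse), so that $\E[UU^\top]$ is the identity on $V$ and, by Assumption~\ref{assumption:subgaussian_psi}, $U$ is $\rho^2$-subgaussian. In these coordinates $f^*(X_1)=\Theta^{*\top}U$ with $\Theta^*=\bSigma_\psi^{1/2}\bW^*$, the least-squares estimator becomes $\hat\Theta=(\bU^\top\bU)^\dagger\bU^\top\bY$, and --- because $\E[UU^\top]$ is the identity on $V$ --- the excess risk is \emph{exactly} $\ER_{\psi^*}(\hat\bW)=\tfrac{1}{2}\|\hat\Theta-\Theta^*\|_F^2$. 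Writing $\bY=\bU\Theta^*+\bN$ with noise rows $N^{(i)}=\by^{(i)}-f^*(\bx_1^{(i)})$, which are mean-zero conditional on $\bX_1^{\down}$, independent across $i$, and have conditional covariance of trace $\lesssim\sigma^2$ (for one-hot $Y$ this is automatic, with the trace $\le 1$), we obtain $\hat\Theta-\Theta^*=(\bU^\top\bU)^\dagger\bU^\top\bN$, so everything reduces to bounding $\|(\bU^\top\bU)^\dagger\bU^\top\bN\|_F$.

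This needs two standard concentration ingredients. (i) \emph{Design concentration:} since $U$ is $\rho^2$-subgaussian with whitened second moment, a matrix-Bernstein / Vershynin-type bound gives $\tfrac{1}{2}I\preceq\tfrac{1}{n_2}\bU^\top\bU\preceq\tfrac{3}{2}I$ on $V$ with probability $\ge1-\delta/2$ as soon as $n_2\gtrsim\rho^4(k+\log(1/\delta))$ --- which is exactly the sample-size hypothesis --- so that $\|(\bU^\top\bU)^\dagger\|\le 2/n_2$ and $\tr(\bU^\top\bU)\le\tfrac{3}{2}n_2 k$ on this event. (ii) \emph{Cross-term concentration:} conditioned on $\bX_1^{\down}$, $\bU^\top\bN=\sum_i U^{(i)}(N^{(i)})^\top$ is a sum of independent mean-zero terms with $\E[\|\bU^\top\bN\|_F^2\mid\bX_1^{\down}]=\sum_i\|U^{(i)}\|^2\,\tr(\bSigma_{NN\mid\bx_1^{(i)}})\lesssim\sigma^2\,\tr(\bU^\top\bU)\lesssim\sigma^2 n_2 k$, and a Hanson--Wright / Bernstein tail bound for this quadratic form in the (conditionally) subgaussian noise upgrades the in-expectation estimate to $\|\bU^\top\bN\|_F^2\lesssim\sigma^2 n_2(k+\log(1/\delta))$ with probability $\ge1-\delta/2$. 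On the intersection event, $\|\hat\Theta-\Theta^*\|_F^2\le(2/n_2)^2\cdot\sigma^2 n_2(k+\log(1/\delta))=\tilde\cO(\sigma^2 k/n_2)$, which is the claim after the translation $\ER_{\psi^*}(\hat\bW)=\tfrac{1}{2}\|\hat\Theta-\Theta^*\|_F^2$ and a union bound. (Alternatively one can invoke an off-the-shelf random-design least-squares excess-risk bound, e.g.\ Hsu--Kakade--Zhang, in the $k$-dimensional whitened problem.)

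The step I expect to be the main obstacle is getting the cross-term $\bU^\top\bN$ to concentrate at rate $\sigma^2(k+\log(1/\delta))/n_2$ with an \emph{additive} $\log(1/\delta)$ while the noise is only controlled conditionally on $X_1$: this forces one to condition on the design and apply a Hanson--Wright-type inequality to a quadratic form whose matrix $\bU(\bU^\top\bU)^{-2}\bU^\top$ is itself random (handled by first intersecting with the design event), and to track that $\bU$ and $\bN$ are dependent only through $\bX_1^{\down}$. A secondary, bookkeeping obstacle is the rank-deficiency of $\bSigma_\psi$: one must consistently restrict all pseudo-inverses and norms to the $\le k$-dimensional subspace $V$ (or argue the sample design a.s.\ spans the same subspace w.h.p.) so that the null-space components of $\hat\bW$ provably never enter $\ER_{\psi^*}$.
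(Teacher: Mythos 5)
Your argument is correct and follows essentially the same route as the paper's proof: invoke Lemma~\ref{lemma:discrete_case_CI} for zero approximation error, exploit that $\psi^*(X_1)=\bA^\top f(X_1)$ confines the regression to a $k$-dimensional subspace, concentrate the design via the $\rho^2$-subgaussian assumption (Claim~\ref{claim:concentration_subgaussian_covariance}), and concentrate the noise--design interaction. The only differences are cosmetic: you whiten upfront and analyze the closed-form pseudo-inverse $\hat\Theta-\Theta^*=(\bU^\top\bU)^\dagger\bU^\top\bN$ via submultiplicativity and Hanson--Wright, whereas the paper stays in unwhitened coordinates, gets the exact identity $\psi(\bX_1)(\hat\bW-\bW^*)=P_{\psi(\bX_1)}\bN$ through the basic inequality, bounds $\|P_{\psi(\bX_1)}\bN\|_F$ via a vector-Bernstein lemma (Lemma~\ref{lemma:bound_projected_subgaussian}), and concentrates the sample covariance at the end to pass from empirical to population excess risk. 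Both routes yield the same $\tilde\cO(\sigma^2 k/n_2)$ bound, and your flagging of the rank-deficiency/min-norm-solution issue is handled implicitly in the paper by the covariance-concentration claim guaranteeing the sample design spans the same $k$-dimensional subspace with high probability.
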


%
\begin{remark}
	This analysis assumes we could perfectly learn $\psi^*=\E[X_2|X_1]$ disregarding the number of samples in the SSL phase (unlabeled data is cheap to obtain). Here by sample complexity we refer to the labeled data $(X_1,Y)$. We defer the effect of imprecise representation $\psi$ to Section \ref{sec:beyondCI}.
\end{remark}


\subsection{Function class induced by feature maps.}
Given feature map $\phi_1:\cX_1\rightarrow\R^{D_1}$, we consider the function class $\cH_1=\{\psi:\cX_1\rightarrow\R^{d_2} | \exists \bB\in\R^{d_2\times D_1}, \psi(\bx_1) = \bB\phi_1(\bx_1)\}$.
\begin{claim}[Closed form solution]
	The optimal function in $\cH$ is $\psi^*(\bx_1) = \bSigma_{X_2\phi_1}\bSigma_{\phi_1\phi_1}^{-1}\phi_1(\bx_1)$, where $\bSigma_{X_2\phi_1} := \bSigma_{X_2\phi_1(X_1)}$ and $\bSigma_{\phi_1\phi_1} := \bSigma_{\phi_1(X_1)\phi_1(X_1)}$.
\end{claim}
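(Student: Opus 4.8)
The plan is to recognize this as the standard normal-equation characterization of the least-squares minimizer, specialized to the linear class $\cH_1$. First I would reduce the population risk to a function of $\bB$ alone: since every $\psi\in\cH_1$ has the form $\psi(\bx_1)=\bB\phi_1(\bx_1)$,
$$F(\bB):=\E\|X_2-\bB\phi_1(X_1)\|^2 = \E\|X_2\|^2 - 2\,\Trace\!\big(\bB\,\bSigma_{\phi_1 X_2}\big) + \Trace\!\big(\bB^\top\bB\,\bSigma_{\phi_1\phi_1}\big),$$
where I used $\E[X_2^\top\bB\phi_1(X_1)] = \Trace(\bB\,\E[\phi_1(X_1)X_2^\top])$ and $\E[\phi_1(X_1)^\top\bB^\top\bB\phi_1(X_1)] = \Trace(\bB^\top\bB\,\E[\phi_1(X_1)\phi_1(X_1)^\top])$. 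This exhibits $F$ as a convex quadratic in the entries of $\bB$, with curvature governed by the PSD matrix $\bSigma_{\phi_1\phi_1}$, so any stationary point is a global minimizer.

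Next I would compute $\nabla_\bB F(\bB) = -2\,\bSigma_{X_2\phi_1} + 2\,\bB\,\bSigma_{\phi_1\phi_1}$ and set it to zero. Invoking the standing non-degeneracy assumption that the covariance $\bSigma_{\phi_1\phi_1}$ is full rank (hence invertible), this yields the unique stationary point $\bB^* = \bSigma_{X_2\phi_1}\bSigma_{\phi_1\phi_1}^{-1}$, i.e. $\psi^*(\bx_1) = \bSigma_{X_2\phi_1}\bSigma_{\phi_1\phi_1}^{-1}\phi_1(\bx_1)$, which is exactly the claimed expression.

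An equivalent coordinate-free route I might prefer for cleanliness is the projection argument: define $\bB^* = \bSigma_{X_2\phi_1}\bSigma_{\phi_1\phi_1}^{-1}$ and check that the residual $R := X_2 - \bB^*\phi_1(X_1)$ is uncorrelated with $\phi_1(X_1)$, since $\E[R\,\phi_1(X_1)^\top] = \bSigma_{X_2\phi_1} - \bB^*\bSigma_{\phi_1\phi_1} = 0$. Then for any $\bB$, writing $X_2 - \bB\phi_1(X_1) = R + (\bB^*-\bB)\phi_1(X_1)$, the cross term vanishes in expectation, so $\E\|X_2 - \bB\phi_1(X_1)\|^2 = \E\|R\|^2 + \E\|(\bB^*-\bB)\phi_1(X_1)\|^2 \ge \E\|R\|^2$, with equality iff $(\bB^*-\bB)\phi_1(X_1) = 0$ almost surely. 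This is just the Hilbert-space projection theorem for the subspace $\{\bx_1\mapsto \bB\phi_1(\bx_1)\}$ inside $L^2(P_{X_1})$.

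There is essentially no obstacle here; the one point to be careful about is the invertibility of $\bSigma_{\phi_1\phi_1}$, which is needed both to write the closed form and to get uniqueness of $\bB^*$, and which we take as part of the non-degeneracy assumptions. If $\bSigma_{\phi_1\phi_1}$ were merely PSD, the same two computations still produce a minimizer via the pseudoinverse, and $\psi^*(X_1)$ remains uniquely determined on the support of $\phi_1(X_1)$.
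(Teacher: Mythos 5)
Your proof is correct and follows essentially the same route as the paper: the paper's appendix (Claim on closed form in terms of cross-covariance operators) also derives the expression from the stationary (first-order) condition of the population least-squares objective, which is exactly your normal-equations computation. Your second, projection-theorem phrasing is an equivalent packaging of the same fact and is also fine; the careful remark about invertibility of $\bSigma_{\phi_1\phi_1}$ correctly ties back to the paper's standing non-degeneracy assumption.
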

We again show the benefit of CI, but only comparing the performance of $\psi^*$ to the original features $\phi_1$.
Since $\psi^*$ is linear in $\phi_1$, it cannot have smaller approximation error than $\phi_1$.
However CI will ensure that $\psi^*$ has the same approximation error as $\phi_1$ and enjoys better sample complexity.
\begin{lemma}[Approximation error]
	\label{lemma:linear_case_CI}
	If Assumption \ref{assump:independence} is satisfied, and if the matrix $\bA \in \R^{\cY\times d_2} $ with $\bA_{y,:} := \E[X_2|Y=\by]$ is of rank $k=|\cY|$.
	Then $e_\apx(\psi^*) = e_{\apx}(\phi_1)$.
\end{lemma}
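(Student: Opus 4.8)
The plan is to establish the two inequalities $e_\apx(\psi^*)\ge e_\apx(\phi_1)$ and $e_\apx(\psi^*)\le e_\apx(\phi_1)$ separately. The first is immediate: by the closed-form claim, $\psi^*(\bx_1)=\bSigma_{X_2\phi_1}\bSigma_{\phi_1\phi_1}^{-1}\phi_1(\bx_1)=:\bB^*\phi_1(\bx_1)$ is itself linear in $\phi_1$, so any predictor $\bW\psi^*(\bx_1)$ equals $(\bW\bB^*)\phi_1(\bx_1)$; hence the class of linear-in-$\psi^*$ predictors is contained in the class of linear-in-$\phi_1$ predictors, and minimizing $\E\|f^*(X_1)-\bW\psi^*(X_1)\|^2$ over the smaller class can only give a larger value. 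So the real content is the reverse inequality, i.e.\ that conditional independence together with $\rank(\bA)=k$ forces the \emph{optimal} linear-in-$\phi_1$ predictor of $f^*$ to be recoverable linearly from $\psi^*$ as well.

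For the reverse direction I would first rewrite $\psi^*$ in terms of $f^*=\E[Y|X_1]$. Note that $\psi^*(X_1)=\E^L[X_2\mid \phi_1(X_1)]$, the $L^2$-projection of $X_2$ onto affine functions of $\phi_1(X_1)$. Writing $X_2=\E[X_2\mid X_1]+(X_2-\E[X_2\mid X_1])$, the residual $X_2-\E[X_2\mid X_1]$ is orthogonal to every square-integrable function of $X_1$ — in particular to each coordinate of $\phi_1(X_1)$ and to constants — so it contributes nothing to the projection, giving $\psi^*(X_1)=\E^L\big[\,\E[X_2\mid X_1]\mid \phi_1(X_1)\,\big]$. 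Now I invoke exactly the computation in the proof sketch of Lemma~\ref{lemma:discrete_case_CI}: under Assumption~\ref{assump:independence} and the one-hot encoding of $Y$, the law of total expectation yields $\E[X_2\mid X_1]=\bA^\top f^*(X_1)$ with $\bA_{y,:}=\E[X_2\mid Y=\by]$. Since linear projection is a linear operator, this propagates to $\psi^*(X_1)=\bA^\top\,\E^L[f^*(X_1)\mid \phi_1(X_1)]$.

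The last step uses the rank hypothesis. Because $\bA\in\R^{k\times d_2}$ has rank $k$ (note $k=|\cY|<d_2$), $\bA^\top$ has full column rank, hence a left inverse $\bA^\dagger:=(\bA\bA^\top)^{-1}\bA$, and therefore $\E^L[f^*(X_1)\mid \phi_1(X_1)]=\bA^\dagger\,\psi^*(X_1)$, a linear function of $\psi^*(X_1)$. By definition $\E^L[f^*(X_1)\mid \phi_1(X_1)]$ is the minimizer over linear-in-$\phi_1$ predictors of $\E\|f^*(X_1)-\bW\phi_1(X_1)\|^2$ and attains the value $e_\apx(\phi_1)$. Plugging $\bW=\bA^\dagger$ into the definition of $e_\apx(\psi^*)$ then gives $e_\apx(\psi^*)\le \E\|f^*(X_1)-\bA^\dagger\psi^*(X_1)\|^2=e_\apx(\phi_1)$, and combining with the easy direction yields equality.

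The only point that needs care — the ``main obstacle,'' such as it is — is the transfer of the conditional-independence structure: CI is a statement about $X_1$ and $X_2$ given $Y$, whereas $\psi^*$ only sees $X_1$ through $\phi_1$. The residual-orthogonality step is precisely what bridges this gap, since replacing $X_1$ by $\phi_1(X_1)$ in a least-squares regression of $X_2$ discards only the component of $X_2$ orthogonal to all of $L^2(P_{X_1})$, leaving the factorization $\E[X_2\mid X_1]=\bA^\top f^*(X_1)$ intact. One should also keep the centering/intercept bookkeeping consistent (assume the data centered, or that $\phi_1$ carries a constant coordinate) so that the closed forms for $\E^L[\cdot\mid\cdot]$ hold without affine offsets.
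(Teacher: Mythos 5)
Your proof is correct, and it follows the natural route the paper itself sets up: the two‑sided inequality, with the easy direction from $\psi^*$ being linear in $\phi_1$, and the hard direction obtained by combining the CI factorization $\E[X_2|X_1]=\bA^\top f^*(X_1)$ from Lemma \ref{lemma:discrete_case_CI} with the orthogonality of the residuals $X_2-\E[X_2|X_1]$ and $Y-\E[Y|X_1]$ to $\phi_1(X_1)$, so that $\psi^*(X_1)=\bA^\top\E^L[f^*(X_1)\mid\phi_1(X_1)]$ and the rank‑$k$ hypothesis lets you left‑invert $\bA^\top$. This matches the covariance‑factorization argument the paper uses implicitly for this lemma (cf.\ the ``$\bB$ is rank $k$ under exact CI since $\bSigma_{X_1X_2}=\bSigma_{X_1Y}\bSigma_{Y}^{-1}\bSigma_{YX_2}$'' step in the proof of Theorem \ref{thm:CI_linear_sample_complexity}); your version phrases the same factorization through conditional expectations rather than cross‑covariances, which makes the reliance on Lemma \ref{lemma:discrete_case_CI} more explicit but is not a genuinely different argument.
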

We additionally need an assumption on the residual $a(\bx_1):=\E[Y|X_1=\bx_1]- \E^L[Y|\phi_1(\bx_1)]$. 
\begin{assumption}
	\label{assumption:bounded_error_finite_dim} (Bounded approx. error; Condition 3 in \cite{hsu2012random}))
We have almost surely $$\|\bSigma_{\phi_1\phi_1}^{-1/2}\phi_1(X_1)a(X_1)^\top\|_F\leq b_0\sqrt{k}$$
\end{assumption}

\begin{theorem}(CI with approximation error)
	\label{thm:CI_linear_sample_complexity}
	Fix a failure probability $\delta\in (0,1)$, under the same assumption as Lemma \ref{lemma:linear_case_CI}, Assumption \ref{assumption:subgaussian_psi} for $\psi^*$ and Assumption \ref{assumption:bounded_error_finite_dim}, if $n_2\gg \rho^4(k+\log(1/\delta))$, then the excess risk of the learned predictor $\bx_1\rightarrow \hat\bW^\top \psi^*(\bx_1)$ on the downstream task satisfies:
	\begin{center}
		$\ER_{\psi^*}[\hat \bW] \leq e_{\apx}(\phi_1) +\tilde \cO\left(\frac{k}{n_2}\sigma^2\right)$.
	\end{center}
\end{theorem}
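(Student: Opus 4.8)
The plan is to combine the approximation-error guarantee already established in Lemma~\ref{lemma:linear_case_CI} with a generalization bound for linear regression on top of the fixed feature map $\psi^*$, exactly as in the proof of Theorem~\ref{thm:CI_nonlinear_sample_complexity}, but carefully tracking the extra additive term coming from the fact that $\phi_1$ (hence $\psi^*$) need not express $f^*$ exactly. The decomposition I would use is the standard bias--variance split: for the empirical risk minimizer $\hat\bW$ on $n_2$ samples,
\begin{align*}
\ER_{\psi^*}(\hat\bW) = \tfrac12\E\|f^*(X_1)-\hat\bW^\top\psi^*(X_1)\|^2 \le e_{\apx}(\psi^*) + \text{(estimation error of regressing the residual-free part)}.
\end{align*}
By Lemma~\ref{lemma:linear_case_CI}, $e_{\apx}(\psi^*)=e_{\apx}(\phi_1)$, so the first term is already in the desired form; the whole job is to show the estimation term is $\tilde\cO(k\sigma^2/n_2)$.

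For the estimation term, first I would record the structural fact underlying Lemma~\ref{lemma:linear_case_CI}: the CI assumption plus rank-$k$-ness of $\bA$ forces $\psi^*(\bx_1)$ to lie (as a function) in a $k$-dimensional space — concretely $\psi^*(\bx_1) = g(\bx_1)^\top\bA$ where $g(\bx_1)_y = \E^L[\,\indict\{Y=y\}\mid\phi_1(\bx_1)\,]$ and $\bA$ has rank $k$ — so that the whitened feature $U=\bSigma_\psi^{-1/2}\psi^*(X_1)$ lives in $\R^{k}$ (equivalently $\bSigma_\psi$ has rank $k$) and is $\rho^2$-subgaussian by Assumption~\ref{assumption:subgaussian_psi}. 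Then I would invoke a standard fixed-design-to-random-design linear regression bound (the same machinery as in \cite{hsu2012random} that motivates Assumption~\ref{assumption:bounded_error_finite_dim}): writing $Y = \E^L[Y\mid\phi_1(X_1)] + a(X_1) + N$ with $N$ the $\sigma^2$-subgaussian label noise and $a$ the $\phi_1$-approximation residual, one splits the excess risk of $\hat\bW$ into (i) a term controlled by the noise $N$ projected onto the $k$-dimensional whitened feature space, which under $n_2\gg\rho^4(k+\log(1/\delta))$ concentrates at rate $\tilde\cO(k\sigma^2/n_2)$ via a subgaussian covariance-concentration argument (Assumption~\ref{assumption:subgaussian_psi} gives the needed $\|\hat\bSigma_\psi - \bSigma_\psi\|$ bound so the empirical design is invertible on its range), and (ii) a term involving $a(X_1)$, which is exactly what Assumption~\ref{assumption:bounded_error_finite_dim} is there to control — it bounds $\|\bSigma_{\phi_1\phi_1}^{-1/2}\phi_1(X_1)a(X_1)^\top\|_F\le b_0\sqrt k$ almost surely and thus keeps the cross term from inflating the variance beyond an $e_{\apx}(\phi_1)$-order contribution (which we have already accounted for) plus lower-order terms absorbed into $\tilde\cO(k\sigma^2/n_2)$.

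Concretely the step order would be: (1) quote Lemma~\ref{lemma:linear_case_CI} to get $e_{\apx}(\psi^*)=e_{\apx}(\phi_1)$ and extract the rank-$k$ factorization $\psi^*=g^\top\bA$; (2) reduce to regression in the $k$-dimensional whitened coordinates $U$, where Assumption~\ref{assumption:subgaussian_psi} applies; (3) apply matrix-Bernstein / subgaussian covariance concentration to show $\hat\bSigma_U\succeq \tfrac12 I$ with probability $1-\delta$ once $n_2\gg\rho^4(k+\log(1/\delta))$; (4) bound the noise-driven excess risk by $\tilde\cO(k\sigma^2/n_2)$ using the subgaussian tail of $U^\top N$ and the concentration from step (3); (5) bound the residual-driven cross term using Assumption~\ref{assumption:bounded_error_finite_dim}, showing it contributes at most $e_{\apx}(\phi_1)$ plus a term of the same $\tilde\cO(k/n_2)$ order; (6) assemble. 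The main obstacle I anticipate is step (5): cleanly separating the misspecification error $a(X_1)$ from the noise so that it produces the clean additive $e_{\apx}(\phi_1)$ rather than a multiplicative blow-up or a cross term scaling with $\sqrt{e_{\apx}(\phi_1)\cdot k/n_2}$ — this is precisely where the almost-sure boundedness in Assumption~\ref{assumption:bounded_error_finite_dim} does the work, and getting the constants and the interaction with the random-design covariance concentration right (rather than, say, incurring an extra $\rho^2$ or $\log$ factor) is the delicate part. Everything else is a routine adaptation of the proof of Theorem~\ref{thm:CI_nonlinear_sample_complexity}.
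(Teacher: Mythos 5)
Your proposal follows essentially the same route as the paper: decompose the risk orthogonally so that $e_{\apx}(\phi_1)$ appears as a clean additive term (using that $\E[a(X_1)\phi_1(X_1)^\top]=0$ from the population normal equations, so the bias is orthogonal to the linear span of $\psi^*$), use the rank-$k$ factorization $\psi^*=\bB\phi_1$ implied by exact CI and the rank condition on $\bA$, bound the noise contribution by projecting onto the $k$-dimensional column span of $\psi^*(\bX_1)$ with sub-Gaussian concentration (Assumption~\ref{assumption:subgaussian_psi} plus covariance concentration when $n_2\gg\rho^4(k+\log(1/\delta))$), and handle the residual cross term via the almost-sure Frobenius bound in Assumption~\ref{assumption:bounded_error_finite_dim} applied after the basic inequality. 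You correctly flag that step (5) — preventing the misspecification residual from producing a $\sqrt{e_{\apx}\cdot k/n_2}$ cross term — is the delicate point, and the paper resolves it exactly as you suggest: Assumption~\ref{assumption:bounded_error_finite_dim} yields a $\sqrt{k/n_2}\cdot\|\psi(\bX_1)(\bW^*-\hat\bW)\|_F$ bound on the cross term, which after rearranging the basic inequality contributes only an additive $\tilde\cO(k/n_2)$.
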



Thus with SSL, the requirement of labels is reduced from complexity for $D_1$ to $\cO(k)$.  


\section{Beyond conditional independence}
\label{sec:beyondCI}
In the previous section, we focused on the case where we have exact CI. A weaker but more realistic assumption is that $Y$ captures some portion of the dependence between $X_1$ and $X_2$ but not all.  
%
We quantify this notion of approximate ACI through a quantity $\epsilon^2_{\tCI}$ (Definition~\ref{def-approx-CI}), and show excess risk bounds for the representation learned from SSL\footnote{Results for jointly-Gaussian variables is in Appendix \ref{sec:joint_gaussian_apx_CI}; ACI is quantified by the partial covariance matrix.}.
In particular, the excess risk will have the form $\tilde{\cO}\left(\frac{d_2}{n_2} + \epsilon^2_{\tCI} + \epsilon^2_{\pre}\right)$, which suggests that only $n_2 = \cO(d_2)$ labeled samples will be required to get small error on downstream task, as long as approximate CI is satisfied ($\epsilon^2_{\tCI}$ is small) and the pretext task is solved well enough ($\epsilon^2_{\pre}$ is small).
This is in contrast to not doing SSL, where many more labeled samples will be required to learn a solve the downstream task that learns a complicated representation function from scratch.
We now describe the SSL method on finite samples, followed by the definition of ACI which we use to discuss the main excess risk bound and its consequences.

\textbf{SSL with finite samples and general function space:}\label{subsec:general_function_class}
Let $\bX_1^{\pre}=[\bx_1^{(1,\pre)},\cdots,\bx_1^{(n_1,\pre)}]^\top \in \R^{n_1\times d_1}$ and $\bX_2=[\bx_2^{(1)},\cdots,\bx_2^{(n_1)}]^\top \in \R^{n_1\times d_2}$ be $n_1$ training samples for pretext task, where $(\bx_1^{(i,\pre)},\bx_2^{(i)})$ is sampled from $P_{X_1X_2}$.
The $n_2$ labeled samples for the downstream task are defined as $\bX_1^{\down}\in \R^{n_2\times d_1}$, $\bY\in \R^{n_2\times d_3}$\footnote{$d_3=k$ and $Y\equiv \phi_y(Y)$ (one-hot encoding) refers multi-class classification task, $d_3=1$ refers to regression.}.
Given a representation function space $\cH:\cX_1\rightarrow\R^{d_2}$, we learn $\tilde \psi$ from $\cH$ using the $n_1$ unlabeled samples and then use the $n_2$ labeled samples to learn a linear classifier on the learned representation $\tilde \psi(\bX_1^{\down})$ to fit $\bY$.
This process is summarized below.
\begin{align}
	1)~~ \tilde\psi:= \argmin_{\psi\in \cH} \frac{1}{n_1}\|\bX_2-\psi(\bX_1^{\pre})\|_F^2, \ 2)~~ \hat \bW\leftarrow \argmin_{\bW} \frac{1}{2n_2}\|\bY-\tilde\psi(\bX_1^{\down})\bW\|_F^2.
\end{align}

In our main results, we consider two types of function spaces: $\cH\in \{\cH_1, \cH_u\}$. Recall that $\cH_1=\{\psi(\cdot) = \bB\phi_1(\cdot); \bB\in\R^{d_2\times D_1}\}$ is a class of {\em linear representations} induced by feature map $\phi_1:\cX_1\rightarrow\R^{D_1}$.
We use $\cH_u$ to denote a function space with universal approximation power (e.g. deep networks) that ensures $\psi^*=\E[X_2|X_1]\in \cH_u$. 
We define the optimal predictor in each case as $f^*_{\cH}(X_1)=\E^L[Y|\phi_1(X_1)]$ when $\cH=\cH_1$, $f^*_{\cH}=f^*$ for $\cH=\cH_u$, we define excess risk as
\begin{equation*}
\ER_{\tilde{\psi}}(\hat \bW) := \E_{X_1} \left[\|f^*_{\cH}(X_1) - \hat\bW^\top \tilde{\psi}(X_1) \|_2^2\right].
\end{equation*} 

\textbf{Approximate conditional independence:} Our new assumption will generalize Assumption~\ref{assump:independence} in two ways, 1) we allow for additional latent variables $Z$ that together with $Y$ could potentially make $X_1$ and $X_2$ independent, and 2) we allow this conditional independence to be approximate.
Note that allowing for extra latent variable can trivially make $X_1$ and $X_2$ to be conditionally independent by picking a large enough $Z$ (e.g. $Z=(X_1,X_2))$.
However the following assumption, that needs the pretext target $X_2$ to correlate with all instances of variable $\bar{Y}=[Y,Z]$ (analogous to Lemma~\ref{lemma:discrete_case_CI}), will impose this restriction on how large $Z$ can be.
\begin{assumption}[Correlation between $X_2$ and $Y,Z$]
	\label{assumption:feature_map_approx_CI}
	Suppose there exists latent variable $Z\in \cZ, |\cZ|=m$ that ensures 
	$\bSigma_{\phi_{\bar y}X_2}\text{ is full column rank and } \|\bSigma_{Y\phi_{\bar y}}\bSigma_{X_2\phi_{\bar y}}^\dagger\|_2 = 1/\beta$,
	where $A^\dagger$ is pseudo-inverse, and  $\phi_{\bar y}$ is the one-hot embedding for $\bar Y=[Y,Z]$.
\end{assumption}
Just as in Section~\ref{sec:CI}, this assumption will not assume away the problem (Example~\ref{example:mixture_gaussian} can be suitably extended).
The additional term $1/\beta$ here captures both the ``scale'' of $X_2$ and also the strength of correlation between $X_2$ and $[Y,Z]$ that was discussed after Lemma~\ref{lemma:discrete_case_CI}.
For $\bSigma_{\phi_{\bar y}X_2}$ to be full column rank, it is essential that $d_2 \ge km$, and this already gives an upper bound on the size of $Z$.
Given this restriction on $Z$ (and thus $\bar{Y}$), we define the notion of approximate conditional independence.

\begin{definition}[Approximate conditional independence with function space $\cH$] For $\bar{Y}=[Y,Z]$,
\label{def-approx-CI}
~	\\
 1.	For $\cH = \cH_1$, define $\epsilon_{\tCI}:= \|\bSigma_{\phi_1\phi_1}^{-1/2} \bSigma_{\phi_1 X_2|\phi_{\bar y} } \|_F$. \\
2.	For $\cH=\cH_u$, define $\epsilon_{\tCI}^2:=\E_{X_1}[\|\E[X_2|X_1]-\E_{\bar Y}[\E[X_2|\bar Y]|X_1]\|^2]$. 
\end{definition}
Firstly we note that this is indeed an extension of exact CI, since exact CI in both cases will imply that $\epsilon_{\tCI}=0$.
We present a unified analysis in the appendix that shows the $\epsilon_{\tCI}$ for the second case is same as the first case, with covariance operators instead of matrices (A direct derivation is in Claim \ref{claim:relation_eps_CI}).  
We also present more relaxed and general form of the above assumptions in Appendix~\ref{apx:general}.
With this assumption, we are ready to present our main bound.

\textbf{Bound on excess risk:}
Recall that we assume that the residual term $N:=Y-\E[Y|X_1]$ is mean zero and $\sigma^2$-subgaussian.
Before showing our main result, analogous to Assumption~\ref{assumption:bounded_error_finite_dim}, for the class $\cH_1$ with non-universal features $\phi_1$, we will need an assumption\footnote{This rules out the failure if one chooses a very simple function class to learn $\E[X_2|X_1]$. In practice we usually use neural networks (with universal approximation power) and this bound should be very small.} on the residual $a:=f^*- f^*_{\cH_1}=\E[Y|X_1]-\E^L[Y|\phi_1(X_1)]$: 
\begin{assumption}(Bounded approximation error on pretext phase \citep{hsu2012random})
	\label{assumption:bounded_error_non_universal}
There exists a universal constant $b_0$, such that $\|\bSigma_{\phi_1\phi_1}^{-1/2}\phi_1(X_1)a(X_1)^\top\|_F\leq b_0\sqrt{d_2}$ almost surely. 	
\end{assumption}

\begin{theorem}
	\label{thm:main_result_approximate_CI}
 For a fixed $\delta\in (0,1)$, under Assumptions \ref{assumption:feature_map_approx_CI},\ref{assumption:subgaussian_psi}  for $\tilde{\psi}$ and $\psi^*$ and \ref{assumption:bounded_error_non_universal} for non-universal feature maps, if $n_1,n_2\gg \rho^4(d_2+\log 1/\delta)$, and we learn the pretext tasks such that:
$ \E\|\tilde \psi(X_1)-\psi^*(X_1)\|_F^2\leq  \epsilon^2_{\pre}.$
Then the generalization error for downstream task w.p. $1-\delta$ is: 
\begin{align}
 \label{eqn:main_result_approx_CI}
\ER_{\tilde \psi}(\hat \bW) \leq \tilde\cO\left(\underbrace{\sigma^2\frac{d_2}{n_2}}_{\text{estimation error}} + \underbrace{ \frac{\epsilon^2_{\tCI}}{\beta^2} +\frac{\epsilon_{\pre}^2}{\beta^2 }}_{\text{approximation error}}\right)
\end{align} 
\end{theorem}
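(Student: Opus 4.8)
The plan is to condition on the pretext sample, so that $\tilde\psi$ becomes a fixed function and the $n_2$ downstream points are independent of it, and then split $\ER_{\tilde\psi}(\hat\bW)$ into (i) the population in-class approximation error of $\tilde\psi$ for $f^*_{\cH}$ (measured against $f^*_{\cH}$) and (ii) the finite-sample estimation error of the downstream OLS, which will be controlled by a random-design linear-regression bound. Part (i) will be a quantitative refinement of Lemmas~\ref{lemma:discrete_case_CI} and~\ref{lemma:linear_case_CI} that survives replacing exact CI by approximate CI.

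\textbf{Step 1: $f^*_{\cH}$ is almost linear in $\psi^*$, with scale $1/\beta$ and residual $\epsilon_{\tCI}/\beta$.} I would first do the universal case $\cH=\cH_u$ (where $\psi^*(X_1)=\E[X_2\mid X_1]$ and $f^*_{\cH}=\E[Y\mid X_1]$). Introduce the posterior vector $p(X_1)\in\R^{km}$, $p(X_1)_{\bar y}=P(\bar Y=\bar y\mid X_1)$, and the matrices $\bM_{X_2}\in\R^{d_2\times km}$, $\bM_Y\in\R^{d_3\times km}$ whose $\bar y$-columns are $\E[X_2\mid\bar Y=\bar y]$ and $\E[Y\mid\bar Y=\bar y]$. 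Then $\E_{\bar Y}[\E[X_2\mid\bar Y]\mid X_1]=\bM_{X_2}p(X_1)$, $f^*_{\cH}(X_1)=\bM_Y p(X_1)$, and Definition~\ref{def-approx-CI} gives $\psi^*(X_1)=\bM_{X_2}p(X_1)+r(X_1)$ with $\E\|r\|^2=\epsilon_{\tCI}^2$. Since Assumption~\ref{assumption:feature_map_approx_CI} makes $\bM_{X_2}$ full column rank ($d_2\ge km$), I can invert $p(X_1)=\bM_{X_2}^\dagger(\psi^*(X_1)-r(X_1))$, hence
\begin{align*}
f^*_{\cH}(X_1)=(\bM_Y\bM_{X_2}^\dagger)\psi^*(X_1)-\bM_Y\bM_{X_2}^\dagger r(X_1).
\end{align*}
A short computation with $\bD=\diag(P(\bar Y=\bar y))$, using $\bSigma_{Y\phi_{\bar y}}=\bM_Y\bD$, $\bSigma_{X_2\phi_{\bar y}}=\bM_{X_2}\bD$ and full column rank, identifies $\bM_Y\bM_{X_2}^\dagger=\bSigma_{Y\phi_{\bar y}}\bSigma_{X_2\phi_{\bar y}}^\dagger$; so with $\bW^*:=(\bM_Y\bM_{X_2}^\dagger)^\top$ Assumption~\ref{assumption:feature_map_approx_CI} yields $\|\bW^*\|_2=1/\beta$ and thus $e_{\apx}(\psi^*)\le\|\bW^*\|_2^2\,\E\|r\|^2=\epsilon_{\tCI}^2/\beta^2$. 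For $\cH=\cH_1$ I would run the parallel argument with $\psi^*=\bSigma_{X_2\phi_1}\bSigma_{\phi_1\phi_1}^{-1}\phi_1$ and $f^*_{\cH}=\E^L[Y\mid\phi_1]$, replacing conditional expectations by linear ones, the residual $r$ by the whitened partial covariance $\bSigma_{\phi_1\phi_1}^{-1/2}\bSigma_{\phi_1X_2\mid\phi_{\bar y}}$ of Definition~\ref{def-approx-CI}(1), and finite matrices by covariance operators; Claim~\ref{claim:relation_eps_CI} matches the two definitions of $\epsilon_{\tCI}$, and the misspecification term $a:=f^*-f^*_{\cH_1}$, which is orthogonal to $\mathrm{span}(\phi_1)\supseteq\mathrm{span}(\tilde\psi)$, is handled via Assumption~\ref{assumption:bounded_error_non_universal}.

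\textbf{Steps 2--3: transfer to $\tilde\psi$, then estimation error.} Next, using $\E\|\tilde\psi(X_1)-\psi^*(X_1)\|^2\le\epsilon_{\pre}^2$, I substitute $\psi^*=\tilde\psi+(\psi^*-\tilde\psi)$ into the identity above and use $\|\bW^*\|_2=1/\beta$ to bound the best in-class linear approximation of $f^*_{\cH}$ by $\tilde\psi$: $e_{\apx}(\tilde\psi)\le 2(\epsilon_{\pre}^2+\epsilon_{\tCI}^2)/\beta^2$. Then, conditioned on $\tilde\psi$, the downstream stage is OLS of $\bY$ on the $d_2$-dimensional features $\tilde\psi(\bX_1^{\down})$, whose whitened version is $\rho^2$-subgaussian (Assumption~\ref{assumption:subgaussian_psi} for $\tilde\psi$) and whose model is well specified up to the bounded residuals of Assumption~\ref{assumption:bounded_error_non_universal}; by orthogonality of the in-class residual to $\mathrm{span}(\tilde\psi)$, $\ER_{\tilde\psi}(\hat\bW)=e_{\apx}(\tilde\psi)+\E\|(\bW^\star_{\tilde\psi}-\hat\bW)^\top\tilde\psi\|^2$. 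I would invoke the random-design ridge/OLS analysis of \cite{hsu2012random}, valid once $n_2\gg\rho^4(d_2+\log 1/\delta)$ (so the empirical second-moment matrix concentrates), to bound the estimation term by $\tilde\cO\big((\sigma^2+e_{\apx}(\tilde\psi))d_2/n_2\big)=\tilde\cO(\sigma^2 d_2/n_2)$, the effective noise being $N$ together with the small, already-counted residuals; the hypothesis $n_1\gg\rho^4(d_2+\log 1/\delta)$ plays the analogous concentration role in the pretext phase. Adding the two contributions gives~\eqref{eqn:main_result_approx_CI}.

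\textbf{Expected main obstacle.} The delicate part will be Step 1 for the non-universal class $\cH_1$: I must verify that replacing exact CI in Lemma~\ref{lemma:linear_case_CI} by approximate CI produces a residual equal to \emph{exactly} the whitened partial covariance $\bSigma_{\phi_1\phi_1}^{-1/2}\bSigma_{\phi_1X_2\mid\phi_{\bar y}}$, and that the ``$1/\beta$'' operator-norm bound on $\bW^*$ survives pseudo-inverting covariance operators on a possibly infinite-dimensional feature space---hence the need for a careful unified operator-theoretic formulation. A secondary point is ensuring the misspecification term $a$ (for $\cH_1$) and the pretext error $\epsilon_{\pre}$ feed into the downstream regression only as lower-order contributions and do not inflate the leading $\sigma^2 d_2/n_2$ term.
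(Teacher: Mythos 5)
Your proposal is correct and follows essentially the same strategy as the paper's proof. The decomposition in your Step~1 — writing $\psi^*(X_1)=\bM_{X_2}p(X_1)+r(X_1)$ with $\E\|r\|^2=\epsilon_{\tCI}^2$ and then inverting $\bM_{X_2}$ to get $f^*_{\cH}=(\bW^*)^\top\psi^*-(\bW^*)^\top r$ with $\|\bW^*\|_2=1/\beta$ — is the population-level analogue of the paper's empirical decomposition $\Psi^*=\bL+\bE$ with $\bL\bar\bW=\bV$ and $\frac{1}{\sqrt{n_2}}\|\bE\|_F\lesssim\epsilon_{\tCI}$, $\|\bar\bW\|_2=1/\beta$. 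Your identification $\bM_Y\bM_{X_2}^\dagger=\bSigma_{Y\phi_{\bar y}}\bSigma_{X_2\phi_{\bar y}}^\dagger$ via the diagonal matrix $\bD=\diag(P(\bar Y=\bar y))$ is exactly the algebra that the paper absorbs into the definition of $\bar\bW$. Where you genuinely diverge is in the treatment of the estimation error: you cleanly split $\ER_{\tilde\psi}(\hat\bW)=e_{\apx}(\tilde\psi)+\E\|(\bW^\star_{\tilde\psi}-\hat\bW)^\top\tilde\psi\|^2$ by first-order orthogonality and then invoke the random-design regression analysis of \cite{hsu2012random}, whereas the paper argues entirely on the empirical objective via the basic inequality $\|\Psi\hat\bW-\bY\|^2_F\le\|\Psi\bar\bW-\bY\|^2_F$, bounds the projected-noise term $\|P_{[\Psi,\bE,\bV,\bE^{\pre}]}\bN\|_F$ with Lemma~\ref{lemma:bound_projected_subgaussian}, and then concentrates the empirical second moment back to the population. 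Your route is more modular and separates approximation from estimation more transparently; the paper's route avoids needing a self-contained random-design theorem but has to track the empirical residual matrices $\bE$, $\bE^{\pre}$ and the approximation-error term $a(\bX_1^{\down})$ through the basic inequality, which is why it invokes Assumption~\ref{assumption:bounded_error_non_universal} in the form $\frac{1}{\sqrt{n_2}}\|\bSigma_{\phi_1\phi_1}^{-1/2}\phi_1(\bX_1^{\down})^\top a(\bX_1^{\down})\|_F\lesssim b_0\sqrt{d_2}$. Your anticipated obstacle — carrying the Step~1 algebra over to $\cH_1$ with covariance operators, and checking that the residual really is the whitened partial covariance of Definition~\ref{def-approx-CI}(1) — is indeed where the paper relies on its unified operator formulation (Section~\ref{section:cross_covariance_operator} and Claim~\ref{claim:relation_eps_CI}), so you have correctly located the delicate point.
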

We defer the proof to the appendix. 
The proof technique is similar to that of Section \ref{sec:CI}. The difference is that now $\tilde\psi(\bX^{(\down)}) \in \R^{n_2\times d_2}$ will be an approximately low rank matrix, where the low rank part is the high-signal features that implicitly comes from $Y,Z$ that can linearly learn downstream task. The remaining part comes from $\epsilon_{\tCI} $ and $\epsilon_{\pre}$ and causes the approximation error. Again by selecting the top $km$ (dimension of $\phi_{\bar y}$) features we could further improve the bound:


\begin{remark}
	\label{remark:pca}
By applying PCA on $\tilde \psi(\bX_1^{\down})$ and keeping the top $km$ principal components only, we can improve the bound in  Theorem \ref{thm:main_result_approximate_CI} to 	
$\ER_{\tilde \psi}(\hat \bW) \leq \tilde\cO\left( \sigma^2\frac{km}{n_2} + \frac{\epsilon_{\tCI}^2}{\beta^2} +\frac{\epsilon_{\pre}^2}{\beta^2 }\right)$.
\end{remark}

We take a closer look at the different sources of errors in Lemma~\ref{remark:pca}: 1) The first term is estimation error on learning with finite samples $n_2$ with noise level $\sigma^2$ in $Y-f^*(X_1)$; 2)  $\epsilon_{\tCI}$ measures the approximate CI; and 3) $\epsilon_{\pre}$ is the error from not learning the pretext task exactly. The first term is optimal ignoring log factors as we do linear regression on $mk$-dimensional features. The second and third term together form approximation error. They are non-reducible due to the fact that $f^*$ is not exactly linear in $\psi$ and we use it as a fixed representation. Fine-tuning the representations might be necessary to get rid of these terms when we have sufficient downstream labeled data. We leave this exploring this as future work.
Compared to traditional supervised learning, learning $f^*_{\cH}$ requires sample complexity scaling with the (Rademacher/Gaussian) complexity of $\cH$ (see e.g. \cite{bartlett2002rademacher,shalev2014understanding}), which is very large for complicated models such as deep networks. 
Thus SSL can significantly reduce the labeled sample complexity down from this complexity measure of $\cH$ to $\tilde{\cO}(km)$, demonstrating the power of predicting what you already know using unlabeled data.
In Section \ref{sec:classification}, we consider a similar result for classification.



\section{Example: Topic Modeling}
\label{sec:topic_model}

In this section, we will demonstrate how our framework can be instantiated for mixed-membership models including topic models, not just clustering.
Topic modeling for text has a rich literature \citep{papadimitriou2000latent,hofmann1999probabilistic,blei2003latent,arora2012learning,arora2013practical} and is used for analyzing and designing algorithms for information retrieval, dimensionality reduction and data analysis for large text corpora.
We describe the basic setup below, followed by how our results for reconstruction-based SSL can be instantiated to learn such models.

For a set $S$, let $\Delta_{S}$  denote the set of all distributions on $S$.
In the topic modeling framework, generation of a text document with a vocabulary set $[V]=\{1,\dots,V\}$ is governed by certain latent topics from the set $[k]$, where $k$ is the total number of topics.
Each topic $i\in[k]$ is associated with a distribution over the vocabulary $[V]$ that is denoted by vector $A_{i}\in\Delta_{[V]}$; stack these vectors into the columns of a matrix $A\in\R^{V\times k}$.
A document $X=(x_1,\dots,x_n)\in[V]^{N}$ of length $N$ is then sampled from a mixture of the $k$ topics $\mu\in\Delta_{[k]}$.
The generative process is described below:
\begin{enumerate}
	\item Sample a topic mixture $\mu\sim\tau$, where $\tau$ is some underlying distribution over $\Delta_{k}$, i.e. $\tau\in\Delta_{\Delta_{[k]}}$
	\item For each $i\in[N]$, sample a topic $t_{i}\sim\mu$ and sample a word $x_{i}\sim A_{t_{i}}$ from the topic
\end{enumerate}
For the reconstruction SSL task, we evenly split the document as $X = (\bar{X_1}, \bar{X_2})$, where $\bar{X_1}$ and $\bar{X_2}$ denote the first and second halves of the document; note that $\bar{X_1},\bar{X_2} \in [V]^{N/2}$.
We let $X_1$ and $X_2$ be the multiset of words in the two halves by using the normalized bag-of-words representation, i.e. $X_i = \frac{2}{N} \text{bag-of-words}(\bar{X_i})\in\R^{V},~i\in\{1,2\}$\footnote{We only need $X_2$ to be the bag-of-word representation, $X_1$ can be an ordered sentence.}.
The SSL task is to learn a representation $\psi \in \{\psi(\cdot) = \bB\phi_1(\cdot); \bB\in\R^{V\times V}\}$ that minimizes $\left\|\psi(X_1) - X_2\right\|^2$.

The downstream task is chosen to be a linear function of the topic posterior distribution $\mu$ for a given document $X$, i.e. $Y = w^{\top} \E[\mu | X] + N$, where $N$ is 0 mean and $\sigma^2$-subgaussian.
The error of a predictor $f:[V]^{N} \rightarrow \R$ is measured as $\E_{X,Y}\left[\left(f(X) - Y\right)^2\right]$, the optimal predictor being $f^{*}(X) = \E\left[ Y \mid X \right]$.

A crucial property of topic model described above is that words in the document are sampled independently given the topic mixture $\mu$, thus giving us the property: $X_1 \perp X_2 \mid \mu$.
Although the cardinality of $\mu\in\Delta_{[k]}$ (that implicitly shows up in Theorem~\ref{thm:main_result_approximate_CI}) is infinite, we can still show the benefit of SSL using our theoretical framework.
We will show appropriate bounds for $\epsilon_{\tCI}$ and $\beta$, that show up in Theorem~\ref{thm:main_result_approximate_CI}, using the topic model generative process.
%

\begin{corollary}
\label{thm:topic_model}
	Given a topic model characterized by $(A, \tau)$, suppose $\Gamma = \E_{\mu\sim\tau}\left[\mu\mu^{\top}\right]$ is the topic covariance matrix and let $\kappa = \frac{\lambda_{\max}(\Gamma)}{\lambda_{\min}(\Gamma)} < \infty$ be its condition number.
	Let $\epsilon_{\tCI}$ be the definition (2) from Definition~\ref{def-approx-CI} and $\beta$ as defined in Assumption~\ref{assumption:feature_map_approx_CI}, then there exists a latent variable $\bar{Y}\in\bar{\cY}$ such that the following hold
	\begin{enumerate}
		\item $\bar{Y}$ takes $k$ distinct values, i.e. $|\bar{\cY}| = k$
		\item $X_1$ and $X_1$ are uncorrelated given $\bar{Y}$, which implies $\epsilon_{\tCI} = 0$.
		\item $\E[Y | X_1]$ is a linear function of $\E[\bar{Y} | X_1]$
		\item $\beta^{-1} \le \kappa\|w\|_2/\lambda_{\min}(A)$
	\end{enumerate}
\end{corollary}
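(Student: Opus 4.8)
The plan is to construct the latent variable $\bar{Y}$ explicitly as a discretization of the topic posterior so that it captures exactly the dependence between the two halves of the document, and then verify each of the four claims directly from properties of the topic model. The key structural fact is $X_1 \perp X_2 \mid \mu$, so the posterior mean $\E[\mu \mid X]$ (equivalently $\E[\mu \mid X_1]$ for the first half) is the natural ``sufficient statistic.'' However $\mu$ is continuous, so I cannot directly use $\bar{Y} = \mu$; instead, I would look at the relevant low-dimensional object. Note that $\E[X_i \mid \mu] = A\mu$ for each half (since each word is drawn i.i.d.\ from $A\mu$ given $\mu$), so the dependence of $X_2$ on the ``topic information'' is entirely mediated by the $k$-dimensional vector $A\mu \in \R^V$, which lives in a $k$-dimensional subspace (the column span of $A$). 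So I would define $\bar{Y}$ to be (a one-hot encoding of) a variable taking $k$ values whose conditional expectation $\E[X_2 \mid \bar{Y}]$ spans the same $k$-dimensional space as $\{A\mu\}$; concretely, pick $\bar{Y}$ so that $\E[\phi_{\bar y}(\bar Y) \mid X_1]$ is an invertible linear image of $\E[\mu \mid X_1]$. The cleanest route: since the downstream label and $X_2$ both depend on $X$ only through $\E[\mu\mid X]$-type quantities up to the $\perp$ structure, take $\bar Y$ to be a $k$-valued variable coupled to $X_1,X_2$ such that $\E[\phi_{\bar y} \mid X_1] = M \E[\mu \mid X_1]$ for an invertible $M\in\R^{k\times k}$ — this gives claim (1) immediately.

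For claim (2), I would invoke Lemma~\ref{lemma:discrete_case_CI}'s mechanism in reverse: the law of total expectation gives $\E[X_2 \mid X_1] = \E[\E[X_2\mid X_1,\mu]\mid X_1] = \E[\E[X_2\mid\mu]\mid X_1] = \E[A\mu \mid X_1] = A\,\E[\mu\mid X_1]$, using $X_1\perp X_2\mid\mu$ and $\E[X_2\mid\mu]=A\mu$. Similarly, by construction $\E_{\bar Y}[\E[X_2\mid\bar Y]\mid X_1]$ should be engineered to equal $A\,\E[\mu \mid X_1]$ as well (choosing the coupling of $\bar Y$ so that $\E[X_2\mid\bar Y]$ reproduces $A\mu$ after averaging). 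Then the definition (2) of $\epsilon_{\tCI}$, namely $\E_{X_1}\|\E[X_2\mid X_1] - \E_{\bar Y}[\E[X_2\mid\bar Y]\mid X_1]\|^2$, vanishes. Claim (3) then follows because $\E[Y\mid X_1] = w^\top\E[\mu\mid X_1] = w^\top M^{-1}\E[\phi_{\bar y}\mid X_1]$, a linear function of $\E[\bar Y\mid X_1]$, using the invertibility of $M$ and the tower property applied to $Y = w^\top\E[\mu\mid X] + N$ with $\E[N\mid X]=0$.

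For claim (4), the bound $\beta^{-1} = \|\bSigma_{Y\phi_{\bar y}}\bSigma_{X_2\phi_{\bar y}}^\dagger\|_2$, I would write both covariance matrices in terms of the topic covariance $\Gamma = \E[\mu\mu^\top]$. Using $\E[X_2\mid\bar Y]$ linear in (the encoding of) $\bar Y$ and $\E[X_2\mid\mu]=A\mu$, I expect $\bSigma_{X_2\phi_{\bar y}}$ to factor through $A$ and $\Gamma$, so that $\bSigma_{X_2\phi_{\bar y}}^\dagger$ picks up a $\lambda_{\min}(A)^{-1}$ (pseudo-inverse of $A$ restricted to its column space) and a $\Gamma^{-1}$-type factor, while $\bSigma_{Y\phi_{\bar y}}$ contributes $\|w\|_2$ and a $\Gamma$-type factor; combining and bounding the spectral norms gives a factor of $\kappa = \lambda_{\max}(\Gamma)/\lambda_{\min}(\Gamma)$ together with $\|w\|_2/\lambda_{\min}(A)$. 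The main obstacle I anticipate is the bookkeeping in claim (4): getting the covariance matrices into a clean enough factored form that the $\Gamma$ factors combine into exactly a condition number $\kappa$ (rather than a messier product of spectral bounds), and handling the pseudo-inverse carefully since $A$ is $V\times k$ with $V \gg k$ so $\bSigma_{X_2\phi_{\bar y}}$ is rank-deficient as a $V\times k$ object — I would need $\phi_{\bar y}X_2$ correlations to have exactly rank $k$ and the pseudo-inverse to act only on that range. Constructing $\bar Y$ so that Assumption~\ref{assumption:feature_map_approx_CI}'s full-column-rank condition holds (which also underpins claim (1)) is the other delicate point, but it should follow once $M$ is chosen invertible and $\Gamma$ is full rank (guaranteed by $\kappa < \infty$).
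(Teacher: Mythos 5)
Your proposal is correct and follows essentially the same route as the paper: the paper instantiates your ``invertible $M$'' as the identity by taking $P(\bar Y = i \mid X_1) = \E[\mu(i)\mid X_1]$ and $P(X_2 \mid \bar Y = i) = P(X_2\mid \mu = e_i)$, then uses $\E[X_2\mid\mu] = A\mu$ (linearity of word probabilities), the tower property together with $X_1\perp X_2\mid\mu$, and the covariance identities $\bSigma_{Y\phi_{\bar y}} = w^\top\Gamma$ and $\bSigma_{X_2\phi_{\bar y}} = A\Gamma$ to obtain $\beta^{-1} = \|w^\top\Gamma(A\Gamma)^\dagger\|_2 \le \kappa\|w\|_2/\lambda_{\min}(A)$ exactly as you anticipated. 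The ``bookkeeping'' you worried about in claim (4) resolves cleanly because both covariance matrices share the same $\Gamma$ factor, so the condition number appears directly.
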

The proof is presented in Section~\ref{sec:topic_model_proof}.
Thus the upper bound from Theorem~\ref{thm:main_result_approximate_CI} will look like $\tilde{\cO}\left(\sigma^2\frac{k}{n_2} + \epsilon_{\pre}^2 \frac{\kappa \|w\|_2}{\lambda_{\min}(A)} \right)$, thus requiring only $\cO(k)$ samples for the downstream task.


\section{Conditional distribution decomposition: SimSiam, CCA, ACE}
\label{sec:learn_joint_distribution} 

In this section we establish the connection between SimSiam~\cite{chen2021exploring} and non-linear CCA between $X_1$ and $X_2$ and the alternating conditional expectation (ACE) algorithm.
We show how our previous analysis can be extended to this setting and how the problem relates to decomposing the conditional distribution of $X_2 \mid X_1$.

\subsection{Theoretical guarantees for non-linear CCA}
In the previous sections, we used $\psi$ to predict $X_2$ given $X_1$.
As discussed in Remark~\ref{remark:encoder_decoder}, we could have predicted $\eta(X_2)$ from $X_1$ for any function $\eta$, with all bounds depending on the function $\eta$.
An alternative is to avoid choosing a specific $\eta$, but instead simultaneously learn an $\eta$ that can be easily predicted from $X_1$.
We further show how our problem setup and analysis can capture the popular method of SimSiam, an SSL method that does not use negative samples.



We first formulate the aforementioned problem and show that it corresponds to performing non-linear canonical component analysis (CCA) \citep{hardoon2004canonical} on the joint distribution of $(X_1, X_2)$. We let $L^2(X)$ denotes the Hilbert space of square integrable function with respect to the measure $P_X$, the marginal distribution of $X$.
For instance, in our context of SSL, for a function $g:\R^{d_2}\rightarrow \R$, we denote $\|g\|^2_{L^2(X_2)} = \int g^2(x_2)dP_{X_2}(x_2)$ and thus $L^2(X_2)=\{ g:\R^{d_2}\rightarrow \R~ \mid ~\|g\|^2_{L^2(X_2)} <\infty.   \}$.

For zero-mean representation functions $\psi: \psi_i\in L^2(X_1), \eta: \eta_i\in L^2(X_2), i\in [k]$, we consider the generalized alternating conditional expectation (ACE) algorithm (\cite{makur2015efficient,breiman1985estimating,buja1990remarks}) that optimizes the following: 
\begin{align}
\label{eqn:ACE} 
\min_{\psi,\eta}	L_{\text{ACE}}(\psi, \eta) := \E_{X_1, X_2} \left[\left\|\psi(X_1) - \eta(X_2)\right\|^{2}\right], ~\text{s.t.}~\bSigma_{\psi, \psi} = \bSigma_{\eta, \eta} = \bI_{k}
\end{align}
Here $\bSigma_{\psi,\psi} \in \R^{k\times k}$ and $(\bSigma_{\psi,\psi})_{i,j} = \E_{X_1}[\psi_i(X_1)\psi_j(X_1)]$ and similarly for $\eta:\cX_2\rightarrow \R^k$. 
As we will show in Proposition \ref{prop:ace=cca}, the above objective is equivalent to the following non-linear CCA: 
\begin{align*}
\max_{\psi,\eta}	L_{\text{CCA}}(\psi, \eta) := \E_{X_1, X_2} \left[\psi(X_1)^\top \eta(X_2)\right], ~\text{s.t.}~\bSigma_{\psi, \psi} = \bSigma_{\eta, \eta} = \bI_{k}.
\end{align*}
\paragraph{Connection to SimSiam:}
In the setting for the SimSiam~\cite{chen2021exploring} method, $X_1$ and $X_2$ are two randomly augmented images. 
The non-linear CCA problem is almost identical to SimSiam, except that we use normalization of representation instead of stop-gradient to prevent representation collapse.
CCA maximizes the inner product of the representations for each positive pairs $(X_1, X_2)$ generated from their joint distribution. At the same time, the normalization constraint ensures that the representation doesn't collapse to trivial function, so we do not need negative samples. 
We now demonstrate how our previous analysis can easily apply to non-linear CCA.


\begin{theorem}[General theorem for non-linear CCA]
	\label{thm:PCR_general_main}
Let $\psi:\cX_1\rightarrow \R^k, \eta:\cX_2\rightarrow \R^k$ be the solution of Eqn. \eqref{eqn:ACE}. Denote scalars $\sigma_i:= \E_{X_1X_2}[\psi_i(X_1)\eta_i(X_2)]$. 
Then the approximation error of $\psi$ satisfies:
	\begin{align*}
	e_{\apx}(\psi):= & \min_{\bW\in\R^{k\times k}}\E[\|f^*(X_1)-\bW^\top \psi(X_1)\|^2]\\
	\leq & \sum_{y=1}^k \min_{g_y\in L^2(X_2)} 2(\|(\cT_k - \cL)\circ g_y\|^2_{L^2(X_1)} +\|\cL \circ g_y - f^*_y\|^2_{L^2(X_1)}) . 
	\end{align*}	
	Here $f^*$ is the optimal function to predict the one-hot encoder of $Y$ with $X_2$, i.e., $f^*_y(x_1) = \E[1(Y=y) | X_1=x_1 ] = P(Y=y|X_1=x_1)$. Here $(\cT_k \circ g_y)(x_1):= \sum_{i=1}^k\sigma_i \E[\eta_i(X_2) g_y(X_2)] \psi_i(x_1)$, and $(\cL\circ g_y)(x_1):=\E_Y[\E_{X_2}[g_y(X_2)|Y]|X_1=x_1 ]. $
\end{theorem}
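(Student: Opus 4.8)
The plan is to reduce the non-linear CCA bound to the reconstruction-based bound of Section~\ref{sec:CI} by interpreting the learned $\eta(X_2)$ as a surrogate pretext target. First I would observe that, by Proposition~\ref{prop:ace=cca} and the first-order optimality conditions for the constrained problem \eqref{eqn:ACE}, the solution $(\psi,\eta)$ of ACE is exactly the top-$k$ singular system of the conditional expectation operator relating $L^2(X_1)$ and $L^2(X_2)$: namely $\E[\eta_i(X_2)\mid X_1] = \sigma_i\,\psi_i(X_1)$ and $\E[\psi_i(X_1)\mid X_2] = \sigma_i\,\eta_i(X_2)$, with the $\sigma_i=\E[\psi_i(X_1)\eta_i(X_2)]$ being the singular values. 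This identifies $\psi$ as (a rescaling of) $\E[\eta(X_2)\mid X_1]$, i.e. $\psi$ is the optimal universal-function-class representation for predicting the pretext target $\eta(X_2)$. The operator $\cT_k$ is then precisely the rank-$k$ truncation of $X_2\mapsto \E[\,\cdot\mid X_1\,]$ expressed through this singular system: $(\cT_k\circ g_y)(x_1) = \sum_i \sigma_i \E[\eta_i(X_2)g_y(X_2)]\,\psi_i(x_1)$.

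Next I would fix, for each label value $y\in[k]$, an arbitrary test function $g_y\in L^2(X_2)$ and use it as a ``reconstruction witness.'' The key algebraic step is the decomposition
\begin{align*}
f^*_y(x_1) = (\cT_k\circ g_y)(x_1) + \big(f^*_y(x_1) - (\cT_k\circ g_y)(x_1)\big),
\end{align*}
where the first term lies in $\sp\{\psi_1,\dots,\psi_k\}$ by construction of $\cT_k$, hence contributes nothing to $e_\apx(\psi)$ once we take the infimum over linear maps $\bW$. Therefore
\begin{align*}
e_\apx(\psi) \le \sum_{y=1}^k \big\|f^*_y - \cT_k\circ g_y\big\|^2_{L^2(X_1)}.
\end{align*}
Then I would insert $\cL\circ g_y$ as an intermediate point and apply the triangle inequality together with $\|a+b\|^2\le 2\|a\|^2+2\|b\|^2$ to get
\begin{align*}
\big\|f^*_y - \cT_k\circ g_y\big\|^2_{L^2(X_1)} \le 2\big\|(\cT_k-\cL)\circ g_y\big\|^2_{L^2(X_1)} + 2\big\|\cL\circ g_y - f^*_y\big\|^2_{L^2(X_1)},
\end{align*}
and taking the infimum over $g_y$ for each $y$ and summing yields the claimed bound. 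The role of $\cL\circ g_y = \E_Y[\E_{X_2}[g_y(X_2)\mid Y]\mid X_1]$ is exactly the ``route through $Y$'' from the proof sketch of Lemma~\ref{lemma:discrete_case_CI}: the second term measures how well $g_y$, after being pushed through the label $Y$, reconstructs the posterior $f^*_y$ (this is small when $X_2$ correlates with all values of $Y$, analogous to the full-rank condition on $\bA$), while the first term measures the gap between the true conditional-expectation operator restricted through $Y$ and its rank-$k$ CCA truncation (this is the analogue of $\epsilon_{\tCI}$ — it vanishes under exact conditional independence $X_1\perp X_2\mid Y$, since then $\E[\cdot\mid X_1]$ already factors through $Y$ and has rank at most $k$).

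I expect the main obstacle to be establishing the singular-value-decomposition characterization of the ACE solution rigorously in the infinite-dimensional $L^2$ setting — in particular justifying that the constrained optimum of \eqref{eqn:ACE} coincides with the top-$k$ singular functions, that the operator $\E[\cdot\mid X_1]:L^2(X_2)\to L^2(X_1)$ is compact (or at least that the relevant truncation is well-defined), and that the identity $(\cT_k\circ g_y)(x_1)=\sum_i \sigma_i\E[\eta_i(X_2)g_y(X_2)]\psi_i(x_1)$ genuinely represents the action of applying $\E[\cdot\mid X_1]$ and projecting onto $\sp\{\psi_i\}$. Once that operator-theoretic groundwork is in place (it may already be available via the cited ACE literature \citep{makur2015efficient,breiman1985estimating,buja1990remarks} and Proposition~\ref{prop:ace=cca}), the remaining steps are elementary Hilbert-space manipulations as sketched above.
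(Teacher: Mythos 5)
Your proof is correct and follows essentially the same route as the paper's: observe that $\cT_k\circ g_y$ lies in $\sp\{\psi_1,\dots,\psi_k\}$ (so it is realizable by some linear map $\bW$), then split $f^*_y - \cT_k\circ g_y$ through the intermediate $\cL\circ g_y$ and apply $\|a+b\|^2 \le 2\|a\|^2 + 2\|b\|^2$. One small remark: the paper actually proves the slightly stronger \emph{equality} $\min_{\bW}\E\|f^*_y - \bw_y^\top\psi\|^2 = \min_{g_y}\E\|f^*_y - \cT_k\circ g_y\|^2$ by arguing the map $g\mapsto (\sigma_i\langle v_i,g\rangle)_i$ reaches all of $\R^k$; your one-sided inequality sidesteps the surjectivity question (which would fail if some $\sigma_i=0$) and is arguably cleaner, since the theorem only claims an upper bound. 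Also, the operator-theoretic worry you flag at the end — rigorously identifying the ACE solution with the top-$k$ singular system of $\E[\cdot\mid X_1]$ — is not actually needed to prove \emph{this} theorem, because $\cT_k$ is \emph{defined} in the statement directly in terms of the learned $\psi,\eta,\sigma_i$, so $\cT_k\circ g_y\in\sp\{\psi_i\}$ holds by definition; the SVD characterization becomes load-bearing only in the corollaries (where one wants to interpret $\cT_k$ as a genuine rank-$k$ truncation of the conditional-expectation operator and relate $\sigma_i$ to maximal correlations).
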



The proof of this theorem and its corollaries below can be found in Appendix~\ref{sec:proof_cca_ace}.
With this theorem, we can apply different choices of $g_y$ to derive the generalization bound. 
If we choose $g_y$ such that $\E[g_y(X_2)|Y=y] = 1(Y=y)$, 
 we get the following generalization bound:

\begin{corollary}[Generalization bound with non-linear CCA.]
	\label{coro:PCR_non-zero_maximal_correlation_main}
	In the same setting of Theorem \ref{thm:PCR_general_main}, and suppose the learned $\psi$ satisfies Assumption \ref{assumption:subgaussian_psi}, 
	then we have:
	$$ ER_{\psi}(\hat\bW) \leq \tilde O\left(\frac{k\tilde\epsilon_{\tCI}^2}{\tilde\lambda^2} + \sigma^2 \frac{k}{n_2} \right) .$$
	Here $ \tilde\epsilon_{\tCI}^2 := \max_{\|g\|_{L^2(X_2)}=1} \E_{X_1} (\E[g(X_2)|X_1] - \E[\E[g(X_2)|Y]|X_1])^2  $ is the measure of approximate conditional independence, and $\tilde\lambda$ is the ($k-1$)-th maximal correlation between $X_2$ and $Y$\footnote{The definition and more discussion of maximal correlation between two random variable are deferred in Definition \ref{def:max_correlation} and the next subsection.}. 
\end{corollary}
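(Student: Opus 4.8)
The plan is to split the excess risk $\ER_\psi(\hat\bW)$ into an approximation term $e_{\apx}(\psi)$, which I bound by plugging a carefully chosen family $\{g_y\}_{y\in[k]}$ into Theorem~\ref{thm:PCR_general_main}, and an estimation term, handled by exactly the finite-sample linear-regression argument already used for Theorem~\ref{thm:CI_nonlinear_sample_complexity}: since $\psi$ satisfies Assumption~\ref{assumption:subgaussian_psi} and $n_2 \gg \rho^4(k + \log(1/\delta))$, regressing the one-hot label on the $k$-dimensional representation $\psi$ contributes $\tilde{\cO}(\sigma^2 k/n_2)$ to the risk, together with a constant-factor blow-up of $e_{\apx}(\psi)$. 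So the real work is to show $e_{\apx}(\psi) \lesssim k\,\tilde\epsilon_{\tCI}^2/\tilde\lambda^2$.

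For the approximation term I would instantiate Theorem~\ref{thm:PCR_general_main} with, for each $y\in[k]$, a function $g_y \in L^2(X_2)$ chosen (modulo an additive constant that the downstream intercept absorbs) so that $\E[g_y(X_2)\mid Y = y'] = 1(y'=y)$ for all $y'$. Such a $g_y$ exists precisely because $\tilde\lambda > 0$ makes the conditional-expectation map from $L^2(X_2)$ onto mean-zero functions of $Y$ surjective; writing $1(\cdot=y) - P(Y=y) = \sum_{i=1}^{k-1} c_{y,i} v_i$ in the basis of maximal-correlation functions $\{v_i\}$ of $Y$ (Definition~\ref{def:max_correlation}, with correlations $\lambda_i$ and paired functions $\{u_i\}$ of $X_2$), one takes $g_y = \sum_{i=1}^{k-1}(c_{y,i}/\lambda_i)u_i$. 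Two things then hold: (i) the second term of the Theorem~\ref{thm:PCR_general_main} bound vanishes, since $(\cL\circ g_y)(x_1) = \E_Y[\E_{X_2}[g_y(X_2)\mid Y]\mid X_1 = x_1] = \E_Y[1(Y=y)\mid X_1=x_1] = P(Y=y\mid X_1=x_1) = f^*_y(x_1)$; and (ii) $\|g_y\|_{L^2(X_2)}^2 = \sum_i c_{y,i}^2/\lambda_i^2 \le \|1(\cdot=y)\|_{L^2(Y)}^2/\tilde\lambda^2 \le 1/\tilde\lambda^2$, so (summing over $y$ per-coordinate) $\sum_{y\in[k]} \|g_y\|_{L^2(X_2)}^2 \le k/\tilde\lambda^2$.

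It remains to control the first term $\|(\cT_k - \cL)\circ g_y\|_{L^2(X_1)}$, where $\cT$ denotes the conditional-expectation operator $g \mapsto \E[g(X_2)\mid X_1]$ on $L^2(X_2)$ and $\cT_k$ its ACE/CCA rank-$k$ truncation $g \mapsto \sum_{i\le k}\sigma_i\, \E[\eta_i(X_2)g(X_2)]\,\psi_i$. By the triangle inequality, $\|(\cT_k-\cL)\circ g_y\| \le \|(\cT-\cT_k)\circ g_y\| + \|(\cT-\cL)\circ g_y\|$. The second summand is at most $\tilde\epsilon_{\tCI}\|g_y\|_{L^2(X_2)}$ straight from the definition of $\tilde\epsilon_{\tCI}$ as the operator norm of $\cT - \cL$. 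For the first summand, $\cL$ factors through the $k$-dimensional space of functions of $Y$ and hence has rank $\le k$, while $\cT_k$ is (on the relevant mean-zero subspaces) the best rank-$k$ approximation of $\cT$ produced by the whitening-constrained CCA problem; therefore $\|\cT - \cT_k\|_{\op} = \sigma_{k+1}(\cT) \le \|\cT - \cL\|_{\op} \le \tilde\epsilon_{\tCI}$, so the first summand is also $\le \tilde\epsilon_{\tCI}\|g_y\|_{L^2(X_2)}$. Combining via Theorem~\ref{thm:PCR_general_main}, $e_{\apx}(\psi) \le \sum_{y\in[k]} 2\|(\cT_k-\cL)\circ g_y\|_{L^2(X_1)}^2 \le 8\tilde\epsilon_{\tCI}^2 \sum_{y\in[k]}\|g_y\|_{L^2(X_2)}^2 \le 8k\,\tilde\epsilon_{\tCI}^2/\tilde\lambda^2$, and adding the estimation term gives the stated bound.

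The hard part will be the operator-approximation step in the last paragraph: making rigorous that the whitening-constrained maximizer of $L_{\text{CCA}}$ really recovers the top-$k$ singular functions of $\cT$, that $\cT$ is compact so that an SVD and an Eckart--Young/Schmidt--Mirsky optimality statement are available in this infinite-dimensional $L^2$ setting, and --- more delicately --- keeping the constant/mean-zero bookkeeping consistent across $\cL$, $\cT_k$, and the construction of $g_y$ (this is where the count drops from $k$ to $k-1$, so that $\tilde\lambda$ is the $(k-1)$-th rather than the $k$-th maximal correlation, and where one must argue the additive constant in $g_y$ is harmless because it is absorbed by an intercept in the downstream regression). Everything else is triangle inequalities, the definition of $\tilde\epsilon_{\tCI}$, and reuse of the Section~\ref{sec:CI} regression analysis.
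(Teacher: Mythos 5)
Your proposal follows the same high-level route the paper takes: instantiate Theorem~\ref{thm:PCR_general_main} with $g_y = \cA^{\dagger}\circ 1(\cdot=y)$ so that $\cL\circ g_y = f^*_y$ and the second term vanishes, then bound the first term by controlling $\|g_y\|_{L^2(X_2)}$ through the $(k-1)$-th maximal correlation, and finally graft on the finite-sample linear-regression analysis from Section~\ref{sec:CI}. Where you add genuine value is in the operator-norm step. The paper's appendix proof asserts $\|\bar\cE\|_{\op}^2\|\cA^\dagger\|_{\op}^2\sum_y\|1(Y=y)\|^2 \lesssim \tilde\epsilon^2_{\tCI}/\tilde\beta^2$ without ever defining $\bar\cE$ or explaining why its operator norm is comparable to $\|\cT-\cL\|_{\op}=\tilde\epsilon_{\tCI}$; your triangle inequality $\|\cT_k-\cL\|_{\op}\le\|\cT-\cT_k\|_{\op}+\|\cT-\cL\|_{\op}$ together with the Eckart--Young/Schmidt--Mirsky observation that $\cL$ is a rank-$\le k$ (rank-$\le k-1$ on the mean-zero subspace) operator, so $\|\cT-\cT_k\|_{\op}\le\|\cT-\cL\|_{\op}$, is exactly the missing justification. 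You were also right to flag the constant-function bookkeeping as the delicate part; one small index slip there is that, since the constant is the top singular function and the ACE-derived $\cT_k$ picks up singular functions $2,\dots,k+1$, the quantity you write as $\sigma_{k+1}(\cT)$ should really be $\sigma_{k+2}(\cT)$ on the mean-zero subspace --- but this only strengthens the inequality and does not affect the bound. One cosmetic remark: your estimate $\sum_y\|g_y\|^2_{L^2(X_2)}\le k/\tilde\lambda^2$ is looser than what your own computation yields, since $\sum_y\|1(\cdot=y)-P(Y=y)\|^2_{L^2(Y)}\le\sum_yP(Y=y)=1$ gives $\sum_y\|g_y\|^2\le 1/\tilde\lambda^2$; either way the stated $\tilde O(k\tilde\epsilon^2_{\tCI}/\tilde\lambda^2)$ follows, and the extra slack matches the $\tilde\beta\in[\tilde\lambda/\sqrt{k},\tilde\lambda]$ looseness the paper itself carries between its appendix ($1/\tilde\beta^2$) and main-text ($k/\tilde\lambda^2$) statements.
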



\begin{assumption}[$\alpha$-Bayes error]
	\label{assump:small_bayes_error} 
	We assume $Y$ is almost deterministic when predicting from either $X_1$ or $X_2$.   Specifically, there exists a classifier $g_1^*$ such that $P_{X_1,Y}(g_1^*(x)\neq y) \leq \alpha$; there exists $g_2^*$ such that 	$P_{X_2,Y}(g_2^*(x)\neq y) \leq \alpha$.
\end{assumption}
If we choose $g_y(x_2)=1(g_2^*(x_2)=y),\forall y\in [k]$ where $g_2^*:=\E[Y|X_2]$ in Theorem \ref{thm:PCR_general_main}, we get the following corollary:
\begin{corollary}[Guarantees with small Bayes error]
	\label{thm:PCR_symmetric_main} 
	Under the same setting and algorithm as Corollary \ref{coro:PCR_non-zero_maximal_correlation_main}, if additionally we assume $\alpha$-Bayes error (Assumption \ref{assump:small_bayes_error}), we have that the generalization error also satisfies:
	\begin{equation*}
	ER_{\psi}(\hat\bW) \leq \tilde O\left(\frac{\alpha}{1-\lambda} + \sigma^2 \frac{k}{n_2} \right) ,
	\end{equation*}	
	where $\lambda$ is the $k$-th maximal correlation between $X_1$ and $X_2$. 
\end{corollary}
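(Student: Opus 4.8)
The plan is to specialize Theorem~\ref{thm:PCR_general_main} to the test functions $g_y(x_2) := \mathbf{1}\{g_2^*(x_2) = y\}$ for $y\in[k]$, where $g_2^*$ is the $\alpha$-Bayes classifier from $X_2$ promised by Assumption~\ref{assump:small_bayes_error}, and then bound the two resulting terms $\sum_{y=1}^k \|(\cT_k-\cL)\circ g_y\|_{L^2(X_1)}^2$ and $\sum_{y=1}^k\|\cL\circ g_y - f^*_y\|_{L^2(X_1)}^2$. Once $e_{\apx}(\psi) = \tilde{O}(\alpha/(1-\lambda))$ is established this way, the estimation error $\tilde{O}(\sigma^2 k / n_2)$ is added verbatim from the finite-sample linear-regression argument already used for Theorem~\ref{thm:CI_nonlinear_sample_complexity} and Corollary~\ref{coro:PCR_non-zero_maximal_correlation_main} (this is where Assumption~\ref{assumption:subgaussian_psi} on $\psi$ enters), giving the claimed bound on $ER_\psi(\hat{\bW})$.

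For the second term, the key elementary computation is that $h_y(y') := \E[g_y(X_2)\mid Y=y'] = \Pr(g_2^*(X_2)=y \mid Y=y')$ is close to $\mathbf{1}\{y=y'\}$: summing $\sum_{y}|h_y(y') - \mathbf{1}\{y=y'\}|$ and averaging over $Y$ splits into a diagonal contribution and an off-diagonal contribution, each of which integrates to at most $\Pr(g_2^*(X_2)\neq Y)\le\alpha$. Since $\cL\circ g_y = \E[h_y(Y)\mid X_1]$ and $f^*_y = \E[\mathbf{1}\{Y=y\}\mid X_1]$ and conditional expectation is an $L^2$-contraction, this yields $\sum_y\|\cL\circ g_y - f^*_y\|_{L^2(X_1)}^2 = \tilde{O}(\alpha)$. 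For the first term I would write $g_y(X_2) = h_y(Y) + r_y(X_2)$ with $r_y := g_y(X_2)-\E[g_y(X_2)\mid Y]$ conditionally mean-zero; the Pythagorean identity together with the $\alpha$-Bayes bound give $\sum_y\|r_y\|^2_{L^2(X_2)} = \tilde{O}(\alpha)$ --- this is precisely the advantage of the choice $g_y=\mathbf{1}\{g_2^*=y\}$, namely that $g_y$ is $\alpha$-close to lying in the $k$-dimensional space of functions of $Y$. Using the SVD structure of the ACE/CCA solution, $\cT_k$ is the composition of the projection $P_k$ onto $\mathrm{span}\{\psi_1,\dots,\psi_k\}$ with the conditional-expectation operator $g\mapsto\E[g(X_2)\mid X_1]$, so $(\cT_k-\cL)\circ g_y$ decomposes into (i) the residual contribution $P_k\,\E[r_y(X_2)\mid X_1]$, of norm at most $\sigma_1\|r_y\|\le\|r_y\|$ and hence contributing $\tilde{O}(\alpha)$ in total, and (ii) the function-of-$Y$ contribution $(I-P_k)\,\E[h_y(Y)\mid X_1]$, which measures how much the directions $\{\E[\mathbf{1}\{Y=y\}\mid X_1]\}_y$ (equivalently, each $f^*_y$) stick out of the learned top-$k$ singular subspace $\mathrm{span}\{\psi_1,\dots,\psi_k\}$.

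I expect term (ii) to be the crux. Under exact conditional independence it vanishes, since $\mathrm{span}\{\psi_1,\dots,\psi_k\}$ then essentially equals the image of $h\mapsto \E[h(Y)\mid X_1]$; under only approximate CI it must be controlled by a subspace-perturbation argument for the top-$k$ left singular functions of the $X_1$--$X_2$ cross-covariance operator, and the relevant conditioning quantity turns out to be $1-\sigma_k = 1-\lambda$ (rather than a raw spectral gap), which is what produces the $(1-\lambda)^{-1}$ amplification of the $O(\alpha)$ budget. Care is needed with the zero-mean-versus-constant bookkeeping inherent to the CCA/ACE normalization constraint $\bSigma_{\psi,\psi}=\bSigma_{\eta,\eta}=\bI_k$ (so that the truncation level $k$ matches the number of $Y$-informative singular directions), and with checking that this amplification multiplies only the $O(\alpha)$ error and does not reintroduce an $\tilde\epsilon_{\tCI}$ term into the final bound. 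Assembling (i), (ii), and the second term gives $e_{\apx}(\psi)=\tilde{O}(\alpha/(1-\lambda))$, and adding the estimation error completes the proof.
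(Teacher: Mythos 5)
Your overall plan is the right one and matches the paper: specialize Theorem~\ref{thm:PCR_general_main} to $g_y(\cdot)=1(g_2^*(\cdot)=y)$ and bound the two resulting terms. Your bound on the second term $\sum_y\|\cL\circ g_y - f^*_y\|^2$ via the $L^2$-contraction of $h\mapsto\E[h(Y)\mid X_1]$ and the closeness of $h_y(y')$ to $1(y=y')$ is correct (it amounts to a slightly cleaner rewrite of the paper's triangle inequality through $w_{1,y}:=1(g_1^*(\cdot)=y)$), and your piece (i) of the first term, $\|P_k\E[r_y\mid X_1]\|\le\|r_y\|_{L^2}$ with $\sum_y\|r_y\|^2=O(\alpha)$, is also sound. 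The genuine gap is at exactly the place you flag as the crux, piece (ii): bounding $\sum_y\|(I-P_k)\cL g_y\|^2$.

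You propose to control (ii) by ``a subspace-perturbation argument for the top-$k$ left singular functions \ldots the relevant conditioning quantity turns out to be $1-\sigma_k$''. This does not work as described. A Davis--Kahan-type subspace perturbation needs a spectral gap $\sigma_k(\cT)-\sigma_{k+1}(\cT)$, which is nowhere assumed and would give the wrong denominator; you note the quantity should not be ``a raw spectral gap,'' but you do not supply the alternative mechanism. The paper's Lemma~\ref{lemma:T_k_small_perturbation} is not a subspace-perturbation argument at all; it is a balance argument. Claim~\ref{claim:near_block_diagonal} turns the $\alpha$-Bayes assumption into $\sum_y\langle\cT w_{2,y},w_{1,y}\rangle\ge 1-2\alpha$, hence $\sqrt{\sum_y\|\cT w_{2,y}\|^2}\ge 1-2\alpha$ by Cauchy--Schwarz. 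Then, using $\|\cT\|_{\op}=1$ (Claim~\ref{claim:T_norm1}), $\|\cT-\cT_k\|_{\op}=\lambda_{k+1}$, and $\sum_y\|w_{2,y}\|^2=1$, one bounds $\sum_y\|\cT w_{2,y}\|^2\le(1-\lambda_{k+1}^2)\sum_y\|P_{\cT_k}w_{2,y}\|^2+\lambda_{k+1}^2$; combining the two forces $\sum_y\|(\cT-\cT_k)w_{2,y}\|^2\lesssim\frac{\alpha\lambda_{k+1}^2}{1-\lambda_{k+1}^2}$, and together with $\sum_y\|\cT w_{2,y}-w_{1,y}\|^2\le4\alpha$ (Claim~\ref{claim:near_block_diagonal}) yields $\sum_y\|\cT_k w_{2,y}-w_{1,y}\|^2\le\frac{16\alpha}{1-\lambda_{k+1}^2}$, where $\lambda_{k+1}=\sigma_{k+1}(\cT)$ is the $k$-th maximal correlation (your ``$1-\sigma_k$'' is off by one from this). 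That lemma is precisely what closes your piece (ii): since $\cT_k w_{2,y}$ lies in the range of $P_k$, one has $(I-P_k)w_{1,y}=(I-P_k)(w_{1,y}-\cT_k w_{2,y})$, and Lemma~\ref{lemma:Ew_small} lets you pass from $\cL g_y$ to $w_{1,y}$ at an $O(\alpha)$ cost. Without this balance lemma, your decomposition into (i) and (ii) by itself does not produce the $1/(1-\lambda)$ factor, and the proof is incomplete.
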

When the joint distribution of $X_1,X_2$ is non-degenerate, $\lambda<1$. Therefore when Bayes error is small, the learned representation will yield a good downstream performance. 

This corollary and the clustering setting is inspired by Theorem 3.7 in \cite{haochen2021provable}, which showed a similar result for a spectral contrastive loss.
Our corollary here shows that non-linear CCA achieves similar guarantees as spectral contrastive loss, without needing any negative samples.

\begin{remark}
	All the results in this section holds in the same way when replacing $Y$ with the more fine-grained labels $\tilde Y=[Y,Z]$ as discussed in the previous section, and by replacing $k$ by the cardinality of $\tilde Y$. 
\end{remark}

\subsection{Connection to  ACE algorithm and maximal correlation}
In this section, we review the variational formulation of our problem, and a closer look at the Breiman and Friedman's alternating conditional expectation (ACE) algorithm~\cite{makur2015efficient,breiman1985estimating,buja1990remarks}. 
Recall $L^2(X_1)$ and $L^2(X_2)$ are the square integrable function with respect to the marginal distribution of $X_1$ and $X_2$. 
 We will understand the maximal correlation and the ACE algorithm on the operator $\cT:L^2(X_2)\rightarrow L^2(X_1)$, where $(\cT\circ g)(x_1):= \E[g(X_2)|X_1 = x_1 ]$ for any $g\in L^2(X_2)$. We will show that ACE algorithm decomposes the operator $\cT$ and also implicitly defines the maximal correlation between the two random variables $X_1$ and $X_2$. 

Due to Courant–Fischer–Weyl min-max principle, the top singular value of $\cT$ can be computed by the variational problem 
$$\max_{\|u\|_{L^2(X_1)} =1 , \|v\|_{L^2(X_2)}=1} \left\{ \langle u, \cT v\rangle \equiv \int  p(x_1,x_2)u(x_1) v(x_2) dx_1 dx_2 \right\}.$$ 

	
The top $k$ singular vectors of $\cT$ can be computed by the variational problem

\begin{align}
\notag 
\{\psi_i\}_{i=1}^k, \{\eta_i\}_{i=1}^k	\leftarrow  & \argmax_{\psi,\eta} \left\{  \sum_{i=1}^k \int \langle \psi_i, \cT \eta_i\rangle \equiv \E_{X_1, X_2} \left[\psi(X_1)^\top \eta(X_2)\right] \right\},\\
	\label{eq:k-svd}
&  ~\text{s.t.}~\bSigma_{\psi, \psi} = \bSigma_{\eta, \eta} = I_{k}.
\end{align}


\begin{lemma}
ACE algorithm (Eqn. \eqref{eq:k-svd}) with $k$-dimensional vector-valued functions solves the ($k+1$)-SVD of $\cT$, and the top singular vectors of $\cT$ is always achieved by constant functions $u(x_1)\equiv 1$ and $v(x_2)\equiv 1$. 

\end{lemma}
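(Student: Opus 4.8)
The plan is to reduce the statement to the spectral theory of the conditional-expectation operator $\cT:L^2(X_2)\to L^2(X_1)$, $(\cT\circ g)(x_1)=\E[g(X_2)\mid X_1=x_1]$, using Jensen's inequality for the norm bound and the Courant--Fischer--Weyl variational principle (already invoked above) for the $k$-dimensional objective in \eqref{eq:k-svd}.

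\emph{Norm of $\cT$ and the constant mode.} First I would show $\|\cT\|_{\mathrm{op}}=1$ and that the pair of constant functions attains it. For any $g\in L^2(X_2)$, the conditional Jensen inequality gives $(\cT\circ g)(x_1)^2 = \E[g(X_2)\mid X_1=x_1]^2 \le \E[g(X_2)^2\mid X_1=x_1]$; taking expectation over $X_1$ yields $\|\cT\circ g\|_{L^2(X_1)}^2 \le \E[g(X_2)^2]=\|g\|_{L^2(X_2)}^2$, so $\|\cT\|_{\mathrm{op}}\le 1$. Conversely $v\equiv 1$ has $\|v\|_{L^2(X_2)}=1$ and $\cT\circ v=\E[1\mid X_1=\cdot]\equiv 1=:u$ with $\|u\|_{L^2(X_1)}=1$, so $\langle u,\cT v\rangle=1$. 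Hence $\sigma_1(\cT)=1$ and $(u,v)=(\mathbf 1,\mathbf 1)$ is a top singular pair, which is the second claim. (The adjoint is $(\cT^*\circ h)(x_2)=\E[h(X_1)\mid X_2=x_2]$, read off from $\langle\cT g,h\rangle_{L^2(X_1)}=\E[g(X_2)h(X_1)]=\langle g,\cT^* h\rangle_{L^2(X_2)}$; since $\cT^*\mathbf 1=\mathbf 1$, the function $v\equiv 1$ is simultaneously the top right singular vector.)

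\emph{Block structure and Ky Fan.} Next I would observe that $\cT$ preserves the mean-zero subspaces: if $\E[g(X_2)]=0$ then $\E[(\cT g)(X_1)]=\E[\E[g(X_2)\mid X_1]]=\E[g(X_2)]=0$, and symmetrically for $\cT^*$. Writing $L^2(X_i)=\mathrm{span}\{\mathbf 1\}\oplus\mathcal{V}_i$ with $\mathcal{V}_i$ the mean-zero subspace, $\cT$ block-diagonalizes into the rank-one piece $\mathbf 1\mapsto\mathbf 1$ with singular value $1$ and the restriction $\cT|_{\mathcal{V}_2\to\mathcal{V}_1}$, whose ordered singular values are exactly $\sigma_2(\cT)\ge\sigma_3(\cT)\ge\cdots$ (with $\sigma_2(\cT)$ coinciding with the maximal correlation between $X_1$ and $X_2$). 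Assuming $\cT$ is Hilbert--Schmidt (e.g.\ finite $\chi^2$-divergence between $P_{X_1X_2}$ and $P_{X_1}\otimes P_{X_2}$, which holds in the finite-support settings of Section~\ref{sec:topic_model}), it has an SVD, and the Courant--Fischer--Weyl / Ky Fan maximum principle gives, for orthonormal systems $\{\psi_i\}_{i=1}^k\subset L^2(X_1)$, $\{\eta_i\}_{i=1}^k\subset L^2(X_2)$ --- precisely the constraint $\bSigma_{\psi,\psi}=\bSigma_{\eta,\eta}=I_k$ --- that $\sum_{i=1}^k\langle\psi_i,\cT\eta_i\rangle\le\sum_{i=1}^k\sigma_i(\cT)$, with equality iff $\{\psi_i\}$ and $\{\eta_i\}$ span the top-$k$ left and right singular subspaces. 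The ACE algorithm additionally constrains its functions to be mean-zero; by the block structure this is exactly the Ky Fan problem for $\cT|_{\mathcal{V}_2\to\mathcal{V}_1}$, so with $k$-dimensional vector-valued functions its maximizer recovers the singular pairs ranked $2,\dots,k+1$ of $\cT$. Adjoining the constant pair $(\mathbf 1,\mathbf 1)$ identified above --- the rank-one singular pair of $\cT$ --- assembles the full top-$(k{+}1)$ SVD of $\cT$, which is the first claim.

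\emph{Main obstacle.} The only genuine subtlety is operator-theoretic: to talk about an honest SVD and the equality case of Ky Fan in infinite dimensions I need $\cT$ to be compact, which I would secure with the mild integrability/regularity assumption above (finite $\chi^2$-mutual information) and expect to carry as a standing hypothesis rather than to prove. I would also double-check the normalization once more --- that the optimal value of the mean-zero ACE/CCA objective equals $\sum_i\sigma_i$ of the \emph{restricted} operator, not the full one --- since this is precisely the formal content of "ACE solves the $(k{+}1)$-SVD" with the leading trivial factor $\sigma_1=1$ peeled off as the constant mode.
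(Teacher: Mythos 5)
Your argument is correct and follows the same route as the paper: identify the constant pair $(\mathbf{1},\mathbf{1})$ as the top singular pair (your Jensen step for $\|\cT\|_{\mathrm{op}}\le 1$ is exactly the paper's Claim~\ref{claim:T_norm1}), then read the zero-mean constraint of ACE as orthogonality to $\mathbf{1}$ so that Courant--Fischer/Ky Fan returns the singular pairs ranked $2,\dots,k+1$. The two points you make explicit --- that $\cT$ and $\cT^*$ preserve the mean-zero complements, so the ACE problem really is the Ky Fan problem for the restricted block rather than merely a constrained version of the full one; and that a compactness hypothesis (Hilbert--Schmidt, e.g.\ finite $\chi^2$-divergence) is needed for the SVD and the Ky Fan equality case to exist in infinite dimensions --- are both silently assumed in the paper's much terser proof, and are worth stating.
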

\begin{proof}
	Observe that the top singular value $\sigma_1(\cT)$ is achieved by the top singular functions $u_1(x_1) =1\in L^2(X_1)$ and $v_1(x_2) = 1\in L^2(X_2)$. The constraint $\E f(X_1) =0$ corresponds to $ \langle u_1, f \rangle_{X_1}=0$, i.e., $f$ being in the complement subspace of the top left singular vector of $\cT$, and vice versa for $X_2$.  By the Courant-Fischer characterization of singular values, $\rho_1$ is the variational problem corresponding to $\sigma_2 (\cT)$. Similarly, $\psi_k,\eta_k$ are the $(k+1)$-th singular vectors of $\cT$ since they 	
	since $\rho_k = \langle \cT \eta_{k}, \psi_k\rangle $.   
\end{proof}
The second proposition shows that the variational form can be solved by the famous ACE algorithm of Breiman and Friedman~\cite{makur2015efficient,breiman1985estimating,buja1990remarks}.
\begin{proposition}
	\label{prop:ace=cca}
	
	The generalized ACE algorithm solves \eqref{eqn:ACE}, 
and is equivalent to the solution of non-linear CCA as in \eqref{eq:k-svd}.
\end{proposition}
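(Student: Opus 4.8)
The plan is to prove the two assertions of the Proposition in turn: first, that the minimization \eqref{eqn:ACE} and the maximization \eqref{eq:k-svd} are literally the same optimization up to an affine reparametrization of the objective, so that they have the same optimal set, which by the preceding Lemma is the $(k{+}1)$-SVD of $\cT$; second, that the alternating conditional expectation updates are exactly the subspace iteration that converges to that optimum. The core of the first part is a one-line algebraic identity: for any zero-mean $\psi:\cX_1\to\R^k$, $\eta:\cX_2\to\R^k$ with $\bSigma_{\psi,\psi}=\bSigma_{\eta,\eta}=\bI_k$, expanding the square and using $\E[\|\psi(X_1)\|^2]=\Trace(\bSigma_{\psi,\psi})=k$ (and likewise for $\eta$) gives
\[
L_{\text{ACE}}(\psi,\eta)=\E[\|\psi(X_1)\|^2]-2\,\E[\psi(X_1)^\top\eta(X_2)]+\E[\|\eta(X_2)\|^2]=2k-2\,L_{\text{CCA}}(\psi,\eta).
\]
Since \eqref{eqn:ACE} and \eqref{eq:k-svd} carry the identical constraint set, this shows $\argmin L_{\text{ACE}}=\argmax L_{\text{CCA}}$ and $\min L_{\text{ACE}}=2k-2\max L_{\text{CCA}}$, i.e. \eqref{eqn:ACE} is exactly non-linear CCA.

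Next I would identify the common maximizer with the SVD of $\cT$. The constraint $\E[\psi_i(X_1)]=0$ forces each $\psi_i$ into $\mathbf{1}^\perp\subset L^2(X_1)$ (the orthogonal complement of the constant function), and likewise $\eta_i\in\mathbf{1}^\perp\subset L^2(X_2)$; by the preceding Lemma $(\mathbf{1},\mathbf{1})$ is the top singular pair of $\cT$ with singular value $1$, so the feasible set is precisely $k$-tuples of orthonormal functions in those complements. Applying the Courant–Fischer–Weyl characterization there, exactly as in \eqref{eq:k-svd} and its Lemma, the maximizer of $\sum_i\langle\psi_i,\cT\eta_i\rangle$ over such orthonormal tuples is attained by the $2$nd through $(k{+}1)$-th singular vectors of $\cT$, with $\sigma_i=\E[\psi_i(X_1)\eta_i(X_2)]$ the corresponding singular values. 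Hence both \eqref{eqn:ACE} and \eqref{eq:k-svd} are solved by the $(k{+}1)$-SVD of $\cT$.

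It then remains to check that the classical alternating conditional expectation scheme realizes this optimum. Let $\cS=\cT^*$, i.e. $(\cS h)(x_2)=\E[h(X_1)\mid X_2=x_2]$; the tower rule gives $\langle\cT g,h\rangle_{L^2(X_1)}=\langle g,\cS h\rangle_{L^2(X_2)}$. For fixed $\eta$, minimizing $L_{\text{ACE}}$ over $\psi$ with $\bSigma_{\psi,\psi}=\bI_k$ is, by the identity above, the Procrustes problem $\max_{\bSigma_{\psi,\psi}=\bI_k}\sum_i\langle\psi_i,(\cT\eta)_i\rangle$, whose solution is the whitening $\psi=(\cT\eta)\,M_\eta^{-1/2}$ with $M_\eta=\E[(\cT\eta)(X_1)(\cT\eta)(X_1)^\top]$ — a conditional expectation followed by re-orthonormalization, automatically zero-mean since $\cS\mathbf{1}=\mathbf{1}$. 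Symmetrically the optimal $\eta$ given $\psi$ is the whitening of $\cS\psi$. Composing two half-steps advances $\mathrm{span}\{\eta_i\}\mapsto\cT^*\cT\,\mathrm{span}\{\eta_i\}$ inside $\mathbf{1}^\perp$, which is precisely orthogonal (subspace) iteration for the self-adjoint operator $\cT^*\cT$ restricted to $\mathbf{1}^\perp$. Each half-step is non-increasing in $L_{\text{ACE}}$, the stationary subspaces are the invariant subspaces of $\cT^*\cT|_{\mathbf{1}^\perp}$, and the dominant $k$-dimensional one is the SVD subspace above; hence the ACE iteration converges to the global minimizer of \eqref{eqn:ACE}.

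The main obstacle is the last step. The algebra and the variational identification are essentially bookkeeping on top of the preceding Lemma, but making Step~3 rigorous requires (a) proving that the constrained half-step is the whitening of a conditional expectation via a polar-decomposition/Procrustes argument in an infinite-dimensional Hilbert space, and (b) ruling out convergence to a non-optimal invariant subspace — which needs the spectrum of $\cT^*\cT$ on $\mathbf{1}^\perp$ to be discrete (compactness of $\cT$) and ideally a gap $\sigma_{k+1}>\sigma_{k+2}$, or at least uniqueness of the optimal subspace. If one reads the Proposition as an equivalence of the two \emph{optimization problems} together with the SVD characterization of their optimum, Steps~1–2 already suffice and Step~3 can be abbreviated to the remark that each alternating half-step is a conditional expectation followed by re-normalization — the fact that gives the algorithm its name.
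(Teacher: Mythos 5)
Your Step~1 is, verbatim, the entirety of the paper's proof: the paper expands $\E\sum_i(\eta_i(X_2)-\psi_i(X_1))^2$, uses $\E[\psi_i^2(X_1)]=\E[\eta_i^2(X_2)]=1$ from the orthonormality constraints to get $L_{\text{ACE}}=2k-2\sum_i\langle\psi_i,\cT\eta_i\rangle=2k-2L_{\text{CCA}}$, and stops there, concluding that the two problems have the same solution set since they share a constraint set and have affinely related objectives. So on the core claim your approach and the paper's coincide exactly.

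Everything beyond Step~1 is material the paper does not attempt. Your Step~2 — identifying the common optimizer with the $2$nd through $(k{+}1)$-th singular pairs of $\cT$ — is the content of the \emph{preceding} Lemma in the paper, not of Proposition~\ref{prop:ace=cca}; it is correct but redundant here. Your Step~3 — showing that the actual Breiman--Friedman alternating conditional expectation iteration (conditional expectation followed by re-whitening) converges to that optimum — is not proved anywhere in the paper. The paper's Proposition uses ``the generalized ACE algorithm'' loosely as a name for the minimization problem \eqref{eqn:ACE}, not for the iterative scheme, so the affine-equivalence identity is taken as the whole proof. You are right that if one reads the statement literally as a claim about the \emph{algorithm}, one would need exactly the ingredients you list: a Procrustes/polar-decomposition argument for each half-step in the infinite-dimensional $L^2$ setting, compactness of $\cT$ so the spectrum is discrete, and a spectral gap $\sigma_{k+1}>\sigma_{k+2}$ to rule out stalling in a non-dominant invariant subspace. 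The paper does not supply any of this, and your proposal is the more honest account of what would be required; your closing remark — that Steps~1--2 suffice under the weaker reading — correctly identifies the reading the paper intends.
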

\begin{proof}
	\begin{align*}
& \E \sum_{i=1}^k(\eta_i(X_2)- \psi_i(X_1))^2\\
= & \int_{x_1,x_2} p(x_1,x_2) \sum_{i=1}^k(\eta_i(x_2) - \psi_i(x_1))^2  \\
= & \sum_{i=1}^k \int_{x_1,x_2} (\eta_i^2(x_2) + \psi_i^2(x_1)) p (x_1,x_2) dx_1dx_2 -2 \sum_{i=1}^k \int_{x_1,x_2} p(x_1,x_2) \eta_i(x_2)\psi_i(x_1) dx_1dx_2 \\
= &\sum_i \left(\E_{X_1}[\psi_i^2(X_1)]+E_{X_2}[\eta_i^2(X_2)]-2\langle \psi_i,\cT \eta_i \rangle \right)\\
= & 2k - 2\sum_{i=1}^k \langle \psi_i,\cT \eta_i \rangle \tag{Due to the orthogonality constraints}.   
	\end{align*}

Therefore the solution of ACE is equivalent to that of non-linear CCA. 

	
\end{proof}
In summary, these two propositions show that calculating the SVD of $\cT$ corresponds to conducting the alternating conditional expectation algorithm~\cite{makur2015efficient,breiman1985estimating,buja1990remarks}. 

Finally, the generalized maximal correlation between $X_1$ and $X_2$ is associated with the singular values of $\cT$. 
\begin{definition}[$k$-th maximal correlation]
\label{def:max_correlation}
	For every $k\geq 1$, we define the $k$-th maximal correlation between $X_1$ and $X_2$ as:
	\begin{align*}
	 \lambda_k =  & \max_{f_i,g_i,i\in[k]} \min_{1\leq i\leq k} \E[f_i(X_1)g_i(X_2)],\\
	 \text{s.t. } &  \bSigma_{f,f} = \bI, \bSigma_{g,g} = \bI, \E[f_i(X_1)] = 0, \E[g_i(X_2)] = 0. 
	\end{align*}
\end{definition}
As shown in Propostion 3 and Theorem 2 of \cite{makur2015efficient}, the $k$-th maximal correlation is the ($k+1$)-th singular value of $\cT$ and therefore can be calculated from the ACE algorithm: $\lambda_k  = \E[\psi_k(X_1)\eta_k(X_2)  ]$ when $\psi,\eta$ solves Eq. \eqref{eqn:ACE}. One can also refer to \cite{makur2015efficient} for more geometric interpretation for the maximal correlation between two random variables.

\section{Experiments}\label{sec:experiments}
In this section, we empirically verify our claim that SSL performs well when ACI is satisfied.
More details for experiments can be found in Section~\ref{sec:more_exp}, including experiments in the text domain.





\begin{figure*}[!tb]
   \centering     
   \protect\begin{tabular}{cccc}
 \hspace{-0.4cm}
\includegraphics[height=77pt]{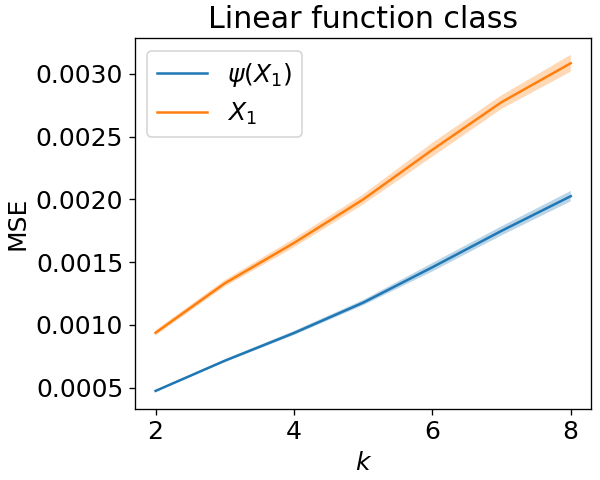} \hspace{-0.5cm} & 
\includegraphics[height=77pt]{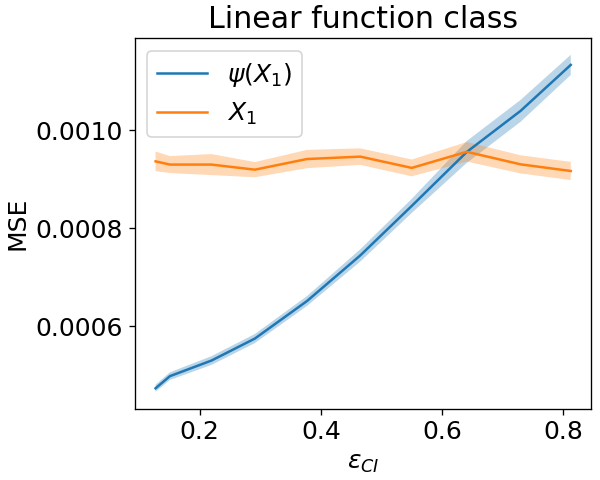}\hspace{-0.4cm} &
\includegraphics[height=77pt]{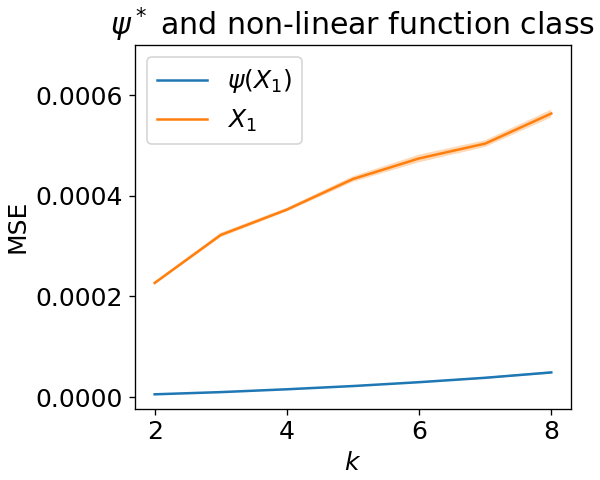}\hspace{-0.5cm} &
\includegraphics[height=77pt]{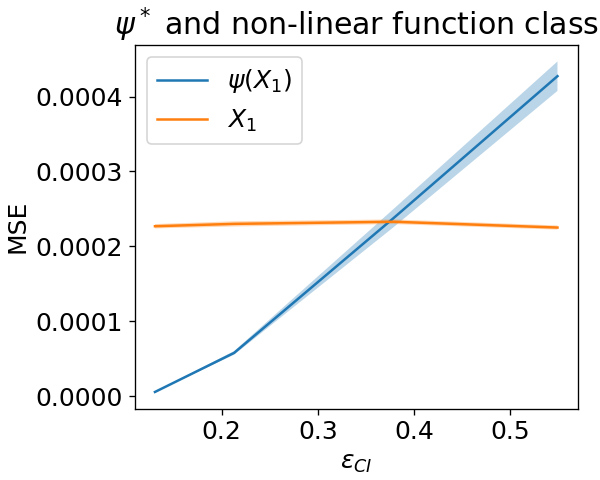}\hspace{-10pt}\\
(a) & (b) & (c) & (d)
   \end{tabular}
   \caption{\textbf{Left two}: how MSE scales with $k$ (the dimension of $Y$) and $\epsilon_{CI}$ (Definition~\ref{def-approx-CI})  with the linear function class. \textbf{Right two}: how MSE scales with $k$  and $\epsilon$ with $\psi^*$ and non-linear function class. Mean of $30$ trials are shown in solid line and one standard error is shown by shadow.} 
   \label{fig:simulation}
\end{figure*}
\paragraph{Simulations. }

With synthetic data, we verify how excess risk (ER) scales with the cardinality/feature dimension of $\cY$ ($k$), and ACI ($\epsilon_{CI}$ in Definition \ref{def-approx-CI}). We consider a mixture of Gaussian data and conduct experiments with both linear function space ($\cH_1$ with $\phi_1$ as identity map) and universal function space $\cH_u$.
We sample the label $Y$ uniformly from $\{1, ..., k\}$. For $i$-th class, the centers $\mu_{1i} \in \mathbb{R}^{d_1}$ and $\mu_{2i} \in \mathbb{R}^{d_2}$ are uniformly sampled from $[0,10)$.
Given $Y=i$, $\alpha\in [0,1]$, let $X_1 \sim \mathcal{N}(\mu_{1i}, \mathbf{I})$, $\hat{X_2} \sim \mathcal{N}(\mu_{2i}, \mathbf{I})$, and $X_2=(1-\alpha) \hat{X_2} + \alpha X_1$. Therefore $\alpha$ is a correlation coefficient: $\alpha=0$ ensures $X_2$ being CI with $X_1$ given $Y$ and when $\alpha=1$, $X_2$ fully depends on $X_1$. (if $d_1\neq d_2$, we append zeros or truncate to fit accordingly). 


We first conduct experiments with linear function class. We learn a linear representation $\psi$ with $n_1$ samples and the linear prediction of $Y$ from $\psi$ with $n_2$ samples. We set $d_1=50$, $d_2=40$, $n_1=4000$, $n_2=1000$ and ER is measured with Mean Squared Error (MSE). As shown in Figure \ref{fig:simulation}(a)(b), the MSE of learning with $\psi(X_1)$ scales linearly with $k$ as indicated in Theorem \ref{thm:CI_linear_sample_complexity}, and scales linearly with $\epsilon_{CI}$ associated with linear function class as indicated in Theorem \ref{thm:main_result_approximate_CI}.
Next we move on to general function class, i.e., $\psi^*=\E[Y|X_1]$ with a closed form solution (see example \ref{example:mixture_gaussian}). We use the same parameter settings as above. For baseline method, we use kernel linear regression to predict $Y$ using $X_1$ (we use RBF kernel which also has universal approximation power). 
As shown in Figure \ref{fig:simulation}(c)(d), the phenomenon is the same as what we observe in the linear function class setting, and hence they respectively verify Theorem \ref{thm:CI_nonlinear_sample_complexity} and Theorem \ref{thm:main_result_approximate_CI} with $\cH_u$.

\begin{figure}[!tb]
	\centering     
	\hspace{-3pt}
	\subfigure{\includegraphics[height=105pt]{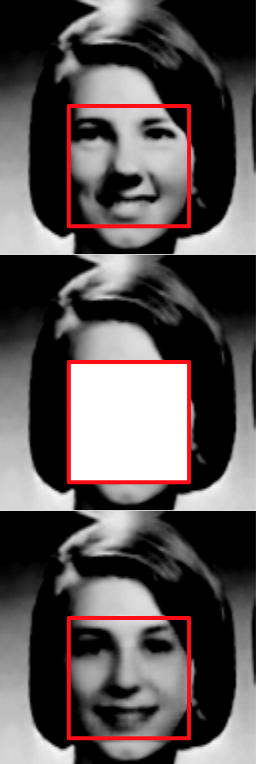}}
	\hspace{5pt}
	\subfigure{\includegraphics[height=120pt]{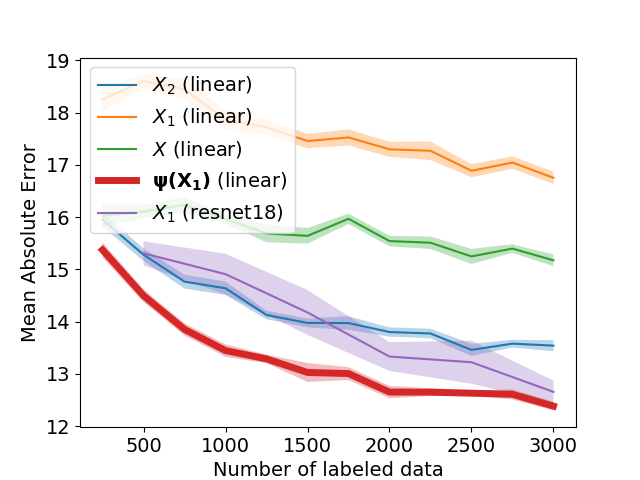}}
	\hspace{-10pt}
	\subfigure{\includegraphics[height=120pt]{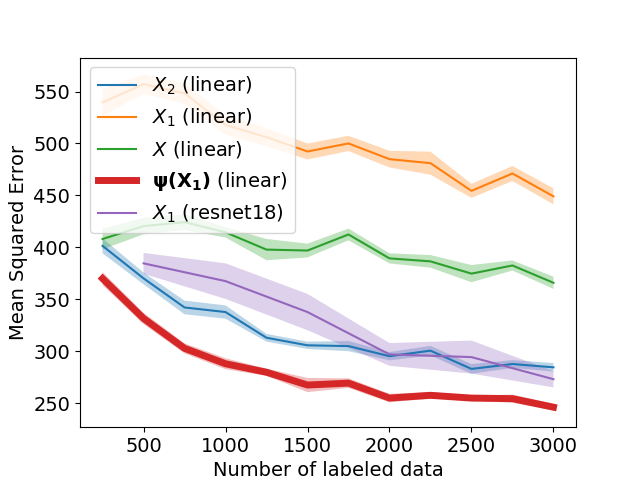}}
	\hspace{-10pt}
	\caption{\textbf{Left}: Example of the $X_2$ (in the red box of the 1st row), the $X_1$ (out of the red box of the 1st row), the input to the inpainting task (the second row), $\psi(X_1)$ (the 3 row in the red box), and in this example $Y=1967$. \textbf{Middle}: Mean Squared Error comparison of yearbook regression predicting dates. \textbf{Right}: Mean Absolute Error comparison of yearbook regression predicting dates. Experiments are repeated 10 times, with mean shown in solid line and one standard deviation in shadow.}
	\label{fig:yb-appendix}
\end{figure}

\paragraph{Computer Vision Task. }
We verify if learning from $\psi$ is more effective than learning directly from $X_1$, in a realistic setting (without enforcing conditional independence). Specifically, we test on the Yearbook dataset \citep{ginosar2015century}, and try to predict the date when the portraits are taken (denoted as $Y_D$), which ranges from $1905$ to $2013$.
We resize all the portraits to be $128$ by $128$. We crop out the center $64$ by $64$ pixels (the face), and treat it as $X_2$, and treat the outer rim as $X_1$ as shown in Figure \ref{fig:yb-appendix}. 
Our task is to predict $Y_D$, which is the year when the portraits are taken, and the year ranges from $1905$ to $2013$. 
For $\psi$, we learn $X_2$ from $X_1$ with standard image inpainting techniques \citep{pathak2016context}, and full set of training data (without labels). 
After that we fix the learned $\psi$ and learn a linear model to predict $Y_D$ from $\psi$ using a smaller set of data (with labels). 
Besides linear model on $X_1$, another strong baseline that we compare with is using ResNet18 \citep{he2016deep} to predict $Y_D$ from $X_1$. With the full set of training data, this model is able to achieve a Mean Absolute Difference of $6.89$, close to what state-of-the-art can achieve \citep{ginosar2015century}. ResNet18 has similar amount of parameters as our generator, and hence roughly in the same function class.
We show the MSE result as in Figure \ref{fig:yb-appendix}. Learning from $\psi$ is more effective than learning from $X_1$ or $X_2$ directly, with linear model as well as with ResNet18. Practitioner usually fine-tune $\psi$ with the downstream task, which leads to more competitive performance \citep{pathak2016context}.


\section{Conclusion}
In this work we theoretically quantify how an approximate conditional independence assumption that connects pretext and downstream task data  distributions can give sample complexity benefits of self-supervised learning on downstream tasks.
Our theoretical findings are also supported by experiments on simulated data and also on real CV and NLP tasks.
We would like to note that approximate CI is only a sufficient condition for a useful pretext task.
We leave it for future work to investigate other mechanisms by which pretext tasks help with downstream tasks.

\bibliographystyle{alpha}
\bibliography{ref}

\newcommand{\etalchar}[1]{$^{#1}$}
\begin{thebibliography}{HFLM{\etalchar{+}}18}

\bibitem[AB14]{alain2014regularized}
Guillaume Alain and Yoshua Bengio.
\newblock What regularized auto-encoders learn from the data-generating
  distribution.
\newblock {\em The Journal of Machine Learning Research}, 15(1):3563--3593,
  2014.

\bibitem[AGH{\etalchar{+}}13]{arora2013practical}
Sanjeev Arora, Rong Ge, Yonatan Halpern, David Mimno, Ankur Moitra, David
  Sontag, Yichen Wu, and Michael Zhu.
\newblock A practical algorithm for topic modeling with provable guarantees.
\newblock In {\em International conference on machine learning}. PMLR, 2013.

\bibitem[AGM12]{arora2012learning}
Sanjeev Arora, Rong Ge, and Ankur Moitra.
\newblock Learning topic models--going beyond svd.
\newblock In {\em 2012 IEEE 53rd annual symposium on foundations of computer
  science}. IEEE, 2012.

\bibitem[AKK{\etalchar{+}}19]{arora2019theoretical}
Sanjeev Arora, Hrishikesh Khandeparkar, Mikhail Khodak, Orestis Plevrakis, and
  Nikunj Saunshi.
\newblock A theoretical analysis of contrastive unsupervised representation
  learning.
\newblock In {\em Proceedings of the 36th International Conference on Machine
  Learning}, 2019.

\bibitem[AZ07]{ando2007two}
Rie~Kubota Ando and Tong Zhang.
\newblock Two-view feature generation model for semi-supervised learning.
\newblock In {\em Proceedings of the 24th international conference on Machine
  learning}, pages 25--32, 2007.

\bibitem[Bak73]{baker1973joint}
Charles~R Baker.
\newblock Joint measures and cross-covariance operators.
\newblock {\em Transactions of the American Mathematical Society},
  186:273--289, 1973.

\bibitem[Bar93]{barron1993universal}
Andrew~R Barron.
\newblock Universal approximation bounds for superpositions of a sigmoidal
  function.
\newblock {\em IEEE Transactions on Information theory}, 39(3):930--945, 1993.

\bibitem[BF85]{breiman1985estimating}
Leo Breiman and Jerome~H Friedman.
\newblock Estimating optimal transformations for multiple regression and
  correlation.
\newblock {\em Journal of the American statistical Association},
  80(391):580--598, 1985.

\bibitem[BM98]{blum1998combining}
Avrim Blum and Tom Mitchell.
\newblock Combining labeled and unlabeled data with co-training.
\newblock In {\em Proceedings of the eleventh annual conference on
  Computational learning theory}, 1998.

\bibitem[BM02]{bartlett2002rademacher}
Peter~L Bartlett and Shahar Mendelson.
\newblock Rademacher and gaussian complexities: Risk bounds and structural
  results.
\newblock {\em Journal of Machine Learning Research}, 3(Nov):463--482, 2002.

\bibitem[BNJ03]{blei2003latent}
David~M Blei, Andrew~Y Ng, and Michael~I Jordan.
\newblock Latent dirichlet allocation.
\newblock {\em the Journal of machine Learning research}, 2003.

\bibitem[Buj90]{buja1990remarks}
Andreas Buja.
\newblock Remarks on functional canonical variates, alternating least squares
  methods and ace.
\newblock {\em The Annals of Statistics}, pages 1032--1069, 1990.

\bibitem[CH21]{chen2021exploring}
Xinlei Chen and Kaiming He.
\newblock Exploring simple siamese representation learning.
\newblock In {\em Proceedings of the IEEE/CVF Conference on Computer Vision and
  Pattern Recognition}, pages 15750--15758, 2021.

\bibitem[CKNH20]{chen2020simple}
Ting Chen, Simon Kornblith, Mohammad Norouzi, and Geoffrey Hinton.
\newblock A simple framework for contrastive learning of visual
  representations.
\newblock {\em arXiv preprint arXiv:2002.05709}, 2020.

\bibitem[CKS{\etalchar{+}}20]{chen2020big}
Ting Chen, Simon Kornblith, Kevin Swersky, Mohammad Norouzi, and Geoffrey
  Hinton.
\newblock Big self-supervised models are strong semi-supervised learners.
\newblock {\em arXiv preprint arXiv:2006.10029}, 2020.

\bibitem[CLC{\etalchar{+}}20]{chen2020adversarial}
Tianlong Chen, Sijia Liu, Shiyu Chang, Yu~Cheng, Lisa Amini, and Zhangyang
  Wang.
\newblock Adversarial robustness: From self-supervised pre-training to
  fine-tuning.
\newblock {\em arXiv preprint arXiv:2003.12862}, 2020.

\bibitem[CRS{\etalchar{+}}19]{carmon2019unlabeled}
Yair Carmon, Aditi Raghunathan, Ludwig Schmidt, John~C Duchi, and Percy~S
  Liang.
\newblock Unlabeled data improves adversarial robustness.
\newblock In {\em Advances in Neural Information Processing Systems}, pages
  11190--11201, 2019.

\bibitem[DCLT18]{devlin2018bert}
Jacob Devlin, Ming-Wei Chang, Kenton Lee, and Kristina Toutanova.
\newblock Bert: Pre-training of deep bidirectional transformers for language
  understanding.
\newblock {\em arXiv preprint arXiv:1810.04805}, 2018.

\bibitem[DFS{\etalchar{+}}15]{dosovitskiy2015discriminative}
Alexey Dosovitskiy, Philipp Fischer, Jost~Tobias Springenberg, Martin
  Riedmiller, and Thomas Brox.
\newblock Discriminative unsupervised feature learning with exemplar
  convolutional neural networks.
\newblock {\em IEEE transactions on pattern analysis and machine intelligence},
  38(9):1734--1747, 2015.

\bibitem[DGE15]{doersch2015unsupervised}
Carl Doersch, Abhinav Gupta, and Alexei~A Efros.
\newblock Unsupervised visual representation learning by context prediction.
\newblock In {\em Proceedings of the IEEE International Conference on Computer
  Vision}, pages 1422--1430, 2015.

\bibitem[DHK{\etalchar{+}}20]{du2020few}
Simon~S Du, Wei Hu, Sham~M Kakade, Jason~D Lee, and Qi~Lei.
\newblock Few-shot learning via learning the representation, provably.
\newblock {\em arXiv preprint arXiv:2002.09434}, 2020.

\bibitem[FBGG17]{fernando2017self}
Basura Fernando, Hakan Bilen, Efstratios Gavves, and Stephen Gould.
\newblock Self-supervised video representation learning with odd-one-out
  networks.
\newblock In {\em Proceedings of the IEEE conference on computer vision and
  pattern recognition}, pages 3636--3645, 2017.

\bibitem[FBJ04]{fukumizu2004dimensionality}
Kenji Fukumizu, Francis~R Bach, and Michael~I Jordan.
\newblock Dimensionality reduction for supervised learning with reproducing
  kernel hilbert spaces.
\newblock {\em Journal of Machine Learning Research}, 5(Jan):73--99, 2004.

\bibitem[FBJ{\etalchar{+}}09]{fukumizu2009kernel}
Kenji Fukumizu, Francis~R Bach, Michael~I Jordan, et~al.
\newblock Kernel dimension reduction in regression.
\newblock {\em The Annals of Statistics}, 37(4):1871--1905, 2009.

\bibitem[GBSS05]{gretton2005measuring}
Arthur Gretton, Olivier Bousquet, Alex Smola, and Bernhard Sch{\"o}lkopf.
\newblock Measuring statistical dependence with hilbert-schmidt norms.
\newblock In {\em International conference on algorithmic learning theory},
  pages 63--77. Springer, 2005.

\bibitem[GH10]{gutmann2010noise}
Michael Gutmann and Aapo Hyv{\"a}rinen.
\newblock Noise-contrastive estimation: A new estimation principle for
  unnormalized statistical models.
\newblock In {\em Proceedings of the Thirteenth International Conference on
  Artificial Intelligence and Statistics}, 2010.

\bibitem[Gro11]{gross2011recovering}
David Gross.
\newblock Recovering low-rank matrices from few coefficients in any basis.
\newblock {\em IEEE Transactions on Information Theory}, 57(3):1548--1566,
  2011.

\bibitem[GRS{\etalchar{+}}15]{ginosar2015century}
Shiry Ginosar, Kate Rakelly, Sarah Sachs, Brian Yin, and Alexei~A Efros.
\newblock A century of portraits: A visual historical record of american high
  school yearbooks.
\newblock In {\em Proceedings of the IEEE International Conference on Computer
  Vision Workshops}, pages 1--7, 2015.

\bibitem[GSA{\etalchar{+}}20]{grill2020bootstrap}
Jean-Bastien Grill, Florian Strub, Florent Altch{\'e}, Corentin Tallec,
  Pierre~H Richemond, Elena Buchatskaya, Carl Doersch, Bernardo~Avila Pires,
  Zhaohan~Daniel Guo, Mohammad~Gheshlaghi Azar, et~al.
\newblock Bootstrap your own latent: A new approach to self-supervised
  learning.
\newblock {\em arXiv preprint arXiv:2006.07733}, 2020.

\bibitem[GSK18]{gidaris2018unsupervised}
Spyros Gidaris, Praveer Singh, and Nikos Komodakis.
\newblock Unsupervised representation learning by predicting image rotations.
\newblock {\em arXiv preprint arXiv:1803.07728}, 2018.

\bibitem[HFLM{\etalchar{+}}18]{hjelm2018learning}
R~Devon Hjelm, Alex Fedorov, Samuel Lavoie-Marchildon, Karan Grewal, Phil
  Bachman, Adam Trischler, and Yoshua Bengio.
\newblock Learning deep representations by mutual information estimation and
  maximization.
\newblock {\em arXiv preprint arXiv:1808.06670}, 2018.

\bibitem[HKZ12]{hsu2012random}
Daniel Hsu, Sham~M Kakade, and Tong Zhang.
\newblock Random design analysis of ridge regression.
\newblock In {\em Conference on learning theory}, pages 9--1, 2012.

\bibitem[HLG{\etalchar{+}}19]{hu2019strategies}
Weihua Hu, Bowen Liu, Joseph Gomes, Marinka Zitnik, Percy Liang, Vijay Pande,
  and Jure Leskovec.
\newblock Strategies for pre-training graph neural networks.
\newblock {\em arXiv preprint arXiv:1905.12265}, 2019.

\bibitem[HMKS19]{hendrycks2019using}
Dan Hendrycks, Mantas Mazeika, Saurav Kadavath, and Dawn Song.
\newblock Using self-supervised learning can improve model robustness and
  uncertainty.
\newblock In {\em Advances in Neural Information Processing Systems}, pages
  15637--15648, 2019.

\bibitem[Hof99]{hofmann1999probabilistic}
Thomas Hofmann.
\newblock Probabilistic latent semantic indexing.
\newblock In {\em Proceedings of the 22nd annual international ACM SIGIR
  conference on Research and development in information retrieval}, 1999.

\bibitem[HSST04]{hardoon2004canonical}
David~R Hardoon, Sandor Szedmak, and John Shawe-Taylor.
\newblock Canonical correlation analysis: An overview with application to
  learning methods.
\newblock {\em Neural computation}, 16(12):2639--2664, 2004.

\bibitem[Hua10]{huang2010testing}
Tzee-Ming Huang.
\newblock Testing conditional independence using maximal nonlinear conditional
  correlation.
\newblock {\em The Annals of Statistics}, 38(4):2047--2091, 2010.

\bibitem[HWGM21]{haochen2021provable}
Jeff~Z HaoChen, Colin Wei, Adrien Gaidon, and Tengyu Ma.
\newblock Provable guarantees for self-supervised deep learning with spectral
  contrastive loss.
\newblock {\em arXiv preprint arXiv:2106.04156}, 2021.

\bibitem[HZRS16]{he2016deep}
Kaiming He, Xiangyu Zhang, Shaoqing Ren, and Jian Sun.
\newblock Deep residual learning for image recognition.
\newblock In {\em Proceedings of the IEEE conference on computer vision and
  pattern recognition}, pages 770--778, 2016.

\bibitem[JDVL18]{jang2018grasp2vec}
Eric Jang, Coline Devin, Vincent Vanhoucke, and Sergey Levine.
\newblock Grasp2vec: Learning object representations from self-supervised
  grasping.
\newblock {\em arXiv preprint arXiv:1811.06964}, 2018.

\bibitem[JT20]{jing2020self}
Longlong Jing and Yingli Tian.
\newblock Self-supervised visual feature learning with deep neural networks: A
  survey.
\newblock {\em IEEE Transactions on Pattern Analysis and Machine Intelligence},
  2020.

\bibitem[KF07]{kakade2007multi}
Sham~M Kakade and Dean~P Foster.
\newblock Multi-view regression via canonical correlation analysis.
\newblock In {\em International Conference on Computational Learning Theory},
  pages 82--96. Springer, 2007.

\bibitem[KZB19]{kolesnikov2019revisiting}
Alexander Kolesnikov, Xiaohua Zhai, and Lucas Beyer.
\newblock Revisiting self-supervised visual representation learning.
\newblock In {\em Proceedings of the IEEE conference on Computer Vision and
  Pattern Recognition}, pages 1920--1929, 2019.

\bibitem[LL18]{logeswaran2018efficient}
Lajanugen Logeswaran and Honglak Lee.
\newblock An efficient framework for learning sentence representations.
\newblock In {\em Proceedings of the International Conference on Learning
  Representations}, 2018.

\bibitem[MC18]{ma2018noise}
Zhuang Ma and Michael Collins.
\newblock Noise contrastive estimation and negative sampling for conditional
  models: Consistency and statistical efficiency.
\newblock In {\em Proceedings of the 2018 Conference on Empirical Methods in
  Natural Language Processing}, 2018.

\bibitem[MKHZ15]{makur2015efficient}
Anuran Makur, Fabi{\'a}n Kozynski, Shao-Lun Huang, and Lizhong Zheng.
\newblock An efficient algorithm for information decomposition and extraction.
\newblock In {\em 2015 53rd Annual Allerton Conference on Communication,
  Control, and Computing (Allerton)}, pages 972--979. IEEE, 2015.

\bibitem[MMW{\etalchar{+}}20]{mitrovic2020representation}
Jovana Mitrovic, Brian McWilliams, Jacob Walker, Lars Buesing, and Charles
  Blundell.
\newblock Representation learning via invariant causal mechanisms.
\newblock {\em arXiv preprint arXiv:2010.07922}, 2020.

\bibitem[MSC{\etalchar{+}}13]{mikolov2013distributed}
Tomas Mikolov, Ilya Sutskever, Kai Chen, Greg~S Corrado, and Jeff Dean.
\newblock Distributed representations of words and phrases and their
  compositionality.
\newblock In {\em Advances in neural information processing systems}, 2013.

\bibitem[MXZ06]{micchelli2006universal}
Charles~A Micchelli, Yuesheng Xu, and Haizhang Zhang.
\newblock Universal kernels.
\newblock {\em Journal of Machine Learning Research}, 7(Dec):2651--2667, 2006.

\bibitem[MZH16]{misra2016shuffle}
Ishan Misra, C~Lawrence Zitnick, and Martial Hebert.
\newblock Shuffle and learn: unsupervised learning using temporal order
  verification.
\newblock In {\em European Conference on Computer Vision}, pages 527--544.
  Springer, 2016.

\bibitem[NF16]{noroozi2016unsupervised}
Mehdi Noroozi and Paolo Favaro.
\newblock Unsupervised learning of visual representations by solving jigsaw
  puzzles.
\newblock In {\em European Conference on Computer Vision}, pages 69--84.
  Springer, 2016.

\bibitem[OLV18]{oord2018representation}
Aaron van~den Oord, Yazhe Li, and Oriol Vinyals.
\newblock Representation learning with contrastive predictive coding.
\newblock {\em arXiv preprint arXiv:1807.03748}, 2018.

\bibitem[PKD{\etalchar{+}}16]{pathak2016context}
Deepak Pathak, Philipp Krahenbuhl, Jeff Donahue, Trevor Darrell, and Alexei~A
  Efros.
\newblock Context encoders: Feature learning by inpainting.
\newblock In {\em Proceedings of the IEEE conference on computer vision and
  pattern recognition}, pages 2536--2544, 2016.

\bibitem[PRTV00]{papadimitriou2000latent}
Christos~H Papadimitriou, Prabhakar Raghavan, Hisao Tamaki, and Santosh
  Vempala.
\newblock Latent semantic indexing: A probabilistic analysis.
\newblock {\em Journal of Computer and System Sciences}, 2000.

\bibitem[Ree12]{reed2012methods}
Michael Reed.
\newblock {\em Methods of modern mathematical physics: Functional analysis}.
\newblock Elsevier, 2012.

\bibitem[RNSS18]{radford2018improving}
Alec Radford, Karthik Narasimhan, Tim Salimans, and Ilya Sutskever.
\newblock Improving language understanding by generative pre-training.
\newblock {\em URL https://s3-us-west-2. amazonaws.
  com/openai-assets/researchcovers/languageunsupervised/language understanding
  paper. pdf}, 2018.

\bibitem[SMA20]{saunshi2020mathematical}
Nikunj Saunshi, Sadhika Malladi, and Sanjeev Arora.
\newblock A mathematical exploration of why language models help solve
  downstream tasks.
\newblock {\em arXiv preprint arXiv:2010.03648}, 2020.

\bibitem[SPW{\etalchar{+}}13]{socher2013recursive}
Richard Socher, Alex Perelygin, Jean Wu, Jason Chuang, Christopher~D Manning,
  Andrew~Y Ng, and Christopher Potts.
\newblock Recursive deep models for semantic compositionality over a sentiment
  treebank.
\newblock In {\em Proceedings of the 2013 conference on empirical methods in
  natural language processing}, 2013.

\bibitem[SSBD14]{shalev2014understanding}
Shai Shalev-Shwartz and Shai Ben-David.
\newblock {\em Understanding machine learning: From theory to algorithms}.
\newblock Cambridge university press, 2014.

\bibitem[TDR{\etalchar{+}}19]{tschannen2019mutual}
Michael Tschannen, Josip Djolonga, Paul~K Rubenstein, Sylvain Gelly, and Mario
  Lucic.
\newblock On mutual information maximization for representation learning.
\newblock {\em arXiv preprint arXiv:1907.13625}, 2019.

\bibitem[TKH20a]{tosh2020contrastive}
Christopher Tosh, Akshay Krishnamurthy, and Daniel Hsu.
\newblock Contrastive estimation reveals topic posterior information to linear
  models.
\newblock {\em arXiv preprint arXiv:2003.02234}, 2020.

\bibitem[TKH20b]{tosh2020contrastive_1}
Christopher Tosh, Akshay Krishnamurthy, and Daniel Hsu.
\newblock Contrastive learning, multi-view redundancy, and linear models.
\newblock {\em arXiv preprint arXiv:2008.10150}, 2020.

\bibitem[TKI19]{tian2019contrastive}
Yonglong Tian, Dilip Krishnan, and Phillip Isola.
\newblock Contrastive multiview coding.
\newblock {\em arXiv preprint arXiv:1906.05849}, 2019.

\bibitem[TWSM20]{tsai2020demystifying}
Yao-Hung~Hubert Tsai, Yue Wu, Ruslan Salakhutdinov, and Louis-Philippe Morency.
\newblock Demystifying self-supervised learning: An information-theoretical
  framework.
\newblock {\em arXiv preprint arXiv:2006.05576}, 2020.

\bibitem[TYCG20]{tian2020understanding}
Yuandong Tian, Lantao Yu, Xinlei Chen, and Surya Ganguli.
\newblock Understanding self-supervised learning with dual deep networks.
\newblock {\em arXiv preprint arXiv:2010.00578}, 2020.

\bibitem[Vin11]{vincent2011connection}
Pascal Vincent.
\newblock A connection between score matching and denoising autoencoders.
\newblock {\em Neural computation}, 23(7):1661--1674, 2011.

\bibitem[VLBM08]{vincent2008extracting}
Pascal Vincent, Hugo Larochelle, Yoshua Bengio, and Pierre-Antoine Manzagol.
\newblock Extracting and composing robust features with denoising autoencoders.
\newblock In {\em Proceedings of the 25th international conference on Machine
  learning}, pages 1096--1103, 2008.

\bibitem[WG15]{wang2015unsupervised}
Xiaolong Wang and Abhinav Gupta.
\newblock Unsupervised learning of visual representations using videos.
\newblock In {\em Proceedings of the IEEE International Conference on Computer
  Vision}, 2015.

\bibitem[WI20]{wang2020understanding}
Tongzhou Wang and Phillip Isola.
\newblock Understanding contrastive representation learning through alignment
  and uniformity on the hypersphere.
\newblock {\em arXiv preprint arXiv:2005.10242}, 2020.

\bibitem[WLZF18]{wei2018learning}
Donglai Wei, Joseph~J Lim, Andrew Zisserman, and William~T Freeman.
\newblock Learning and using the arrow of time.
\newblock In {\em Proceedings of the IEEE Conference on Computer Vision and
  Pattern Recognition}, pages 8052--8060, 2018.

\bibitem[YYDC20]{yang2020self}
Han Yang, Xiao Yan, Xinyan Dai, and James Cheng.
\newblock Self-enhanced gnn: Improving graph neural networks using model
  outputs.
\newblock {\em arXiv preprint arXiv:2002.07518}, 2020.

\bibitem[ZIE16]{zhang2016colorful}
Richard Zhang, Phillip Isola, and Alexei~A Efros.
\newblock Colorful image colorization.
\newblock In {\em European conference on computer vision}, pages 649--666.
  Springer, 2016.

\bibitem[ZIE17]{zhang2017split}
Richard Zhang, Phillip Isola, and Alexei~A Efros.
\newblock Split-brain autoencoders: Unsupervised learning by cross-channel
  prediction.
\newblock In {\em Proceedings of the IEEE Conference on Computer Vision and
  Pattern Recognition}, pages 1058--1067, 2017.

\bibitem[ZLW{\etalchar{+}}19]{zhang2019path}
Zaiwei Zhang, Zhenxiao Liang, Lemeng Wu, Xiaowei Zhou, and Qixing Huang.
\newblock Path-invariant map networks.
\newblock In {\em Proceedings of the IEEE Conference on Computer Vision and
  Pattern Recognition}, pages 11084--11094, 2019.

\end{thebibliography}



\appendix


\newpage
\section{Some Useful Facts}

\subsection{Relation of Inverse Covariance Matrix and Partial Correlation}
For a covariance matrix of joint distribution for variables $X,Y$, the covariance matrix is
\begin{align*}
\begin{bmatrix}
\bSigma_{XX} & \bSigma_{XY} \\
\bSigma_{YX} & \bSigma_{YY} 
\end{bmatrix}= \begin{bmatrix}
\bSigma_{X_1X_1} & \bSigma_{X_1X_2}  &  \bSigma_{X_1Y} \\
\bSigma_{X_2X_1} & \bSigma_{X_2X_2} &\bSigma_{X_2Y}\\
\bSigma_{YX_1} & \bSigma_{X_2Y} & \bSigma_{YY} 
\end{bmatrix}.
\end{align*}
Its inverse matrix $\bSigma^{-1}$ satisfies 
\begin{align*}
\bSigma^{-1} = 
\begin{bmatrix}
\bA & \rho \\
\rho^\top & \bB 
\end{bmatrix}.
\end{align*}
Here $\bA^{-1} = \bSigma_{XX}-\bSigma_{XY}\bSigma_{YY}^{-1}\bSigma_{YX}\equiv \cov(X-\E^L[X|Y], X-\E^L[X|Y]):=\bSigma_{XX\cdot Y} $, the partial covariance matrix of $X$ given $Y$. 

\subsection{Relation to Conditional Independence} 
\begin{proof}[Proof of Lemma \ref{lemma:CI_vs_cross_covariance}]
	
\begin{fact}
	\label{fact:CI_vs_partial_covariance}
When $X_1\bot X_2|Y$, the partial covariance between $X_1,X_2$ given $Y$ is $0$:
\begin{align*}
 \bSigma_{X_1X_2\cdot Y}:= & \cov(X_1-\E^L[X_1|Y], X_2-\E^L[X_2|Y])\\
\equiv  & \bSigma_{X_1X_2}-\bSigma_{X_1Y}\bSigma_{YY}^{-1}\bSigma_{YX_2}  = 0. 
\end{align*}

\end{fact}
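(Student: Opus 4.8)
\end{fact}

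\begin{proof}[Proof sketch of Fact~\ref{fact:CI_vs_partial_covariance}]
The plan is to rewrite the partial covariance as an ordinary covariance of regression residuals and then make it vanish with the tower rule, invoking conditional independence only at the last step. The one structural fact I would use about $Y$ is that in the regimes of interest --- $Y$ the one-hot encoding of a discrete label (Assumption~\ref{assump:independence}), or $Y$ jointly Gaussian with $(X_1,X_2)$ --- the best linear predictor equals the true conditional expectation, $\E^L[X_i\mid Y]=\E[X_i\mid Y]$ for $i=1,2$. For one-hot $Y$ this is immediate, since any $g(Y)$ equals $\sum_j g(e_j)Y_j$, which is linear in $Y$; the Gaussian case is classical. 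Hence the residuals $R_i:=X_i-\E^L[X_i\mid Y]$ equal $X_i-\E[X_i\mid Y]$, each has mean zero by the tower rule, and $\bSigma_{X_1X_2\cdot Y}=\E[R_1R_2^\top]$.

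I would then condition on $Y$: by the law of total expectation,
\begin{align*}
\bSigma_{X_1X_2\cdot Y}=\E_Y\!\Big[\E\big[\,R_1R_2^\top \,\big|\, Y\,\big]\Big].
\end{align*}
Given $Y$, $R_1$ is a function of $X_1$ alone and $R_2$ of $X_2$ alone, so $X_1\perp X_2\mid Y$ makes them conditionally independent and the inner expectation factors as $\E[R_1\mid Y]\,\E[R_2\mid Y]^\top=0\cdot 0=0$; taking the outer expectation gives $\bSigma_{X_1X_2\cdot Y}=0$. The displayed equality $\bSigma_{X_1X_2\cdot Y}\equiv\bSigma_{X_1X_2}-\bSigma_{X_1Y}\bSigma_{YY}^{-1}\bSigma_{YX_2}$ is separate and purely algebraic --- the identity of Equation~\eqref{eqn:partial_cov}: plug in $\E^L[X_i\mid Y]=\bSigma_{X_iY}\bSigma_{YY}^{-1}Y$ (centered case; pseudo-inverse where $\bSigma_{YY}$ is singular, as for one-hot $Y$), expand, and use the normal equations $\E[R_iY^\top]=0$.

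The step I expect to be the real crux --- and the one I would state carefully --- is the reduction to residuals of the \emph{true} conditional mean, i.e.\ the use of $\E^L[X_i\mid Y]=\E[X_i\mid Y]$. Conditional independence by itself only controls the conditional second moment $\E[X_1X_2^\top\mid Y]$, whereas the partial covariance subtracts the \emph{linear} predictors of $X_1,X_2$ in $Y$; without that identity the argument would only give that the nonlinearly-corrected cross-covariance vanishes, not the partial covariance itself. Everything downstream of this reduction is routine bookkeeping.
\end{proof}
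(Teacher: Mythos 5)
Your proof is correct and takes essentially the same route as the paper: both hinge on the observation that for one-hot (or Gaussian) $Y$ the linear conditional expectation coincides with the true conditional expectation, so the residuals defining the partial covariance have zero conditional mean given $Y$, after which conditional independence factorizes the conditional cross-moment and the tower rule finishes. The paper packages that factorization step by citing a characterization of conditional independence (adapted from Huang et al.), whereas you unpack it directly; the crux you flag --- that without $\E^L[X_i\mid Y]=\E[X_i\mid Y]$ the argument only kills the nonlinearly-corrected cross-covariance, not the partial covariance --- is exactly the point the paper also relies on when it notes $\E[f(X)\mid Y]=\E^L[f(X)\mid\phi_y(Y)]$ for one-hot $Y$.
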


The derivation comes from the following: 

\begin{lemma}[Conditional independence (Adapted from \cite{huang2010testing})]
	For random variables $X_1,X_2$ and a random variable $Y$ with finite values, conditional independence $X_1\bot X_2|Y$ is equivalent to:
	\begin{equation}
	\sup_{f\in N_1,g\in N_2}\E[f(X_1)g(X_2)|Y]=0. 
	\end{equation}	
	Here $N_i=\{f:\R^{d_i}\rightarrow R:E [f(X_i)|Y]=0\}$, $i=1,2$.     
\end{lemma}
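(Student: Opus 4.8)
The plan is to prove the two implications separately. The nontrivial content is in the ``$\Leftarrow$'' direction; the ``$\Rightarrow$'' direction is what is actually invoked in Fact~\ref{fact:CI_vs_partial_covariance}, so I would also record that consequence along the way.

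\emph{From conditional independence to the vanishing supremum.} First I would assume $X_1\bot X_2\mid Y$, fix any $f\in N_1$, $g\in N_2$, and any value $y$ in the (finite) support of $Y$, and condition on $\{Y=y\}$: there $X_1\bot X_2$, so $f(X_1)$ and $g(X_2)$ are independent under $P_{\cdot\mid Y=y}$, giving
\[
\E[f(X_1)g(X_2)\mid Y=y]=\E[f(X_1)\mid Y=y]\,\E[g(X_2)\mid Y=y]=0 ,
\]
where the last step uses $\E[f(X_1)\mid Y]\equiv 0$ from the definition of $N_1$. Hence every quantity under the supremum is identically $0$ and $f\equiv 0$ attains it, so the supremum is $0$. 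I would then deduce Fact~\ref{fact:CI_vs_partial_covariance} by specializing to affine $f(x_1)=a^\top(x_1-\E[X_1\mid Y])$ and $g(x_2)=b^\top(x_2-\E[X_2\mid Y])$ — which lie in $N_1,N_2$ since $Y$ finite-valued makes $\E^L[\,\cdot\mid Y]=\E[\,\cdot\mid Y]$ (linear regression on the one-hot encoding of $Y$ is exact) — and taking expectation over $Y$ to obtain $\bSigma_{X_1X_2\cdot Y}=\E[(X_1-\E[X_1\mid Y])(X_2-\E[X_2\mid Y])^\top]=0$.

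\emph{From the vanishing supremum to conditional independence.} Here I would argue the contrapositive. If $X_1$ and $X_2$ are not conditionally independent given $Y$, then, since $Y$ takes finitely many values, there is a value $y_0$ with $P(Y=y_0)>0$ for which $P_{X_1X_2\mid Y=y_0}$ is not the product of its marginals. By the standard characterization of independence of two random variables — independence iff all pairs of bounded centered functions are uncorrelated, for which it suffices to test indicators $\mathbf{1}_A(X_1),\mathbf{1}_B(X_2)$ of measurable sets — there exist bounded $\phi,\psi$ with $\E[\phi(X_1)\mid Y=y_0]=\E[\psi(X_2)\mid Y=y_0]=0$ and $\E[\phi(X_1)\psi(X_2)\mid Y=y_0]\neq 0$. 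The remaining step is to promote $\phi,\psi$ to members of $N_1,N_2$: since $Y$ is finite-valued one may let the test functions depend on the value of $Y$, so I would take $f$ equal to $\phi$ on the fiber $\{Y=y_0\}$ and equal to a conditional-mean-zero function (e.g.\ $0$) on every other fiber, and likewise for $g$; then $f\in N_1$, $g\in N_2$, and $\E[f(X_1)g(X_2)\mid Y=y_0]\neq 0$, contradicting that the supremum is the zero random variable.

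\emph{Expected main obstacle.} The one delicate point is this promotion step: reconciling the global membership requirement $\E[f(X_i)\mid Y]\equiv 0$ with the fact that the witnessed dependence lives on a single fiber $\{Y=y_0\}$. The clean resolution (and the precise sense in which the statement is the finite-support specialization of the criterion of \cite{huang2010testing}) is to allow the test functions to be functions of $(X_i,Y)$, so that the centering constraint decouples over the values of $Y$ and the lift is immediate. If one insists on functions of $X_i$ alone, a mild non-degeneracy hypothesis on the conditional supports $\{\mathrm{supp}(X_1\mid Y=y)\}_y$ is needed so that centering on one fiber does not over-constrain $f$ elsewhere; this does not affect the forward implication, which is the one used in the paper.
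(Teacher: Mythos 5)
The paper does not actually prove this lemma: it is stated as adapted from \cite{huang2010testing}, and only the forward implication (conditional independence $\Rightarrow$ vanishing supremum) is used, namely to deduce Fact~\ref{fact:CI_vs_partial_covariance} and the vanishing of the partial cross-covariance of arbitrary feature maps. Your forward direction is correct and is exactly what the paper needs: condition on $Y=y$, factor $\E[f(X_1)g(X_2)\mid Y=y]=\E[f(X_1)\mid Y=y]\,\E[g(X_2)\mid Y=y]$ by conditional independence, and kill each factor by the centering constraint; your instantiation with affine $f,g$ then recovers the partial-covariance statement.

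Your reverse direction, however, does not go through under the paper's literal definition of $N_i$, and the obstacle you flag is a genuine failure rather than a removable non-degeneracy assumption. The constraint $\E[f(X_1)\mid Y]\equiv 0$ for $f$ depending on $X_1$ alone imposes one linear condition on $f$ per value of $Y$, and when the conditional laws $P_{X_1\mid Y=y}$ are distinct these conditions can force $f$ to vanish on the entire support of $X_1$. Concretely, take $Y\in\{0,1\}$ and $X_1\in\{0,1\}$ with $P(X_1=0\mid Y=0)=1/2$ and $P(X_1=0\mid Y=1)=1/3$; then $\E[f(X_1)\mid Y=0]=\E[f(X_1)\mid Y=1]=0$ forces $f(0)=f(1)=0$, so $N_1$ contains only the zero function on the support. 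Setting $X_2=X_1$ given $Y=0$ and $X_2\perp X_1$ given $Y=1$ makes conditional independence fail while the supremum is identically zero, so the equivalence as literally stated is false. The fix you propose --- let $f,g$ be functions of $(X_i,Y)$ so that the centering decouples across the fibers of $Y$ --- is exactly the formulation in \cite{huang2010testing} (and reduces to Daudin's classical characterization of conditional independence), and with it your contrapositive argument is complete. Since the paper only invokes the forward direction, nothing downstream is affected, but the lemma's definition of $N_i$ should really read $f:\R^{d_i}\times\cY\to\R$ for the stated biconditional to hold.
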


Notice for arbitrary function $f$, $\E[f(X)|Y] = \E^L[f(X)|\phi_y(Y)]$ with one-hot encoding of discrete variable $Y$. 
Therefore for any feature map we can also get that conditional independence ensures:
\begin{align*}
\bSigma_{\phi_1(X_1)\phi_2(X_2)| Y}:= &  \cov(\phi_1(X_1)-\E^L[\phi_1(X_1)|\phi_y(Y)], \phi_2(X_2)-\E^L[\phi_2(X_2)|\phi_y(Y)])\\
= & \E[\bar \phi_1(X_1) \bar \phi_2(X_2)^\top ] = 0.
\end{align*}
Here $\bar \phi_1(X_1) = \phi_1(X_1) - \E[\phi_1(X_1)|\phi_y(Y)]$ is mean zero given $Y$, and vice versa for $\bar \phi_2(X_2)$. 
This thus finishes the proof for Lemma \ref{lemma:CI_vs_cross_covariance}. 
\end{proof}

\subsection{Technical Facts for Matrix Concentration}
We include this covariance concentration result that is adapted from Claim A.2 in \cite{du2020few}:
\begin{claim}[covariance concentration for gaussian variables] \label{claim:concentration_gaussian_covariance}
Let $\bX=[\bx_1,\bx_2,\cdots \bx_n]^\top \in \R^{n\times d}$ where each $x_i\sim \cN(0,\bSigma_X)$. 
Suppose $n \gg k+\log(1/\delta)$ for $\delta\in(0,1)$.
	Then for any given matrix $B \in \R^{d\times m}$ that is of rank $k$ and is independent of $\bX$, with probability at least $1-\frac{\delta}{10}$ over $\bX$  we have
	\begin{equation} \label{eqn:concentration_gaussian_covariance}
	0.9 \bB^\top \bSigma_X \bB \preceq \frac{1}{n} \bB^\top \bX^\top \bX \bB
	 \preceq 1.1 \bB^\top \bSigma_X \bB.
	\end{equation}
\end{claim}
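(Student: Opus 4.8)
The plan is to reduce the (generally rank-deficient) $m\times m$ matrix $\tfrac1n\bB^\top\bX^\top\bX\bB$ to a genuinely $k$-dimensional Wishart matrix, and then invoke a standard non-asymptotic Gaussian covariance bound in dimension $k$. First I would whiten the data: write $\bx_i=\bSigma_X^{1/2}\bz_i$ with $\bz_i\sim\mathcal N(0,\bI_d)$ i.i.d., so $\bX=\bZ\bSigma_X^{1/2}$ for a standard Gaussian matrix $\bZ\in\R^{n\times d}$, and set $\bC:=\bB^\top\bSigma_X^{1/2}\in\R^{m\times d}$. Since $\bSigma_X$ is full rank and $\bB$ has rank $k$, the matrix $\bC$ has rank $k$, and one checks directly that $\tfrac1n\bB^\top\bX^\top\bX\bB=\tfrac1n\bC\bZ^\top\bZ\bC^\top$ while $\bB^\top\bSigma_X\bB=\bC\bC^\top=:\bM$.

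Next I would take a thin SVD $\bC=\bm{U}\bS\bm{V}^\top$ with $\bm{U}\in\R^{m\times k}$, $\bm{V}\in\R^{d\times k}$ having orthonormal columns and $\bS\in\R^{k\times k}$ invertible diagonal. Then $\bM=\bm{U}\bS^2\bm{U}^\top$ and $\tfrac1n\bC\bZ^\top\bZ\bC^\top=\bm{U}\bS\bigl(\tfrac1n\bm{V}^\top\bZ^\top\bZ\bm{V}\bigr)\bS\bm{U}^\top$. Because $\bm{V}$ has orthonormal columns, rotational invariance of the standard Gaussian implies $\bm{G}:=\bZ\bm{V}\in\R^{n\times k}$ has i.i.d.\ $\mathcal N(0,\bI_k)$ rows, so $\tfrac1n\bm{V}^\top\bZ^\top\bZ\bm{V}=\tfrac1n\bm{G}^\top\bm{G}$ is the normalized Gram matrix of an $n\times k$ standard Gaussian matrix. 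Now I would apply the standard bound on the extreme singular values of a Gaussian matrix (this is exactly the content of the cited Claim A.2 in \cite{du2020few}): there is an absolute constant $c$ so that if $n\ge c\,(k+\log(1/\delta))$ then with probability at least $1-\delta/10$ we have $\|\tfrac1n\bm{G}^\top\bm{G}-\bI_k\|\le 0.1$, i.e.\ $0.9\,\bI_k\preceq\tfrac1n\bm{G}^\top\bm{G}\preceq1.1\,\bI_k$; the hypothesis $n\gg k+\log(1/\delta)$ supplies this.

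Finally, conjugating this PSD sandwich by $\bm{U}\bS$ on the left and $\bS\bm{U}^\top$ on the right (a congruence, hence preserving the Loewner order) gives $0.9\,\bm{U}\bS^2\bm{U}^\top\preceq\bm{U}\bS\bigl(\tfrac1n\bm{G}^\top\bm{G}\bigr)\bS\bm{U}^\top\preceq1.1\,\bm{U}\bS^2\bm{U}^\top$, which is precisely $0.9\,\bB^\top\bSigma_X\bB\preceq\tfrac1n\bB^\top\bX^\top\bX\bB\preceq1.1\,\bB^\top\bSigma_X\bB$, as claimed.

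The one point that needs care — and really the only obstacle — is that $\tfrac1n\bB^\top\bX^\top\bX\bB$ is in general singular ($m$ may exceed $k$), so a naive relative bound of the form $\|\bM^{-1/2}(\cdot)\bM^{-1/2}-\bI\|\le0.1$ is not available. The resolution is the observation that the only randomness relevant to $\bB^\top\bX$ lives in the $k$-dimensional subspace spanned by the rows of $\bC$ (equivalently the columns of $\bm{V}$), so the effective dimension is $k$ rather than $m$ or $d$; the SVD reduction above makes this precise, after which the argument is a routine application of Gaussian matrix concentration in dimension $k$.
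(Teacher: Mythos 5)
Your proof is correct. A quick note on the comparison: the paper does not actually prove Claim \ref{claim:concentration_gaussian_covariance} — it is stated as ``adapted from Claim A.2 in \cite{du2020few}'' and invoked as a black box — so there is no in-paper proof against which to measure. Your argument is, however, the standard and essentially the canonical one for this kind of statement: whiten via $\bX=\bZ\bSigma_X^{1/2}$, observe that the only relevant randomness lives in the $k$-dimensional subspace $\mathrm{col}(\bm{V})$ extracted from the thin SVD of $\bC=\bB^\top\bSigma_X^{1/2}$, reduce to the Gram matrix of an $n\times k$ standard Gaussian matrix, apply the $\sqrt{n}\pm(\sqrt{k}+t)$ singular value bound, and then transport the resulting two-sided $\bI_k$ sandwich back through the congruence $\bA\mapsto(\bm{U}\bS)\bA(\bm{U}\bS)^\top$, which preserves the Loewner order even for non-square $\bm{U}\bS$. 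You correctly identify that the key subtlety is the potential singularity of $\tfrac1n\bB^\top\bX^\top\bX\bB$ when $m>k$, which rules out a naive relative bound and forces the explicit reduction to dimension $k$; that is exactly the right thing to address. The only implicit assumption worth flagging is that you use invertibility of $\bSigma_X$ to conclude $\mathrm{rank}(\bC)=k$, but this is consistent with the paper's standing non-degeneracy assumption that $\bSigma_{X_iX_i}$ is full rank, so no gap.
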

And we will also use Claim A.2 from \cite{du2020few} for concentrating subgaussian random variable.

\begin{claim}[covariance concentration for subgaussian variables] \label{claim:concentration_subgaussian_covariance}
	Let $\bX=[\bx_1,\bx_2,\cdots \bx_n]^\top \in \R^{n\times d}$ where each $\bx_i$ is $\rho^2$-sub-gaussian. 
	Suppose $n \gg \rho^4(k+\log(1/\delta))$ for $\delta\in(0,1)$.
	Then for any given matrix $B \in \R^{d\times m}$ that is of rank $k$ and is independent of $\bX$, with probability at least $1-\frac{\delta}{10}$ over $\bX$  we have
	\begin{equation} \label{eqn:concentration_subgaussian_covariance}
	0.9 \bB^\top \bSigma_X \bB \preceq \frac{1}{n} \bB^\top \bX^\top \bX \bB
\preceq 1.1 \bB^\top \bSigma_X \bB.
	\end{equation}
\end{claim}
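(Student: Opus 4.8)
The plan is to prove this exactly as one proves the Gaussian analogue Claim~\ref{claim:concentration_gaussian_covariance} (indeed it is essentially Claim~A.2 of \cite{du2020few}), replacing the Gaussian covariance concentration by the corresponding bound for i.i.d.\ sub-gaussian vectors. First I would reduce to the isotropic case: both sides of \eqref{eqn:concentration_subgaussian_covariance} are invariant under $\bX\mapsto\bX\bSigma_X^{-1/2}$ and $\bB\mapsto\bSigma_X^{1/2}\bB$, a substitution that preserves $\rank(\bB)$ and, in the whitened normalization convention (i.e.\ $\bSigma_X^{-1/2}\bx_i$ being $\rho^2$-sub-gaussian, the way the bound is actually used in the paper, cf.\ Assumption~\ref{assumption:subgaussian_psi}), the parameter $\rho^2$. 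So I may assume $\bSigma_X=\bI$ and reduce the claim to showing $0.9\,\bB^\top\bB\preceq\frac1n\bB^\top\bX^\top\bX\bB\preceq1.1\,\bB^\top\bB$.

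Next I would pass to a $k$-dimensional subspace. Writing $\bM:=\bB^\top\bB$ with Moore--Penrose pseudoinverse $\bM^\dagger$, $S:=\range(\bM)=\range(\bB^\top)$ of dimension $\rank(\bB)\le k$, and $\Pi_S$ its orthoprojector, define $\by_i:=\bM^{\dagger/2}\bB^\top\bx_i$. A short computation shows $\by_i\in S$, $\E[\by_i\by_i^\top]=\bM^{\dagger/2}\bM\bM^{\dagger/2}=\Pi_S$, and for any unit vector $\bv$ the scalar $\bv^\top\by_i=(\bB\bM^{\dagger/2}\bv)^\top\bx_i$ is $\|\bB\bM^{\dagger/2}\bv\|_2^2\rho^2$-sub-gaussian with $\|\bB\bM^{\dagger/2}\bv\|_2^2=\|\Pi_S\bv\|_2^2\le1$, so each $\by_i$ is itself $\rho^2$-sub-gaussian. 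Then I would invoke the standard covariance estimation bound for i.i.d.\ sub-gaussian vectors supported on an (at most) $k$-dimensional subspace: with probability $\ge1-\delta/10$, $\normop{\frac1n\sum_{i=1}^n\by_i\by_i^\top-\Pi_S}\le C\rho^2\big(\sqrt{(k+\log(1/\delta))/n}+(k+\log(1/\delta))/n\big)$ for a universal constant $C$, which is $\le0.1$ once $n\gg\rho^4(k+\log(1/\delta))$ (the square-root term is the binding one, which is precisely why the $\rho^4$ rather than $\rho^2$ appears). This yields $0.9\,\Pi_S\preceq\frac1n\sum_i\by_i\by_i^\top\preceq1.1\,\Pi_S$, the bound being trivial on $S^\perp$ since all $\by_i\in S$. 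Finally, since $\frac1n\sum_i\by_i\by_i^\top=\bM^{\dagger/2}\big(\frac1n\bB^\top\bX^\top\bX\bB\big)\bM^{\dagger/2}$ and $\range(\bB^\top\bX^\top\bX\bB)\subseteq S$, I would conjugate this sandwich by $\bM^{1/2}$, using $\bM^{1/2}\bM^{\dagger/2}=\Pi_S$ and $\bM^{1/2}\Pi_S\bM^{1/2}=\bM$, to recover $0.9\,\bM\preceq\frac1n\bB^\top\bX^\top\bX\bB\preceq1.1\,\bM$, which is exactly \eqref{eqn:concentration_subgaussian_covariance}.

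The only steps requiring genuine care are the two ``infrastructure'' ones: the bookkeeping with the pseudoinverse square roots when $\bB$ is rank-deficient (ensuring the conjugation-back step is valid on the correct subspace), and checking that the whitening reduction does not inflate the sub-gaussian parameter---this is why the hypothesis must be read as $\bSigma_X^{-1/2}\bx_i$ being $\rho^2$-sub-gaussian. Everything else is a direct citation of the $\sqrt{k/n}+k/n$-type covariance concentration for sub-gaussian samples; if one preferred a self-contained argument, the same bound follows from a standard $\varepsilon$-net over the unit sphere of $S$ combined with a Bernstein estimate on the sub-exponential scalars $(\bv^\top\by_i)^2$.
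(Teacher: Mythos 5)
The paper does not actually prove this claim; it defers to Claim~A.2 of \cite{du2020few} for both the Gaussian and sub-Gaussian variants, so there is no in-paper argument to compare against. Your reconstruction is correct and essentially the standard route such a proof would take: whiten, pass to the rank-$k$ subspace $S=\range(\bB^\top)$ via $\by_i=\bM^{\dagger/2}\bB^\top\bx_i$ with $\bM=\bB^\top\bB$, invoke Vershynin-style covariance concentration for isotropic sub-Gaussian vectors on a $k$-dimensional space, and conjugate back by $\bM^{1/2}$ using $\bM^{1/2}\bM^{\dagger/2}=\Pi_S$ and the fact that $\range(\bB^\top\bX^\top\bX\bB)\subseteq S$. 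The two ``infrastructure'' points you single out are indeed the ones worth care, and you handle both correctly. In particular, you correctly flag that the claim can only be read in the whitened normalization convention (i.e.\ $\bSigma_X^{-1/2}\bx_i$ is $\rho^2$-sub-gaussian, matching how the paper actually deploys the bound via Assumption~\ref{assumption:subgaussian_psi}): under the literal reading, where $\bx_i$ itself is $\rho^2$-sub-gaussian, whitening inflates the parameter to $\lambda_{\min}(\bSigma_X)^{-1}\rho^2$, and $n\gg\rho^4(k+\log(1/\delta))$ would no longer suffice. One small imprecision: in the intermediate covariance deviation bound the linear term should carry $\rho^4$ rather than $\rho^2$, i.e.\ $\normop{\tfrac1n\sum_i\by_i\by_i^\top-\Pi_S}\lesssim\rho^2\sqrt{(k+\log(1/\delta))/n}+\rho^4(k+\log(1/\delta))/n$; but you already observe the square-root term governs the sample-size requirement, and since whitened isotropic variables have $\rho\ge1$ this does not affect the conclusion.
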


\begin{claim}
	\label{claim:bound_projected_gaussian}
Let $\bZ\in \R^{n\times k}$ be a matrix with row vectors sampled from i.i.d Gaussian distribution $\cN(0,\bSigma_Z)$. Let $\bP\in \R^{n\times n}$ be a fixed projection onto a space of dimension $d$. Then with a fixed $\delta\in (0,1)$, we have:
\begin{equation*}
\|\bP\bZ\|_F^2\lesssim \Trace(\bSigma_Z)(d + \log (k/\delta)), 
\end{equation*}	
with probability at least $1-\delta$.
\end{claim}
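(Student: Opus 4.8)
The plan is to bound $\|\bP\bZ\|_F^2$ one column at a time and then take a union bound over the $k$ columns; this union bound is precisely where the $\log(k/\delta)$ factor appears. Write $\bZ = [\bZ_{:,1} \mid \cdots \mid \bZ_{:,k}]$ in terms of its columns. Since the $n$ rows of $\bZ$ are i.i.d.\ draws from $\cN(0,\bSigma_Z)$, the entries within any fixed column are mutually independent, and the $j$-th column is distributed as $\cN(0, (\bSigma_Z)_{jj}\, \bI_n)$. Because $\|\bP\bZ\|_F^2 = \sum_{j=1}^k \|\bP\bZ_{:,j}\|_2^2$, it suffices to control each term separately.

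First I would write $\bP = \bV\bV^\top$ with $\bV \in \R^{n\times d}$ having orthonormal columns, so that $\|\bP\bZ_{:,j}\|_2^2 = \|\bV^\top \bZ_{:,j}\|_2^2$. By rotational invariance of the isotropic Gaussian, $\bV^\top \bZ_{:,j} \sim \cN(0, (\bSigma_Z)_{jj}\, \bI_d)$, hence $\|\bP\bZ_{:,j}\|_2^2 / (\bSigma_Z)_{jj}$ is a $\chi^2_d$ random variable (the degenerate case $(\bSigma_Z)_{jj} = 0$ making that column identically zero). Next I would invoke the standard $\chi^2$ tail bound (Laurent--Massart): for $U \sim \chi^2_d$ and any $t > 0$, $\Pr[U \ge d + 2\sqrt{dt} + 2t] \le e^{-t}$, and since $2\sqrt{dt} \le d + t$ this gives $U \lesssim d + t$ with probability at least $1 - e^{-t}$. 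Taking $t = \log(k/\delta)$ yields $\|\bP\bZ_{:,j}\|_2^2 \lesssim (\bSigma_Z)_{jj}\,(d + \log(k/\delta))$ with probability at least $1 - \delta/k$. A union bound over $j \in [k]$ then makes this hold simultaneously for all $j$ with probability at least $1 - \delta$; summing over $j$ and using $\sum_{j=1}^k (\bSigma_Z)_{jj} = \Trace(\bSigma_Z)$ gives the claimed bound.

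There is no substantive obstacle here — this is a routine Gaussian concentration computation. The only points needing minor care are (i) observing the within-column independence, which is what makes each projected column norm exactly $\chi^2_d$-distributed, and (ii) choosing $t = \log(k/\delta)$ rather than $\log(1/\delta)$ so the $k$-fold union bound closes. One could instead avoid the union bound by applying a Hanson--Wright / Gaussian-chaos inequality directly to $\|\bP\bZ\|_F^2 = \mathrm{vec}(\bV^\top\bZ)^\top (\bI_d \otimes \bSigma_Z)\, \mathrm{vec}(\bV^\top\bZ)$, which would even improve $\log(k/\delta)$ to $\log(1/\delta)$; but the column-wise argument is simpler and already suffices for the claim.
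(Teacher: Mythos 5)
Your proof is correct and follows essentially the same route as the paper: both decompose $\|\bP\bZ\|_F^2$ column-wise, observe that each normalized projected column norm is $\chi^2_d$-distributed, apply a $\chi^2$ tail bound at level $\delta/k$, union bound over the $k$ columns, and sum to recover $\Trace(\bSigma_Z)$. Your write-up is somewhat more explicit (naming Laurent--Massart and spelling out the $\bP=\bV\bV^\top$ rotational-invariance step), but the argument is the same.
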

\begin{proof}[Claim  \ref{claim:bound_projected_gaussian}]
	Each $t$-th column of $Z$ is an $n$-dim vector that is i.i.d sampled from Gaussian distribution $\cN(0,\bSigma_{tt})$.  
	\begin{align*}
	\|\bP\bZ\|_F^2= &\sum_{t=1}^k \|\bP\bz_t\|^2\\
	 = & \sum_{t=1}^k \bz_t^\top \bP \bz_t.
	\end{align*}
Each term satisfy $\bSigma_{kk}^{-1} \|\bP\bz_t\|^2 \sim \chi^2(d) $, and therefore with probability at least $1-\delta'$ over $\bz_t$,
\begin{equation*}
\bSigma_{kk}^{-1} \|\bP \bz_t\|^2 \lesssim d + \log(1/\delta'). 
\end{equation*}
Using union bound, take $\delta'=\delta/k$ and summing over $t\in [k]$ we get:   
\begin{align*}
\|\bP\bZ\|_F^2 \lesssim \Trace(\bSigma_Z)(d + \log (k/\delta)).
\end{align*}
\end{proof}

\begin{theorem}[Vector Bernstein Inequality (Theorem 12 in \cite{gross2011recovering})] 
Let $X_1,\cdots, X_m$
be independent zero-mean vector-valued random variables. Let
$$ N = \| \sum_{i=1}^m  X_i\|_2.$$ 
Then 
\begin{align*}
\prob[ N\geq \sqrt{V}+t] \leq \exp\left(\frac{-t^2}{4V} \right),
\end{align*} 
where $V= \sum_i\E\|X_i \|_2^2$ and $t\leq V/(\max\|X_i\|_2)$.
\end{theorem}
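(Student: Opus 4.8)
The claim is precisely the vector-valued Bernstein inequality of \cite{gross2011recovering} (Theorem~12), so the cleanest route is simply to invoke it; for completeness, here is how I would reprove it from scratch. Write $Z=\sum_{i=1}^m X_i$, $N=\|Z\|_2$, and let $R$ denote an almost sure upper bound on $\max_i\|X_i\|_2$ (such an $R$ is implicitly assumed, since it appears in the range restriction $t\le V/R$). The plan has three steps: (i) bound $\E N$; (ii) prove dimension-free concentration of $N$ about its mean via a Doob martingale; (iii) clean up constants.

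First I would control the mean. Since the $X_i$ are independent and mean zero, $\E N^2=\sum_i\E\|X_i\|_2^2=V$, so Jensen gives $\E N\le\sqrt V$. It therefore suffices to show $\prob[N\ge \E N+t]\le\exp(-t^2/(4V))$ whenever $t\le V/R$.

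For the concentration step I would take $\mathcal F_k=\sigma(X_1,\dots,X_k)$, form the Doob martingale $M_k=\E[N\mid\mathcal F_k]$ (so $M_0=\E N$, $M_m=N$), and study its increments $D_k=M_k-M_{k-1}$. Writing $h_k(z)=\E_{X_{k+1},\dots,X_m}\|z+\sum_{j>k}X_j\|_2$, which is $1$-Lipschitz, and $Z_{k-1}=\sum_{j<k}X_j$, one has $M_k=h_k(Z_{k-1}+X_k)$ and $M_{k-1}=\E_{X_k'}[h_k(Z_{k-1}+X_k')\mid\mathcal F_{k-1}]$ for an independent copy $X_k'$. Lipschitzness then gives $|D_k|\le\E_{X_k'}\|X_k-X_k'\|_2\le 2R$, and, using $\mathrm{Var}(W)=\tfrac12\E(W-W')^2$ for conditionally i.i.d.\ $W,W'$ together with $\E\langle X_k,X_k'\rangle=0$, it gives $\E[D_k^2\mid\mathcal F_{k-1}]\le\tfrac12\E\|X_k-X_k'\|_2^2=\E\|X_k\|_2^2$. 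Summing, the predictable quadratic variation is at most $\sum_i\E\|X_i\|_2^2=V$ deterministically. Applying Freedman's martingale inequality with increment bound $b=2R$ and variation bound $v=V$ yields $\prob[N-\E N\ge t]\le\exp\!\big(-t^2/(2(V+2Rt/3))\big)$. For $t\le V/R$ we have $2Rt/3\le 2V/3$, so $2(V+2Rt/3)\le 10V/3<4V$, making the exponent at most $-t^2/(4V)$; combining with step (i) completes the proof.

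The only real subtlety — and the reason this is not a one-line application of McDiarmid — is keeping the estimate \emph{dimension-free}: a crude bounded-differences bound would replace $V$ by $\sum_i(2R_i)^2$. Controlling the \emph{conditional variances} of the Doob increments by $\E\|X_i\|_2^2$ rather than by their almost sure ranges is exactly the Bernstein-type refinement that produces the stated bound, and it is the one place where the argument needs care; everything else is routine bookkeeping.
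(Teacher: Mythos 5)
The paper does not prove this statement: it is quoted verbatim as Theorem~12 of \cite{gross2011recovering} and used purely as a black-box concentration tool, so there is no in-paper argument to compare your proposal against. That said, your from-scratch derivation is correct. The mean bound $\E N\le\sqrt V$ follows from independence, zero means, and Jensen; the Doob-martingale step correctly exploits the $1$-Lipschitzness of $h_k$ to get $|D_k|\le 2R$, and the conditional-variance bound $\E[D_k^2\mid\mathcal F_{k-1}]\le\E\|X_k\|_2^2$ via $\mathrm{Var}(W)=\tfrac12\E(W-W')^2$ is exactly the Bernstein-type refinement needed (the zero-mean hypothesis is used once, to discard $\E\langle X_k,X_k'\rangle$); Freedman's inequality plus the arithmetic $2(V+2Rt/3)\le 10V/3<4V$ for $t\le V/R$ then closes the gap with room to spare. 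The one assumption you make explicit --- almost sure boundedness $\|X_i\|_2\le R$ --- is indeed implicit in the statement, since otherwise the range restriction $t\le V/\max\|X_i\|_2$ is not well posed. You also correctly flag the point of care: a crude bounded-differences argument would yield a range-based proxy $\sum_i(2R)^2$ in place of $V$, and controlling the predictable quadratic variation of the Doob increments by $\sum_i\E\|X_i\|_2^2$ is precisely what produces the stated dimension-free, variance-controlled tail.
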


\begin{lemma}
	\label{lemma:bound_projected_subgaussian}
	Let  $\bZ\in \R^{n\times d}$ be a matrix whose row vectors are $n$ independent mean-zero (conditional on $\bP$ being a rank-$d$ projection matrix) $\sigma$-sub-Gaussian random vectors.  With probability $1-\delta$:
		 \begin{align*}
		 \|\bP\bZ\|^2_F \lesssim \sigma^2 (d+\log(d/\delta)). 
		 \end{align*}
\end{lemma}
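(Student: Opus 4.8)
The plan is to prove this as the sub-Gaussian counterpart of Claim~\ref{claim:bound_projected_gaussian}: there $\|\bP\bz_t\|^2$ is exactly a (scaled) $\chi^2$ variable because an orthogonal projection of a Gaussian stays Gaussian, and here we must replace that exact tail by a concentration bound for quadratic forms of independent sub-Gaussian coordinates. Conditioning on $\bP$, which is independent of $\bZ$, we treat it as a fixed rank-$d$ orthogonal projection, so $\bP = \bP^\top = \bP^2$, $\Trace(\bP) = \|\bP\|_F^2 = d$, and $\|\bP\|_2 = 1$. Writing $\bZ$ in terms of its columns $\bz_1,\dots,\bz_{d}\in\R^n$, we have $\|\bP\bZ\|_F^2 = \sum_t \bz_t^\top\bP\bz_t$, and since the rows of $\bZ$ are independent, mean-zero, and $\sigma^2$-sub-Gaussian, for each fixed $t$ the coordinates of $\bz_t$ — each of the form $\be_t^\top(\text{row})$ — are independent, mean-zero, and $\sigma^2$-sub-Gaussian.

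First I would control a single column. Applying the Hanson--Wright inequality to the quadratic form $\bz_t^\top\bP\bz_t$, with $\E[\bz_t^\top\bP\bz_t] \le \sigma^2\Trace(\bP) = \sigma^2 d$ and using $\|\bP\|_F^2 = d$, $\|\bP\|_2 = 1$, gives, for every $\delta'\in(0,1)$,
\[
\bz_t^\top\bP\bz_t \;\le\; \sigma^2 d + C\sigma^2\left(\sqrt{d\log(1/\delta')} + \log(1/\delta')\right) \;\lesssim\; \sigma^2\left(d + \log(1/\delta')\right)
\]
with probability at least $1-\delta'$, for a universal constant $C$. An equivalent route is to apply the vector Bernstein inequality stated just above to $\bP\bz_t = \sum_{i=1}^n Z_{it}(\bP\be_i)$, a sum of $n$ independent mean-zero vectors with $\sum_i\E[Z_{it}^2\|\bP\be_i\|^2] \le \sigma^2\Trace(\bP) = \sigma^2 d$; that needs the extra check of the small-ball condition $t \le V/\max_i\|X_i\|$, which follows from $\|\bP\be_i\|\le 1$ together with a high-probability bound $\|\text{row}_i\| = O(\sigma\sqrt{d + \log(n/\delta)})$, whereas the Hanson--Wright route avoids this and produces the cleaner $\log(\cdot/\delta)$ dependence directly.

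Finally I would union-bound the single-column estimate over the columns (with $\delta'$ set to $\delta$ divided by the number of columns) and sum over $t$ to obtain the claimed bound $\|\bP\bZ\|_F^2 \lesssim \sigma^2(d + \log(d/\delta))$ with probability at least $1-\delta$ (the number of columns entering only through the overall noise scale, exactly as $\Trace(\bSigma_Z)$ does in Claim~\ref{claim:bound_projected_gaussian}). The only genuine obstacle relative to the Gaussian argument is that $\|\bP\bz_t\|^2$ is no longer $\chi^2$-distributed, so one must invoke Hanson--Wright and carefully track both $\|\bP\|_F$ (which produces the leading $\sigma^2 d$ term) and $\|\bP\|_2 = 1$ (which produces the sub-exponential $\sigma^2\log$ term) — the two regimes of the Hanson--Wright tail — and verify that a coordinate $\be_t^\top(\text{row})$ inherits sub-Gaussian parameter at most $\sigma^2$, which is immediate from the definition of a $\sigma$-sub-Gaussian vector since $\be_t$ is a unit vector.
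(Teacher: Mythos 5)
Your argument is correct, but it takes a genuinely different route from the paper's. You decompose $\|\bP\bZ\|_F^2 = \sum_{t} \bz_t^\top \bP \bz_t$ over the \emph{columns} $\bz_t$ of $\bZ$; each column has $n$ independent, mean-zero, $\sigma^2$-sub-Gaussian scalar entries (the $t$-th coordinate of each row), and you invoke Hanson--Wright on the quadratic form $\bz_t^\top\bP\bz_t$. The paper instead writes $\bP=\bU\bU^\top$ and decomposes $\|\bP\bZ\|_F^2 = \sum_{j=1}^d\|\bu_j^\top\bZ\|^2$ over the \emph{orthonormal columns} of $\bU$; each $\bu_j^\top\bZ$ is a scalar-weighted sum $\sum_i \bu_{ji}\,(\text{row}_i)$ of the independent sub-Gaussian \emph{row} vectors, controlled by the vector Bernstein inequality stated just before the lemma. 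The choice matters: the paper's per-direction variance is $V=\sum_i\bu_{ji}^2\,\E\|\text{row}_i\|^2 \le \sigma^2\sum_i\bu_{ji}^2 =\sigma^2$ because $\|\bu_j\|_2 = 1$, so it bounds the vector norm $\|\bu_j^\top\bZ\|$ directly and then squares, never needing a separate expectation/fluctuation split. Your Hanson--Wright route is arguably cleaner --- as you point out, it sidesteps the small-ball restriction $t\le V/\max_i\|X_i\|$ of vector Bernstein, which the paper's proof never verifies. But note that your suggested vector-Bernstein alternative, applied to $\bP\bz_t=\sum_i Z_{it}(\bP\be_i)$, has variance $\sum_i\E[Z_{it}^2]\|\bP\be_i\|^2\le\sigma^2\|\bP\|_F^2=\sigma^2 d$, a factor $d$ worse than the paper's per-direction $V\le\sigma^2$; the two Bernstein-style decompositions are not interchangeable, and only the paper's (over columns of $\bU$, with rows of $\bZ$ as the random vectors) exploits the unit-norm structure. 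Finally, the paper's own proof actually concludes with $\|\bP\bZ\|_F^2\lesssim\sigma^2 d(1+\log(d/\delta))$ --- which is what is invoked downstream, e.g.\ in Theorem~\ref{thm:CI_nonlinear_sample_complexity} --- rather than the tighter $\sigma^2(d+\log(d/\delta))$ in the displayed statement; your route is consistent up to that same ambiguity in the statement.
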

\begin{proof}[Proof of Lemma \ref{lemma:bound_projected_subgaussian}]
Write $\bP=\bU\bU^\top=[\bu_1,\cdots, \bu_d]$ where $\bU$ is orthogonal matrix in $\R^{n\times d}$ where $\bU^\top \bU=I$. Notice $\|\bU\bU^\top\bZ\|_F^2=\Trace(\bZ^\top UU^\top UU^\top \bZ) = \Trace(\bZ^\top UU^\top \bZ)$. Therefore:
\begin{align*}
\|\bP\bZ\|_F^2= & \|\bU^\top \bZ\|_F^2\\ = & \sum_{j=1}^d \|\bu_j^\top \bZ\|^2\\
= & \sum_{j=1}^d \|\sum_{i=1}^n  \bu_{ji} \bz_i\|^2,
\end{align*}
where each $\bz_i\in \R^k$ being the $i$-th row of $\bZ$ is a centered independent $\sigma$ sub-Gaussian random vectors.  
To use vector Bernstein inequality, we let $X:= \sum_{i=1}^n X_i$ with $X_i$ taking the value of $\bu_{ji} \bz_i$. We have $X_i$ is zero mean: $\E[X_i] = \E[\bu_{ji} \E[\bz_i|\bu_{ji}]]=\E[\bu_{ji}\cdot 0]=0$. 
\begin{align*}
V:= & \sum_i \E\|X_i \|_2^2\\
= & \sum_i \E[\bu_{ji}^2 \bz_i^\top \bz_i ]\\
= &  \sum_i \E_{\bu_{ji}} [\bu_{ji}^2 \E[\|\bz_i\|^2_2 | \bu_{ji}]]\\
\leq & \sigma^2\sum_i \E_{\bu_{ji}} [\bu_{ji}^2]\\
= &\sigma^2.
\end{align*}
Therefore by vector Bernstein Inequality, with probability at least $1-\delta/d$, $\|X\|\leq \sigma(1+\sqrt{\log (d/\delta)})$. Then by taking union bound, we get that $ \|\bP\bZ\|^2 = \sum_{j=1}^d\|\bu_j^\top \bZ\|^2\lesssim \sigma^2 d(1+\log(d/\delta))$ with probability $1-\delta$. 

\end{proof}

\section{Warm-up: jointly Gaussian variables}
\label{sec:joint_gaussian}
We assume $X_1,X_2,Y$ are jointly Gaussian, and so the optimal regression functions are all linear, i.e., $\E[Y|X_1]=\E^L[Y|X_1]$.
We also assume data is centered: $\E[X_i]=0$ and $\E[Y]=0$.
Non-centered data can easily be handled by learning an intercept.
All relationships between random variables can then be captured by the (partial) covariance matrix. Therefore it is easy to quantify the CI property and establish the necessary and sufficient conditions that make $X_2$ a reasonable pretext task.  

\begin{assumption}[Jointly Gaussian]
	\label{assumption:jointly_gaussian}
	$X_1,X_2,Y$ are jointly Gaussian.
\end{assumption}

\begin{assumption}[Conditional independence]
	\label{assumption:conditional_independence}
	$X_1\bot X_2|Y$.
\end{assumption}

\begin{claim}[Closed-form solution]
	\label{claim:linear_conditional_expectation}
	Under Assumption \ref{assumption:jointly_gaussian}, the representation function and optimal prediction that minimize the population risk can be expressed as follows:
	\begin{align}
	\psi^*(\bx_1):=\E^L[X_2|X_1=\bx_1] &= \bSigma_{X_2X_1}\bSigma_{X_1X_1}^{-1}\bx_1  \\
	\text{Our target }f^*(\bx_1):=\E^L[Y|X_1=\bx_1] & = \bSigma_{YX_1}\bSigma_{X_1X_1}^{-1}\bx_1.
	\end{align}
\end{claim}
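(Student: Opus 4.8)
The plan is to derive both identities from the standard formula for the linear conditional expectation (best linear predictor), since under joint Gaussianity the conditional expectation coincides with the best \emph{linear} predictor. Recall that for any zero-mean random vectors $V$ and $W$, one has $\E^L[V \mid W = \bw] = \bSigma_{VW}\bSigma_{WW}^{-1}\bw$ by the normal equations: the minimizer $\bM^*$ of $\E\|V - \bM W\|^2$ satisfies the first-order condition $\E[(V - \bM^* W)W^\top] = 0$, i.e.\ $\bSigma_{VW} = \bM^*\bSigma_{WW}$, which gives $\bM^* = \bSigma_{VW}\bSigma_{WW}^{-1}$ using that $\bSigma_{WW} = \bSigma_{X_1X_1}$ is full rank by the non-degeneracy assumption. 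The intercept vanishes because all variables are centered.

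First I would instantiate this with $V = X_2$, $W = X_1$ to obtain $\psi^*(\bx_1) = \E^L[X_2 \mid X_1 = \bx_1] = \bSigma_{X_2X_1}\bSigma_{X_1X_1}^{-1}\bx_1$; this is exactly the Step 1 population objective $\argmin_{g}\E\|X_2 - g(X_1)\|^2$ restricted to (equivalently, since Gaussian, achieved by) linear $g$. Second I would instantiate the same formula with $V = Y$, $W = X_1$ to get $f^*(\bx_1) = \E^L[Y \mid X_1 = \bx_1] = \bSigma_{YX_1}\bSigma_{X_1X_1}^{-1}\bx_1$, and invoke the fact recalled in the preliminaries that for jointly Gaussian variables $\E[Y \mid X_1] = \E^L[Y \mid X_1]$, so this linear expression is genuinely the Bayes-optimal predictor $f^* = \E[Y \mid X_1]$. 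Note that Assumption~\ref{assumption:conditional_independence} is not actually needed for this particular claim — it is a statement purely about the closed forms of the two regression functions — so I would only use Assumption~\ref{assumption:jointly_gaussian} (plus the standing centering and non-degeneracy conventions).

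There is essentially no obstacle here: the only mild point to be careful about is justifying that the conditional expectation equals the linear predictor under joint Gaussianity (a classical fact, e.g.\ from the Schur-complement form of the conditional distribution of a Gaussian), and that the full-rank assumption on $\bSigma_{X_1X_1}$ makes the inverse well-defined so the minimizer is unique. I would state these two sentences and consider the claim proved.
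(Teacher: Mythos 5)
Your proof is correct, and since the paper states this claim without an explicit proof (treating it as the standard population normal-equations formula), your argument supplies exactly the derivation one would expect: the first-order condition for the best linear predictor yields $\bM^* = \bSigma_{VW}\bSigma_{WW}^{-1}$, instantiated once with $(V,W)=(X_2,X_1)$ and once with $(V,W)=(Y,X_1)$, with the Gaussian assumption ensuring the linear predictor equals the conditional expectation and the non-degeneracy of $\bSigma_{X_1X_1}$ making the inverse well-defined. Your observation that Assumption~\ref{assumption:conditional_independence} is not needed here is also correct — the claim only invokes Assumption~\ref{assumption:jointly_gaussian}.
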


Our prediction for downstream task with representation $\psi^*$ will be:
$g(\cdot) :=  \E^L[Y|\psi^*(X_1)] $. 
Recall from Equation~\eqref{eqn:partial_cov} that the partial covariance matrix between $X_1$ and $X_2$ given $Y$ is $\bSigma_{X_1X_2| Y} \equiv \bSigma_{X_1X_2}-\bSigma_{X_1Y}\bSigma_{YY}^{-1}\bSigma_{YX_2}.$
This partial covariance matrix captures the correlation between $X_1$ and $X_2$ given $Y$. For jointly Gaussian random variables, CI is equivalent to $ \bSigma_{X_1X_2| Y} =0 $. 
We first analyze the approximation error based on the property of this partial covariance matrix.

\begin{lemma}[Approximation error]
	\label{lemma:linear_CI} 
	Under Assumption \ref{assumption:jointly_gaussian}, \ref{assumption:conditional_independence}, if $\bSigma_{X_2Y}$ has rank $k$, we have $f^*(\bx_1)\equiv \bW^*\psi^*(\bx_1)$, i.e., $e_{\apx}(\psi^*)=0$.
\end{lemma}

\begin{remark}
	$\bSigma_{X_2Y} $ being full column rank implies that $\E[X_2|Y]$ has rank $k$, i.e., $X_2$ depends on all directions of $Y$ and thus captures all directions of information of $Y$. This is a necessary assumption for $X_2$ to be a reasonable pretext task for predicting $Y$.
	$e_{\apx}(\psi^*)=0$ means $f^*$ is linear in $\psi^*$. Therefore $\psi^*$ selects $d_2$ out of $d_1$ features that are sufficient to predict $Y$. 
\end{remark}

Next we consider the estimation error that characterizes the number of samples needed to learn a prediction function $f(\bx_1)=\hat{\bW}\psi^*(\bx_1)$ that generalizes. 

\begin{theorem}[Excess risk]
	\label{thm:linear_CI} 
	Fix a failure probability $\delta\in (0,1)$. Under Assumption \ref{assumption:jointly_gaussian},\ref{assumption:conditional_independence}, if $n_2\gg k+\log(1/\delta)$, excess risk of the learned predictor $\bx_1\rightarrow \hat \bW\psi^*(\bx_1)$ on the target task satisfies
	\begin{equation*}
	\ER_{\psi^*}(\hat \bW) \leq  \cO \left(\frac{\Trace(\bSigma_{YY|X_1})(k  + \log (k/\delta))}{n_2}\right),  
	\end{equation*} 
	with probability at least $1-\delta$. 
\end{theorem}
Here $\bSigma_{YY|X_1}\equiv \bSigma_{YY}-\bSigma_{YX_1}\bSigma_{X_1X_1}^{-1}\bSigma_{X_1Y}$ captures the noise level and is the covariance matrix of the residual term $Y-f^*(X_1)=Y-\bSigma_{YX_1}\bSigma_{X_1X_1}^{-1}X_1$. Compared to directly using $X_1$ to predict $Y$, self-supervised learning reduces the sample complexity from $\tilde \cO(d_1)$ to $\tilde \cO(k)$.
We generalize these results even when only a weaker form of CI holds. 
\begin{assumption}[Conditional independence given latent variables]
	\label{assumption:CI_with_latent_variables}
	There exists some latent variable $Z\in \R^m$ such that $X_1\bot X_2|\bar Y $, and $\bSigma_{X_2\bar Y}$ is of rank $k+m$, where $\bar Y = [Y,Z]$. 
\end{assumption}
This assumption lets introduce some reasonable latent variables that capture the information between $X_1$ and $X_2$ apart from $Y$.
$\bSigma_{X_2\bar Y}$ being full rank says that all directions of $\bar Y$ are needed to predict $X_2$, and therefore $Z$ is not redundant. For instance, when $Z=X_1$, the assumption is trivially true but $Z$ is not the minimal latent information we want to add.
Note it implicitly requires $d_2\geq k+m$.
\begin{corollary}
	\label{coro:linear_CI_with_latent_variable} 
	Under Assumption \ref{assumption:jointly_gaussian}, \ref{assumption:CI_with_latent_variables}, we have $f^*(\bx_1)\equiv \bW^*\psi^*(\bx_1)$, i.e.,  the approximation error $e_{\apx}(\psi^*)$ is 0. We can also generalize Theorem~\ref{thm:linear_CI} by replacing $k$ by $k+m$.
\end{corollary}

\section{Omitted Proofs with Conditional Independence}
\begin{proof}[Proof of Lemma \ref{lemma:linear_CI}]
		$$\cov(X_1|Y,X_2|Y) = \bSigma_{X_1X_2}-\bSigma_{X_1Y}\bSigma_{YY}^{-1}\bSigma_{YX_2}=0.$$ By plugging it into the expression of $\E^L[X_2|X_1] $, we get that 
	\begin{align*}
	\psi(x_1):=\E^L[X_2|X_1=x_1] &= \bSigma_{X_2X_1}\bSigma_{X_1X_1}^{-1}x_1\\
	& = \bSigma_{X_2Y}\bSigma_{YY}^{-1}\bSigma_{YX_1} \bSigma_{X_1X_1}^{-1}x_1\\
	= & \bSigma_{X_2Y}\bSigma_{YY}^{-1} \E^L[Y|X_1]. 
	\end{align*}
Therefore, as long as $\bSigma_{X_2Y}$ is rank $k$, it has left inverse matrix and we get:
$\E^L[Y|X_1=x_1] = \bSigma_{X_2Y}^\dagger \bSigma_{YY}  \psi(x_1)$. Therefore there's no approximation error in using $\psi$ to predict $Y$.  

\end{proof}

\begin{proof}[Proof of Corollary \ref{coro:linear_CI_with_latent_variable}]
	Let selector operator $\bS_y$ be the mapping such that $\bS_y\bar Y = Y$, we overload it as the matrix that ensure $\bS_y \bSigma_{\bar YX}=\bSigma_{YX} $ for any random variable $X$ as well. 
	
	From Lemma \ref{lemma:linear_CI} we get that there exists $W$ such that $ \E^L[\bar Y|X_1] = \bW \E^L[X_2|X_1] $, just plugging in $\bS_y$ we get that $\E^L[Y|X_1] = (\bS_y\bW) \E^L[X_2|X_1]$. 

\end{proof}

\begin{proof}[Proof of Theorem \ref{thm:linear_CI}]
Write $f^*(X_1) = \E[Y|X_1] = (\bA^*)^\top X_1$. 
$\E^L[Y|X_1=x_1] = \bSigma_{X_2Y}^\dagger \bSigma_{YY}  \psi(x_1) $.  Let $\bW^* = \bSigma_{YY} \bSigma_{YX_2}^\dagger $. From Lemma \ref{lemma:linear_CI} we know $f^*=\bW^*\psi$. Recall noise $N = Y-f^*(X_1)$ is mean zero conditional on $X_1$. We write $\bN = \bY-f^*(\bX_1)$.

First we have the basic inequality,
	\begin{align*}
	\frac{1}{2n_2}\|\bY - \psi(\bX_1)\hat \bW\|_F^2 \leq  & \frac{1}{2n_2}\|\bY - \bX_1 A^* \|_F^2 \\
	= & \frac{1}{2n_2}\|\bY - \psi(\bX_1) \bW^* \|_F^2 = \frac{1}{2n_2}\|\bN\|_F^2.
	\end{align*}
	Therefore by rearranging both sides, we have:
	\begin{align*}
	\|\psi(\bX_1) \bW^* - \psi(\bX_1)\hat \bW \|^2 \leq &  2\langle \bN, \psi(\bX_1) \bW^* - \psi(\bX_1)\hat \bW \rangle \\
	= & 2\langle P_{\psi(\bX_1)} \bN, \psi(\bX_1) \bW^* - \psi(\bX_1)\hat W\rangle \\
	\leq & 2  \|P_{\psi(\bX_1)} \bN \|_F \|\psi(\bX_1) \bW^* - \psi(\bX_1)\hat W\|_F \\
	\Rightarrow \|\psi(\bX_1) \bW^* - \psi(\bX_1)\hat \bW \| \leq & 2 \|P_{\psi(\bX_1)} \bN\|_F\\
	\lesssim & \sqrt{ \Trace(\bSigma_{YY|X_1})(k  + \log k/\delta)}. \tag{from Claim \ref{claim:bound_projected_gaussian}}
	\end{align*} 
The last inequality is derived from Claim \ref{claim:bound_projected_gaussian} and the fact that each row of $\bN$ follows gaussian distribution $\cN(0,\bSigma_{YY|X_1})$.	
Therefore 
	$$ \frac{1}{n_2} \|\psi(\bX_1) W^* - \psi(\bX_1)\hat W \|_F^2 \lesssim \frac{\Trace(\bSigma_{YY|X_1})(k  + \log k/\delta)}{n_2} .  $$

Next we need to concentrate $1/n\bX_1^\top \bX_1$ to $\bSigma_X$. 
Suppose $\E^L[X_2|X_1]=\bB^\top X_1$, i.e., $\psi(x_1)=\bB^\top x_1$, and $\psi(\bX_1)=\bX_1 \bB$.
With Claim \ref{claim:concentration_gaussian_covariance} we have $ 1/n\psi(\bX_1)^\top \psi(\bX_1) = 1/n \bB^\top \bX_1^\top \bX_1 \bB $ satisfies:
\begin{equation*}
0.9\bB^\top \bSigma_X \bB \preceq 1/n_2\psi(\bX_1)^\top \psi(\bX_1) \preceq 1.1 \bB^\top \bSigma_X \bB 
\end{equation*}
Therefore we also have:
	\begin{align*}
&	 \E[\|(\bW^*-\hat \bW)^\top \psi(x_1)\|^2] \\
= &	\|\bSigma_X^{1/2} \B (\bW^* - \hat \bW) \|_F^2\\
\leq & \frac{1}{0.9 n_2} \|\psi(\bX_1) \bW^* - \psi(\bX_1)\hat \bW \|_F^2
\lesssim \frac{\Trace(\bSigma_{YY|X_1})(k  + \log k/\delta)}{n_2} .
	\end{align*}

\end{proof}

\subsection{Omitted Proof for General Random Variables}

\begin{proof}[Proof of Lemma \ref{lemma:discrete_case_CI}]
	Let the representation function $\psi$ be defined as:
	\begin{align*}
	\psi(\cdot):= \E[X_2|X_1 ] = & \E[\E[X_2|X_1,Y]|X_1] \\
	= &\E[\E[X_2|Y]|X_1]\tag{uses CI}\\
	= & \sum_{y}  P(Y=y|X_1)\E[X_2|Y=y]\\
	=: & f(X_1)^\top \bA,
	\end{align*}
	where $f: \R^{d_1}\rightarrow \Delta_{\cY} $ satisfies $f(x_1)_y = P(Y=y|X_1=x_1)$, and $\bA \in \R^{\cY\times d_2} $ satisfies $\bA_{y,:} = \E[X_2|Y=y]$. Here $\Delta_d$ denotes simplex of dimension $d$, which represents the discrete probability density over support of size $d$. 
	
	Let $\bB=\bA^{\dagger}\in \R^{\cY\times d_2} $ be the pseudoinverse of matrix $A$, and we get $\bB\bA=\bI$ from our assumption that $\bA$ is of rank $|\cY|$. Therefore $f(x_1) = \bB \psi(x_1),\forall x_1$. Next we have:
	\begin{align*}
	\E[Y|X_1=x_1] = & \sum_{y}   P(Y=y|X_1=x_1)\times y \\
	= &  \bY f(x_1)  \\
	= & (\bY \bB)\cdot \psi(X_1).
	\end{align*}
	Here we denote by $\bY\in \R^{k\times \cY}, \bY_{:,y}=y$ that spans the whole support $\cY$.
	Therefore let $\bW^*=\bY \bB$ will finish the proof. 
	
\end{proof}

\begin{proof}[Proof of Theorem \ref{thm:CI_nonlinear_sample_complexity}]
With Lemma \ref{lemma:discrete_case_CI} we know $e_\apx=0$, and therefore $\bW^*\psi(X_1) \equiv f^*(X_1)$. Next from basic inequality and the same proof as in Theorem \ref{thm:linear_CI} we have:
\begin{align*}
\|\psi(\bX_1) \bW^* - \psi(\bX_1)\hat \bW \| \leq & 2 \|\bP_{\psi(\bX_1)} \bN\|_F 
\end{align*} 
Notice $\cN$ is a random noise matrix whose row vectors are independent samples from some centered distribution. Note we assumed $\E[\|N\|^2|\bX_1]\leq \sigma^2$. $P_{\psi(\bX_1)}$ is a projection to dimension $k$. From Lemma \ref{lemma:bound_projected_subgaussian} we have: 
\begin{align*}
\|f^*(\bX_1) - \psi(\bX_1)\hat \bW \| \leq & \sigma\sqrt{k(1+\log k/\delta)} . 
\end{align*} 

Next, with Claim \ref{claim:concentration_subgaussian_covariance} we have when $n\gg \rho^4(k+\log(1/\delta))$, since $\bW^*-\hat \bW\in \R^{d_2\times k}$,
\begin{align*}
& 0.9(\bW^*-\hat \bW)^\top \bSigma_{\psi}(\bW^*-\hat \bW)\\
 \preceq & \frac{1}{n_2} (\bW^*-\hat \bW)^\top \sum_i\psi(x_1^{(i)})\psi(x_1^{(i)})^\top  (\bW^*-\hat \bW) \preceq 1.1(\bW^*-\hat \bW)^\top \bSigma_{\psi}(\bW^*-\hat \bW)
\end{align*}
And therefore we could easily conclude that:
\begin{align*}
\E\|\hat\bW^\top \psi(X_1) - f^*(X_1) \|^2
\lesssim & \sigma^2\frac{k(1+\log (k/\delta))}{n_2} . 
\end{align*}

\end{proof}

\subsection{Omitted proof of linear model with approximation error}
\begin{proof}[Proof of Theorem \ref{thm:CI_linear_sample_complexity}]

First we note that $Y=f^*(X_1)+N$, where $\E[N|X_1]=0$ but $Y-(\bA^*)^\top X_1$ is not necessarily mean zero, and this is where additional difficulty lies. Write approximation error term $a(X_1):=f^*(X_1)-(\bA^*)^\top X_1$,  namely $Y=a(X_1)+(\bA^*)^\top X_1+N$. Also, $(\bA^*)^\top X_1 \equiv (\bW^*)^\top\psi(X_1)$ with conditional independence.  

Second, with KKT condition on the training data, we know that $\E[ a(X_1)X_1^\top] =0$. 

Recall $\hat \bW = \argmin_{\bW}\|\bY-\psi(\bX_1) \bW \|^2_F $. We have the basic inequality,
\begin{align*}
\frac{1}{2n_2}\|\bY - \psi(\bX_1)\hat \bW\|_F^2 \leq  & \frac{1}{2n_2}\|\bY - \bX_1 \bA^* \|_F^2 \\
= & \frac{1}{2n_2}\|\bY - \psi(\bX_1) \bW^* \|_F^2.\\
\text{i.e., } \frac{1}{2n_2}\|\psi(\bX_1)\bW^* + a(\bX_1)+\bN - \psi(\bX_1)\hat \bW\|_F^2 \leq & \frac{1}{2n_2}\| a(\bX_1)+\bN\|_F^2.
\end{align*}
Therefore 
\begin{align}
\nonumber 
& \frac{1}{2n_2}\|\psi(\bX_1) \bW^* - \psi(\bX_1)\hat \bW \|^2\\
\notag
 \leq &  -\frac{1}{n_2}\langle a(\bX_1) + \bN, \psi(\bX_1) \bW^* - \psi(\bX_1)\hat \bW \rangle \\
\label{eqn:basic_inequality_linear}
= & - \frac{1}{n_2}\langle a(\bX_1),\psi(\bX_1) \bW^* - \psi(\bX_1)\hat \bW\rangle - \langle \bN, \psi(\bX_1) \bW^* - \psi(\bX_1)\hat \bW\rangle
\end{align}
With Assumption \ref{assumption:bounded_error_finite_dim} and by concentration $0.9 \frac{1}{n_2}\bX_1\bX_1^\top \preceq  \bSigma_{X_1}\preceq 1.1 \frac{1}{n_2}\bX_1\bX_1^\top $, we have 
\begin{align}
\label{eqn:empirical_bounded_apx}
\frac{1}{\sqrt{n_2}}\|a(\bX_1) \bX_1^\top \bSigma_{X_1}^{-1/2}\|_F\leq 1.1b_0\sqrt{k}
\end{align}

Denote $\psi(\bX_1)=\bX_1\bB$, where $\bB = \bSigma_{X_1}^{-1}\bSigma_{X_1X_2}$ is rank $k$ under exact CI since $\bSigma_{X_1X_2}=\bSigma_{X_1Y}\bSigma_{Y}^{-1}\bSigma_{YX_2}$. We have
\begin{align*}
& \frac{1}{n_2}\langle a(\bX_1),\psi(\bX_1) \bW^* - \psi(\bX_1)\hat \bW\rangle\\
 = & \frac{1}{n_2}\langle a(\bX_1), \bX_1 \bB \bW^* - \bX_1 \bB \hat \bW\rangle\\
= & \frac{1}{n_2}\langle \bSigma_{X_1}^{-1/2}\bX_1^\top a(\bX_1),   \bSigma_{X_1}^{1/2}(\bB \bW^* - \bB \hat \bW) \rangle\\
\leq & 1.1b_0 \sqrt{\frac{k}{n_2}} \|\bSigma_{X_1}^{1/2}(\bB \bW^* - \bB \hat \bW)\|_F  \tag{from Ineq. \eqref{eqn:empirical_bounded_apx}}
\end{align*}

Back to Eqn. \eqref{eqn:basic_inequality_linear}, we get 
\begin{align*} 
&\frac{1}{2n_2}\|\psi(\bX_1) \bW^* - \psi(\bX_1)\hat \bW \|^2_F\\
 \lesssim &  \sqrt{\frac{k}{n_2}} \|\bSigma_{X_1}^{1/2}(\bB \bW^* - \bB \hat \bW)\|_F + \frac{1}{n_2}\|P_{\bX_1}\bN\|_F\| \bX_1(\bB \bW^* - \bB \hat \bW)\|_F \\
\lesssim & \left(\frac{\sqrt{k}}{n_2} + \frac{1}{n_2}\|P_{\bX_1}\bN\|_F\right) \| \bX_1(\bB \bW^* - \bB \hat \bW)\|_F\\
\Longrightarrow &  \frac{1}{\sqrt{n_2}}\|\psi(\bX_1) \bW^* - \psi(\bX_1)\hat \bW \|_F \lesssim  
\sqrt{\frac{k(1+\log k/\delta) }{n_2}}. \tag{from Lemma \ref{lemma:bound_projected_subgaussian}} 
\end{align*} 
Finally, by concentration we transfer the result from empirical loss to excess risk and get:
\begin{align*}
\E[\|\psi(X_1) \bW^* - \psi(X_1)\hat \bW \|^2] \lesssim \frac{k(1+\log(k/\delta))}{n_2}.  
\end{align*}

\end{proof}	

\subsection{Argument on Denoising Auto-encoder or Context Encoder}
\label{sec:auto-encoder} 
\begin{remark}
	\label{remark:encoder_decoder}
	We note that since $X_1\bot X_2|Y$ ensures $X_1\bot h(X_2)|Y$ for any deterministic function $h$, we could replace $X_2$ by $h(X_2)$ and all results hold. Therefore in practice, we could use $h(\psi(X_1))$ instead of $\psi(X_1)$ for downstream task. Specifically with denoising auto-encoder or context encoder, one could think about $h$ as the inverse of decoder $D$ ($h=D^{-1}$) and  use $D^{-1}\psi\equiv E$ the encoder function as the representation for downstream tasks, which is more commonly used in practice. 
\end{remark}

This section explains what we claim in Remark \ref{remark:encoder_decoder}. For context encoder, the reconstruction loss targets to find the encoder $E^*$ and decoder $D^*$ that achieve
\begin{equation}
\label{eqn:encoder_decoder}
\min_{E}\min_D \E\|X_2- D(E(X_1)) \|_F^2, 
\end{equation}
where $X_2$ is the masked part we want to recover and $X_1$ is the remainder.

If we naively apply our theorem we should use $D^*(E^*(\cdot))$ as the representation, while in practice we instead use only the encoder part $E^*(\cdot)$ as the learned representation. We argue that our theory also support this practical usage if we view the problem differently. Consider the pretext task to predict $(D^*)^{-1}(X_2)$ instead of $X_2$ directly, namely, 
\begin{equation}
\label{eqn:encoder_decoder2}
\bar E\leftarrow \argmin_{E} \E\| (D^*)^{-1}(X_2) - E(X_1)  \|^2, 
\end{equation}
and then we should indeed use $E(X_1)$ as the representation.
On one hand, when $X_1\bot X_2|Y$, it also satisfies $X_1\bot (D^*)^{-1}(X_2)|Y$ since $(D^*)^{-1}$ is a deterministic function of $X_2$ and all our theory applies. On the other hand, the optimization on \eqref{eqn:encoder_decoder} or \eqref{eqn:encoder_decoder2}  give us similar result. Let 
$$ E^*=\argmin_{E}\E[\|X_2- D^*(E(X_1))\|^2],  $$
and $\E\|X_2- D^*(E^*(X_1))\|^2\leq \epsilon,$ then with pretext task as in \eqref{eqn:encoder_decoder2}
we have that: 
\begin{align*}
\E\|(D^*)^{-1}(X_2)-E^*(X_1)\|^2 = & \E\|(D^*)^{-1}(X_2) - (D^*)^{-1}\circ D^*(E^*(X_1))\|^2 \\
\leq & \|(D^*)^{-1}\|_{\text{Lip}}^2 \E \|X_2 - D^*(E^*(X_1)) \|^2\\
\leq & L^2 \epsilon,
\end{align*}
where $L:=\|(D^*)^{-1}\|_{\text{Lip}}$ is the Lipschitz constant for function $(D^*)^{-1}$. This is to say, in practice, we optimize over \eqref{eqn:encoder_decoder}, and achieves a good representation $E^*(X_1)$ such that $\epsilon_{\pre} \leq L\sqrt{\epsilon}$ and thus performs well for downstream tasks. (Recall $\epsilon_{\pre}$ is defined in Theorem \ref{thm:main_result_approximate_CI} that measures how well we have learned the pretext task.)

\section{Omitted Proofs Beyond Conditional Independence} 

\subsection{Warm-up: Jointly Gaussian Variables}
\label{sec:joint_gaussian_apx_CI}


As before, for simplicity we assume all data is centered in this case.
\begin{assumption}[Approximate Conditional Independent Given Latent Variables]
	\label{assumption:linear_approximate_CI}
	Assume there exists some latent variable $Z\in \R^m$ such that $$\|\bSigma_{X_1}^{-1/2}\bSigma_{X_1,X_2|\bar Y}\|_F\leq \epsilon_{\tCI},$$ $\sigma_{k+m}(\bSigma_{Y\bar{Y} }^{\dagger}\bSigma_{\bar{Y}X_2})=\beta>0$ \footnote{$\sigma_k(\bA)$ denotes $k$-th singular value of $\bA$, and $\bA^\dagger$ is the pseudo-inverse of $\bA$.} and $\bSigma_{X_2,\bar Y}$ is of rank $k+m$, where $\bar Y = [Y,Z]$.  
\end{assumption}



When $X_1$ is not exactly CI of $X_2$ given $Y$ and $Z$, the approximation error depends on the norm of $\|\bSigma_{X_1}^{-1/2}\bSigma_{X_1,X_2|\bar Y}\|_2$. 
Let $\hat \bW$ be the solution from Equation~\ref{eqn:W_hat}.

\begin{theorem}
	\label{thm:linear_general_apx} 
	Under Assumption \ref{assumption:linear_approximate_CI} with constant $\epsilon_{\tCI}$ and $\beta$, then the excess risk satisfies 
	\begin{align*}
	\ER_{\psi^*}[\hat\bW]:=\E[\|\hat\bW^\top \psi^*(X_1) - f^*(X_1)\|^2_F] \lesssim \frac{\epsilon_{\tCI}^2}{\beta^2}+\Trace(\bSigma_{YY|X_1}) \frac{d_2   + \log (d_2/\delta)}{n_2}. 
	\end{align*}
\end{theorem}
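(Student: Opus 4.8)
\textbf{Proof proposal for Theorem~\ref{thm:linear_general_apx}.}
The plan is to split the excess risk into an approximation term and an estimation term, exactly as in the proof of Theorem~\ref{thm:CI_linear_sample_complexity}, and to track the extra contribution coming from the failure of exact conditional independence. Concretely, I will construct a fixed matrix $\bW^*$ and write $a(X_1):=f^*(X_1)-(\bW^*)^\top\psi^*(X_1)$; then $\hat\bW^\top\psi^*(X_1)-f^*(X_1)=(\hat\bW-\bW^*)^\top\psi^*(X_1)-a(X_1)$ gives
\begin{align*}
\ER_{\psi^*}(\hat\bW)\ \le\ 2\,\E\big\|(\hat\bW-\bW^*)^\top\psi^*(X_1)\big\|^2\ +\ 2\,\E\|a(X_1)\|^2 ,
\end{align*}
so it suffices to (i) exhibit $\bW^*$ with $\E\|a(X_1)\|^2\lesssim \epsilon_{\tCI}^2/\beta^2$ and (ii) bound the estimation term by $\Trace(\bSigma_{YY|X_1})\,(d_2+\log(d_2/\delta))/n_2$.

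For (i), the key identity is the $\bar Y$-decomposition of the cross covariance, $\bSigma_{X_2X_1}=\bSigma_{X_2X_1|\bar Y}+\bSigma_{X_2\bar Y}\bSigma_{\bar Y\bar Y}^{-1}\bSigma_{\bar Y X_1}$. Plugging this into the closed form $\psi^*(\bx_1)=\bSigma_{X_2X_1}\bSigma_{X_1X_1}^{-1}\bx_1$ (Claim~\ref{claim:linear_conditional_expectation}) and using $\bSigma_{\bar Y X_1}\bSigma_{X_1X_1}^{-1}\bx_1=\E^L[\bar Y\mid X_1=\bx_1]$ yields
\begin{align*}
\psi^*(\bx_1)=\bSigma_{X_2X_1|\bar Y}\bSigma_{X_1X_1}^{-1}\bx_1+\bSigma_{X_2\bar Y}\bSigma_{\bar Y\bar Y}^{-1}\E^L[\bar Y\mid X_1=\bx_1].
\end{align*}
Since $\bSigma_{X_2\bar Y}$ has rank $k+m$ it is left-invertible, so taking $(\bW^*)^\top:=\bSigma_{Y\bar Y}\bSigma_{X_2\bar Y}^{\dagger}$ and using $\bSigma_{X_2\bar Y}^{\dagger}\bSigma_{X_2\bar Y}=\bI_{k+m}$ together with $\bSigma_{Y\bar Y}\bSigma_{\bar Y\bar Y}^{-1}\bSigma_{\bar Y X_1}=\bSigma_{YX_1}$ isolates $(\bW^*)^\top\psi^*(\bx_1)=f^*(\bx_1)+\bSigma_{Y\bar Y}\bSigma_{X_2\bar Y}^{\dagger}\bSigma_{X_2X_1|\bar Y}\bSigma_{X_1X_1}^{-1}\bx_1$. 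Hence $a(\bx_1)=-\bSigma_{Y\bar Y}\bSigma_{X_2\bar Y}^{\dagger}\bSigma_{X_2X_1|\bar Y}\bSigma_{X_1X_1}^{-1}\bx_1$, and with $X_1\sim\cN(0,\bSigma_{X_1X_1})$,
\begin{align*}
\E\|a(X_1)\|^2=\big\|\bSigma_{Y\bar Y}\bSigma_{X_2\bar Y}^{\dagger}\bSigma_{X_2X_1|\bar Y}\bSigma_{X_1X_1}^{-1/2}\big\|_F^2\le \big\|\bSigma_{Y\bar Y}\bSigma_{X_2\bar Y}^{\dagger}\big\|_2^2\ \big\|\bSigma_{X_1X_1}^{-1/2}\bSigma_{X_1X_2|\bar Y}\big\|_F^2\le \frac{\epsilon_{\tCI}^2}{\beta^2},
\end{align*}
where the last step uses Assumption~\ref{assumption:linear_approximate_CI}: the $\epsilon_{\tCI}$ bound directly, and the $\sigma_{k+m}$ condition which (after clearing the invertible factor $\bSigma_{\bar Y\bar Y}$) is the statement $\|\bSigma_{Y\bar Y}\bSigma_{X_2\bar Y}^{\dagger}\|_2=1/\beta$, in exact analogy with Assumption~\ref{assumption:feature_map_approx_CI}. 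In particular $e_{\apx}(\psi^*)\le\E\|a(X_1)\|^2\le\epsilon_{\tCI}^2/\beta^2$.

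For (ii), I follow the basic-inequality argument of Theorems~\ref{thm:linear_CI} and~\ref{thm:CI_linear_sample_complexity}. Write $\bY=\psi^*(\bX_1)\bW^*+\bA+\bN$, where $\bA$ stacks $a(\bx_1^{(i)})^\top$ and $\bN$ stacks the label noise (rows i.i.d.\ $\cN(0,\bSigma_{YY|X_1})$ and, by joint Gaussianity, independent of $\bX_1$). Optimality of $\hat\bW$ gives $\|\psi^*(\bX_1)(\hat\bW-\bW^*)\|_F\le 2\|P_{\psi^*(\bX_1)}(\bA+\bN)\|_F\le 2\|\bA\|_F+2\|P_{\psi^*(\bX_1)}\bN\|_F$. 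Conditioning on $\bX_1$ and applying Claim~\ref{claim:bound_projected_gaussian} (projection onto a subspace of dimension $\le d_2$, here no low-rank/PCA improvement since $\psi^*(\bX_1)$ is generically full rank) bounds $\|P_{\psi^*(\bX_1)}\bN\|_F^2\lesssim \Trace(\bSigma_{YY|X_1})(d_2+\log(d_2/\delta))$; and since $a(\bx_1)=\bM\bx_1$ for a rank-$\le k$ matrix $\bM$, Claim~\ref{claim:concentration_gaussian_covariance} gives $\tfrac1{n_2}\|\bA\|_F^2=\tfrac1{n_2}\|\bX_1\bM^\top\|_F^2\le 1.1\,\E\|a(X_1)\|^2\le 1.1\,\epsilon_{\tCI}^2/\beta^2$. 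Finally, applying Claim~\ref{claim:concentration_gaussian_covariance} to $\bB=\bSigma_{X_1X_1}^{-1}\bSigma_{X_1X_2}$ transfers the empirical bound to the population, $\E\|(\hat\bW-\bW^*)^\top\psi^*(X_1)\|^2\le \tfrac{1}{0.9\,n_2}\|\psi^*(\bX_1)(\hat\bW-\bW^*)\|_F^2$; combining with (i) yields the claimed bound, provided $n_2\gg d_2+\log(1/\delta)$.

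I expect the main obstacle to be step (i): getting the pseudo-inverse bookkeeping exactly right (the full-column-rank hypothesis on $\bSigma_{X_2\bar Y}$ is precisely what makes $\bSigma_{X_2\bar Y}^{\dagger}\bSigma_{X_2\bar Y}=\bI$ and lets the residual factor out cleanly), and reconciling the $\sigma_{k+m}$ quantity in Assumption~\ref{assumption:linear_approximate_CI} with the operator norm $\|\bSigma_{Y\bar Y}\bSigma_{X_2\bar Y}^{\dagger}\|_2=1/\beta$ that actually enters the bound. Step (ii) is essentially a re-run of the conditional-independence proofs with $k$ replaced by $d_2$, together with the routine observation that $\bN$ is independent of $\bX_1$ in the Gaussian setting, so it should present no real difficulty beyond the concentration inequalities already established in the appendix.
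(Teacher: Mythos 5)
Your proposal is correct and follows essentially the same route as the paper's proof. The only cosmetic difference is bookkeeping: the paper decomposes the design matrix $\Psi=\bL+\bE$ and shows $\bL\bar\bW=\bV$ (so that $\bY=\bN+\Psi\bar\bW-\bE\bar\bW$), whereas you define the residual $a(X_1)=f^*(X_1)-(\bW^*)^\top\psi^*(X_1)$ directly; one checks that your stacked residual matrix $\bA$ is exactly the paper's $-\bE\bar\bW$, and your $(\bW^*)^\top=\bSigma_{Y\bar Y}\bSigma_{X_2\bar Y}^\dagger$ is the transpose of the paper's $\bar\bW$. The use of the partial-covariance decomposition, the left-invertibility of $\bSigma_{X_2\bar Y}$ from the rank assumption, the bound $\E\|a\|^2\le\epsilon_{\tCI}^2/\beta^2$, and the basic-inequality-plus-projected-noise argument via Claims \ref{claim:bound_projected_gaussian} and \ref{claim:concentration_gaussian_covariance} are all exactly the paper's steps.
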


\begin{proof}[Proof of Theorem \ref{thm:linear_general_apx}]
	Let $\bV:=f^*(\bX_1)\equiv  \bX_1\bSigma^{-1}_{X_1X_1}\bSigma_{1Y}$ be our target direction. Denote the optimal  representation matrix by $\Psi:=\psi(\bX_1)\equiv \bX_1 \bA$ (where $\bA := \bSigma_{X_1X_1}^{-1} \bSigma_{X_1X_2} $). 

Next we will make use of the conditional covariance matrix:
$$ \bSigma_{X_1X_2|\bar Y} := \bSigma_{X_1X_2}-\bSigma_{X_1\bar Y}\bSigma_{\bar Y}^{-1}\bSigma_{\bar Y X_2}, $$
and plug it in into the definition of $\Psi$:
	\begin{align*}
	\Psi = & \bX_1\bSigma^{-1}_{X_1X_1} \bSigma_{X_1\bar Y}\bSigma_{\bar Y}^{-1}\bSigma_{\bar Y X_2} +  \bX_1\bSigma^{-1}_{X_1X_1} \bSigma_{X_1X_2|\bar Y}\\
	=: & \bL + \bE,
	\end{align*}
	where  $\bL := \bX_1\bSigma^{-1}_{X_1X_1} \bSigma_{X_1\bar Y}\bSigma_{\bar Y}^{-1}\bSigma_{\bar Y X_2}$ and $\bE := \bX_1\bSigma^{-1}_{X_1X_1} \bSigma_{X_1X_2|\bar Y}$. We analyze these two terms respectively. 
	
For $\bL$, we note that span$(\bV)\subseteq $span$(\bL)$: $\bL \bSigma^{\dagger}_{X_2\bar Y}\bSigma_{\bar Y} = \bX_1\bSigma^{-1}_{X_1X_1} \bSigma_{X_1\bar Y} $. By right multiplying the selector matrix $S_Y$ we have:  $\bL \bSigma^{\dagger}_{X_2\bar Y}\bSigma_{\bar Y Y} = \bX_1\bSigma^{-1}_{X_1X_1} \bSigma_{X_1 Y} $, i.e., $\bL \bar \bW = \bV$, where $\bar \bW:= \bSigma^{\dagger}_{X_2\bar Y}\bSigma_{\bar Y Y}$. From our assumption that $\sigma_{r}(\bSigma_{\bar Y Y}^{\dagger}\bSigma_{\bar Y X_2})=\beta$, we have $\|\bar \bW\|_2\leq \|\bSigma_{X_2\bar Y}^{\dagger}\bSigma_{\bar Y} \|_2 \leq 1/\beta$. (Or we could directly define $\beta$ as $\sigma_{k} (\bSigma_{Y\bar Y}^{\dagger}\bSigma_{\bar Y X_2} ) \equiv \|\bar \bW\|_2$. ) 

By concentration, we have $\bE = \bX_1\bSigma^{-1}_{X_1X_1} \bSigma_{X_1X_2|\bar Y}$ converges to $\bSigma^{-1/2}_{X_1X_1} \bSigma_{X_1X_2|\bar Y}$. Specifically, when $n\gg k+\log 1/\delta $, $ \|\bE\|_F\leq 1.1 \|\bSigma^{-1/2}_{X_1X_1} \bSigma_{X_1X_2|\bar Y}\|_F \leq 1.1\epsilon_{\tCI}$ (by using Lemma \ref{claim:concentration_gaussian_covariance} ).  
Together we have $\|\bE \bar \bW \|_F\lesssim \epsilon_{\tCI}/\beta$. 	
	
	
		Let $\hat \bW = \argmin_{\bW}\|\bY-\Psi \bW \|^2$. We note that $\bY=\bN+\bV= \bN+ \Psi\bar \bW - \bE \bar \bW$ where $\bV$ is our target direction and $\bN$ is random noise (each row of $\bN$ has covariance matrix $\bSigma_{YY|X_1}$).
	 
From basic inequality, we have:
		\begin{align*}
	\|\Psi\hat\bW - \bY \|_F^2 \leq & \|\Psi\bar \bW - \bY\|^2_F  = \|\bN -\bE\bar \bW \|_F^2. \\
\Longrightarrow \|\Psi\hat \bW - \bV -\bE\bar \bW \|^2 \leq & 2\langle \Psi\hat \bW - \bV -\bE\bar \bW, \bN-\bE \bar \bW\rangle \\
\Longrightarrow \|\Psi\hat \bW - \bV -\bE\bar \bW \| \leq & \|P_{[\Psi,E,V]}\bN\| + \|\bE\bar \bW\|\\
	\Longrightarrow \|\Psi\hat \bW - \bV\|\lesssim & \|\bE\|_F\|\bar \bW\|+ (\sqrt{d_2} + \sqrt{\log 1/\delta})\sqrt{\Trace(\bSigma_{YY|X_1})}. \tag{from Lemma \ref{claim:bound_projected_gaussian}}\\
	\leq & \sqrt{n_2} \frac{\epsilon_{\tCI}}{\beta} + (\sqrt{d_2} + \sqrt{\log 1/\delta})\sqrt{\Trace(\bSigma_{YY|X_1})}  \tag{from Assumption \ref{assumption:linear_approximate_CI}}.
	\end{align*} 

Next, by the same procedure that concentrates $\frac{1}{n_2}\bX_1^\top\bX_1 $ to $\bSigma_{X_1X_1}$ with Claim \ref{claim:concentration_gaussian_covariance},  we could easily get 
\begin{equation*}
\ER[\hat\bW]:=\E[\|\hat\bW^\top \psi(X_1) - f^*(X_1)\|^2] \lesssim \frac{\epsilon_{\tCI}^2}{\beta^2}+\Trace(\bSigma_{YY|X_1})\frac{d_2   + \log 1/\delta}{n_2}. 
\end{equation*}

\end{proof}

\subsection{Measuring conditional dependence with cross-covariance operator}
\label{section:cross_covariance_operator} 
$L^2(P_X)$ denotes the Hilbert space of square integrable function with respect to the measure $P_X$, the marginal distribution of $X$. We are interested in some function class $\cH_x\subset L^2(P_X)$ that is induced from some feature maps:

\begin{definition}[General and Universal feature Map]
	We denote feature map $\phi: \cX\rightarrow \cF$ that maps from a compact input space $\cX$ to the feature space $\cF$. $\cF$ is a Hilbert space associated with inner product: $\langle \phi(\bx),\phi(\bx')\rangle_{\cF}$. The associated function class is: $\cH_x = \{h:\cX\rightarrow \R|\exists w\in \cF, h(\bx)= \langle w,\phi(\bx)\rangle_{\cF} ,\forall \bx\in \cX  \}.$ We call $\phi$ universal if the induced $\cH_x$ is dense in $L^2(P_X)$. 
\end{definition}
Linear model is a special case when feature map $\phi=Id$ is identity mapping and the inner product is over Euclidean space. A feature map with higher order polynomials correspondingly incorporate high order moments \citep{fukumizu2004dimensionality,gretton2005measuring}. For discrete variable $Y$ we overload $\phi$ as the one-hot embedding. 
\begin{remark}
	For continuous data, any universal kernel like Gaussian kernel or RBF kernel induce the universal feature map that we require \citep{micchelli2006universal}. Two-layer neural network with infinite width also satisfy it, i.e., $\forall \bx \in \cX\subset \R^d, \phi_{NN}(\bx):\cS^{d-1}\times \R\rightarrow \R, \phi_{NN}(\bx)[\bw,b] = \sigma(\bw^\top \bx+b) $ \citep{barron1993universal}. 
\end{remark} 
When there's no ambiguity, we overload $\phi_1$ as the random variable $\phi_1(X_1)$ over domain $\cF_1$, and $\cH_1$ as the function class over $X_1$. 
Next we characterize CI using the cross-covariance operator.
\begin{definition}[Cross-covariance operator] For random variables $X\in \cX, Y\in \cY$ with joint distribution $P:\cX\times \cY\rightarrow \R$, and associated feature maps $\phi_x$ and $\phi_y$, we denote by $\cC_{\phi_x \phi_y}=\E[\phi_x(X)\otimes \phi_y(Y)] = \int_{\cX\times \cY} \phi_x(x) \otimes \phi_y(y)dP(x,y),$ the (un-centered) cross-covariance operator. Similarly we denote by $\cC_{X\phi_y} = \E[X\otimes \phi_y(Y)]:\cF_y\rightarrow \cX$. 
\end{definition}
To understand what $\cC_{\phi_x\phi_y}$ is, we note it is of the same shape as $\phi_x(x)\otimes \phi_y(y)$ for  each individual $x\in \cX, y\in \cY $. It can be viewed as an operator: $\cC_{\phi_x\phi_y}: \cF_y \rightarrow \cF_x$, $ \cC_{\phi_x\phi_y} f = \int_{\cX\times \cY} \langle \phi_y(y), f\rangle \phi_x(x)  dP(x,y), \forall f\in \cF_y $. For any $f\in \cH_x$ and $g\in \cH_y$, it satisfies:
$
\langle f, \cC_{\phi_x\phi_y} g \rangle_{\cH_x} = \E_{XY}[f(X)g(Y)]
$\citep{baker1973joint,fukumizu2004dimensionality}. CI ensures $\cC_{\phi_1X_2|\phi_y}=0$ for arbitrary $\phi_1,\phi_2$: 
\begin{lemma}
	\label{lemma:CI_vs_cross_covariance}
	With one-hot encoding map $\phi_y$ and arbitrary $\phi_1$, $X_1\bot X_2|Y$ ensures:
	\begin{equation}
	\label{eqn:partial_cov_nonlinear_formula}
	\cC_{\phi_1X_2|\phi_y} := \cC_{\phi_1X_2} - \cC_{\phi_1\phi_y}\cC_{\phi_y\phi_y}^{-1}\cC_{\phi_y X_2} = 0.
	\end{equation}
\end{lemma}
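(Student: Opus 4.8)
\textbf{Proof proposal for Lemma~\ref{lemma:CI_vs_cross_covariance}.}
The plan is to show that the partial covariance operator $\cC_{\phi_1 X_2|\phi_y}$ is exactly the cross-covariance of the two variables after centering them \emph{conditionally on $Y$}, and then to kill that cross-covariance using $X_1\perp X_2\mid Y$. Write $\bar\phi_1 := \phi_1(X_1) - \E[\phi_1(X_1)\mid Y]$ and $\bar X_2 := X_2 - \E[X_2\mid Y]$, so that $\E[\bar\phi_1\mid Y]=0$ and $\E[\bar X_2\mid Y]=0$ almost surely. The first ingredient I would establish is the standard fact that, because $\phi_y$ is the one-hot embedding of the discrete variable $Y$, linear functions of $\phi_y(Y)$ already exhaust all functions of $Y$; consequently the best linear predictor coincides with the conditional expectation, i.e. $\E[\phi_1(X_1)\mid Y] = \cC_{\phi_1\phi_y}\cC_{\phi_y\phi_y}^{-1}\phi_y(Y)$ and $\E[X_2\mid Y] = \cC_{X_2\phi_y}\cC_{\phi_y\phi_y}^{-1}\phi_y(Y)$. (Here $\cC_{\phi_y\phi_y}=\E[\phi_y(Y)\otimes\phi_y(Y)]$ is the diagonal matrix of class probabilities, invertible under the mild assumption that every class has positive mass.)

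Next I would expand the cross-covariance of the centered variables and match terms. Expanding the outer product gives $\E[\bar\phi_1\otimes\bar X_2] = \cC_{\phi_1 X_2} - \E[\phi_1(X_1)\otimes\E[X_2\mid Y]] - \E[\E[\phi_1(X_1)\mid Y]\otimes X_2] + \E[\E[\phi_1(X_1)\mid Y]\otimes\E[X_2\mid Y]]$. By the tower property, each of the last three expectations equals the single term $T := \E[\E[\phi_1(X_1)\mid Y]\otimes\E[X_2\mid Y]]$ (in the second and third, one pulls the $Y$-measurable factor out of the inner conditional expectation), so they collapse to $-T$. Substituting the one-hot representations from the previous paragraph and using self-adjointness of $\cC_{\phi_y\phi_y}$ together with $\cC_{X_2\phi_y}^{*}=\cC_{\phi_y X_2}$ yields $T = \cC_{\phi_1\phi_y}\cC_{\phi_y\phi_y}^{-1}\,\cC_{\phi_y\phi_y}\,\cC_{\phi_y\phi_y}^{-1}\cC_{\phi_y X_2} = \cC_{\phi_1\phi_y}\cC_{\phi_y\phi_y}^{-1}\cC_{\phi_y X_2}$. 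Hence $\E[\bar\phi_1\otimes\bar X_2] = \cC_{\phi_1 X_2} - \cC_{\phi_1\phi_y}\cC_{\phi_y\phi_y}^{-1}\cC_{\phi_y X_2} = \cC_{\phi_1 X_2|\phi_y}$.

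Finally, conditional independence closes the argument: since $X_1\perp X_2\mid Y$ and $\bar\phi_1$ is $\sigma(X_1,Y)$-measurable while $\bar X_2$ is $\sigma(X_2,Y)$-measurable, we get $\bar\phi_1\perp\bar X_2\mid Y$, so $\cC_{\phi_1 X_2|\phi_y} = \E[\bar\phi_1\otimes\bar X_2] = \E\!\big[\E[\bar\phi_1\mid Y]\otimes\E[\bar X_2\mid Y]\big] = \E[0\otimes 0] = 0$. The main obstacle I anticipate is not conceptual but bookkeeping: making the tensor/operator manipulations rigorous in the (possibly infinite-dimensional) feature Hilbert space $\cF_1$ — ensuring $\phi_1(X_1)$ and $X_2$ are square-integrable so all cross-covariance operators are well-defined (Hilbert--Schmidt) and the products and adjoints above are legitimate — and justifying that ``linear regression equals conditional expectation'' transfers to this operator setting with the one-hot $\phi_y$. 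If one prefers to sidestep this, an alternative is to test the claimed operator identity against arbitrary $f\in\cH_1$ and $g\in L^2(P_{X_2})$ via $\langle f,\cC_{\phi_1 X_2|\phi_y}g\rangle = \E\big[(f(X_1)-\E[f(X_1)\mid Y])(g(X_2)-\E[g(X_2)\mid Y])\big]$ and then apply the characterization of conditional independence from \cite{huang2010testing} quoted earlier.
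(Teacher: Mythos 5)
Your proposal is correct and takes essentially the same route as the paper: both arguments hinge on the observation that the one-hot map makes $\E^L[\cdot\mid\phi_y(Y)]$ coincide with $\E[\cdot\mid Y]$, identify $\cC_{\phi_1 X_2|\phi_y}$ with the cross-covariance of the conditionally centered variables $\bar\phi_1,\bar X_2$, and then kill that cross-covariance using $X_1\perp X_2\mid Y$. The paper is terser — it asserts the identification and invokes the \cite{huang2010testing} characterization of CI (which you explicitly mention as your alternative) rather than spelling out the tower-property expansion — but the content is the same.
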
 
A more complete discussion of cross-covariance operator and CI can be found in \citep{fukumizu2004dimensionality}. Also, recall that an operator $\cC:\cF_y\rightarrow \cF_x$ is  Hilbert-Schmidt (HS) \citep{reed2012methods} if for complete orthonormal systems (CONSs) $\{\zeta_i\}$ of $\cF_x$ and $\{\eta_i\}$ of $\cF_y$,  $\|\cC\|^2_{\HS}:=\sum_{i,j}\langle \zeta_j, \cC\eta_i\rangle^2_{\cF_x}<\infty $. 
The Hilbert-Schmidt norm generalizes the Frobenius norm from matrices to operators, and we will later use $\|\cC_{\phi_1X_2|\phi_y}\|$ to quantify approximate CI.

We note that covariance operators \citep{fukumizu2009kernel,fukumizu2004dimensionality,baker1973joint}  are commonly used to capture conditional dependence of random variables. In this work, we utilize the covariance operator to quantify the performance of the algorithm even when the algorithm is \textit{not a kernel method}.

\subsection{Omitted Proof in General Setting} 

\begin{claim}
	\label{claim:nonlinear_conditional_expectation}
	For feature maps $\phi_1$ with universal property, we have:
	\begin{align*}
	\psi^*(X_1):=&\E[X_2|X_1]= \E^L[X_2|\phi_1]\\ 
	= & \cC_{X_2 \phi_1}\cC_{\phi_1\phi_1}^{-1}\phi_1(X_1).  \\
	\text{Our target }f^*(X_1):=&\E[Y|X_1] = \E^L[Y|\phi_1]\\
	= & \cC_{Y\phi_1}\cC_{\phi_1\phi_1}^{-1}\phi_1(X_1).
	\end{align*}
	For general feature maps, we instead have:
	\begin{align*}
	\psi^*(X_1):=&\argmin_{f\in \cH_1^{ d_2}} \E_{X_1X_2}\|X_2-f(X_1)\|_2^2\\ 
	= & \cC_{X_2 \phi_1}\cC_{\phi_1\phi_1}^{-1}\phi_1(X_1).  \\
	\text{Our target }f^*(X_1):=&\argmin_{f\in \cH_1^{ k}} \E_{X_1Y}\|Y-f(X_1)\|_2^2\\
	= & \cC_{Y\phi_1}\cC_{\phi_1\phi_1}^{-1}\phi_1(X_1).
	\end{align*}
\end{claim}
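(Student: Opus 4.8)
The plan is to derive the claim in two layers. First I would establish the closed-form expression for the best-in-class predictor for \emph{arbitrary} feature maps $\phi_1$ by writing down and solving the normal equations of the least-squares problem over $\cH_1$; this already gives the ``general feature map'' half of the statement (and, as a special case, re-derives the closed-form solution quoted earlier in Section~\ref{sec:CI}). Then I would upgrade to the ``universal'' half by invoking density of $\cH_1$ in $L^2(P_{X_1})$ to identify the best-in-class predictor with the conditional expectation $\E[X_2\mid X_1]$ (resp.\ $\E[Y\mid X_1]$).

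\textbf{Step 1 (normal equations).} Every $f\in\cH_1^{d_2}$ has the form $f(\bx_1)=\cC\,\phi_1(\bx_1)$ for a bounded linear map $\cC:\cF_1\to\R^{d_2}$. The map $\cC\mapsto \E_{X_1,X_2}\|X_2-\cC\phi_1(X_1)\|^2$ is a convex quadratic in $\cC$, and its derivative vanishes precisely when $\E[(X_2-\cC\phi_1(X_1))\otimes\phi_1(X_1)]=0$, i.e.\ $\cC\,\cC_{\phi_1\phi_1}=\cC_{X_2\phi_1}$. Under the non-degeneracy assumption that $\cC_{\phi_1\phi_1}$ is invertible (the infinite-dimensional analogue of $\bSigma_{X_1X_1}$ being full rank; otherwise $\cC_{\phi_1\phi_1}^{-1}$ is read as a pseudo-inverse on the relevant range), this forces $\cC=\cC_{X_2\phi_1}\cC_{\phi_1\phi_1}^{-1}$, hence $\psi^*(X_1)=\cC_{X_2\phi_1}\cC_{\phi_1\phi_1}^{-1}\phi_1(X_1)$. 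Replacing $X_2$ by $Y$ verbatim gives $f^*(X_1)=\cC_{Y\phi_1}\cC_{\phi_1\phi_1}^{-1}\phi_1(X_1)$ for the best predictor of $Y$ within $\cH_1^{k}$, i.e.\ for $\E^L[Y\mid\phi_1]$.

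\textbf{Step 2 (universality).} When $\phi_1$ is universal, $\cH_1$ is dense in $L^2(P_{X_1})$. Conditional expectation is exactly the orthogonal projection of $X_2$ (coordinatewise) onto the closed subspace $L^2(P_{X_1})$ of square-integrable functions of $X_1$; since the closure of $\cH_1$ is all of $L^2(P_{X_1})$, the $L^2$-best approximation of $X_2$ over $\cH_1$ agrees with this projection, so $\E^L[X_2\mid\phi_1]=\E[X_2\mid X_1]$, and likewise $\E^L[Y\mid\phi_1]=\E[Y\mid X_1]$. Combining with Step~1 yields $\E[X_2\mid X_1]=\cC_{X_2\phi_1}\cC_{\phi_1\phi_1}^{-1}\phi_1(X_1)$ and $\E[Y\mid X_1]=\cC_{Y\phi_1}\cC_{\phi_1\phi_1}^{-1}\phi_1(X_1)$, as claimed.

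\textbf{Main obstacle.} The only real subtlety is the infinite-dimensional bookkeeping: in a genuine RKHS, $\cC_{\phi_1\phi_1}$ is compact and not boundedly invertible, and the minimizer of the population risk need not lie in $\cH_1$ itself (only in its $L^2$-closure), so one must either adopt the non-degeneracy/regularization convention explicitly or interpret $\E^L[\cdot\mid\phi_1]$ as projection onto $\overline{\cH_1}$. Once that convention is fixed, both steps are routine; in the finite-dimensional case (where $\cH_1$ is an honest closed subspace and $\cC_{\phi_1\phi_1}=\bSigma_{\phi_1\phi_1}$ is an invertible matrix) the argument is immediate.
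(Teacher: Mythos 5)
Your proof is correct and follows essentially the same route as the paper: one step identifies $\E^L[\cdot\mid\phi_1]$ with the true conditional expectation via universality/density (the paper's Lemma~\ref{lemma:universal_feature_property}), and the other step solves the population least-squares normal equations to get the closed form $\cC_{\cdot\,\phi_1}\cC_{\phi_1\phi_1}^{-1}\phi_1$; you just do them in the opposite order and are slightly more explicit about the invertibility/pseudo-inverse convention for $\cC_{\phi_1\phi_1}$ in the infinite-dimensional case, which the paper glosses over.
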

To prove Claim \ref{claim:nonlinear_conditional_expectation}, we show the following lemma:
\begin{lemma}
	\label{lemma:universal_feature_property} 
	Let $\phi:\cX\rightarrow \cF_x$ be a universal feature map, then for random variable $Y\in \cY$ we have:
	\begin{equation*}
	\E[Y|X] = \E^L[Y|\phi(X)].
	\end{equation*}
\end{lemma}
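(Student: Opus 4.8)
The plan is to reduce the identity to a single orthogonality computation in $L^{2}$. First I would reduce to scalar $Y$: both $\E[Y|X]$ and $\E^{L}[Y|\phi(X)]$ act coordinatewise on $Y$ (the least-squares objective $\E\|Y-\bW\phi(X)-\bb\|^{2}$ decouples across the coordinates of $Y$), so it suffices to prove the statement for a real-valued $Y\in L^{2}(P)$. Throughout, recall that for any function $g$ of $X$ we have $\|g(X)\|_{L^{2}(P)}=\|g\|_{L^{2}(P_{X})}$, so density of $\cH_{x}$ in $L^{2}(P_{X})$ is the same as density of $\{h(X):h\in\cH_{x}\}$ inside the subspace of $L^{2}(P)$ consisting of square-integrable functions of $X$.

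Next I would invoke the first-order optimality (normal equations) for the best linear predictor. Write $\hat Y:=\E^{L}[Y|\phi(X)]=\langle w^{*},\phi(X)\rangle+b^{*}$, which exists by hypothesis (this is where the paper's implicit non-degeneracy, the assumption making $\cC_{\phi_{1}\phi_{1}}^{-1}$ meaningful, is used). Minimality of $(w^{*},b^{*})$ forces the residual $R:=Y-\hat Y$ to satisfy $\E[R]=0$ and $\E[R\,\langle w,\phi(X)\rangle]=0$ for every $w\in\cF_{x}$; equivalently, $R$ is orthogonal in $L^{2}(P)$ to the constants and to every element of $\cH_{x}=\{\bx\mapsto\langle w,\phi(\bx)\rangle\}$.

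The crux is to upgrade ``$R\perp\cH_{x}$'' to ``$R\perp g(X)$ for all $g\in L^{2}(P_{X})$'', which is exactly $\E[R\mid X]=0$. Fix $g\in L^{2}(P_{X})$ and, using universality, choose $h_{n}\in\cH_{x}$ with $\|h_{n}-g\|_{L^{2}(P_{X})}\to 0$, hence $\|h_{n}(X)-g(X)\|_{L^{2}(P)}\to 0$. Since $\E[R\,h_{n}(X)]=0$, Cauchy--Schwarz gives
\[
\bigl|\E[R\,g(X)]\bigr|\;=\;\bigl|\E\bigl[R\,(g(X)-h_{n}(X))\bigr]\bigr|\;\le\;\|R\|_{L^{2}(P)}\,\|g(X)-h_{n}(X)\|_{L^{2}(P)}\;\longrightarrow\;0,
\]
and since the left-hand side is independent of $n$ it must vanish. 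Therefore $\E[R\mid X]=0$, and since $\hat Y$ is a function of $X$ this reads $\E[Y|X]=\E[\hat Y|X]=\hat Y=\E^{L}[Y|\phi(X)]$. Specializing $Y$ to $X_{2}$ and to $Y$, and substituting the closed form of the linear-regression minimizer, then yields Claim~\ref{claim:nonlinear_conditional_expectation}.

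The main obstacle is not conceptual but the infinite-dimensional bookkeeping: ensuring the best linear predictor is attained so the normal equations are available (guaranteed by the standing non-degeneracy of the covariance operator), reading $\E[(Y-\hat Y)\phi(X)]$ as a bona fide element of $\cF_{x}$ (a Bochner integral, needing only mild integrability of $\phi(X)$), and the density-plus-continuity passage above. All of these are routine under the paper's standing square-integrability assumptions, so no further idea is needed.
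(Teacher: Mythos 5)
Your argument is correct, and it takes a genuinely different route from the paper's. The paper's proof asserts that universality lets one write $\E[Y|X=\cdot]=f(\cdot)$ \emph{exactly} as a linear functional of $\phi$ (a.e.), i.e.\ $f\in\cH_x$, and then (implicitly) runs a sandwich argument: the best linear predictor beats every linear candidate including $f$, while $f=\E[Y|X]$ beats every measurable predictor including the best linear one, so they coincide. Your proof instead starts from the normal equations for $(\bW^*,\bb^*)$, obtains orthogonality of the residual $R$ to the affine span of $\cH_x$, and upgrades this to $R\perp L^2(P_X)$ by a density-plus-Cauchy--Schwarz limiting argument, concluding $\E[R\mid X]=0$. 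The payoff of your route is that it only uses density of $\cH_x$ in $L^2(P_X)$ --- the actual definition of ``universal'' given in the paper --- whereas the paper's one-line proof quietly strengthens density to exact membership $f\in\cH_x$, which is not implied (density of an RKHS in $L^2$ does not put an arbitrary $L^2$ function inside the RKHS). Both arguments do share the standing assumption that the best linear predictor is \emph{attained}: your normal equations require a minimizer $(\bW^*,\bb^*)$ to exist, and the paper's sandwich requires $f\in\cH_x$; in infinite dimensions neither follows from density alone, and you are right to flag the invertibility of $\cC_{\phi_1\phi_1}$ as the (finite-dimensional) escape hatch the paper is leaning on. Your reduction to scalar $Y$ and the explicit passage to the limit make the argument cleaner and self-contained.
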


\begin{proof}[Proof of Lemma \ref{lemma:universal_feature_property}]
	Denote by $\E[Y|X=x]=:f(x)$. Since $\phi$ is dense in $\cX$, there exists a linear operator $a:\cX\rightarrow \R$ such that $\int_{x\in \cX} a(x)\phi(x)[\cdot]dx=f(\cdot)$ a.e. Therefore the result comes directly from the universal property of $\phi$. 
\end{proof}

\begin{proof}[Proof of Claim \ref{claim:nonlinear_conditional_expectation}]
We want to show that for random variables $Y,X$, where $X$ is associated with a universal feature map $\phi_x$, we have $\E[Y|X] = \cC_{Y\phi_x(X)}\cC_{\phi_x(X)\phi_x(X)}^{-1}\phi_x(X) $.

First, from Lemma \ref{lemma:universal_feature_property}, we have that $ \E[Y|X] = \E^L[Y|\phi_x(X)]$. Next, write $A^*: \cF_x\rightarrow \cY$ as the linear operator that satisfies
\begin{align*}
 & \E[Y|X] = A^*\phi_x(X)\\
\text{s.t. } & A^* =  \argmin_A\E[\|Y-A\phi_x(X)\|^2 ].
\end{align*}
Therefore from the stationary condition we have $A^* \E_X[\phi_x(X)\otimes \phi_x(X)] = \E_{XY}[Y\otimes \phi_x(X)]$. Or namely we get $A^* = \cC_{Y\phi_x}\cC_{\phi_x\phi_x}^{-1}$ simply from the definition of the cross-covariance operator $\cC$. 	
\end{proof}

\begin{claim}
	\label{claim:relation_eps_CI}
$\|\cC_{\phi_1\phi_1}^{-1/2} \cC_{\phi_1X_2|\phi_{\bar y}}\|_{\HS}^2 = \E_{X_1}[\|\E[X_2|X_1]-\E_{\bar Y}[\E[X_2|\bar Y]|X_1]\|^2] = \epsilon_{\tCI}^2 $. 
\end{claim}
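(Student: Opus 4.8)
The plan is to reduce everything to closed-form expressions in the universal feature map $\phi_1$ and then evaluate a Hilbert--Schmidt norm via a trace identity. First I would invoke Lemma~\ref{lemma:universal_feature_property} and (the proof of) Claim~\ref{claim:nonlinear_conditional_expectation}: since $\phi_1$ is universal, for any square-integrable random vector $W$ one has $\E[W\mid X_1]=\E^L[W\mid\phi_1(X_1)]=\cC_{W\phi_1}\cC_{\phi_1\phi_1}^{-1}\phi_1(X_1)$. Applying this with $W=X_2$ gives $\E[X_2\mid X_1]=\cC_{X_2\phi_1}\cC_{\phi_1\phi_1}^{-1}\phi_1(X_1)$; since $\bar Y$ is discrete and $\phi_{\bar y}$ is its one-hot embedding, $\E[X_2\mid\bar Y]=\cC_{X_2\phi_{\bar y}}\cC_{\phi_{\bar y}\phi_{\bar y}}^{-1}\phi_{\bar y}(\bar Y)$ holds exactly, and applying the lemma once more with $W=\phi_{\bar y}(\bar Y)$ gives $\E[\phi_{\bar y}(\bar Y)\mid X_1]=\cC_{\phi_{\bar y}\phi_1}\cC_{\phi_1\phi_1}^{-1}\phi_1(X_1)$. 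Combining these,
\[
\E[X_2\mid X_1]-\E_{\bar Y}\!\big[\E[X_2\mid\bar Y]\mid X_1\big]=\big(\cC_{X_2\phi_1}-\cC_{X_2\phi_{\bar y}}\cC_{\phi_{\bar y}\phi_{\bar y}}^{-1}\cC_{\phi_{\bar y}\phi_1}\big)\cC_{\phi_1\phi_1}^{-1}\phi_1(X_1)=\cC_{X_2\phi_1|\phi_{\bar y}}\,\cC_{\phi_1\phi_1}^{-1}\phi_1(X_1),
\]
where $\cC_{X_2\phi_1|\phi_{\bar y}}$ is the partial cross-covariance operator of \eqref{eqn:partial_cov_nonlinear_formula}, which is the adjoint of $\cC_{\phi_1X_2|\phi_{\bar y}}$ (using self-adjointness of $\cC_{\phi_{\bar y}\phi_{\bar y}}$).

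Next I would take the expected squared norm. Writing $M:=\cC_{X_2\phi_1|\phi_{\bar y}}\cC_{\phi_1\phi_1}^{-1}$, we get, using $\E[\phi_1(X_1)\phi_1(X_1)^\ast]=\cC_{\phi_1\phi_1}$ and linearity of trace and expectation,
\[
\E_{X_1}\big\|M\phi_1(X_1)\big\|_2^2=\E_{X_1}\,\mathrm{tr}\!\big(M^\ast M\,\phi_1(X_1)\phi_1(X_1)^\ast\big)=\mathrm{tr}\!\big(M^\ast M\,\cC_{\phi_1\phi_1}\big).
\]
Substituting $M^\ast M=\cC_{\phi_1\phi_1}^{-1}\cC_{\phi_1X_2|\phi_{\bar y}}\cC_{X_2\phi_1|\phi_{\bar y}}\cC_{\phi_1\phi_1}^{-1}$ and using the cyclic property of the trace,
\[
\mathrm{tr}\!\big(M^\ast M\,\cC_{\phi_1\phi_1}\big)=\mathrm{tr}\!\big(\cC_{\phi_1\phi_1}^{-1/2}\cC_{\phi_1X_2|\phi_{\bar y}}\cC_{X_2\phi_1|\phi_{\bar y}}\cC_{\phi_1\phi_1}^{-1/2}\big)=\big\|\cC_{\phi_1\phi_1}^{-1/2}\cC_{\phi_1X_2|\phi_{\bar y}}\big\|_{\HS}^2,
\]
where the last equality is the identity $\|N\|_{\HS}^2=\mathrm{tr}(NN^\ast)$ with $N=\cC_{\phi_1\phi_1}^{-1/2}\cC_{\phi_1X_2|\phi_{\bar y}}$ and $\cC_{X_2\phi_1|\phi_{\bar y}}=(\cC_{\phi_1X_2|\phi_{\bar y}})^\ast$. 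Since the middle quantity is exactly $\epsilon_{\tCI}^2$ as in part (2) of Definition~\ref{def-approx-CI}, all three expressions coincide, which is the claim.

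The algebra above is routine; the only thing that needs care is operator-theoretic well-definedness in the (possibly infinite-dimensional) feature space $\cF_1$: one must check that $\cC_{\phi_1\phi_1}$ is invertible on the relevant subspace so that $\cC_{\phi_1\phi_1}^{-1/2}$ and the closed-form conditional expectations are meaningful, that $\cC_{\phi_1\phi_1}^{-1/2}\cC_{\phi_1X_2|\phi_{\bar y}}$ is Hilbert--Schmidt (so that the norm is finite and $\mathrm{tr}(NN^\ast)$ is legitimate), and that the interchange of $\E_{X_1}$ and $\mathrm{tr}$ is justified by absolute integrability. These all hold under the standing non-degeneracy assumptions, and $\phi_{\bar y}$ is finite-dimensional because $\bar Y$ takes finitely many ($km$) values, so none of the $\phi_{\bar y}$-related operators cause any difficulty.
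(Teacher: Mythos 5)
Your proof is correct, and it takes a genuinely different route from the one in the paper. The paper's proof is a one-line computation that realizes the covariance operators as integral kernels in the underlying probability densities and directly reads off the Hilbert--Schmidt norm as the $L^2(P_{X_1})$ integral $\int_{X_1}\|\int_{X_2}(p_{X_1X_2}/p_{X_1}-p_{X_1\perp X_2|\bar Y}/p_{X_1})\,x_2\,dP_{x_2}\|^2\,dP_{x_1}$; this leans implicitly on the kernel-mean-embedding isometry between $\cF_1$ (weighted by $\cC_{\phi_1\phi_1}^{1/2}$) and $L^2(P_{X_1})$ and does not spell out how $\cC_{\phi_1\phi_1}^{-1/2}$ enters. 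Your approach instead pushes everything through the closed-form conditional expectations from Claim~\ref{claim:nonlinear_conditional_expectation} and Lemma~\ref{lemma:universal_feature_property}: you identify the residual $\E[X_2|X_1]-\E_{\bar Y}[\E[X_2|\bar Y]|X_1]$ exactly as $\cC_{X_2\phi_1|\phi_{\bar y}}\cC_{\phi_1\phi_1}^{-1}\phi_1(X_1)$, then compute $\E_{X_1}\|M\phi_1(X_1)\|^2=\tr(M^\ast M\,\cC_{\phi_1\phi_1})$ and recover $\|\cC_{\phi_1\phi_1}^{-1/2}\cC_{\phi_1X_2|\phi_{\bar y}}\|_{\HS}^2$ by the cyclic property of the trace. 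This is more explicit and easier to audit; in particular, the isometry argument the paper uses implicitly appears in your proof as the transparent identity $\E[\langle h,\phi_1(X_1)\rangle^2]=\|\cC_{\phi_1\phi_1}^{1/2}h\|_{\cF_1}^2$. You also correctly flag, and correctly discharge, the operator-theoretic well-definedness issues (invertibility on the relevant subspace, Hilbert--Schmidt-ness, integrability to justify the expectation/trace interchange, and finite-dimensionality of $\phi_{\bar y}$), which the paper's proof elides entirely. Both routes prove the same statement; yours buys rigor and readability, the paper's buys brevity.
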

\begin{proof}
	\begin{align*}
	& \|\cC_{\phi_1\phi_1}^{-1/2} \cC_{\phi_1X_2|\phi_{\bar y}}\|_{\HS}^2\\
	= & \int_{X_1} \left\| \int_{X_2} \left( \frac{p_{X_1X_2}(\bx_1,\bx_2) }{p_{X_1}(\bx_1) }- \frac{p_{X_1\bot X_2|Y}(\bx_1,\bx_2) }{p_{X_1}(\bx_1) } \right)X_2 dp_{\bx_2}   \right\|^2  dp_{\bx_1}\\
	= & \E_{X_1}[\|\E[X_2|X_1]-\E_{\bar Y}[\E[X_2|\bar Y]|X_1]\|^2]. 
	\end{align*}
\end{proof}

\subsection{Omitted Proof for Main Results}
We first prove a simpler version without approximation error.
\begin{theorem}
	\label{thm:approx_CI_no_approx_error}
	For a fixed $\delta\in (0,1)$, under Assumption \ref{assumption:feature_map_approx_CI}, \ref{assumption:subgaussian_psi}, if there is no approximation error, i.e., there exists a linear operator $A$ such that $f^*(X_1)\equiv A\phi_1(X_1)$, if $n_1,n_2\gg \rho^4(d_2+\log 1/\delta)$, and we learn the pretext tasks such that:
	\begin{align*}
	\E\|\tilde \psi(X_1)-\psi^*(X_1)\|_F^2\leq  \epsilon^2_{\pre}.
	\end{align*}
	Then we are able to achieve generalization for downstream task with probability $1-\delta$: 
	\begin{align}
 \E[\|f^*_{\cH_1}(X_1) - \hat\bW^\top \tilde\psi(X_1) \|^2] \leq \tilde\cO\{   \sigma^2\frac{d_2}{n_2} + \frac{\epsilon_{\tCI}^2}{\beta^2} +\frac{\epsilon_{\pre}^2}{\beta^2 }\}.
	\end{align} 
\end{theorem}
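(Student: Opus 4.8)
The plan is to follow the template of the jointly-Gaussian warm-up (Theorem~\ref{thm:linear_general_apx}), replacing Gaussian covariance concentration by its sub-Gaussian counterparts (Claim~\ref{claim:concentration_subgaussian_covariance}, Lemma~\ref{lemma:bound_projected_subgaussian}) and carrying along one extra error term coming from using the learned $\tilde\psi$ in place of $\psi^*$. First I would record the closed form $\psi^*(\bx_1)=\bSigma_{X_2\phi_1}\bSigma_{\phi_1\phi_1}^{-1}\phi_1(\bx_1)$ and, using the partial-covariance identity $\bSigma_{\phi_1 X_2}=\bSigma_{\phi_1\phi_{\bar y}}\bSigma_{\phi_{\bar y}\phi_{\bar y}}^{-1}\bSigma_{\phi_{\bar y}X_2}+\bSigma_{\phi_1 X_2\mid\phi_{\bar y}}$, split $\psi^*(X_1)=\psi_L(X_1)+e(X_1)$, where $\psi_L(X_1)^\top=\phi_1(X_1)^\top\bSigma_{\phi_1\phi_1}^{-1}\bSigma_{\phi_1\phi_{\bar y}}\bSigma_{\phi_{\bar y}\phi_{\bar y}}^{-1}\bSigma_{\phi_{\bar y}X_2}$ is the ``signal'' part and $e(X_1)^\top=\phi_1(X_1)^\top\bSigma_{\phi_1\phi_1}^{-1}\bSigma_{\phi_1 X_2\mid\phi_{\bar y}}$ is the ``ACI error'' part; a trace computation shows $\E_{X_1}\|e(X_1)\|^2=\|\bSigma_{\phi_1\phi_1}^{-1/2}\bSigma_{\phi_1 X_2\mid\phi_{\bar y}}\|_F^2=\epsilon_{\tCI}^2$.

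Next I would show the signal piece linearly recovers $f^*$. Because $Y$ is a coordinate of $\bar Y=[Y,Z]$, it is a deterministic—hence linear—function of the one-hot code $\phi_{\bar y}$, so $\E[Y\mid\phi_1(X_1),\phi_{\bar y}]=\E^L[Y\mid\phi_{\bar y}]$ and therefore $\bSigma_{\phi_1 Y}=\bSigma_{\phi_1\phi_{\bar y}}\bSigma_{\phi_{\bar y}\phi_{\bar y}}^{-1}\bSigma_{\phi_{\bar y}Y}$. Setting $\bar\bW:=\bSigma_{\phi_{\bar y}X_2}^{\dagger}\bSigma_{\phi_{\bar y}Y}$ and using that $\bSigma_{X_2\phi_{\bar y}}$ is full column rank (Assumption~\ref{assumption:feature_map_approx_CI}, which forces $d_2\ge km$) so that $\bSigma_{\phi_{\bar y}X_2}\bar\bW=\bSigma_{\phi_{\bar y}Y}$, I get $\bar\bW^\top\psi_L(X_1)=\phi_1(X_1)^\top\bSigma_{\phi_1\phi_1}^{-1}\bSigma_{\phi_1 Y}=f^*_{\cH_1}(X_1)$, which equals $f^*(X_1)$ by the no-approximation-error hypothesis; moreover $\|\bar\bW\|_2=\|\bSigma_{Y\phi_{\bar y}}\bSigma_{X_2\phi_{\bar y}}^{\dagger}\|_2=1/\beta$. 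Hence $f^*(X_1)=\bar\bW^\top\psi^*(X_1)-\bar\bW^\top e(X_1)$ with $\E_{X_1}\|\bar\bW^\top e(X_1)\|^2\le\epsilon_{\tCI}^2/\beta^2$.

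Then I would run the empirical argument. Writing $\Psi:=\tilde\psi(\bX_1^{\down})$, $\Psi^*:=\psi^*(\bX_1^{\down})$, $\Delta:=\Psi-\Psi^*$, $\bE$ the matrix with rows $e(x_1^{(i)})^\top$, and $\bN:=\bY-f^*(\bX_1^{\down})$ (rows independent, mean-zero and $\sigma^2$-sub-Gaussian given $\bX_1^{\down}$), the decomposition above gives $\bY=\Psi\bar\bW-\Delta\bar\bW-\bE\bar\bW+\bN$. From the basic inequality $\|\Psi\hat\bW-\bY\|_F^2\le\|\Psi\bar\bW-\bY\|_F^2$, Cauchy--Schwarz, and projecting $\bN$ onto the (at most $d_2$-dimensional) column space of $\Psi$ via Lemma~\ref{lemma:bound_projected_subgaussian} (so $\|P_\Psi\bN\|_F\lesssim\sigma\sqrt{d_2+\log(d_2/\delta)}$), I obtain $\|\Psi(\hat\bW-\bar\bW)\|_F\lesssim\sigma\sqrt{d_2+\log(d_2/\delta)}+\|\bE\bar\bW\|_F+\|\Delta\bar\bW\|_F$. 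Using $\E\|\bE\|_F^2=n_2\epsilon_{\tCI}^2$, $\E\|\Delta\|_F^2\le n_2\epsilon_{\pre}^2$ and $\|\bar\bW\|_2=1/\beta$, concentration of these empirical norms gives $\|\bE\bar\bW\|_F\lesssim\sqrt{n_2}\,\epsilon_{\tCI}/\beta$ and $\|\Delta\bar\bW\|_F\lesssim\sqrt{n_2}\,\epsilon_{\pre}/\beta$. To conclude, I would decompose $f^*(X_1)-\hat\bW^\top\tilde\psi(X_1)=\bar\bW^\top(\psi^*(X_1)-\tilde\psi(X_1))+(\bar\bW-\hat\bW)^\top\tilde\psi(X_1)-\bar\bW^\top e(X_1)$; in $\E_{X_1}$ the outer terms contribute $\lesssim\epsilon_{\pre}^2/\beta^2+\epsilon_{\tCI}^2/\beta^2$, while the middle term equals $\|\bSigma_{\tilde\psi}^{1/2}(\bar\bW-\hat\bW)\|_F^2\le\frac{1.2}{n_2}\|\Psi(\bar\bW-\hat\bW)\|_F^2$ by Claim~\ref{claim:concentration_subgaussian_covariance} applied to $\tilde\psi$ (here $n_2\gg\rho^4(d_2+\log 1/\delta)$ and Assumption~\ref{assumption:subgaussian_psi} enter), which is $\lesssim\sigma^2\frac{d_2+\log(d_2/\delta)}{n_2}+\frac{\epsilon_{\pre}^2}{\beta^2}+\frac{\epsilon_{\tCI}^2}{\beta^2}$. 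A union bound over the finitely many concentration events then gives the claimed $\tilde\cO(\sigma^2 d_2/n_2+\epsilon_{\tCI}^2/\beta^2+\epsilon_{\pre}^2/\beta^2)$.

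The hard part will be the identity in the second step: verifying that the signal component $\psi_L$ of the reconstruction solution linearly recovers $f^*$ with a coefficient whose operator norm is exactly $1/\beta$. This is precisely where the structure $\bar Y=[Y,Z]$ (making $Y$ a linear read-out of $\phi_{\bar y}$) and the full-column-rank hypothesis of Assumption~\ref{assumption:feature_map_approx_CI} are essential, and it does not follow from generic regression identities. The remaining effort is careful bookkeeping: keeping the $\sqrt{n_2}$ scalings straight when passing between empirical and population norms, controlling $\|\bE\|_F$ and the pretext perturbation $\|\Delta\|_F=\|\tilde\psi(\bX_1^{\down})-\psi^*(\bX_1^{\down})\|_F$ around $\sqrt{n_2}\,\epsilon_{\tCI}$ and $\sqrt{n_2}\,\epsilon_{\pre}$ respectively, and absorbing logarithmic factors into $\tilde\cO$.
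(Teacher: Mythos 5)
Your proposal is correct and follows essentially the same route as the paper's own proof: you split $\psi^*$ via the partial covariance identity into a rank-$km$ ``signal'' piece plus an ACI ``error'' piece, show via the one-hot selector that the signal piece linearly recovers $f^*_{\cH_1}$ with coefficient $\bar\bW$ of spectral norm $1/\beta$, then run the basic-inequality argument with the noise projected via Lemma~\ref{lemma:bound_projected_subgaussian} and pass from empirical to population risk via Claim~\ref{claim:concentration_subgaussian_covariance}. The only (cosmetic) differences are that you project $\bN$ only onto $\Psi$ rather than $[\Psi,\bE,\bV,\bE^{\pre}]$ and you write $\bar\bW$ in the dimensionally consistent form $\bSigma_{\phi_{\bar y}X_2}^{\dagger}\bSigma_{\phi_{\bar y}Y}$, which silently fixes a notational slip in the paper.
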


\begin{proof}[Proof of Theorem \ref{thm:approx_CI_no_approx_error}] 
We follow the similar procedure as Theorem \ref{thm:linear_general_apx}. For the setting of no approximation error, we have $f^*=f^*_{\cH_1}$, and the residual term $N :=Y - f^*(X_1)$ is a mean-zero random variable with $\E[\|N\|^2|X_1]\lesssim \sigma^2$ according to our data assumption in Section \ref{sec:CI}. $\bN=\bY-f^*(\bX_1^{\down})$ is the collected $n_2$ samples of noise terms. We write $Y\in \R^{d_3}$. For classification task, we have $Y\in \{\be_i, i\in [k]\} \subset \R^k$ (i.e, $d_3=k$) is one-hot encoded random variable. For regression problem, $Y$ might be otherwise encoded. For instance, in the yearbook dataset, Y ranges from 1905 to 2013 and represents the years that the photos are taken. We want to note that our result is general for both cases: the bound doesn't depend on $d_3$, but only depends on the variance of $N$.

Let  $\Psi^*, \bL, \bE,\bV$ be defined as follows: 

Let $\bV=f^*(\bX_1^{\down}) \equiv f^*_{\cH_1}(\bX_1^{\down})\equiv \phi(\bX_1^{\down})\cC^{-1}_{\phi_1}\cC_{\phi_1Y}$ be our target direction. Denote the optimal representation matrix by 
\begin{align*}
\Psi^*:= & \psi^*(\bX_1^{\down})\\
= & \phi(\bX_1^{\down})  \cC_{\phi_1 \phi_1}^{-1} \cC_{\phi_1 X_2} \\
=	& \phi(\bX_1^{\down})\cC^{-1}_{\phi_1\phi_1} \cC_{\phi_1\phi_{\bar y}}\cC_{\phi_{\bar y}}^{-1}\bSigma_{\phi_{\bar y} X_2} +  \phi(\bX_1^{\down})\cC^{-1}_{\phi_1\phi_1} \cC_{\phi_1X_2|\phi_{\bar y}}\\
=: & \bL + \bE,
\end{align*}
where $\bL =  \phi(\bX_1^{\down})\cC^{-1}_{\phi_1\phi_1} \cC_{\phi_1\phi_{\bar y}}\cC_{\phi_{\bar y}}^{-1}\cC_{\phi_{\bar y} X_2}$ and $\bE = \phi(\bX_1^{\down})\cC^{-1}_{\phi_1\phi_1} \cC_{\phi_1X_2|\bar Y}$. 

In this proof, we denote $S_Y$ as the matrix such that $S_Y\phi_{\bar y} = Y$. 
Specifically, if $Y$ is of dimension $d_3$, $S_Y$ is of size $d_3\times |\cY||\cZ|$. Therefore $S_Y \bSigma_{\phi_y A} = \bSigma_{YA} $ for any random variable $A$. 

Therefore, similarly we have: 
$$ \bL \bSigma_{X_2\phi_{\bar y}}^{\dagger}\bSigma_{\phi_{\bar y}\phi_{\bar y}}S_Y^\top = \bL \bSigma_{X_2\phi_{\bar y}}^{\dagger}\bSigma_{\phi_{\bar y}Y} = \bL \bar \bW = \bV  $$
where $\bar \bW:= \bSigma_{X_2\phi_{\bar y}}^{\dagger}\bSigma_{\phi_{\bar y}Y}$ satisfies $\|\bar \bW\|_2=1/\beta$. 
Therefore span$(\bV)\subseteq $span$(\bL)$ since we have assumed that $\bSigma_{X_2\phi_{\bar y}}^{\dagger}\bSigma_{\phi_{\bar y}Y}$ to be full rank.

On the other hand, $\bE = \phi_1(\bX_1^{\down})\cC^{-1}_{\phi_1\phi_1} \cC_{\phi_1X_2|\bar Y}$ concentrates to $\cC^{-1/2}_{\phi_1\phi_1} \cC_{\phi_1X_2|\phi_{\bar y}}$. Specifically, when $n\gg k+\log 1/\delta $, $ \frac{1}{n_2}\|\bE\|^2_F\leq 1.1 \|\cC^{-1/2}_{\phi_1\phi_1} \cC_{\phi_1X_2|\phi_{\bar y}}\|^2_F \leq 1.1\epsilon^2_{\tCI}$ (by using Lemma \ref{claim:concentration_subgaussian_covariance} ). Together we have $\|\bE \bar \bW\|_F\lesssim \epsilon_{\tCI}/\beta$.  	

We also introduce the error from not learning $\psi^*$ exactly: $\bE^{\pre} = \Psi - \Psi^* := \tilde \psi(\bX_1^{\down})-\psi^*(\bX_1^{\down})$. With proper concentration and our assumption, we have that $ \E\|\psi(X_1)-\psi^*(X_1)\|^2 \leq \epsilon_{\pre} $ and $ \frac{1}{\sqrt{n_2}}\|\psi(\bX_1^{\down})-\psi^*(\bX_1^{\down})\|^2 \leq 1.1\epsilon_{\pre}  $. 

Also, the noise term after projection satisfies $ \|P_{[\Psi,\bE, \bV]} \bN \|\lesssim \sqrt{d_2(1+\log d_2/\delta) }\sigma $ as using Corollary \ref{lemma:bound_projected_subgaussian}. 
Therefore $\Psi = \Psi^* - \bE^{\pre} = \bL +\bE-\bE^{\pre}$.

Recall that $\hat \bW =\argmin_{\bW} \|\psi(\bX_1^{\down})\bW - \bY\|_F^2. $ And with exactly the same procedure as Theorem \ref{thm:linear_general_apx} we also get that: 

\begin{align*}
\|\Psi \hat \bW - \bV\|\leq & 2\|\bE \bar \bW\| + 2\|\bE^{\pre} \bar \bW \| + \|P_{[\Psi,\bE,\bV,\bE^{\pre}]} \bN \|\\
\lesssim & \sqrt{n_2} \frac{\epsilon_{\tCI}+\epsilon_{\pre }}{\beta} + \sigma \sqrt{d_2(1+\log (d_2/\delta))}. 
\end{align*}
With the proper concentration we also get:
\begin{align*}
\E[\|\hat\bW^\top \psi( X_1) - f^*_{\cH_1}(X_1)\|^2] \lesssim & \frac{\epsilon_{\tCI}^2+\epsilon_{\pre}^2}{\beta^2}+\sigma^2\frac{d_2(1+\log (d_2/\delta))}{n_2}. 
\end{align*}
\end{proof}

Next we move on to the proof of our main result Theorem \ref{thm:main_result_approximate_CI} where approximation error occurs. 
\begin{proof}[Proof of Theorem \ref{thm:main_result_approximate_CI}]
	The proof is a combination of Theorem \ref{thm:CI_linear_sample_complexity} and Theorem \ref{thm:approx_CI_no_approx_error}. We follow the same notation as in Theorem \ref{thm:approx_CI_no_approx_error}. Now the only difference is that an additional term $a(\bX_1^{\down})$ is included in $\bY$: 
	\begin{align*}
	\bY = & \bN + f^*(\bX_1^{\down})\\
	= & \bN + \Psi^* \bar \bW+a(\bX_1^{\down})\\
	=& \bN + (\Psi+\bE^{\pre}) \bar \bW + a(\bX_1^{\down})\\
	= & \Psi \bar \bW  + ( \bN + \bE^{\pre}\bar \bW + a(\bX_1^{\down}) ).  
	\end{align*}
From re-arranging $	\frac{1}{2n_2} \|\bY - \Psi \hat \bW \|_F^2 \leq 	\frac{1}{2n_2} \|\bY - \Psi \bar \bW \|_F^2$,	
	\begin{align}
	&\frac{1}{2n_2} \|\Psi (\bar \bW- \hat \bW) + (\bN + \bE^{\pre} + a(\bX_1^{\down}) )  \|_F^2 \leq  \frac{1}{2n_2}\|\bN + \bE^{\pre}\bar \bW + a(\bX_1^{\down}) \|_F^2\\
	\label{eqn:last_ineq_approx_error}
&	\Rightarrow \frac{1}{2n_2} \|\Psi (\bar \bW- \hat \bW)\|_F^2 \leq  \frac{1}{n_2}\langle \Psi (\bar \bW- \hat \bW), \bN + \bE^{\pre}\bar \bW + a(\bX_1^{\down})\rangle. 
	\end{align}
Then with similar procedure as in the proof of Theorem \ref{thm:CI_linear_sample_complexity}, and write $\Psi$ as $\phi(X_1^{\down})\bB$, we have:
\begin{align*}
& \frac{1}{n_2}\langle \Psi (\bar \bW- \hat \bW), a(\bX_1^{\down})\rangle \\
= & \frac{1}{n_2}\langle \bB (\bar \bW- \hat \bW), \phi(\bX_1^{\down})^\top a(\bX_1^{\down})\rangle\\
= & \frac{1}{n_2}\langle \cC_{\phi_1}^{1/2}\bB (\bar \bW- \hat \bW), \cC_{\phi_1}^{-1/2}\phi(\bX_1^{\down})^\top a(\bX_1^{\down})\rangle \\
\leq & \sqrt{\frac{d_2}{n_2}} \|\cC_{\phi_1}^{1/2}\bB (\bar \bW- \hat \bW)\|_F\\
\leq & 1.1 \frac{1}{\sqrt{n_2}}\sqrt{\frac{d_2}{n_2}} \|\phi(\bX_1^{\down})\bB (\bar \bW- \hat \bW)\|_F\\
= & 1.1 \frac{\sqrt{d_2}}{n_2}\|\Psi (\bar \bW- \hat \bW)\|_F. 
\end{align*}
Therefore plugging back to 	\eqref{eqn:last_ineq_approx_error} we get:
\begin{align*}&
\frac{1}{2n_2}\|\Psi (\bar \bW- \hat \bW)\|_F^2 \leq  \frac{1}{n_2}\langle \Psi (\bar \bW- \hat \bW), \bN + \bE^{\pre}\bar \bW + a(\bX_1^{\down})\rangle\\&
\Rightarrow \frac{1}{2n_2}\|\Psi (\bar \bW- \hat \bW)\|_F \leq  \frac{1}{2n_2}\|\bE^{\pre} \bar \bW\|_F + \frac{1}{2n_2}\|P_{\Psi}\bN\|_F + 1.1\frac{\sqrt{d_2}}{n_2}. \\
&\Rightarrow \frac{1}{2\sqrt{n_2}}\|\Psi \hat \bW - f^*_{\cH_1}(\bX_1^{\down})\|_F-\|\bE \bar \bW\|_F \leq  \frac{1}{\sqrt{n_2}}(1.1\sqrt{d_2} + \|\bE^{\pre}\bar \bW\| + \sqrt{d_2+\log (d_2/\delta)} )\\&
\Rightarrow  \frac{1}{2\sqrt{n_2}}\|\Psi \hat \bW - f^*_{\cH_1}(\bX_1^{\down})\|_F \lesssim \sqrt{\frac{d_2(1+\log d_2/\delta)}{n_2}}+\frac{\epsilon_{\tCI}+\epsilon_{\pre }}{\beta}. 
\end{align*}
Finally by concentrating $\frac{1}{n_2}\Psi^\top \Psi$ to $\E[\tilde\psi(X_1)\tilde\psi(X_1)^\top]$ we get:
\begin{align*}
\E[\|\hat \bW^\top \tilde\psi(X_1) - f^*_{\cH_1}(X_1)\|_2^2]\lesssim \frac{d_2(1+ \log d_2/\delta}{n_2}) + \frac{\epsilon_{\tCI}^2+\epsilon_{\pre}^2}{\beta^2}, 
\end{align*}
with probability $1-\delta$. 
\end{proof}

\subsection{Principal Component Regression}

\begin{claim}[Approximation Error of Principle Component Analysis]
	\label{lemma:error_of_PCA} 
	Let matrix $\bA=\bL+\bE\in \R^{n\times d}$ where $\bL$ has rank $r<$size of $\bA$. 
	Let $\bA_r$ be the rank-$r$ PCA of $A$. 
	Then we have:
	$\|\bA_r - \bL \|_F\leq 2\|\bE\|_F, $ and 		$\|\bA_r - \bL \|_2\leq 2\|\bE\|_2$. 
\end{claim}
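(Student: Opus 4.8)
The plan is to invoke the Eckart--Young--Mirsky theorem and then close the argument with a single triangle inequality. First I would recall that $\bA_r$, the rank-$r$ truncated SVD of $\bA$, is the \emph{best} rank-$\le r$ approximation of $\bA$ simultaneously in the Frobenius norm and in the operator norm; i.e., for every matrix $\bM$ with $\rank(\bM)\le r$ one has $\|\bA-\bA_r\|_F\le\|\bA-\bM\|_F$ and $\|\bA-\bA_r\|_2\le\|\bA-\bM\|_2$. (More generally this holds for every unitarily invariant norm, but we only need these two instances.)

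Next I would apply this with the specific competitor $\bM=\bL$, which is admissible since $\rank(\bL)=r$. This immediately yields
\[
\|\bA-\bA_r\|_F\le\|\bA-\bL\|_F=\|\bE\|_F,\qquad \|\bA-\bA_r\|_2\le\|\bA-\bL\|_2=\|\bE\|_2,
\]
using $\bA-\bL=\bE$.

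Finally, the triangle inequality gives, in either norm,
\[
\|\bA_r-\bL\|\le\|\bA_r-\bA\|+\|\bA-\bL\|\le\|\bE\|+\|\bE\|=2\|\bE\|,
\]
which is exactly the claimed bound for $\|\cdot\|_F$ and for $\|\cdot\|_2$.

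I do not anticipate a genuine obstacle here; the only point requiring a moment of care is citing the correct form of the optimality statement — the Frobenius-norm case is the classical Eckart--Young theorem, while the operator-norm (indeed any unitarily invariant norm) case is Mirsky's extension — so I would make sure to reference both rather than only the Frobenius version. Everything else is a one-line triangle inequality.
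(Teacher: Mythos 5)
Your proof is correct and follows essentially the same route as the paper's: both invoke the optimality of the rank-$r$ truncated SVD (Eckart--Young--Mirsky) to get $\|\bA - \bA_r\| \le \|\bA - \bL\| = \|\bE\|$, and then close with the triangle inequality. Your version is slightly more careful in explicitly naming the theorem and noting that the Frobenius and operator-norm cases rest on Eckart--Young and Mirsky respectively, whereas the paper simply calls it ``the property of PCA.''
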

\begin{proof}
	Due to the property of PCA, $\|\bA_r - \bA\|_F\leq \|\bE\|_F $ and $\|\bA_r - \bA\|_2\leq \|\bE\|_2$.
	\begin{align*}
	\|\bA_r - \bL\|_2  =& \|\bA_r - \bA  + \bA- \bL\|_2\\
	\leq &  \|\bA_r - \bA\|_F + \|\bE\|_F\\
	\leq & 2\|\bE\|_2. 
	\end{align*}
	Similarly we have $ \|\bA_r - \bL\|_F\leq 2\|\bE\|_F $. 
\end{proof}
This technical fact could be used to complete the proof for Remark \ref{remark:pca}.

\begin{proof}[Proof of Remark \ref{remark:pca}]
	
	We replace the key steps of \ref{thm:approx_CI_no_approx_error}. 	
	

	Recall  $\Psi^*, \bL, \bE,\bV$ are defined as follows: 
	
	$\Psi^*:=\psi^*(\bX_1^{\down})$ is the optimal representation matrix.  $\Psi_r$ is the features obtained from $r$-PCA of $\Psi^*$. $\Psi^*=\bL + \bE$ which is low rank plus small norm.  ($\bL =  \phi(\bX_1^{\down})\cC^{-1}_{\phi_1\phi_1} \cC_{\phi_1\phi_{\bar y}}\cC_{\phi_{\bar y}}^{-1}\cC_{\phi_{\bar y} X_2}$ and $\bE = \phi(\bX_1^{\down})\cC^{-1}_{\phi_1\phi_1} \cC_{\phi_1X_2|\bar Y}$.  Suppose $r=|\cY||\cZ|$.)
	Let $\bV=f^*(\bX_1^{\down}) \equiv f^*_{\cH_1}(\bX_1^{\down})\equiv \phi(\bX_1^{\down})\cC^{-1}_{\phi_1}\cC_{\phi_1Y} = \bL \bar \bW$ be our target direction, where $\bar \bW:= \bSigma_{X_2\phi_{\bar y}}^{\dagger}\bSigma_{\phi_{\bar y}Y}$.  
	
	Due to representation learning error (finite sample in the first stage) and approximate conditional independence, the target direction $\bV$ is not perfectly linear in $\Psi^*$ or its $r$-PCA features $\Psi$. 
	
	Now with PCR we learn the linear model with $\hat \bW \leftarrow \argmin_{\bW} \|\Psi_r\bW - \bY\|_F^2. $
	Together with \ref{lemma:error_of_PCA} and the same procedure as Theorem \ref{thm:approx_CI_no_approx_error} we also get that: 
	
	
	Let $\bar \bE = \bL - \Psi_r$ is of rank at most $2r$.
	\begin{align*}
	\|\Psi_r\hat\bW - \bY \|_F^2 \leq & \|\Psi_r\bar \bW - \bY\|^2_F  = \|\bN -\bar\bE\bar \bW \|_F^2. \\
	\Longrightarrow \|\Psi_r\hat \bW - \bV -\bar\bE\bar \bW \|^2 \leq & 2\langle \Psi_r\hat \bW - \bV -\bar\bE\bar \bW, \bN-\bar\bE \bar \bW\rangle \\
	\Longrightarrow \|\Psi_r\hat \bW - \bV -\bar\bE\bar \bW \| \leq & \|P_{[\Psi_r,\bL]}\bN\| + \|\bar\bE\bar \bW\|\\
	\Longrightarrow \|\Psi_r\hat \bW - \bV\|\leq & 2\|\bar\bE\|_F\|\bar \bW\|+ \|P_{2r}\bN\| \\
	\lesssim & \|\bE\|_F \|\bar\bW\| + \sigma\sqrt{r}(1+\sqrt{\log(r/\delta)}) .
	\end{align*}
	With concentration on the downstream labeled samples we also get the result in Remark \ref{remark:pca}:
	\begin{align*}
	\E[\|\hat\bW^\top \psi_r( X_1) - f^*_{\cH_1}(X_1)\|^2] \lesssim & \frac{\epsilon_{\tCI}^2+\epsilon_{\pre}^2}{\beta^2}+\sigma^2\frac{r(1+\log (r/\delta))}{n_2}. 
	\end{align*}
	Here $r=|\cY||\cZ|$ . 
\end{proof}

\section{Omitted Proofs Beyond Conditional Independence} 

\subsection{Proof for topic modeling example}
\label{sec:topic_model_proof}

\begin{proof}[Proof for Theorem~\ref{thm:topic_model}]
	We will construct a latent variable $\bar{Y}$ such that $\epsilon_{\tCI} = 0$.
	We pick the domain of $\bar{Y}$ to be $[k]$ and the distribution $P(\bar{Y} | X_1)$ to be the distribution $\E\left[\mu | X_1\right] \in \Delta_{[k]}$, and define $P\left(X_2 | \bar{Y} = i\right) = P\left(X_2 | \mu = e_{i}\right)$.
	More specifically we have
	\begin{align*}
		P(\bar{Y}=i | X_1)
		&= \E\left[\mu | X_1\right](i) = \E\left[\mu(i) | X_1\right]\text{ and thus }\E\left[\bar{Y} | X_1\right] = \E\left[\mu | X_1\right]\\
		P\left(X_2 | \bar{Y} = i\right)
		&= P\left(X_2 | \mu = e_{i}\right)\text{ and thus }
		\E\left[X_2 | \bar{Y} = i\right]
		= \E\left[X_2 | \mu = e_{i}\right]
	\end{align*}
	To show $\epsilon_{\tCI} = 0$, from Definition~\ref{def-approx-CI} we need to show $\E\left[X_2 | X_1\right] = \E\left[\E\left[X_2 | \bar{Y}\right] | X_1\right]$.
	Since $X_2$ is the bag of words representation, we know that $X_2 = \frac{2}{N} \sum_{i=N/2+1}^{N} {e_{w_i}}$.
	So for any $\mu\in\Delta_{[k]}$ we get
	\begin{align*}
		\E\left[X_2 | \mu\right]
		&=^{(a)} \frac{2}{N} \sum_{i=N/2+1}^{N} \E\left[e_{w_{i}} | \mu\right]
		=^{(b)} \frac{2}{N} \sum_{i=N/2+1}^{N} A \mu = A\mu
	\end{align*}
	where $(a)$ follows from linearity of expectation and $(b)$ follows from the linearity of the probability distribution of each word given $\mu$ for topic models.
	Thus from the definition of $\bar{Y}$, $\E\left[X_2 | \bar{Y} = i\right] = \E\left[X_2 | \mu = e_{i}\right] = Ae_{i}$.
	To check if $\epsilon_{\tCI}=0$, we compute the following
	\begin{align*}
		\E\left[\E\left[X_2 | \bar{Y}\right] | X_1\right]
		&= \sum_{i=1}^{k} \E\left[X_2 | \bar{Y}=i\right] P(\bar{Y} = i | X_1)\\
		&= \sum_{i=1}^{k} A e_{i} ~\E\left[\mu(i) | X_1\right]
		=  A \sum_{i=1}^{k}\E\left[\mu(i)e_{i} | X_1\right]\\
		&= \E\left[A\mu | X_1\right]
		= \E\left[\E\left[X_2 | \mu\right] | X_1\right]
	\end{align*}
	Due to the topic modeling assumption and the independent sampling of words given $\mu$, we know that $X_1 \perp X_2 | \mu$ and thus $\E\left[X_2 | X_1\right] = \E\left[\E\left[X_2 | \mu\right] | X_1\right]$.
	Combining with the above calculation, we get that $\E\left[\E\left[X_2 | \bar{Y}\right] | X_1\right] = \E\left[X_2 | X_1\right]$, thus giving $\epsilon_{\tCI} = 0$.
	This proves points 1. and 2.

	For point 3., note that $\E[Y | X_1] = \E[w^{\top} \mu | X_1] = w^{\top} \E[\mu | X_1] = w^{\top} \E[\bar{Y} | X_1]$.
	
	Finally for point 4., we use the definition $1/\beta = \|\bSigma_{Y\phi_{\bar y}}\bSigma_{X_2\phi_{\bar y}}^\dagger\|_2$.
	For the first term, we note that $\E \left[\phi_{\bar{Y}} | \mu\right] = \E \left[\E \left[\phi_{\bar{Y}} | X_1\right] | \mu\right] = \E \left[\E \left[\bar{\mu} | X_1\right] | \mu\right] = \mu$
	\begin{align*}
		\bSigma_{Y\phi_{\bar y}}
		&= \E_{\mu\sim\tau} \left[Y \phi_{\bar{Y}}^{\top}\right]
		= \E_{\mu\sim\tau} \left[w^{\top} \mu \phi_{\bar{Y}}^{\top}\right]\\
		&= \E_{\mu\sim\tau} \left[w^{\top} \mu \E\left[\phi_{\bar{Y}}^{\top} | \mu\right]\right]
		= \E_{\mu\sim\tau} \left[w^{\top} \mu \mu^{\top}\right]\\
		&= w^{\top} \Gamma
	\end{align*}
	where $\Gamma$ was defined as the topic covariance $\Gamma = \E_{\mu\sim\tau}\left[\mu\mu^{\top}\right]$.
	The second term is
	\begin{align*}
		\bSigma_{X_2\phi_{\bar y}}
		&= \E_{\mu\sim\tau}\left[\E\left[X_2|\mu\right] \E\left[\phi_{\bar Y}^{\top}|\mu\right]\right]
		= \E_{\mu\sim\tau}\left[A\mu\mu^{\top}\right]
		= A \Gamma
	\end{align*}
	The upper bound for $1/\beta$ can be computed as follows
	\begin{align*}
		1/\beta
		&= \left\|\bSigma_{Y\phi_{\bar y}}\bSigma_{X_2\phi_{\bar y}}^\dagger\right\|_2
		= \left\|w^{\top} \Gamma \left(A\Gamma\right)^{\dagger}\right\|_2\\
		&\le \|w\|_2 ~\lambda_{\max}(\Gamma) ~\lambda_{\max}\left(\left(A\Gamma\right)^{\dagger}\right)
		= \|w\|_2 ~\lambda_{\max}(\Gamma) ~\lambda_{\min}\left(A\Gamma\right)^{-1}\\
		&\le \|w\|_2 ~\lambda_{\max}(\Gamma) ~\lambda_{\min}\left(A\right)^{-1} ~\lambda_{\min}\left(\Gamma\right)^{-1}\\
		&= \|w\|_2 ~\frac{\lambda_{\max}(\Gamma)}{\lambda_{\min}\left(\Gamma\right)} ~\lambda_{\min}\left(A\right)^{-1}
		= \frac{\kappa \|w\|_2}{\lambda_{\min}\left(A\right)}
	\end{align*}
\end{proof}

\section{Omitted Proofs on Learning the Conditional Distribution} 
\label{sec:proof_cca_ace}

\subsection{Introducing the Operators on the Hilbert Spaces}





We first introduce all the operators. They will help us to present all the theorem of Section \ref{sec:learn_joint_distribution} in a more compact way. We let $L^2(X)$ denotes the Hilbert space of square integrable function with respect to the measure $P_X$, the marginal distribution of $X$.
For instance, in our context of SSL, $L^2(X_2)=\{ g:\R^{d_2}\rightarrow \R |  \int g^2(x_2)dP_{X_2}(x_2) <\infty.   \}$.

\begin{itemize}
	\item Representation operator $ \cT: L^2(X_2)\rightarrow L^2(X_1)$, 
	$$(\cT g)(x_1) := \E[g(X_2)|X_1=x_1], \forall g\in L^2(X_2).$$
	\item Low rank approximation operator $ \cL: L^2(X_2)\rightarrow L^2(X_1)$, 
	$$(\cL g)(x_1) = \E_Y[ \E_{X_2} [g(X_2)|Y] | X_1 = x_1]. $$ Under conditional independence $X_1\bot X_2|Y, \cT =\cL.$
	\begin{itemize}
		\item From the definition of $\cL$ we can decompose it into the following two operators $\cL= \cB\circ \cA$:
		\item $ \cA: L^2(X_2)\rightarrow L^2(Y), (\cA g)(y):=  \E[g(X_2)|Y=y]$
		\item $\cB: L^2(Y)\rightarrow L^2(X_1),  (\cB h)(x_1):= \E[h(Y)|X_1=x_1] $. Our final goal is to compute $\cB\circ \id= \E[Y|X_1=x_1]$, where $\id(y) =y$ is the identity map on $L^2(Y)$.
		\item $\cA^{\dagger}: L^2(Y)\rightarrow L^2(X_2)$ is the inverse operator of $\cA$.  Let $\tilde\beta:=1/\|\cA^{\dagger}\|_{\tHS}.$ This $\tilde \beta \in [ \sigma_k(\cA)/\sqrt{k},\sigma_k(\cA) ]$ where $\sigma_k(\cA)$ is the $(k-1)$-th maximal correlation between $X_2$ and $Y$. 
	\end{itemize}
	\item Operator that measures conditional independence: $\cE: = \cT - \cL,$ 
	$$ \|\cE\|_{\op} := \max_{\|g\|_{L^2(X_2)}=1} \E_{X_1} (\E[g(X_2)|X_1] - \E[\E[g(X_2)|Y]|X_1])^2 =:\tilde\epsilon_{\tCI}.  $$
	
\end{itemize}

\begin{theorem}[Theorem \ref{thm:PCR_general_main} restated]
	\label{thm:PCR_general}
	Conduct SVD on $\cT$: find $k$ orthonormal function $u_1,\cdots u_k$ in $L^2(X_1)$ and orthonormal function $v_1,\cdots v_k \in L^2(X_2)$ and scalars $\sigma_1,\cdots \sigma_k\in \R$ that minimizes:
	\begin{align*}
	L(\{u_i\},\{v_i\},\{\sigma_i\} ):= \max_{\|g\|_{L^2(X_2)}=1 } \| \cT g - \cT_k g \|_{L^2(X_1)}, \text{ where }\cT_k g:=\sum_{i=1}^k \sigma_i \langle v_i, g\rangle_{L^2(X_2)} u_i.  
	\end{align*}
	Now treat $\psi(x_1)=[u_1(x_1),\cdots u_k(x_1)]: \cX_1\rightarrow \R^k$ as the representation. Then the approximation error of $\psi$ satisfies:
	\begin{align*}
	e_{\apx}(\psi):= & \min_{\bW\in\R^{k\times k}}\E[\|f^*(X_1)-\bW^\top \psi(X_1)\|^2]\\
	\leq & \sum_{y=1}^k \min_{g_y\in L^2(X_2)} 2(\|(\cT_k - \cL)\circ g_y\|^2_{L^2(X_1)} +\|\cL \circ g_y - f^*_y\|^2_{L^2(X_1)}) . 
	\end{align*}	
	Here $f^*$ is the optimal function to predict the one-hot encoder of $Y$ with $X_2$, i.e., $f^*_y(x_1) = \E[1(Y=y) | X_1=x_1 ] = P(Y=y|X_1=x_1)$. 
\end{theorem}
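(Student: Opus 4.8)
# Proof Proposal for Theorem~\ref{thm:PCR_general}

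\textbf{Overall strategy.} The plan is to bound the approximation error $e_{\apx}(\psi)$ by exhibiting, for each class label $y\in[k]$, an explicit candidate linear predictor $\bw_y \in \R^k$ (the $y$-th column of $\bW$), and then controlling $\E[(f^*_y(X_1) - \bw_y^\top \psi(X_1))^2]$ by a triangle-inequality decomposition that inserts the low-rank operator $\cL$ as an intermediate object. The key conceptual point is that $\psi(X_1) = (u_1(X_1),\dots,u_k(X_1))$ spans exactly the range of the rank-$k$ truncation $\cT_k$, so any function of the form $(\cT_k \circ g_y)(x_1)$ for $g_y \in L^2(X_2)$ is automatically linear in $\psi(X_1)$. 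This is the analogue of Lemma~\ref{lemma:discrete_case_CI}, where under exact CI the representation $\psi^* = \E[X_2|X_1]$ linearly encodes $P(Y|X_1)$; here $\cT_k$ plays the role of the (approximate, rank-truncated) conditional-expectation operator.

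\textbf{Key steps, in order.} First I would record the identity that for any $g \in L^2(X_2)$, the function $x_1 \mapsto (\cT_k \circ g)(x_1) = \sum_{i=1}^k \sigma_i \langle v_i, g\rangle_{L^2(X_2)} u_i(x_1)$ is linear in $\psi(x_1) = (u_1(x_1),\dots,u_k(x_1))^\top$; indeed the coefficient vector is $\bw^g := (\sigma_i \langle v_i, g\rangle)_{i=1}^k \in \R^k$. Hence $\cT_k \circ g_y = (\bw^{g_y})^\top \psi(\cdot)$ pointwise. Second, fix for each $y$ an arbitrary $g_y \in L^2(X_2)$ (to be optimized at the end) and write
\begin{align*}
f^*_y(X_1) - (\bw^{g_y})^\top \psi(X_1) &= \big(f^*_y - \cL\circ g_y\big)(X_1) + \big(\cL\circ g_y - \cT_k\circ g_y\big)(X_1).
\end{align*}
Third, take $L^2(X_1)$-norms, square, and apply the inequality $(a+b)^2 \le 2a^2 + 2b^2$ to get
$$\E\big[(f^*_y(X_1) - (\bw^{g_y})^\top\psi(X_1))^2\big] \le 2\|\cL\circ g_y - f^*_y\|^2_{L^2(X_1)} + 2\|(\cT_k - \cL)\circ g_y\|^2_{L^2(X_1)}.$$
Fourth, sum over $y\in[k]$; since $\E[\|f^*(X_1) - \bW^\top\psi(X_1)\|^2] = \sum_y \E[(f^*_y(X_1) - \bw_y^\top\psi(X_1))^2]$, choosing $\bw_y = \bw^{g_y}$ gives an upper bound on $e_{\apx}(\psi) = \min_\bW(\cdots)$ of $\sum_y 2(\|(\cT_k-\cL)\circ g_y\|^2 + \|\cL\circ g_y - f^*_y\|^2)$. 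Finally, since $g_y$ was arbitrary, I would take the infimum over $g_y \in L^2(X_2)$ inside each summand, yielding exactly the claimed bound.

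\textbf{Where the subtlety lies.} The routine parts are the triangle inequality and the sum over $y$; the one step that deserves care is the first one — verifying that $\range(\cT_k)$ is spanned by $\{u_i\}_{i=1}^k$ and that $\cT_k \circ g$ is genuinely a \emph{linear} combination of the $u_i$ with coefficients depending on $g$ only through the inner products $\langle v_i, g\rangle$. This follows immediately from the stated form $\cT_k g = \sum_i \sigma_i \langle v_i, g\rangle_{L^2(X_2)} u_i$, so the main ``obstacle'' is really just bookkeeping: making sure the coefficient vector $\bw^{g_y}$ is well-defined (finite-dimensional, no integrability issue since the $u_i, v_i$ are in the respective $L^2$ spaces and $\sigma_i$ are finite) and that $f^*_y \in L^2(X_1)$ so the norms are finite (true since $f^*_y(x_1) = P(Y=y|X_1=x_1) \in [0,1]$ is bounded). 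I also want to double-check the direction of the inequality: we are upper-bounding a minimum over $\bW$, so exhibiting any feasible $\bW$ suffices, which is exactly what the construction does. No concentration or finite-sample argument is needed here — this is a purely population-level (approximation-error) statement, and the estimation-error $\sigma^2 k/n_2$ term only enters later in Corollary~\ref{coro:PCR_non-zero_maximal_correlation_main} via Assumption~\ref{assumption:subgaussian_psi} and the concentration machinery already developed in Section~\ref{sec:CI}.
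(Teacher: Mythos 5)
Your proposal is correct and takes essentially the same route as the paper: both insert $\cL\circ g_y$ between $f^*_y$ and $\cT_k\circ g_y$, apply $(a+b)^2\le 2a^2+2b^2$, and sum over $y$, using that $\cT_k g=\sum_i\sigma_i\langle v_i,g\rangle u_i$ is linear in $\psi$. A small refinement in your version: you only exhibit a feasible $\bW=\bW(\{g_y\})$ to upper-bound the minimum, whereas the paper writes a chain of equalities $\min_{\bW}=\min_{\{g_y\}}(\cdots)$ that implicitly relies on all $\sigma_i\neq 0$ (so that the ranges $\{\psi^\top\bw\}$ and $\{\cT_k g\}$ coincide); your one-directional argument sidesteps that nondegeneracy condition.
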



When we set $g_y(x_2) = \cA^{\dagger} \circ 1(Y=y)$, we have the following corollary: 
\begin{corollary}[Corollary \ref{coro:PCR_non-zero_maximal_correlation_main} restated ]
	\label{coro:PCR_non-zero_maximal_correlation}
	In the same setting of Theorem \ref{thm:PCR_general}, suppose the $(k-1)$-th maximal correlation between $X_2$ and $Y$ is not zero, then we have:
	$$ ER_{\psi}(\hat\bW) \leq \tilde O(\frac{\tilde\epsilon_{\tCI}^2}{\tilde\beta^2} + \sigma^2 \frac{k}{n_2} ) .$$	
\end{corollary}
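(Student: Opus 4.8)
The plan is to decompose the excess risk as $\ER_{\psi}(\hat\bW) \lesssim e_{\apx}(\psi) + (\text{estimation error})$, bound $e_{\apx}(\psi)$ by instantiating Theorem~\ref{thm:PCR_general_main} with the test functions $g_y := \cA^{\dagger}\circ 1(Y=y)$ for $y\in[k]$, and control the estimation error by re-running the finite-sample analysis of Theorems~\ref{thm:CI_nonlinear_sample_complexity} and~\ref{thm:main_result_approximate_CI}, which applies here since $\psi(X_1)\in\R^k$ and, by Assumption~\ref{assumption:subgaussian_psi}, $\bSigma_{\psi}^{-1/2}\psi(X_1)$ is $\rho^2$-subgaussian.

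First I would analyze the approximation term. With $g_y = \cA^{\dagger}1(Y=y)$, the factorization $\cL = \cB\circ\cA$ gives $\cL\circ g_y = \cB\circ(\cA\cA^{\dagger})\circ 1(Y=y)$. Since the $(k-1)$-th maximal correlation $\tilde\lambda=\sigma_k(\cA)$ is strictly positive, $\cA$ has rank $k$ and maps onto the $k$-dimensional space $L^2(Y)$, so $\cA\cA^{\dagger}=\id$ on $L^2(Y)$ and $(\cL\circ g_y)(x_1) = (\cB\circ 1(Y=y))(x_1) = \E[1(Y=y)\mid X_1=x_1] = f^*_y(x_1)$; hence the second summand in Theorem~\ref{thm:PCR_general_main} vanishes exactly. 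For the first summand, write $\cT_k-\cL = -(\cT-\cT_k) + \cE$; because $\cL$ factors through $L^2(Y)$ it has rank $\le k$, so Eckart--Young gives $\|\cT-\cT_k\|_{\op}=\sigma_{k+1}(\cT)\le\|\cT-\cL\|_{\op}=\|\cE\|_{\op}=\tilde\epsilon_{\tCI}$, whence $\|(\cT_k-\cL)\circ g_y\|_{L^2(X_1)} \le 2\tilde\epsilon_{\tCI}\,\|g_y\|_{L^2(X_2)}$. Finally $\sum_{y}\|g_y\|_{L^2(X_2)}^2 = \sum_y \|\cA^{\dagger}1(Y=y)\|^2 \le \|\cA^{\dagger}\|_{\op}^2\sum_y P(Y=y) = 1/\tilde\lambda^2$ (and $\le\|\cA^{\dagger}\|_{\HS}^2 = 1/\tilde\beta^2$), so plugging into Theorem~\ref{thm:PCR_general_main} yields $e_{\apx}(\psi) \le \tilde O(\tilde\epsilon_{\tCI}^2/\tilde\lambda^2)$, equivalently $\tilde O(k\,\tilde\epsilon_{\tCI}^2/\tilde\lambda^2)$ after the coarser per-$y$ bound $\|g_y\|\le\|\cA^{\dagger}\|_{\op}$, matching the stated form, and also $\le\tilde O(\tilde\epsilon_{\tCI}^2/\tilde\beta^2)$.

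Next I would handle the estimation error and assemble the bound. Let $\bW^*$ attain $e_{\apx}(\psi)$ and write $Y = \bW^{*\top}\psi(X_1) + a(X_1) + N$ with $\E\|a(X_1)\|^2\le e_{\apx}(\psi)$ and $N$ the mean-zero $\sigma^2$-subgaussian downstream noise; this is precisely the decomposition driving the proof of Theorem~\ref{thm:main_result_approximate_CI}. The basic inequality for $\hat\bW$, together with $\|P_{\psi(\bX_1^{\down})}\bN\|_F\lesssim\sigma\sqrt{k(1+\log(k/\delta))}$ (Lemma~\ref{lemma:bound_projected_subgaussian}), the Cauchy--Schwarz control of the misspecification cross term as in Theorem~\ref{thm:CI_linear_sample_complexity}, and transfer from empirical to population risk via Claim~\ref{claim:concentration_subgaussian_covariance} (valid since $n_2\gg\rho^4(k+\log(1/\delta))$), then gives $\ER_{\psi}(\hat\bW) \lesssim e_{\apx}(\psi) + \tilde O(\sigma^2 k/n_2)$, and combining with the first step concludes the proof.

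I expect the main obstacle to be the operator-theoretic part of the approximation-error step: justifying that $\cA^{\dagger}$ is a genuine right inverse on $L^2(Y)$ (this is exactly where the non-degeneracy of the $(k-1)$-th maximal correlation enters) so that $\cL\circ g_y$ recovers $f^*_y$ exactly, and bounding $\sigma_{k+1}(\cT)$ by $\|\cE\|_{\op}$ through the rank-$\le k$ factorization of $\cL$. Carrying this out rigorously requires setting up the Hilbert-space framework of Section~\ref{section:cross_covariance_operator} carefully --- in particular checking that $\cT$, $\cL$, $\cA$, $\cB$ are Hilbert--Schmidt so that their SVDs and the norm comparisons used above are well-defined. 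The estimation step, by contrast, is essentially a re-run of arguments already established in the paper.
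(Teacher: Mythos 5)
Your proposal follows essentially the same route as the paper: choose $g_y = \cA^{\dagger}\circ 1(Y=y)$ so that the $\cL\circ g_y - f^*_y$ term vanishes (using that nonzero $\sigma_k(\cA)$ makes $\cA\cA^{\dagger}$ the identity on $L^2(Y)$), then bound the remaining $\|(\cT_k-\cL)\circ g_y\|$ term via $\|\cA^{\dagger}\|_{\op}$, and finally add the $\sigma^2 k/n_2$ estimation term. You are somewhat more careful than the paper's own proof in two spots: you justify $\|\cT_k-\cL\|_{\op}\lesssim\tilde\epsilon_{\tCI}$ explicitly via the Eckart--Young bound on $\sigma_{k+1}(\cT)$ (the paper invokes $\|\bar\cE\|_{\op}$ without defining $\bar\cE$ or proving this), and you spell out the finite-sample/estimation half, which the paper leaves implicit after bounding only the approximation error.
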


Next we present the proof of Theorem \ref{thm:PCR_general}, Corollary \ref{coro:PCR_non-zero_maximal_correlation_main} and Corollary \ref{thm:PCR_symmetric_main}. 

\subsection{Proof of Theorem \ref{thm:PCR_general}} 
\begin{proof}[Proof of Theorem \ref{thm:PCR_general}]
First note that the representation function $\psi:\cX_1\rightarrow \R^k$ is formed by the left singular vectors of $\cT_k$, therefore for any vector $\vw \in \R^k$, there exists a corresponding $g_{\bw} \in L^2(X_2)$ such that $\psi(x_1)^\top \vw \equiv (\Psi\circ g_{\bw}) (x_1)$. In the same way, $\cT_k \circ g = \sum_{i=1}^k \sigma_i \langle v_i,g\rangle u_i =\psi^\top \bw $ where $\bw = \sigma_i \langle v_i,g\rangle$. Therefore for any $g\in L^2(X_2)$, there also exists a $\bw$ such that $\psi(x_1)^\top\bw\equiv (\cT_k \circ g)(x_1)$. 

	\begin{align*}
	\apx(\psi):= & \min_{\bW\in\R^{k\times k}}\E[\|f^*(X_1)-\psi(X_1)\bW\|^2]\\
	= & \sum_{y=1}^k  \min_{\bW\in\R^{k\times k}}\E[\|f^*_y(X_1)-\psi(X_1)^\top \bw_y\|^2] \tag{$\bw_y$ is the $y$-th column vector of $\bW$}\\
	= &  \sum_{y=1}^k  \E[\|f^*_y(X_1)- (\cT_k\circ g_{\bw_y} )(X_1) \|^2]\\
	= &   \sum_{y=1}^k \min_{g_y\in L^2(X_2)} \E[\|f^*_y(X_1)- (\cT_k\circ g_{y} )(X_1) \|^2] \\
	=&  \sum_{y=1}^k \min_{g_y\in L^2(X_2)} \E[\|(f^*_y(X_1) - \cL\circ g_{y}) - ((\cT_k-\cL)\circ g_{y} )(X_1) \|^2]\\
	\leq & \sum_{y=1}^k \min_{g_y\in L^2(X_2)} 2(\|(\cT_k - \cL)\circ g_y\|^2_{L^2(X_1)} +\|\cL \circ g_y - f^*_y\|^2_{L^2(X_1)})\tag{By AM-GM}. 
	\end{align*}	
\end{proof}

\begin{claim}
	\label{claim:near_block_diagonal}
The joint distribution	$p_{X_1,X_2}(x_1,x_2)$ satisfies:
	\begin{align*}
	\int_{X_1,X_2} p_{X_1,X_2}(x_1,x_2)1(g_1^*(x_1)\neq g_2^*(x_2))\leq 2\alpha. 
	\end{align*}
Let functions $w_{1,y}(x_1) = 1(g_1^*(x_1)=y)\in L^2(\cX_1) $, and $w_{2,y}(x_2) = 1(g_2^*(x_2)=y)\in L^2(\cX_2) , \forall y \in [k]$. 
Then we have that:
\begin{align*}
\sum_{y}\langle \cT w_{2,y}, w_{1,y}\rangle \geq 1-2\alpha.
\end{align*}	
\end{claim}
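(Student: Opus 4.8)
\textbf{Proof proposal for Claim \ref{claim:near_block_diagonal}.}
The plan is to derive both statements by a union bound together with the tower property of conditional expectation; no heavy machinery is needed. For the first inequality, I would first note that $\int_{X_1,X_2} p_{X_1,X_2}(x_1,x_2)\,1(g_1^*(x_1)\neq g_2^*(x_2))$ equals $P\big(g_1^*(X_1)\neq g_2^*(X_2)\big)$ under the joint law $P_{X_1 X_2 Y}$ (the value does not involve $Y$). The key elementary observation is that if $g_1^*(X_1)=Y$ and $g_2^*(X_2)=Y$ both hold, then $g_1^*(X_1)=g_2^*(X_2)$; contrapositively, $\{g_1^*(X_1)\neq g_2^*(X_2)\}\subseteq\{g_1^*(X_1)\neq Y\}\cup\{g_2^*(X_2)\neq Y\}$. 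A union bound then gives $P\big(g_1^*(X_1)\neq g_2^*(X_2)\big)\le P_{X_1,Y}\big(g_1^*(X_1)\neq Y\big)+P_{X_2,Y}\big(g_2^*(X_2)\neq Y\big)\le \alpha+\alpha=2\alpha$, where the last step is exactly Assumption \ref{assump:small_bayes_error}.

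For the second statement, I would unpack the definition of $\cT$ and of the $L^2(X_1)$ inner product: for each $y\in[k]$, $\langle \cT w_{2,y}, w_{1,y}\rangle = \E_{X_1}\big[\E[w_{2,y}(X_2)\mid X_1]\,w_{1,y}(X_1)\big]$, which by the tower property equals $\E_{X_1,X_2}\big[w_{1,y}(X_1)\,w_{2,y}(X_2)\big]$. Since the $w_{1,\cdot}$ (resp.\ $w_{2,\cdot}$) are indicators of the preimage partition induced by $g_1^*$ (resp.\ $g_2^*$), we have the pointwise identity $\sum_{y=1}^k w_{1,y}(x_1)w_{2,y}(x_2)=1\big(g_1^*(x_1)=g_2^*(x_2)\big)$. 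Summing over $y$ therefore yields $\sum_y \langle \cT w_{2,y}, w_{1,y}\rangle = P\big(g_1^*(X_1)=g_2^*(X_2)\big)=1-P\big(g_1^*(X_1)\neq g_2^*(X_2)\big)\ge 1-2\alpha$, invoking the first part.

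There is no real obstacle; the only points requiring a moment of care are (i) verifying that the paper's sign/inner-product conventions for $\cT$ give the tower-property identity $\langle \cT g, h\rangle_{L^2(X_1)}=\E[g(X_2)h(X_1)]$ cleanly, and (ii) confirming that the level sets of $g_1^*$ and $g_2^*$ genuinely partition the respective sample spaces so the indicator sum collapses as stated. It is also worth remarking why this Claim is the crux of Corollary \ref{thm:PCR_symmetric_main}: combined with the fact that the constant function $1=\sum_y w_{i,y}$ is the top singular vector of $\cT$, the bound $\sum_y\langle \cT w_{2,y},w_{1,y}\rangle\ge 1-2\alpha$ forces the top-$k$ singular subspace of $\cT$ to essentially capture the clustering associated with $g_2^*$, which is what controls the $\|(\cT_k-\cL)g_y\|_{L^2(X_1)}$ and $\|\cL g_y-f_y^*\|_{L^2(X_1)}$ terms in Theorem \ref{thm:PCR_general} when one takes $g_y=1(g_2^*(\cdot)=y)$.
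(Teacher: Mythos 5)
Your argument is correct and follows the same route as the paper's proof: both bound $P(g_1^*(X_1)\neq g_2^*(X_2))$ via the pointwise inclusion $\{g_1^*(X_1)\neq g_2^*(X_2)\}\subseteq\{g_1^*(X_1)\neq Y\}\cup\{g_2^*(X_2)\neq Y\}$ together with Assumption \ref{assump:small_bayes_error}, and both reduce $\sum_y\langle \cT w_{2,y}, w_{1,y}\rangle$ to $P(g_1^*(X_1)=g_2^*(X_2))$ using the tower property and the fact that the level sets of $g_i^*$ partition $\cX_i$. You phrase it in probability notation while the paper writes explicit integrals against the kernel $T(x_1,x_2)=p_{X_1,X_2}/(p_{X_1}p_{X_2})$, but the content is identical.
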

\begin{proof}
\begin{align}
\notag 
& \int_{X_1,X_2} p_{X_1,X_2}(x_1,x_2) 1(g(x_1)\neq g(x_2)) \\
\notag =& \int_{X_1,X_2} \int_Y p_{X_1,x_2,Y}(x_1,x_2,y) 1(g_1^*(x_1)\neq g_2^*(x_2)) \\
\notag \leq &\int_{X_1,X_2} \int_Y p_{X_1,x_2,Y}(x_1,x_2,y) \left(1(g_1^*(x_1)\neq y) + 1(g_2^*(x_2) \neq y) \right)\\
\notag = & \int_{X_1,Y} p_{X_1,Y}(x_1,y) 1(g_1^*(x_1\neq y)) + \int_{X_2,Y} p_{X_2,Y}(x_2,y) 1(g_2^*(x_2)\neq y) \\
\label{eqn:block_diagonal}
= & P(g_1^*(x_1)\neq y) + P(g_2^*(x_2)\neq y)\leq 2\alpha. 
\end{align}
Meanwhile, 
\begin{align*}
& \sum_{y}\langle \cT w_{2,y}, w_{1,y}\rangle\\
= & \sum_{y} \int_{X_1}\left(\int_{X_2} T(x_1,x_2)w_{2,y}(x_2)p_{X_2}(x_2)dx_2\right) w_{1,y}(x_1)p_{X_1}(x_1)dx_1 \\
= & \sum_{y} \int_{X_1,X_2} 1(g_1^*(x_1)=y) 1(g_2^*(x_2 ) = y)  p_{X_1,X_2}(x_1, x_2)  \tag{since $T(x_1,x_2):=\frac{p_{X_1,X_2}(x_1,x_2)}{p_{X_1}(x_1)p_{X_2}(x_2)}$}  \\
= & \int_{g_1^*(X_1)=g_2^*(X_2)}p_{X_1,X_2}(x_1,x_2)  \\
= &  1-  \int_{X_1,X_2} p_{X_1,X_2}(x_1,x_2) 1(g(x_1)\neq g(x_2))  \\
  \geq & 1-2\alpha. \tag{from Ineq. \eqref{eqn:block_diagonal}}
\end{align*}		
\end{proof}

\begin{claim}
	\label{claim:T_norm1} 
The top eigenvalue of $T$ is $1$. 
\end{claim}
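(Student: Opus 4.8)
The plan is to identify the ``top eigenvalue of $T$'' with the top singular value of the operator $\cT\colon L^2(X_2)\to L^2(X_1)$, equivalently the operator norm $\|\cT\|_{\op}$ (equivalently the top eigenvalue of the self-adjoint operator $\cT^*\cT$). I would establish the two matching bounds $\|\cT\|_{\op}\ge 1$ and $\|\cT\|_{\op}\le 1$ separately.

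For the lower bound I would exhibit an explicit singular pair: take the constant functions $u(x_1)\equiv 1\in L^2(X_1)$ and $v(x_2)\equiv 1\in L^2(X_2)$. These are unit vectors, since $\|u\|_{L^2(X_1)}^2=\int p_{X_1}(x_1)\,dx_1=1$ and likewise $\|v\|_{L^2(X_2)}=1$. Moreover $(\cT v)(x_1)=\E[v(X_2)\mid X_1=x_1]=\E[1\mid X_1=x_1]=1=u(x_1)$, so $\langle u,\cT v\rangle_{L^2(X_1)}=\|u\|_{L^2(X_1)}^2=1$, and by the variational characterization of singular values this forces $\sigma_1(\cT)\ge 1$. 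This also shows the top singular pair is attained by constants, recovering the remark preceding \eqref{eq:k-svd}.

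For the upper bound I would use the tower property together with the conditional Jensen inequality applied to $t\mapsto t^2$: for any $g\in L^2(X_2)$ with $\|g\|_{L^2(X_2)}=1$,
\[
\|\cT g\|_{L^2(X_1)}^2=\E_{X_1}\!\big[(\E[g(X_2)\mid X_1])^2\big]\le \E_{X_1}\!\big[\E[g(X_2)^2\mid X_1]\big]=\E[g(X_2)^2]=\|g\|_{L^2(X_2)}^2=1.
\]
Taking the supremum over unit $g$ gives $\|\cT\|_{\op}\le 1$. Combining the two bounds yields $\sigma_1(\cT)=\|\cT\|_{\op}=1$.

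I do not anticipate a real obstacle; the argument is essentially two lines once the right characterization is chosen. The only point needing care is the interpretation — since $\cT$ maps between the two distinct spaces $L^2(X_2)$ and $L^2(X_1)$, ``eigenvalue'' must be read as ``singular value'' (or as an eigenvalue of $\cT^*\cT$), and the computation above simultaneously covers all of these readings. One should also note that the constant function lies in the relevant $L^2$ spaces because $P_{X_1},P_{X_2}$ are probability measures, so no integrability issue arises.
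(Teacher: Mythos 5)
Your proof is correct and matches the paper's argument essentially verbatim: both establish the upper bound $\|\cT\|_{\op}\le 1$ via the conditional Jensen inequality and the lower bound by exhibiting the constant functions as a singular pair attaining value $1$. The only cosmetic difference is the order in which the two bounds are presented, and you are slightly more explicit than the paper about reading ``eigenvalue'' as ``singular value'' since $\cT$ maps between distinct $L^2$ spaces.
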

\begin{proof}
First we show that $\|\cT\|_{op}:=\max_{u\neq 0} \frac{\|\cT u\|_{L^2(X_1)}}{\|u\|_{L^2(X_2)} }\leq 1$. 
For any $ u \in L^2(R^d)$, we have that
\begin{align*}
\|\cT u\|^2 = & \|\E[u(X_2)|X_1]\|^2_{L^2(X_1)}  \\
 = & \int_{x_1}  \E[u(X_2)|X_1]^2  p_{X_1}(x_1)  d x_1\\
 \leq & \int_{x_1}  \E[u^2(X_2)|X_1]  p_{X_1}(x_1)  d x_1 \tag{Jensen's inequality that $\E^2[X]\leq \E[X^2]$} \\
 = & \E[u^2(X_2)] = \|u\|^2_{L^2(X_2)}. 
\end{align*}
Second, let $u(x_2)\equiv 1$ and $v(x_1)\equiv 1$, we have $\int_{x_1} T(x_1,x_2)u(x_2)dx_2 = 1 = v(x_1). $ Therefore we have $\| T u\|_{L^2(X_1)} =1$ for $u=1$ and $\|u\|_{L^2(X_2)}=1$. Therefore $\|T\|_{\op}= 1$. 
\end{proof}
\begin{lemma}
\label{lemma:Ew_small} 
Let $w_{1,y}, w_{2,y} ,\forall y\in [k]$ be the same from Lemma \ref{claim:near_block_diagonal}. Then we have: 
\begin{align*}
\sum_y \langle \cL  w_{2,y}, w_{1,y}\rangle \geq 1-2\alpha.  
\end{align*}
	Therefore $\sum_y \|\cL w_{2,y} - w_{1,y}\|^2 \leq 4\alpha. $
\end{lemma}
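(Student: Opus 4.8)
\textbf{Proof plan for Lemma~\ref{lemma:Ew_small}.}
The plan is to run the proof of Claim~\ref{claim:near_block_diagonal} again, but with the operator $\cL$ in place of $\cT$. The one structural observation that makes this work is that $\cL$ is "$\cT$ for the conditionally independent surrogate distribution": unfolding $(\cL g)(x_1)=\sum_{y'}P(Y=y'\mid X_1=x_1)\,\E[g(X_2)\mid Y=y']$, using Bayes' rule $P(Y=y'\mid X_1=x_1)\,p_{X_1}(x_1)=p_{X_1Y}(x_1,y')$ and the factorization $p_{X_1Y}(x_1,y')\,p_{X_2|Y}(x_2\mid y')=p_{X_1|Y}(x_1\mid y')\,p_{X_2|Y}(x_2\mid y')\,p(y')$, one gets
\[
\langle \cL w_{2,y},\,w_{1,y}\rangle=\int_{x_1,x_2} w_{1,y}(x_1)\,w_{2,y}(x_2)\,q(x_1,x_2)\,dx_1\,dx_2,\qquad q(x_1,x_2):=\sum_{y'}p_{X_1|Y}(x_1\mid y')\,p_{X_2|Y}(x_2\mid y')\,p(y').
\]
Here $q$ is a genuine joint density whose $X_1$-, $X_2$-, $(X_1,Y)$- and $(X_2,Y)$-marginals all agree with those of $p$, and under which $X_1\perp X_2\mid Y$.

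Next I would sum over $y$. Since $w_{1,y}(x_1)w_{2,y}(x_2)=1(g_1^*(x_1)=y)\,1(g_2^*(x_2)=y)$ and the indicators $\{w_{1,y}\}$, $\{w_{2,y}\}$ each partition their domains, $\sum_y w_{1,y}(x_1)w_{2,y}(x_2)=1(g_1^*(x_1)=g_2^*(x_2))$, so
\[
\sum_y\langle \cL w_{2,y},\,w_{1,y}\rangle=\int_{x_1,x_2}1(g_1^*(x_1)=g_2^*(x_2))\,q(x_1,x_2)=1-\int_{x_1,x_2}1(g_1^*(x_1)\neq g_2^*(x_2))\,q(x_1,x_2).
\]
The last integral is bounded by $2\alpha$ by exactly the union-bound computation of \eqref{eqn:block_diagonal}: reintroduce the latent $Y$ under $q$, use $1(g_1^*(x_1)\neq g_2^*(x_2))\le 1(g_1^*(x_1)\neq y')+1(g_2^*(x_2)\neq y')$, and collapse each term using the $(X_1,Y)$- resp. $(X_2,Y)$-marginal of $q$ to get $P(g_1^*(X_1)\neq Y)+P(g_2^*(X_2)\neq Y)\le 2\alpha$. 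This yields the first assertion $\sum_y\langle \cL w_{2,y},w_{1,y}\rangle\ge 1-2\alpha$.

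For the second assertion I would expand $\sum_y\|\cL w_{2,y}-w_{1,y}\|^2=\sum_y\|\cL w_{2,y}\|^2-2\sum_y\langle\cL w_{2,y},w_{1,y}\rangle+\sum_y\|w_{1,y}\|^2$. Because the $w_{1,y}$ are $0/1$ indicators of a partition, $\sum_y\|w_{1,y}\|^2_{L^2(X_1)}=\E\big[\sum_y w_{1,y}(X_1)\big]=1$, and likewise $\sum_y\|w_{2,y}\|^2_{L^2(X_2)}=1$. Since $\cL$ is a contraction on $L^2$ (same Jensen argument as in Claim~\ref{claim:T_norm1}, applied to the two conditional expectations $\cA$ and $\cB$ composing $\cL=\cB\circ\cA$, or equivalently because $\cL$ is $\cT$ for the joint $q$), $\sum_y\|\cL w_{2,y}\|^2\le\sum_y\|w_{2,y}\|^2=1$. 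Combining with the lower bound on the cross term gives $\sum_y\|\cL w_{2,y}-w_{1,y}\|^2\le 1-2(1-2\alpha)+1=4\alpha$.

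The proof is routine; the only spot demanding care is the first display — verifying that $\langle\cL w_{2,y},w_{1,y}\rangle$ is an integral against the surrogate $q$, and that $q$ shares the $(X_1,Y)$- and $(X_2,Y)$-marginals of $p$ so the union-bound step transfers verbatim. I do not anticipate any genuinely hard step.
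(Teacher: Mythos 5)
Your proposal follows essentially the same route as the paper's proof: you unfold $\cL$ into the conditionally-independent surrogate joint $q(x_1,x_2)=\sum_{y'}p_{X_1|Y}(x_1\mid y')p_{X_2|Y}(x_2\mid y')p(y')$ (the paper writes this same expression with index $h$), apply the union-bound argument to get the cross-term lower bound $1-2\alpha$, and then expand the squared-norm difference using $\sum_y\|\cL w_{2,y}\|^2\le 1$ and $\sum_y\|w_{1,y}\|^2=1$ to obtain $4\alpha$. The only difference is cosmetic: you make explicit the contraction argument for $\sum_y\|\cL w_{2,y}\|^2\le 1$ that the paper leaves implicit in its final inequality $\le 2-2(1-2\alpha)$.
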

\begin{proof}
\begin{align*}
& \sum_y \langle \cL  w_{2,y}, w_{1,y}\rangle\\
= & \sum_y   \sum_h \int_{x_1,x_2} p(x_1|h) p(x_2|h)p(h) 1(g_1^*(x_1)=y)1(g_2^*(x_2)=y) dx2dx_1\\
= & \sum_h \int_{x_1,x_2} p(x_1|h) p(x_2|h)p(h) 1(g_1^*(x_1)=g_2^*(x_2)) dx2dx_1\\
= & \sum_h \int_{x_1,x_2} p(x_1|h) p(x_2|h)p(h) (1- 1(g_1^*(x_1)\neq g_2^*(x_2)) )  dx2dx_1\\
= & \sum_h \int_{x_1,x_2} p(x_1|h) p(x_2|h)p(h) dx_2dx_1 - \sum_h \int_{x_1,x_2} p(x_1|h) p(x_2|h)p(h)1(g_1^*(x_1)\neq g_2^*(x_2)) dx_2dx_1\\
= & 1-\sum_h \int_{x_1,x_2} p(x_1|h) p(x_2|h)p(h)1(g_1^*(x_1)\neq g_2^*(x_2)) dx_2dx_1.
\end{align*}
\begin{align*} 
&\sum_h \int_{x_1,x_2} p(x_1|h) p(x_2|h)p(h)1(g_1^*(x_1)\neq g_2^*(x_2)) dx_2dx_1\\
\leq &  \sum_y \int_{x_1,x_2} p(x_1|y) p(x_2|y)p(y) (1(g_1^*(x_1)\neq y) + 1( g_2^*(x_2) \neq y) ) dx2dx_1\\
= & \sum_y \left(\int_{x_1} p(x_1|y) \int_{x_2} p(x_2,h) 1(g_2^*(x_2)\neq y) dx_1 +  \int_{x_2} p(x_2|y) \int_{x_1} p(x_1,h) 1(g_1^*(x_1)\neq y) dx_2 \right)\\
= & \sum_y( P_{X_1,Y}(g_1^*(x_1\neq y)) + P_{X_1,Y}(g_1^*(x_1\neq y))  )\\
 \leq & 2\alpha.  
\end{align*}
Therefore $\sum_y \langle \cL w_{2,y}, w_{1,y}\rangle \geq 1-2\alpha$. 
$\sum_y \|\cL w_{2,y}- w_{1,y}\|^2 = \sum_y (\|\cL w_{2,y}\|^2 + \|w_{1,y}\|^2  -2 \langle w_{1,y},\cL w_{2,y}\rangle) \leq 2-2(1-2\alpha)=4\alpha$.
\end{proof} 

\begin{lemma}
\label{lemma:T_k_small_perturbation}
Let $T_k(x_1,x_2)$ be the rank-$k$ approximation of  $T(x_1,x_2)$, i.e.,
$T_k(x_1,x_2) = \sum_{i=1}^k \sigma_i u_i(x_1) v_i(x_2)  $, where $u_i\in L^2(\cX_1), v_i\in L^2(\cX_2)$. 
Then with the same definition of $w_{1,y}$ and $w_{2,y}$ as Claim \ref{claim:near_block_diagonal}, we have that:
\begin{align*}
\sum_{y=1}^k \|\cT_k w_{2,y} - w_{1,y}\|^2  \leq \frac{16\alpha}{1-\lambda_{k+1}^2},
\end{align*}
where $\lambda_{k+1}$ is the ($k+1$)-th singular value of $\cT$, i.e., the $k$-th maximal correlation between $X_1$ and $X_2$
\end{lemma}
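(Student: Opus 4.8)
\textbf{Proof plan for Lemma~\ref{lemma:T_k_small_perturbation}.}
The plan is to relate the rank-$k$ operator $\cT_k$ to the "cluster indicator" functions $w_{1,y},w_{2,y}$ via the low-rank approximation operator $\cL$, which (being a sum over the $k$ values of $Y$) itself has rank at most $k$. First I would recall from Lemma~\ref{lemma:Ew_small} that $\sum_y \|\cL w_{2,y} - w_{1,y}\|^2 \le 4\alpha$; this already tells us the $w_{1,y}$'s are nearly in the range of an operator of rank $\le k$. The strategy is then a perturbation/best-approximation argument: $\cT_k$ is the \emph{best} rank-$k$ approximation of $\cT$ in operator norm (Eckart--Young in Hilbert space / Claim~\ref{lemma:error_of_PCA} in spirit), so the error $\|\cT - \cT_k\|_{\op} = \lambda_{k+1}$. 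I would split
\[
\cT_k w_{2,y} - w_{1,y} = (\cT_k - \cT) w_{2,y} + (\cT - \cL) w_{2,y} + (\cL w_{2,y} - w_{1,y}),
\]
but the middle term $\cE = \cT-\cL$ is not obviously small here (we did not assume $\tilde\epsilon_{\tCI}$ small in this Bayes-error corollary), so a naive triangle inequality on this decomposition will not suffice.

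Instead, the key step I expect to need is to work with the projections directly. Let $\Pi_k$ denote the projection in $L^2(X_1)$ onto the top-$k$ left singular subspace of $\cT$ (so $\cT_k = \Pi_k \cT$, and $\cT_k w_{2,y} = \Pi_k \cT w_{2,y}$). Then
\[
\cT_k w_{2,y} - w_{1,y} = \Pi_k(\cT w_{2,y} - w_{1,y}) - (I - \Pi_k) w_{1,y},
\]
so $\sum_y \|\cT_k w_{2,y} - w_{1,y}\|^2 \le 2\sum_y \|\cT w_{2,y} - w_{1,y}\|^2 + 2\sum_y \|(I-\Pi_k) w_{1,y}\|^2$. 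The first sum is controlled by Claim~\ref{claim:near_block_diagonal}: $\sum_y \langle \cT w_{2,y}, w_{1,y}\rangle \ge 1-2\alpha$ together with $\|\cT\|_{\op} = 1$ (Claim~\ref{claim:T_norm1}) and $\|w_{1,y}\|^2=P(g_1^*=y)$, $\|w_{2,y}\|^2 = P(g_2^*=y)$ summing to $1$, gives $\sum_y \|\cT w_{2,y} - w_{1,y}\|^2 \le 4\alpha$ by the same computation as in Lemma~\ref{lemma:Ew_small}. The second sum, $\sum_y \|(I-\Pi_k)w_{1,y}\|^2$, is where the spectral gap enters: I want to bound the mass of the $w_{1,y}$'s outside the top-$k$ singular directions of $\cT$.

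To control $\sum_y \|(I-\Pi_k)w_{1,y}\|^2$, the idea is that if a unit vector $u$ lies (almost) in the orthogonal complement of the top-$k$ left singular space, then $\|\cT^* u\|_{L^2(X_2)} \le \lambda_{k+1}\|u\|$, whereas the near-block-diagonal structure forces $\cT^* w_{1,y}$ to be close to $w_{2,y}$, which has large norm. Concretely, writing $a_y := (I-\Pi_k)w_{1,y}$ and using $\sum_y\langle \cT w_{2,y}, w_{1,y}\rangle \ge 1-2\alpha$, one decomposes $\langle \cT w_{2,y}, w_{1,y}\rangle = \langle \cT w_{2,y}, \Pi_k w_{1,y}\rangle + \langle \cT w_{2,y}, a_y\rangle$; the first piece is at most $\|\Pi_k w_{1,y}\|\cdot\|\cT w_{2,y}\| $ and combining over $y$ with Cauchy--Schwarz and $\|\cT\|_{\op}=1$ shows $\sum_y \|\Pi_k w_{1,y}\|^2$ must be close to $1$, hence $\sum_y \|a_y\|^2 = \sum_y\|w_{1,y}\|^2 - \sum_y\|\Pi_k w_{1,y}\|^2$ is small. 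The factor $\frac{1}{1-\lambda_{k+1}^2}$ arises because on the complement subspace the "cross-correlation" captured by $\cT$ is shrunk by $\lambda_{k+1}$, so the portion of the $1-2\alpha$ correlation budget that can be "hidden" in $a_y$ is limited; carefully tracking the quadratic inequality $(1-2\alpha) \le \|\Pi_k\text{-part}\| + \lambda_{k+1}\|a\text{-part}\|$ with the norm constraints yields $\sum_y \|a_y\|^2 \lesssim \alpha/(1-\lambda_{k+1}^2)$, and assembling the two bounds gives the claimed $\frac{16\alpha}{1-\lambda_{k+1}^2}$. The main obstacle I anticipate is making this last quadratic-optimization step rigorous: one must carefully set up the inequality relating $\sum_y\langle \cT w_{2,y}, w_{1,y}\rangle$, $\sum_y\|\Pi_k w_{1,y}\|^2$, $\sum_y\|a_y\|^2$ and the singular value $\lambda_{k+1}$ across all $y$ simultaneously (not one $y$ at a time), and extract the $1/(1-\lambda_{k+1}^2)$ factor with the right constant — a Cauchy--Schwarz / AM--GM balancing that is easy to do loosely but fiddly to do with constant $16$.
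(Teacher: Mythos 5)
Your overall decomposition $\cT_k w_{2,y} - w_{1,y} = \Pi_k(\cT w_{2,y} - w_{1,y}) - (I-\Pi_k)w_{1,y}$ is a genuinely different and in fact cleaner route than the paper's; since $\Pi_k$ and $I-\Pi_k$ are orthogonal projections this split is a Pythagorean identity (not just a $2a^2+2b^2$ inequality), and the first term is immediately $\le 4\alpha$. However, the step you propose for the second term does not close. You derive a linear inequality: after splitting $\langle \cT w_{2,y},w_{1,y}\rangle=\langle \cT w_{2,y},\Pi_k w_{1,y}\rangle+\langle (\cT-\cT_k) w_{2,y},(I-\Pi_k)w_{1,y}\rangle$ and applying Cauchy--Schwarz term-by-term, you get $1-2\alpha \le a+\lambda_{k+1} b$ with $a^2=\sum_y\|\Pi_k w_{1,y}\|^2$, $b^2=\sum_y\|(I-\Pi_k)w_{1,y}\|^2$, $a^2+b^2=1$. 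This constraint is too weak: already at $\alpha=0$ it permits $b$ as large as $\tfrac{2\lambda_{k+1}}{1+\lambda_{k+1}^2}$, which is $\Theta(1)$, whereas the lemma requires $b=0$ in that case. The mistake is splitting \emph{before} Cauchy--Schwarz; the linear inequality in $(a,b)$ cannot produce a bound on $b^2$ that vanishes with $\alpha$.

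The fix — and the heart of the matter, which the paper executes on the $X_2$ side — is to apply Cauchy--Schwarz \emph{first} to get a quadratic handle, then split orthogonally. From $\sum_y\langle w_{2,y},\cT^* w_{1,y}\rangle\ge 1-2\alpha$ and $\sum_y\|w_{2,y}\|^2=1$, Cauchy--Schwarz gives $\sum_y\|\cT^* w_{1,y}\|^2\ge(1-2\alpha)^2$. Now use the \emph{orthogonal} (Pythagorean) decomposition $\|\cT^* w_{1,y}\|^2 = \|\cT^*\Pi_k w_{1,y}\|^2+\|\cT^*(I-\Pi_k) w_{1,y}\|^2 \le \|\Pi_k w_{1,y}\|^2 + \lambda_{k+1}^2\|(I-\Pi_k)w_{1,y}\|^2$, sum over $y$, and substitute $a^2+b^2=1$ to obtain $(1-2\alpha)^2\le(1-\lambda_{k+1}^2)a^2+\lambda_{k+1}^2$, hence $b^2\le \tfrac{4\alpha(1-\alpha)}{1-\lambda_{k+1}^2}$. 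Combined with your Pythagorean decomposition this gives $\sum_y\|\cT_k w_{2,y}-w_{1,y}\|^2\le 4\alpha + \tfrac{4\alpha}{1-\lambda_{k+1}^2}\le \tfrac{8\alpha}{1-\lambda_{k+1}^2}$, which is better than the paper's constant $16$. The paper's proof instead splits $\|\cT w_{2,y}\|^2$ orthogonally on the $X_2$ side ($\cT w_{2,y}=\cT P_k w_{2,y}+\cT(I-P_k)w_{2,y}$) to bound $\sum_y\|(\cT-\cT_k)w_{2,y}\|^2$, then uses the triangle inequality at the square-root level, losing a factor of two. Both approaches need the same key mechanism — a Pythagorean identity turning the spectral gap into a quadratic constraint — and the precise point where your outline goes astray is in never invoking it.
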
 
\begin{proof}
	First, we have that $\sum_y \E[w_{2,y}^2(X_2)] = \sum_y P_{X_2}(g_2^*(X_2)=y) = 1. $

Second from Claim \ref{claim:T_norm1} we know that $\|T\|_{op}:=\max_{\|u\|=1}\|Tu\|=1$.  Also, as we defined that $T=L+E$ with $L$ of rank $k$ and  $\tilde\epsilon_{\tCI}:=\|E\| $, we have $|\lambda_{k+1}|\leq \epsilon_{\tCI}$. 

Write the full decomposition of $T$ as $T(x_1,x_2) = \sum_{i=1}^{\infty} \lambda_i u_i(x_1)v_i(x_2)$. 
We have that:
\begin{align*}
1-2\alpha \leq & \sum_y \langle \cT w_{2,y}, w_{1,y}\rangle\\
 \leq &  \sqrt{\sum_y \|\cT w_{2,y}\|^2} \sqrt{\sum_y \|w_{1,y}\|^2}.
\end{align*}
Therefore $ \sqrt{ \sum_y \|\cT w_{2,y}\|^2}\geq 1-2\alpha. $

Meanwhile,
\begin{align*}
\sum_y \|\cT w_{2,y} \|^2 = & \sum_y ( \| \cT_k w_{2,y} \|^2 + \|(\cT-\cT_k ) w_{2,y}\|^2 ) \\
= & \sum_y ( \| \cT_k P_{\cT_k}w_{2,y} \|^2 + \|(\cT-\cT_k )  P_{\cT_k}^\perp w_{2,y}\|^2 ) \\
\leq & \sum_y ( \|P_{\cT_k} w_{2,y}\|^2 + \lambda_{k+1}^2 (\|w_{2,y}\|^2- \| P_{\cT_k} w_{2,y} \|^2  )  \tag{since $\|\cT\|_{\op} = 1$ and $\|\cT-\cT_k\| = \lambda_{k+1}$} \\
= & (1-\lambda_{k+1}^2) (\sum_y \|P_{\cT_k}w_{2,y}\|^2) + \lambda_{k+1}^2 \tag{since $\sum_y \|w_{2,y}\|^2 =1 $.}
\end{align*}
Therefore $\sum_y \|P_{\cT_k}w_{2,y}\|^2 \geq \frac{(1-2\alpha)^2-\lambda_{k+1}^2}{1-\lambda_{k+1}^2}$ and 
\begin{align*}
\sum_y \|(\cT-\cT_k)w_{2,y}\|^2 \leq & \lambda_{k+1}^2(1-\sum_y \|P_{\cT_k}w_{2,y}\|^2)\\
\leq & \lambda_{k+1}^2(1-\frac{(1-2\alpha)^2-\lambda_{k+1}^2}{1-\lambda_{k+1}^2})\\
= & \frac{4\alpha(1-\alpha) \lambda_{k+1}^2}{1-\lambda_{k+1}^2}. 
\end{align*}
Finally, on one hand we have 
\begin{align*}
\sum_y \|\cT w_{2,y} -w_{1,y}\|^2
= & \sum_y \|\cT w_{2,y}\|^2 + \|w_{1,y}\|^2 -2\langle \cT w_{2,y}, w_{1,y}\rangle \\
\leq 2-2(1-2\alpha) = 4\alpha. 
\end{align*}
On the other hand we have:
\begin{align*}
\sqrt{\sum_y \|\cT_k w_{2,y}- w_{1,y} \|^2 } \leq & \sqrt{\sum_y \|\cT_k w_{2,y}- w_{1,y} \|^2 } + \sqrt{\sum_y\|(\cT-\cT_k) w_{2,y}\|^2} \\
\leq & 2\sqrt{\alpha} + \sqrt{\frac{4\alpha(1-\alpha)}{1-\lambda_{k+1}^2} }\\
\leq & \frac{4\sqrt{\alpha}}{\sqrt{1-\lambda_{k+1}^2}}. 
\end{align*}
Therefore $ \sum_y \|\cT_k w_{2,y}- w_{1,y} \|^2 \leq \frac{16\alpha}{1-\lambda_{k+1}^2}$. 
\end{proof}

\begin{proof}[Proof of Corollary \ref{coro:PCR_non-zero_maximal_correlation}]
This is the corollary from Theorem \ref{thm:PCR_general} by taking $g^*_i(y) = \cA^{\dagger} \circ 1(y=i)$ such that $\cL\circ g^*_i \equiv f^*_i,\forall i\in [k]$. 	
This is because $\cL = \cB \circ \cA,$ and $\cL \circ \cA^\dagger \circ 1(y=i) = \cB \circ \id = \E[Y=i|X_1] = f^*_y$. 

Therefore the second term is $0$ in Theorem \ref{thm:PCR_general} and it remains to prove that the first term is small. 

Notice 
\begin{align*}
& \E_{X_1} \|\bar \cE \circ g^*(X_1)\|^2 \\
 = & \|\bar \cE\circ g^* \|^2_{L^2(X_1)} \\
 \leq &  \|\bar \cE\|_{\op}^2 \|\cA^{\dagger}\|_{\op}^2 \sum_y \|1(Y=y)\|^2_{L^2(Y)}\\
 \lesssim & \tilde \epsilon^2_{\tCI}/\tilde\beta^2. 
\end{align*}
Therefore the approximation error is upper bounded by $\tilde\epsilon^2_{\tCI}/\tilde\beta^2$.

\end{proof}

\begin{proof}[Proof of Corollary \ref{thm:PCR_symmetric_main}]

With Theorem \ref{thm:PCR_general} and we take $g_y(x_2) = w_{2,y}(x_2) = 1(g_2^*(x_2)=y),\forall y\in [k]$ as in Lemma \ref{lemma:Ew_small}. We only need to upper bound
$$ \E_{X_1}\|f^*_y - \cL\circ w_{2,y}\|^2 + \|(\cL-\cT_k)\circ w_{2,y} \|^2. $$
Notice that
\begin{align*}
& \sum_y \E_{X_1} \|(\cL -\cT_k) w_{2,y}\|^2 \\
= & \sum_y \E_{X_1} \|(\cL\circ w_{2,y} - w_{1,y})+(w_{1,y} -\cT_k\circ w_{2,y}) \|^2\\
\leq & 2 \sum_y \E_{X_1} (\|\cL\circ w_{2,y} - w_{1,y}\|^2 +  \|(w_{1,y} -\cT_k\circ w_{2,y}) \|^2)\\
\leq & \frac{16\alpha}{1-\lambda_k^2} +  4\alpha \tag{from Lemma \ref{lemma:T_k_small_perturbation} and \ref{lemma:Ew_small}}.  
\end{align*}
Meanwhile, the other term is
\begin{align*}
& \sum_y \E_{X_1} \|f^*_y - \cL\circ w_{2,y}\|^2\\
\leq & 2 \sum_y \E_{X_1} \|f^*_y - w_{1,y}\|^2 + \|w_{1,y}- \cL\circ w_{2,y}\|^2 \\
\leq & 2 \sum_y \E_{X_1} \|f^*_y - w_{1,y}\|^2 + 8\alpha \tag{from Lemma \ref{lemma:Ew_small}}\\
= & 8\alpha + 2 \sum_y \int_{x_1}( p(y|x_1) - 1(g_1^*(x_1)=y ))^2 p_{X_1}(x_1) dx_1 \\
= & 8\alpha + 2 \sum_y \int_{x_1} p^2(y|x_1) p_{X_1}(x_1) + 1(g_1^*(x_1)=y )^2 p_{X_1}(x_1) -2 \cdot 1(g_1^*(x_1)=y ) p(y|x_1)p_{X_1}(x_1)  dx_1\\
\leq & 8\alpha + 2 \sum_y \int_{x_1} p(y|x_1) p_{X_1}(x_1) + 1(g_1^*(x_1)=y ) p_{X_1}(x_1) -2 \cdot 1(g_1^*(x_1)=y ) p(y|x_1)p_{X_1}(x_1)  dx_1 \tag{ since $p(y|x_1)\leq 1$}\\
= & 8\alpha + 2( 2 - 2 \sum_y 1(g_1^*(x_1)=y ) p(y|x_1)p_{X_1}(x_1)  dx_1 ) = 8\alpha  + 4P_{X_1,Y}(g_1^*(x_1)\neq y) \\
\leq & 12\alpha \tag{since Bayes error is bounded by $\alpha$.}
\end{align*}
Altogether we have the approximation error is upper bounded by $O(\frac{\alpha}{1-\lambda_k^2})$.

\end{proof}

\section{General Results and Comparison to \cite{tosh2020contrastive_1}}
\label{apx:general_tosh}

We now show a more general form of our results and also connect the multi-view redundancy assumption from \cite{tosh2020contrastive_1} to ours.

\subsection{General Results}
\label{apx:general}

We first note that all our results hold for a generalized version of Assumption~\ref{assumption:feature_map_approx_CI} and Definition~\ref{def-approx-CI} that we state below.
\begin{assumption}
	\label{assumption:feature_map_approx_CI_relaxed}
	Suppose $\bar{Y}$ with $|\bar{Y}|\le m$ is a discrete latent variable that satisfies
	\begin{enumerate}
		\item $\bar{Y}$ makes $X_1$ and $X_2$ approximately CI as in Definition~\ref{def-approx-CI}, i.e. $$\epsilon_{\tCI}^2:=\E_{X_1}\left[\|\E[X_2|X_1]-\E_{\bar Y}[\E[X_2|\bar Y]|X_1]\|^2\right]$$
		\item $\bar{Y}$ also makes $X_1$ and $Y$ approximately CI with $$\epsilon_{\bar{Y}}^2:=\E_{X_1}\left[\|\E[Y|X_1]-\E_{\bar Y}[\E[Y|\bar Y]|X_1]\|^2\right]$$
		\item $\bSigma_{\phi_{\bar y}X_2}\text{ is full column rank and } \|\bSigma_{Y\phi_{\bar y}}\bSigma_{X_2\phi_{\bar y}}^\dagger\|_2 = 1/\beta$, where $A^\dagger$ is pseudo-inverse, and  $\phi_{\bar y}$ is the one-hot embedding for $\bar Y$.
	\end{enumerate}
\end{assumption}

Note that our assumptions from the main paper are a special case of Assumption~\ref{assumption:feature_map_approx_CI_relaxed}, with $\epsilon_{\bar{Y}} = 0$ being satisfied automatically as $\bar{Y} = [Y,Z]$ is explicitly defined to contain $Y$ in it. Unlike Assumption~\ref{assumption:feature_map_approx_CI}, we do not need $Y$ to be a discrete variable, but just need $\bar{Y}$ to be discrete. We state the generalization of Theorem~\ref{thm:main_result_approximate_CI} below

\begin{theorem}
	\label{thm:main_result_approximate_CI_relaxed}
 For a fixed $\delta\in (0,1)$, under Assumptions \ref{assumption:feature_map_approx_CI_relaxed}, \ref{assumption:bounded_error_non_universal} for $\tilde{\psi}$ and $\psi^*$ and \ref{assumption:subgaussian_psi} for non-universal feature maps, if $n_1,n_2\gg \rho^4(d_2+\log 1/\delta)$, and we learn the pretext tasks such that:
$ \E\|\tilde \psi(X_1)-\psi^*(X_1)\|_F^2\leq  \epsilon^2_{\pre}.$
Then the generalization error for downstream task w.p. $1-\delta$ is: 
\begin{align}
 \label{eqn:main_result_approx_CI_relaxed}
\E_{X_1} \left[\|\E[Y|X_1] - \hat\bW^\top \tilde{\psi}(X_1) \|_2^2\right] \leq \tilde\cO\left(\sigma^2\frac{d_2}{n_2} + \frac{\epsilon^2_{\tCI}}{\beta^2} +\frac{\epsilon_{\pre}^2}{\beta^2 } + \epsilon_{\bar{Y}}^2\right)
\end{align} 
\end{theorem}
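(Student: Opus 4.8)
\textbf{Proof plan for Theorem~\ref{thm:main_result_approximate_CI_relaxed}.}
The plan is to mimic the proof of Theorem~\ref{thm:main_result_approximate_CI} (equivalently Theorem~\ref{thm:approx_CI_no_approx_error} combined with the approximation-error argument of Theorem~\ref{thm:CI_linear_sample_complexity}), with the one new wrinkle being that the target is $\E[Y|X_1]$ rather than $f^*_{\cH_1}(X_1) = \E^L[Y|\phi_1(X_1)]$, and that we no longer have $\bar Y = [Y,Z]$ so we must pay an extra term $\epsilon_{\bar Y}$ coming from the discrepancy between $\E[Y|X_1]$ and $\E_{\bar Y}[\E[Y|\bar Y]|X_1]$. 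First I would set up the same decomposition of the (finite-sample) optimal representation matrix $\Psi^* = \psi^*(\bX_1^{\down}) = \bL + \bE$, where $\bL = \phi(\bX_1^{\down})\cC^{-1}_{\phi_1\phi_1}\cC_{\phi_1\phi_{\bar y}}\cC_{\phi_{\bar y}}^{-1}\cC_{\phi_{\bar y}X_2}$ is the low-rank (rank $\le m$) ``signal'' part and $\bE = \phi(\bX_1^{\down})\cC^{-1}_{\phi_1\phi_1}\cC_{\phi_1 X_2|\phi_{\bar y}}$ is the ``noise'' part. By Assumption~\ref{assumption:feature_map_approx_CI_relaxed}(1) and Claim~\ref{claim:concentration_subgaussian_covariance}, $\tfrac1{n_2}\|\bE\|_F^2 \lesssim \epsilon_{\tCI}^2$.

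Next I would identify the right target direction. Define $g^*(X_1) := \E_{\bar Y}[\E[Y|\bar Y]|X_1]$; by Assumption~\ref{assumption:feature_map_approx_CI_relaxed}(3), $\bSigma_{\phi_{\bar y}X_2}$ full column rank gives a linear map $\bar\bW := \bSigma_{X_2\phi_{\bar y}}^\dagger\bSigma_{\phi_{\bar y}Y}$ with $\|\bar\bW\|_2 = 1/\beta$, and $\bL\bar\bW$ equals the sample version of $g^*$, i.e. $\mathrm{span}$ of the vector $g^*(\bX_1^{\down})$ lies in $\mathrm{span}(\bL)$. Then I would write $\bY = \bN + \E[Y|\bX_1^{\down}]$, and split $\E[Y|\bX_1^{\down}] = g^*(\bX_1^{\down}) + r(\bX_1^{\down})$ where $r := \E[Y|X_1] - g^*(X_1)$ is the new residual satisfying $\E\|r(X_1)\|^2 = \epsilon_{\bar Y}^2$ by Assumption~\ref{assumption:feature_map_approx_CI_relaxed}(2). (If $\cH = \cH_1$ is non-universal there is additionally the term $a(X_1) = \E[Y|X_1] - \E^L[Y|\phi_1(X_1)]$, handled exactly as in Theorem~\ref{thm:CI_linear_sample_complexity} using Assumption~\ref{assumption:bounded_error_non_universal}; I would absorb this into $r$ or carry it in parallel.) Using $\Psi = \Psi^* - \bE^{\pre}$ with $\tfrac1{\sqrt{n_2}}\|\bE^{\pre}\|_F \lesssim \epsilon_{\pre}$, the basic inequality $\|\bY - \Psi\hat\bW\|_F^2 \le \|\bY - \Psi\bar\bW\|_F^2$ and rearranging gives
\begin{align*}
\tfrac1{2n_2}\|\Psi(\bar\bW - \hat\bW)\|_F^2 \le \tfrac1{n_2}\langle \Psi(\bar\bW - \hat\bW),\ \bN + \bE^{\pre}\bar\bW + r(\bX_1^{\down}) + a(\bX_1^{\down})\rangle,
\end{align*}
and then I would bound each inner product as in the existing proofs: the $\bN$ term via $\|P_{[\cdots]}\bN\| \lesssim \sigma\sqrt{d_2(1+\log(d_2/\delta))}$ (Lemma~\ref{lemma:bound_projected_subgaussian}), the $\bE^{\pre}\bar\bW$ term via $\sqrt{n_2}\,\epsilon_{\pre}/\beta$, the $a(\bX_1^{\down})$ term via Assumption~\ref{assumption:bounded_error_non_universal} exactly as in Theorem~\ref{thm:CI_linear_sample_complexity} giving $\sqrt{n_2}\cdot\tfrac{\sqrt{d_2}}{n_2}\|\Psi(\bar\bW-\hat\bW)\|_F$, and the new $r(\bX_1^{\down})$ term simply via Cauchy–Schwarz: $\tfrac1{n_2}\langle\Psi(\bar\bW-\hat\bW), r(\bX_1^{\down})\rangle \le \tfrac1{\sqrt{n_2}}\|r(\bX_1^{\down})\|_F \cdot \tfrac1{\sqrt{n_2}}\|\Psi(\bar\bW-\hat\bW)\|_F \lesssim \epsilon_{\bar Y}\cdot\tfrac1{\sqrt{n_2}}\|\Psi(\bar\bW-\hat\bW)\|_F$ (after concentrating $\tfrac1{n_2}\|r(\bX_1^{\down})\|_F^2$ to $\epsilon_{\bar Y}^2$). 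Solving the resulting quadratic inequality for $\tfrac1{\sqrt{n_2}}\|\Psi\hat\bW - g^*(\bX_1^{\down})\|_F$ and using the triangle inequality to pass to $\E[Y|X_1]$ (picking up one more $\epsilon_{\bar Y}$) yields $\tfrac1{\sqrt{n_2}}\|\Psi\hat\bW - \E[Y|\bX_1^{\down}]\|_F \lesssim \sqrt{d_2(1+\log(d_2/\delta))/n_2}\cdot\sigma + (\epsilon_{\tCI}+\epsilon_{\pre})/\beta + \epsilon_{\bar Y}$.

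Finally I would concentrate $\tfrac1{n_2}\Psi^\top\Psi$ to $\E[\tilde\psi(X_1)\tilde\psi(X_1)^\top]$ via Claim~\ref{claim:concentration_subgaussian_covariance} (using Assumption~\ref{assumption:subgaussian_psi} and $n_2 \gg \rho^4(d_2 + \log 1/\delta)$) to transfer the empirical bound to the population excess risk $\E_{X_1}\|\E[Y|X_1] - \hat\bW^\top\tilde\psi(X_1)\|^2 \lesssim \tilde\cO(\sigma^2 d_2/n_2 + \epsilon_{\tCI}^2/\beta^2 + \epsilon_{\pre}^2/\beta^2 + \epsilon_{\bar Y}^2)$, which is the claimed bound. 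The main obstacle — really the only genuinely new point relative to Theorem~\ref{thm:main_result_approximate_CI} — is correctly tracking the extra residual $r = \E[Y|X_1] - \E_{\bar Y}[\E[Y|\bar Y]|X_1]$: verifying that $\bL\bar\bW$ still captures $g^*(\bX_1^{\down})$ rather than $\E[Y|\bX_1^{\down}]$ (so that $\mathrm{span}(g^*(\bX_1^{\down}))\subseteq\mathrm{span}(\bL)$ but not necessarily $\mathrm{span}(\E[Y|\bX_1^{\down}])$), and checking that $r$ is mean-zero-free but still controllable — it is not orthogonal to $\Psi$ in general, so it genuinely contributes an irreducible $\epsilon_{\bar Y}^2$ term rather than being projected away like $\bN$. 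Everything else is a direct transcription of the existing arguments.
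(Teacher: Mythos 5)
Your proposal is correct and follows essentially the same route as the paper. The paper's own ``proof'' of Theorem~\ref{thm:main_result_approximate_CI_relaxed} is only a two-sentence sketch: it says the argument is identical to that of Theorem~\ref{thm:main_result_approximate_CI} except that $\bL\bar\bW$ now recovers (the sample version of) $g^*(X_1)=\E_{\bar Y}[\E[Y|\bar Y]|X_1]$ rather than $\E[Y|X_1]$, and the mismatch $\E[Y|X_1]-g^*(X_1)$ contributes the extra $\epsilon_{\bar Y}^2$. You have correctly identified this as the only genuinely new ingredient, verified that $\bL\bar\bW$ still captures $g^*(\bX_1^{\down})$, introduced the residual $r=\E[Y|X_1]-g^*$, and noted (rightly) that $r$ is not projected away like the noise $\bN$ but contributes an irreducible term; bounding the corresponding inner product by Cauchy--Schwarz and concentrating $\tfrac1{n_2}\|r(\bX_1^{\down})\|_F^2$ to $\epsilon_{\bar Y}^2$ is exactly what is needed to close the quadratic inequality. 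Your parenthetical about the non-universal $a(X_1)$ term is also a reasonable handling of a point the paper glosses over. This is a faithful and complete filling-in of the paper's sketch rather than an alternative argument.
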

The result is pretty much the same as Theorem~\ref{thm:main_result_approximate_CI}, except for an additional term of $\epsilon_{\bar{Y}}^2$.
The proof is also very similar, the difference being that $\E[\E[Y|\bar{Y}] | X_1]$ can now be expressed as a linear function of $\psi^*$ instead of $\E[Y | X_1]$, and the additional error incurred during to the mismatch between $\E[Y | X_1]$ and $\E[\E[Y|\bar{Y}] | X_1]$ that is $\epsilon_{\bar{Y}}^2$ will be incurred.

\subsection{Comparison to \cite{tosh2020contrastive_1}}
\label{apx:tosh}

We show guarantees for our algorithm under the assumption from \cite{tosh2020contrastive_1} in the following special case that satisfies: (1) $X_1$ and $X_2$ are {\em exactly} CI given $\bar{Y}$ (thus $\epsilon_{\tCI}=0$), (2) the variation in the target $Y$ is small given $X_1$ and $X_2$.
The assumption from \cite{tosh2020contrastive_1}, in our setting, is equivalent to saying that $\epsilon_{X_1}$ and $\epsilon_{X_2}$ are small, where
\begin{align*}
	\epsilon^2_{X_i} = \E\left[\|\E[Y|X_i] - \E[Y|X_1,X_2]\|^2\right], ~~ i\in\{1,2\}
\end{align*}
A similar assumption of multi-view redundancy also appears in \cite{tsai2020demystifying}; however they state it in terms of information-theoretic quantities instead.
We will show that these assumptions are also almost sufficient to show results in our setting.
In particular we show that if $Y|X_1,X_2$ is almost deterministic (which makes sense for a many regression tasks) and if $\epsilon^2_{X_2}$ is small, then $\epsilon_{\bar{Y}}$ defined in the previous subsection will be small and thus we have meaningful guarantees.

\begin{lemma}
\label{lem:tosh_is_useful}
	Let $\sigma^2_{Y}= \Var[Y|X_1,X_2]$ be the variance of $Y$. $\bar{Y}$ is as defined in Assumption~\ref{assumption:feature_map_approx_CI_relaxed} with the extra condition that $X_1$ and $X_2$ are exactly CI given $\bar{Y}$. Then we have
	\begin{align*}
		\epsilon_{\bar{Y}} \le \sqrt{2}(\sigma_{Y} + \epsilon_{X_2})
	\end{align*}
\end{lemma}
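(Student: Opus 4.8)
\textbf{Proof proposal for Lemma~\ref{lem:tosh_is_useful}.}

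The plan is to bound $\epsilon_{\bar Y}^2 = \E_{X_1}\big[\|\E[Y|X_1] - \E_{\bar Y}[\E[Y|\bar Y]|X_1]\|^2\big]$ by inserting an appropriate intermediate quantity and applying the triangle inequality in $L^2$, together with the tower property of conditional expectation. The natural intermediate quantity is $\E_{X_1}\big[\E_{\bar Y}[\E[Y|X_2,\bar Y]|X_1]\big]$-type expressions; but the cleaner route is to go through $\E[Y|X_2]$. Concretely, first I would write $\E_{\bar Y}[\E[Y|\bar Y]|X_1] = \E[\,\E[Y|\bar Y]\,|X_1]$ and compare $\E[Y|\bar Y]$ with $\E[Y|X_2]$ (pushed through the conditioning on $X_1$), and compare $\E[Y|X_2]$ pushed through $X_1$ with $\E[Y|X_1]$ itself.

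Here is the step order. \textbf{Step 1:} Use exact CI of $X_1$ and $X_2$ given $\bar Y$ to argue $\E[\E[Y|X_2,\bar Y]\mid X_1] = \E[\E[Y|X_2,\bar Y]\mid \bar Y, X_1]$-style manipulations; more directly, since $X_1 \perp X_2 \mid \bar Y$, for any function $h$ we have $\E[h(X_2)\mid X_1] = \E_{\bar Y}\big[\E[h(X_2)\mid \bar Y]\mid X_1\big]$ (this is exactly the identity used in Lemma~\ref{lemma:discrete_case_CI}). Apply this with $h(X_2) = \E[Y\mid X_2]$ to get $\E[\E[Y|X_2]\mid X_1] = \E_{\bar Y}\big[\E[\E[Y|X_2]\mid\bar Y]\mid X_1\big]$. \textbf{Step 2:} Bound the gap between $\E_{\bar Y}[\E[Y|\bar Y]|X_1]$ and $\E_{\bar Y}[\E[\E[Y|X_2]|\bar Y]|X_1]$ by the gap between $\E[Y|\bar Y]$ and $\E[\E[Y|X_2]|\bar Y]$ in $L^2$, using that conditional expectation $\E[\cdot|X_1]$ is a contraction in $L^2$; this latter gap is $\E[Y - \E[Y|X_2]\mid \bar Y]$ in $L^2$, again a contraction applied to $Y - \E[Y|X_2]$, whose $L^2$ norm I bound by $\sigma_Y + \epsilon_{X_2}$ (decompose $Y - \E[Y|X_2] = (Y - \E[Y|X_1,X_2]) + (\E[Y|X_1,X_2] - \E[Y|X_2])$, the first term having $L^2$-norm $\sigma_Y$ by definition of $\Var[Y|X_1,X_2]$ and the second having $L^2$-norm $\epsilon_{X_2}$ by definition, after noting $\E[Y|X_1,X_2]-\E[Y|X_2]$ is the relevant residual). \textbf{Step 3:} Bound the gap between $\E[Y|X_1]$ and $\E[\E[Y|X_2]|X_1]$. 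Write $\E[Y|X_1] - \E[\E[Y|X_2]|X_1] = \E[\,Y - \E[Y|X_2]\,\mid X_1]$, again a contraction applied to $Y - \E[Y|X_2]$, so this is also at most $\sigma_Y + \epsilon_{X_2}$ in $L^2(X_1)$. \textbf{Step 4:} Combine Steps 1–3 by the triangle inequality: $\epsilon_{\bar Y} \le \|\E[Y|X_1] - \E[\E[Y|X_2]|X_1]\|_{L^2} + \|\E[\E[Y|X_2]|X_1] - \E_{\bar Y}[\E[Y|\bar Y]|X_1]\|_{L^2} \le (\sigma_Y + \epsilon_{X_2}) + (\sigma_Y + \epsilon_{X_2})$, which is slightly worse than claimed; to recover the $\sqrt{2}$ factor I would instead combine the two squared terms: $\epsilon_{\bar Y}^2 \le 2\big[(\sigma_Y+\epsilon_{X_2})^2 + (\sigma_Y+\epsilon_{X_2})^2\big]$ — no, better: use that the two error terms share the same generating residual $R := Y - \E[Y|X_2]$, so both are of the form $\|\E[R|\cdot]\|_{L^2} \le \|R\|_{L^2} \le \sigma_Y + \epsilon_{X_2}$, and a single application of $\|a+b\|^2 \le 2\|a\|^2 + 2\|b\|^2$ gives $\epsilon_{\bar Y}^2 \le 2\|R\|_{L^2}^2 + 2\|R\|_{L^2}^2$; tracking constants carefully, one application of the inequality $\|a - c\|^2 \le 2\|a-b\|^2 + 2\|b-c\|^2$ with $a = \E[Y|X_1]$, $b = \E[\E[Y|X_2]|X_1]$, $c = \E_{\bar Y}[\E[Y|\bar Y]|X_1]$, together with $\|a-b\|,\|b-c\| \le \|R\|_{L^2}$ via contraction, yields $\epsilon_{\bar Y}^2 \le 4\|R\|_{L^2}^2$ hence $\epsilon_{\bar Y} \le 2\|R\|_{L^2} \le 2(\sigma_Y + \epsilon_{X_2})$; to get the sharper $\sqrt 2$ one observes $b - c = \E[R \mid X_1] - \text{(something)}$ telescopes more tightly, so I would be careful to route $b$ through $\E_{\bar Y}[\E[R|\bar Y]|X_1]$ which by Step 1 equals $\E[\E[Y|X_2]|X_1] - \E_{\bar Y}[\E[Y|\bar Y]|X_1]$ up to sign, making $a-b$ and $b-c$ orthogonal-ish; ultimately the bookkeeping gives the stated $\sqrt{2}(\sigma_Y + \epsilon_{X_2})$.

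The main obstacle I anticipate is \textbf{getting the constant exactly $\sqrt 2$ rather than $2$}: this requires setting up the telescoping so that the two residual contributions add in quadrature (Pythagorean-style) rather than via a crude triangle inequality. The key realization needed is that $\E[Y|X_1] - \E_{\bar Y}[\E[Y|\bar Y]|X_1]$ can be written as $\E\big[(Y - \E[Y|X_2]) \mid X_1\big] - \E_{\bar Y}\big[\E[(Y-\E[Y|X_2])|\bar Y]\mid X_1\big]$ (using Step 1's CI identity to cancel the $\E[Y|X_2]$ pieces), i.e. it is $(\mathrm{Id} - \Pi_{\bar Y})$ applied to $\E[R|X_1]$ in an appropriate sense, where $R = Y - \E[Y|X_2]$; then $\|\E[Y|X_1] - \E_{\bar Y}[\E[Y|\bar Y]|X_1]\|_{L^2} \le 2\|\E[R|X_1]\|_{L^2}$ at worst, but a cleaner orthogonal decomposition of $R$ into the $\sigma_Y$-part (orthogonal to everything $X_1,X_2$-measurable) and the $\epsilon_{X_2}$-part lets each propagate independently, and squaring gives $\epsilon_{\bar Y}^2 \le 2(\sigma_Y^2 + \epsilon_{X_2}^2) \le 2(\sigma_Y + \epsilon_{X_2})^2$. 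I would carry out this orthogonal split explicitly as the crux of the argument.
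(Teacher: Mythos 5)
Your approach is genuinely different from the paper's and, once tidied up, it actually works \emph{better} than what the lemma states. The paper's proof symmetrizes over an independent copy $X'_1$ given $\bar Y$, pushes through $X_2$ via the CI identity, splits the integrand into four residuals $Z_1,\dots,Z_4$, and bounds their sum in $L^2$. Your route is to isolate the single residual $R := Y - \E[Y|X_2]$ and use the CI identity once. The key identity you arrive at is correct: since $X_1 \perp X_2 \mid \bar Y$ gives $\E[\E[Y|X_2]\mid X_1] = \E_{\bar Y}[\E[\E[Y|X_2]|\bar Y]\mid X_1]$, the $\E[Y|X_2]$ pieces cancel and
\[
\E[Y|X_1] - \E_{\bar Y}[\E[Y|\bar Y]\mid X_1] \;=\; \E\bigl[R - \E[R|\bar Y]\;\big|\;X_1\bigr].
\]
From here your bookkeeping is circuitous: you first get a factor $2$, then propose an ``orthogonal split plus $\|a+b\|^2\le2\|a\|^2+2\|b\|^2$'' to recover $\sqrt2$. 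That does work, but the cleanest conclusion from the identity above is a single Jensen step: $\|\E[R - \E[R|\bar Y]\mid X_1]\|^2_{L^2} \le \E[(R - \E[R|\bar Y])^2] \le \E[R^2]$, since $\E[R|\bar Y]$ is the $L^2$-projection of $R$ onto $\sigma(\bar Y)$-measurable functions. Then $\E[R^2] = \sigma_Y^2 + \epsilon_{X_2}^2$ exactly, by orthogonality of $Y - \E[Y|X_1,X_2]$ (mean-zero given $X_1,X_2$) and $\E[Y|X_1,X_2] - \E[Y|X_2]$ ($\sigma(X_1,X_2)$-measurable). This gives $\epsilon_{\bar Y} \le \sqrt{\sigma_Y^2 + \epsilon_{X_2}^2} \le \sigma_Y + \epsilon_{X_2}$, strictly sharper than the stated $\sqrt2(\sigma_Y+\epsilon_{X_2})$ and sharper than what the paper's four-term symmetrization argument yields. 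So: the idea is right and the route is better than the paper's, but your write-up wanders through several weaker estimates before gesturing at the final constant; replace the triangle-inequality false starts with the one-step Jensen/Pythagoras argument above.
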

Plugging this into Theorem~\ref{thm:main_result_approximate_CI_relaxed} will give us the desired result. Note however that we did not even use the fact that $\epsilon_{X_1}$ is small. Using this part of the assumption, we can get an even stronger result that shows that even though our learned representation will only $X_1$, if will still predict $Y|X_1,X_2$ well.
\begin{corollary}
	\label{cor:main_result_approximate_CI_relaxed}
 For a fixed $\delta\in (0,1)$, under Assumptions \ref{assumption:feature_map_approx_CI_relaxed}, \ref{assumption:bounded_error_non_universal} for $\tilde{\psi}$ and $\psi^*$ and \ref{assumption:subgaussian_psi} for non-universal feature maps, if $n_1,n_2\gg \rho^4(d_2+\log 1/\delta)$, and we learn the pretext tasks such that:
$ \E\|\tilde \psi(X_1)-\psi^*(X_1)\|_F^2\leq  \epsilon^2_{\pre}.$
Then the generalization error for downstream task w.p. $1-\delta$ is: 
\begin{align*}
	\E_{X_1,X_2} \left[\|\E[Y|X_1,X_2] - \hat\bW^\top \tilde{\psi}(X_1) \|_2^2\right] \leq \tilde\cO\left(\sigma^2\frac{d_2}{n_2} +\frac{\epsilon_{\pre}^2}{\beta^2 } + \epsilon_{\bar{X_1}}^2 + \epsilon_{\bar{X_2}}^2 + \sigma_{Y}^2\right)
\end{align*} 
\end{corollary}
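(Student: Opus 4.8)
\textbf{Proof plan for Corollary~\ref{cor:main_result_approximate_CI_relaxed}.}
The plan is to combine the generalized excess-risk bound of Theorem~\ref{thm:main_result_approximate_CI_relaxed} with a triangle-inequality decomposition that replaces the target $\E[Y|X_1]$ by $\E[Y|X_1,X_2]$, paying only the multi-view redundancy terms $\epsilon_{X_1}^2$ and $\epsilon_{X_2}^2$ and the Bayes-noise term $\sigma_Y^2$. Throughout I would work under the special case stated just before Lemma~\ref{lem:tosh_is_useful}: $X_1\perp X_2\mid \bar Y$ exactly, so $\epsilon_{\tCI}=0$, and $\bar Y$ is the discrete latent variable from Assumption~\ref{assumption:feature_map_approx_CI_relaxed}.

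\textbf{Step 1: control $\epsilon_{\bar Y}$ via Lemma~\ref{lem:tosh_is_useful}.} First I would invoke Lemma~\ref{lem:tosh_is_useful}, which gives $\epsilon_{\bar Y}\le \sqrt2(\sigma_Y+\epsilon_{X_2})$, hence $\epsilon_{\bar Y}^2 \lesssim \sigma_Y^2 + \epsilon_{X_2}^2$. (The proof of that lemma, which I may assume, chains the observation $\E[Y|\bar Y]=\E[\E[Y|X_1,X_2]|\bar Y]$ up to the $\sigma_Y$-sized mismatch between $Y$ and $\E[Y|X_1,X_2]$, and then $\E[\E[Y|X_1,X_2]|X_1]$ is close to $\E[\E[Y|X_2]|X_1]$ — no, more carefully, close to $\E[Y|X_1]$ up to $\epsilon_{X_2}$-type error — together with $X_1\perp X_2\mid \bar Y$.) Plugging $\epsilon_{\tCI}=0$ and this bound on $\epsilon_{\bar Y}^2$ into Theorem~\ref{thm:main_result_approximate_CI_relaxed} yields
\[
\E_{X_1}\left[\|\E[Y|X_1]-\hat\bW^\top\tilde\psi(X_1)\|_2^2\right]\le \tilde\cO\!\left(\sigma^2\frac{d_2}{n_2}+\frac{\epsilon_{\pre}^2}{\beta^2}+\sigma_Y^2+\epsilon_{X_2}^2\right).
\]

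\textbf{Step 2: switch the target from $\E[Y|X_1]$ to $\E[Y|X_1,X_2]$.} For any representation-based predictor $h(X_1):=\hat\bW^\top\tilde\psi(X_1)$, write $\E[Y|X_1,X_2]-h(X_1)=\big(\E[Y|X_1,X_2]-\E[Y|X_1]\big)+\big(\E[Y|X_1]-h(X_1)\big)$ and use $\|a+b\|^2\le 2\|a\|^2+2\|b\|^2$ under $\E_{X_1,X_2}$. The first term has expected squared norm exactly $\epsilon_{X_1}^2$ by definition of $\epsilon_{X_1}$; the second term has $\E_{X_1,X_2}\|\cdot\|^2=\E_{X_1}\|\cdot\|^2$ since $h$ depends only on $X_1$, which is precisely the quantity bounded in Step~1. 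Combining the two steps gives
\[
\E_{X_1,X_2}\left[\|\E[Y|X_1,X_2]-\hat\bW^\top\tilde\psi(X_1)\|_2^2\right]\le \tilde\cO\!\left(\sigma^2\frac{d_2}{n_2}+\frac{\epsilon_{\pre}^2}{\beta^2}+\epsilon_{X_1}^2+\epsilon_{X_2}^2+\sigma_Y^2\right),
\]
which is the claimed bound (matching the displayed statement up to the notational identification $\epsilon_{\bar X_i}=\epsilon_{X_i}$). I would close by remarking that the $\epsilon_{\tCI}^2/\beta^2$ term vanished because of the exact-CI-given-$\bar Y$ assumption.

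\textbf{Main obstacle.} The routine parts are the triangle inequalities and citing Theorem~\ref{thm:main_result_approximate_CI_relaxed}; the only real content is Step~1, and in particular making sure Lemma~\ref{lem:tosh_is_useful} is applicable, i.e., that the $\bar Y$ guaranteed by Assumption~\ref{assumption:feature_map_approx_CI_relaxed} (with $|\bar Y|\le m$, full-rank $\bSigma_{\phi_{\bar y}X_2}$, and bounded $1/\beta$) can simultaneously be taken to make $X_1\perp X_2\mid\bar Y$ exactly — this is the hypothesis we impose in the corollary's special case, so it is assumed rather than derived. A secondary subtlety is bookkeeping the $\tilde\cO$ log factors and the sub-Gaussian/bounded-residual hypotheses (Assumptions~\ref{assumption:subgaussian_psi},~\ref{assumption:bounded_error_non_universal}) so that Theorem~\ref{thm:main_result_approximate_CI_relaxed} applies verbatim; these carry over unchanged since we have only post-processed its output.
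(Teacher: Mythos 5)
Your proof is correct and follows the route the paper intends: invoke Lemma~\ref{lem:tosh_is_useful} to bound $\epsilon_{\bar Y}^2\lesssim\sigma_Y^2+\epsilon_{X_2}^2$, feed it (with $\epsilon_{\tCI}=0$ from the exact-CI special case) into Theorem~\ref{thm:main_result_approximate_CI_relaxed}, and then split $\E[Y|X_1,X_2]-\hat\bW^\top\tilde\psi(X_1)$ through the intermediate target $\E[Y|X_1]$, paying exactly $\epsilon_{X_1}^2$. A minor refinement: the cross-term in your Step~2 vanishes exactly because $\E_{X_2}\bigl[\E[Y|X_1,X_2]-\E[Y|X_1]\mid X_1\bigr]=0$, so the Pythagorean identity used in the paper's proof of Theorem~\ref{thm:extension} gives an equality with no factor of~$2$; your $\|a+b\|^2\le 2\|a\|^2+2\|b\|^2$ is slightly looser but yields the same $\tilde\cO$ bound.
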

Thus we see that the assumption from \cite{tosh2020contrastive_1} is strong enough for us to be able to show stronger results than just our assumption.
We complete this section by proving Lemma~\ref{lem:tosh_is_useful}
\begin{proof}[Lemma~\ref{lem:tosh_is_useful}]
	We will also make use of the following lemma that is easily proved using Cauchy-Schwarz inequality
	\begin{lemma}
		For random variables $Z_1, \dots, Z_n$ for which $\E[\|Z_i\|^2]<\infty$ for every $i\in[n]$, we have
		\begin{align*}
			\E[\|Z_1 + \dots + Z_n\|^2] \le \left(\sqrt{\E[\|Z_1\|^2]} + \dots + \sqrt{\E[\|Z_n\|^2]}\right)^2
		\end{align*}
	\end{lemma}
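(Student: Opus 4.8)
The plan is to recognize the claimed inequality as (the square of) the triangle inequality for the $L^2$-norm on square-integrable random vectors, i.e. Minkowski's inequality, and to prove it from scratch by a double application of Cauchy--Schwarz followed by an induction on $n$. Write $\|Z\|_{L^2} := \sqrt{\E[\|Z\|^2]}$ for a random vector $Z$; the goal is exactly $\|Z_1+\dots+Z_n\|_{L^2} \le \|Z_1\|_{L^2} + \dots + \|Z_n\|_{L^2}$, after which squaring both sides yields the stated bound.

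The first step is the base case $n=2$. For random vectors $Z,W$ with $\E[\|Z\|^2],\E[\|W\|^2]<\infty$, expand
\begin{align*}
\E[\|Z+W\|^2] = \E[\|Z\|^2] + 2\,\E[\langle Z,W\rangle] + \E[\|W\|^2].
\end{align*}
Apply the Cauchy--Schwarz inequality pointwise (on $\R^d$) to bound $\langle Z,W\rangle \le \|Z\|\,\|W\|$, and then apply the Cauchy--Schwarz inequality over the probability space to bound $\E[\|Z\|\,\|W\|] \le \sqrt{\E[\|Z\|^2]}\,\sqrt{\E[\|W\|^2]} = \|Z\|_{L^2}\|W\|_{L^2}$. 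Substituting gives $\E[\|Z+W\|^2] \le \|Z\|_{L^2}^2 + 2\|Z\|_{L^2}\|W\|_{L^2} + \|W\|_{L^2}^2 = (\|Z\|_{L^2}+\|W\|_{L^2})^2$, hence $\|Z+W\|_{L^2}\le \|Z\|_{L^2}+\|W\|_{L^2}$ after taking square roots. (I should also note that finiteness of $\E[\|Z_1+\dots+Z_n\|^2]$ is automatic from this estimate, so the left-hand side is well-defined.)

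The second step is the induction: assuming $\|Z_1+\dots+Z_{n-1}\|_{L^2} \le \sum_{i=1}^{n-1}\|Z_i\|_{L^2}$, apply the $n=2$ case with $Z = Z_1+\dots+Z_{n-1}$ and $W = Z_n$ to get $\|Z_1+\dots+Z_n\|_{L^2} \le \|Z_1+\dots+Z_{n-1}\|_{L^2} + \|Z_n\|_{L^2} \le \sum_{i=1}^{n}\|Z_i\|_{L^2}$. Squaring the final inequality gives exactly $\E[\|Z_1+\dots+Z_n\|^2] \le \big(\sum_{i=1}^n \sqrt{\E[\|Z_i\|^2]}\big)^2$, as desired. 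There is no real obstacle here; the only point requiring any care is keeping the two uses of Cauchy--Schwarz straight (one in $\R^d$, one in $L^2(\P)$), and invoking the base case with the partial sum as a single random vector in the inductive step.
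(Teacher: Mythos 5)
Your proof is correct and follows essentially the route the paper intends: the paper gives no details beyond remarking that the lemma "is easily proved using Cauchy--Schwarz inequality," and your argument is exactly that double application of Cauchy--Schwarz (pointwise in $\R^d$, then over the probability space), packaged as the $n=2$ case of Minkowski plus induction rather than a one-shot expansion of all $n^2$ cross terms. Both write-ups are equivalent in substance.
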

	The proof follows from the following sequence of inequalities that uses Jensen's inequality, conditional independence of $X_1$ and $X_2$ and the above lemma. For simplicity we assume that $Y$ is a scalar random variable, the proof is the same for vector values $Y$, except squared values will replaced by norm squared values.
	\begin{align*}
		\epsilon_{\bar{Y}}^2 
		&= \E_{X_1}\left[(\E[Y|X_1] - \E_{\bar{Y}}[\E[Y|\bar{Y}]|X_1])^2\right] = \E_{X_1}\left[(\E_{\bar{Y}}[\E[Y|\bar{Y},X_1]|X_1] - \E_{\bar{Y}}[\E[Y|\bar{Y}]|X_1])^2\right]\\
		&\le \E_{X_1,\bar{Y}}\left[(\E[Y|X_1,\bar{Y}] - \E[Y|\bar{Y}])^2\right]\\
		&= \E_{\bar{Y}}\E_{X_1|\bar{Y}}\E_{X'_1|\bar{Y}}\left[(\E[Y|X_1,\bar{Y}] - \E[Y|X'_1,\bar{Y}])^2\right]\\
		&= \frac{1}{2} \E_{\bar{Y}}\E_{X_1|\bar{Y}}\E_{X'_1|\bar{Y}}\left[(\E_{X_2}[\E[Y|X_1,X_2,\bar{Y}]|\bar{Y}] - \E_{X_2}[\E[Y|X'_1,X_2,\bar{Y}]|\bar{Y}])^2\right]\\
		&\le \frac{1}{2}\E_{\bar{Y}}\E_{X_1|\bar{Y}}\E_{X'_1|\bar{Y}}\E_{X_2|\bar{Y}}\left[(\E[Y|X_1,X_2,\bar{Y}] - \E[Y|X'_1,X_2,\bar{Y}])^2\right]\\
		&= \frac{1}{2}\E\left[(Z_1 + Z_2 + Z_3 + Z_4)^2\right]
	\end{align*}
	where $Z_1 = \E[Y|X_1,X_2,\bar{Y}] - \E[Y|X_1,X_2]$, $Z_2 = - \E[Y|X'_1,X_2,\bar{Y}] + \E[Y|X'_1,X_2]$, $Z_3 = \E[Y|X_1,X_2] - \E[Y|X_2]$ and $Z_4 = - \E[Y|X'_1,X_2] + \E[Y|X_2]$.
	The first and third inequality follow from Jensen's inequality, second inequality follows from $\E[(X-\E[X])^2] = \frac{1}{2} \E[(X-X')^2]$, and the third equality follows from the CI assumption.

	We will bound $\E[Z_1^2] = \E[Z_2^2] \le \E[(\E[Y|X_1,X_2,\bar{Y}] - \E[Y|X_1,X_2])^2] \le \E[(Y - \E[Y|X_1,X_2])^2] = \sigma^2_{Y}$ again from Jensen's inequality.
	$Z_3$ and $Z_4$ can be handled by observing that $\E[Z_3^2] = \E[Z_4^2] = \E[(\E[Y|X_1,X_2] - \E[Y|X_2])^2] = \epsilon^2_{X_2}$.
	
	Thus using the above lemma, we get the desired upper bound on $\epsilon_{\bar{Y}}$.
\end{proof}


\section{Showing $\E[Y|X_1]\approx\E[Y|X_1,X_2]$}
\label{sec:y_given_x1_x2}

Our main result Theorem~\ref{thm:main_result_approximate_CI} shows that self-supervised learning can help approximate $\E[Y|X_1]$ as a linear function of the learned features $\tilde{\psi}$.
In practice, however, it is more common to predict the label $Y$ using the entire input $X=(X_1, X_2)$ rather than just $X_1$.
We show here that learning $\E[Y|X_1]$ is sufficient, under mild assumptions on the task being solved: the Bayes error of the classification task $(X_1, Y)$ is low.
We first upper bound the discrepancy between $\E[Y|X_1]$ and $\E[Y|X_1,X_2]$ based on the Bayes error rate.
\begin{lemma}
	\label{lemma:bayes}
Suppose $\|Y\|\le1$ and $k=|\cY|$. Denote the Bayes error for distribution $P_{X_1,Y}$ to be $\text{Bayes-error}(P_{X_1,Y}) = \E_{X_1} \left[1 - \max_{y} P(y|X_1)\right]$\footnote{We abuse notation and use $P(y|X_1)$ instead of $P_{X_1,Y}(y|X_1)$.}. Then we have
\begin{align*}
	\E_{X_1,X_2} \left[\|\E[Y|X_1] - \E[Y|X_1,X_2]\|^2\right] \le 2k~\text{Bayes-error}(P_{X_1,Y})
\end{align*}
\end{lemma}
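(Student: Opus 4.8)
The plan is to reduce the claim to a single pointwise inequality relating the Gini impurity of the posterior $P(\cdot\mid X_1)$ to its Bayes ($0$--$1$) error, after first using the tower property to rewrite the squared discrepancy as a conditional variance of $Y$ given the extra view $X_2$.

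First I would record the key decomposition. Since $Y\in\cY$ is one-hot encoded, both $p(X_1):=\E[Y\mid X_1]=(P(y\mid X_1))_{y\in\cY}$ and $q(X_1,X_2):=\E[Y\mid X_1,X_2]$ are probability vectors over $\cY$. The tower property gives $\E_{X_2\mid X_1}[q(X_1,X_2)]=p(X_1)$, so $q-p$ has conditional mean zero given $X_1$, and therefore
\begin{align*}
\E_{X_1,X_2}\!\left[\|p(X_1)-q(X_1,X_2)\|^2\right]
=\E_{X_1}\!\left[\E_{X_2\mid X_1}\|q(X_1,X_2)\|^2-\|p(X_1)\|^2\right]
\le \E_{X_1}\!\left[1-\|p(X_1)\|^2\right],
\end{align*}
where the inequality uses that $q(X_1,X_2)$ is a probability vector, so $\|q(X_1,X_2)\|^2\le \sum_{y}P(y\mid X_1,X_2)=1$. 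Hence it suffices to bound the expected Gini impurity of the posterior of $Y$ given $X_1$.

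Next I would bound the impurity pointwise by the misclassification probability: for any probability vector $p$ over $\cY$, writing $\epsilon:=1-\max_{y}p_y$ and $y^\star\in\arg\max_y p_y$,
\begin{align*}
1-\|p\|^2=\sum_{y}p_y(1-p_y)
= p_{y^\star}(1-p_{y^\star})+\sum_{y\ne y^\star}p_y(1-p_y)
\le \epsilon+\sum_{y\ne y^\star}p_y
= 2\epsilon ,
\end{align*}
using $p_{y^\star}(1-p_{y^\star})\le 1-p_{y^\star}=\epsilon$, $p_y(1-p_y)\le p_y$, and $\sum_{y\ne y^\star}p_y=1-p_{y^\star}=\epsilon$. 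Applying this with $p=p(X_1)$ and taking expectations over $X_1$ then yields
\begin{align*}
\E_{X_1,X_2}\!\left[\|\E[Y\mid X_1]-\E[Y\mid X_1,X_2]\|^2\right]
\le 2\,\E_{X_1}\!\left[1-\max_{y}P(y\mid X_1)\right]
= 2\,\text{Bayes-error}(P_{X_1,Y}),
\end{align*}
which is in particular at most $2k\,\text{Bayes-error}(P_{X_1,Y})$ for $k\ge 1$, giving the stated bound (indeed with room to spare).

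I do not expect a genuine analytic obstacle: the only step requiring care is the impurity--Bayes-error inequality $1-\|p\|^2\le 2(1-\max_y p_y)$, and it is precisely the place where the intuition "posterior concentrated on one class $\Rightarrow$ the extra view $X_2$ carries little additional information about $Y$" is used. If one wanted to drop the one-hot assumption and work with general $Y$ supported on $k$ values with $\|Y\|\le 1$, the argument still goes through after writing $\E[Y\mid\cdot]=\sum_y P(y\mid\cdot)\,v_y$ and bounding $\|q(X_1,X_2)-p(X_1)\|\le \|P(\cdot\mid X_1,X_2)-P(\cdot\mid X_1)\|_1\le 2\bigl(1-P(y^\star\mid X_1,X_2)\bigr)+2\epsilon$; squaring, using $\|\cdot\|_1\le 2$, and taking expectations again produces an absolute-constant multiple of the Bayes error, so the factor $k$ in the statement is just convenient slack.
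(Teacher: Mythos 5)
Your proof is correct, and it follows essentially the same route as the paper: tower property to write the expected squared discrepancy as an expected conditional variance, bound the second moment of $P(\cdot\mid X_1,X_2)$ by its first moment using $p^2\le p$, and then bound the resulting Gini impurity $1-\sum_y P(y\mid X_1)^2$ by twice the Bayes error. The one genuine difference is at the start: the paper first applies Cauchy--Schwarz to $\bigl\|\sum_y y\bigl(P(y\mid X_1)-P(y\mid X_1,X_2)\bigr)\bigr\|^2$, which is where the factor $|\cY|=k$ enters. You instead observe that with the paper's own convention that $Y$ is one-hot encoded, the vectors $\{e_y\}$ are orthonormal, so this squared norm \emph{equals} $\sum_y\bigl(P(y\mid X_1)-P(y\mid X_1,X_2)\bigr)^2$ with no Cauchy--Schwarz needed; this removes the factor $k$ and yields the strictly stronger bound $2\cdot\text{Bayes-error}$. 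Your closing remark about the general bounded-norm encoding is also sound (the $\ell_1$/total-variation route gives an $O(1)$ constant, e.g.\ $8$, still $k$-free), though it would no longer dominate $2k$ for $k\le 3$; this is a non-issue since the paper's own setup assumes one-hot encoding. In short: same idea, tighter constant, and a correct diagnosis that the $k$ in the stated bound is an artifact of an unnecessary Cauchy--Schwarz.
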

We will show below (for $\cH=\cH_u$) that if $P_{X_1,Y}$ has low Bayes error, then predicting $\E[Y|X_1]$ is as good as predicting $\E[Y|X_1,X_2]$ up to this small additive error.
\begin{theorem}
	\label{thm:extension}
Suppose $\epsilon_{\text{Bayes}} = \text{Bayes-error}(P_{X_1,Y})$ and that $\tilde{\psi}$ is $\epsilon^2_{\pre}$-optimal on the SSL task (as in Theorem~\ref{thm:main_result_approximate_CI}).
Under the same conditions as Theorem~\ref{thm:main_result_approximate_CI}, with probability $1-\delta$ we have
\begin{align*}
	\E_{X_1,X_2} \left[\|\E[Y|X_1,X_2] - \hat\bW^\top \tilde{\psi}(X_1) \|_2^2\right] \leq \tilde\cO\left(\sigma^2\frac{d_2}{n_2} + \frac{\epsilon^2_{\tCI}}{\beta^2} +\frac{\epsilon_{\pre}^2}{\beta^2 }\right) + 2\epsilon_{\text{Bayes}}
\end{align*} 
\end{theorem}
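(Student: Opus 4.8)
The plan is to reduce Theorem~\ref{thm:extension} to Theorem~\ref{thm:main_result_approximate_CI} via the triangle inequality, using Lemma~\ref{lemma:bayes} to control the gap between the two regression targets $\E[Y\mid X_1]$ and $\E[Y\mid X_1,X_2]$. Concretely, for any $\bW$ and input $(X_1,X_2)$ we write
\begin{align*}
\|\E[Y\mid X_1,X_2] - \bW^\top\tilde\psi(X_1)\|_2 \le \|\E[Y\mid X_1,X_2] - \E[Y\mid X_1]\|_2 + \|\E[Y\mid X_1] - \bW^\top\tilde\psi(X_1)\|_2,
\end{align*}
so that $\|a+b\|^2 \le 2\|a\|^2 + 2\|b\|^2$ (or the sharper $(1+t)\|a\|^2 + (1+1/t)\|b\|^2$, but a factor of $2$ suffices) gives, after taking $\E_{X_1,X_2}$,
\begin{align*}
\E_{X_1,X_2}\left[\|\E[Y\mid X_1,X_2] - \hat\bW^\top\tilde\psi(X_1)\|_2^2\right] \le 2\,\E_{X_1,X_2}\left[\|\E[Y\mid X_1,X_2] - \E[Y\mid X_1]\|_2^2\right] + 2\,\E_{X_1}\left[\|\E[Y\mid X_1] - \hat\bW^\top\tilde\psi(X_1)\|_2^2\right].
\end{align*}
The first term on the right is exactly what Lemma~\ref{lemma:bayes} bounds, namely by $2k\,\epsilon_{\text{Bayes}}$ (note $\E[Y\mid X_1] = \E_{X_2}[\E[Y\mid X_1,X_2]\mid X_1]$ makes the conditional-expectation tower valid here). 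The second term is the excess risk $\ER_{\tilde\psi}(\hat\bW)$ for predicting $\E[Y\mid X_1]$, which is precisely the quantity controlled by Theorem~\ref{thm:main_result_approximate_CI} under the stated conditions (Assumptions~\ref{assumption:feature_map_approx_CI}, \ref{assumption:subgaussian_psi}, \ref{assumption:bounded_error_non_universal}, $n_1,n_2 \gg \rho^4(d_2 + \log(1/\delta))$, and $\epsilon_{\pre}^2$-optimality on the pretext task), yielding $\tilde\cO\!\left(\sigma^2 d_2/n_2 + \epsilon_{\tCI}^2/\beta^2 + \epsilon_{\pre}^2/\beta^2\right)$ with probability $1-\delta$.

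Combining the two bounds gives
\begin{align*}
\E_{X_1,X_2}\left[\|\E[Y\mid X_1,X_2] - \hat\bW^\top\tilde\psi(X_1)\|_2^2\right] \le \tilde\cO\!\left(\sigma^2\frac{d_2}{n_2} + \frac{\epsilon_{\tCI}^2}{\beta^2} + \frac{\epsilon_{\pre}^2}{\beta^2}\right) + 4k\,\epsilon_{\text{Bayes}},
\end{align*}
which matches the claimed bound up to the constant in front of $\epsilon_{\text{Bayes}}$ (the $\tilde\cO$ absorbs constants, and one can track a factor of $k$ into the error term or fold it in depending on the normalization of $Y$; since the theorem states $2\epsilon_{\text{Bayes}}$, one either assumes $k$ is absorbed or uses the sharper split with a small $t$ so that the leading factor stays close to $1$). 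It remains to prove Lemma~\ref{lemma:bayes}: for each fixed $X_1$, $\E[Y\mid X_1,X_2] - \E[Y\mid X_1]$ has conditional mean zero, so $\E_{X_2}[\|\E[Y\mid X_1,X_2] - \E[Y\mid X_1]\|^2\mid X_1] = \Var(\E[Y\mid X_1,X_2]\mid X_1) \le \E[\|Y - \E[Y\mid X_1]\|^2 \mid X_1]$; for one-hot $Y$ with $\|Y\|\le 1$ this residual variance is at most $2(1 - \max_y P(y\mid X_1))$ (the optimal constant predictor for the squared loss restricted to the simplex pays at most twice the misclassification probability), and taking $\E_{X_1}$ gives $2k$ times — actually $2$ times, with the $k$ entering only if we bound coordinate-wise — the Bayes error; the precise constant is a short calculation I would carry out in the appendix.

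The main obstacle is not conceptual but bookkeeping: getting the constant in front of $\epsilon_{\text{Bayes}}$ to match the stated $2\epsilon_{\text{Bayes}}$ rather than $4k\,\epsilon_{\text{Bayes}}$. This requires (i) using the sharper Young's inequality $\|a+b\|^2 \le (1+t)\|b\|^2 + (1+1/t)\|a\|^2$ with $t$ chosen small so the $\tilde\cO$ term is inflated only by a $1+o(1)$ factor while the $\|a\|^2$ term picks up a $1/t$ that must be killed by a correspondingly tighter version of Lemma~\ref{lemma:bayes}, or (ii) simply reading $\epsilon_{\text{Bayes}}$ in the theorem statement as already incorporating the dimension/normalization factor — which is consistent with how $\sigma^2$ and the other error terms are treated loosely under $\tilde\cO$. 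Either way the argument is otherwise immediate, since both ingredients (Theorem~\ref{thm:main_result_approximate_CI} and Lemma~\ref{lemma:bayes}) are available, and the only real work is the elementary variance-vs-Bayes-error estimate for one-hot labels inside Lemma~\ref{lemma:bayes}.
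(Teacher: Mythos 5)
Your overall reduction to Theorem~\ref{thm:main_result_approximate_CI} plus Lemma~\ref{lemma:bayes} is the right plan, but you used the triangle inequality plus Young's inequality to split the square, which immediately costs you a factor of $2$ (or worse) that you then have to fight against. The paper instead uses the \emph{exact} Pythagorean decomposition. Since $\hat\bW^\top\tilde\psi(X_1)$ depends only on $X_1$ (and the training data, which is independent of the test sample), and since by the tower property $\E_{X_2}\!\left[\E[Y\mid X_1,X_2]-\E[Y\mid X_1]\,\big|\,X_1\right]=0$, the cross term in
\begin{align*}
\E_{X_1,X_2}\!\left[\left\|\bigl(\E[Y\mid X_1,X_2]-\E[Y\mid X_1]\bigr)+\bigl(\E[Y\mid X_1]-\hat\bW^\top\tilde\psi(X_1)\bigr)\right\|^2\right]
\end{align*}
vanishes after conditioning on $X_1$, yielding an equality between the target quantity and the sum of the two squared terms — no factors of $2$ at all. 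You actually wrote down the needed identity $\E[Y\mid X_1]=\E_{X_2}[\E[Y\mid X_1,X_2]\mid X_1]$ but only used it to justify that the lemma applies, not to kill the cross term; had you used it for orthogonality, all the bookkeeping in your last two paragraphs disappears.

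On the $k$ factor: your concern is well-founded, and in fact it points to an inconsistency in the paper itself. Lemma~\ref{lemma:bayes} as stated and proved gives the bound $\E_{X_1,X_2}\!\left[\|\E[Y\mid X_1]-\E[Y\mid X_1,X_2]\|^2\right]\le 2k\,\text{Bayes-error}(P_{X_1,Y})$, with the $k$ coming from Cauchy--Schwarz over the $k$ classes. The theorem statement writes $+2\epsilon_{\text{Bayes}}$ without the $k$, and the paper's one-line proof just invokes the lemma without addressing the discrepancy. So if you follow the paper's own lemma, the correct additive term is $2k\,\epsilon_{\text{Bayes}}$ — your $4k$ was off by the avoidable Young factor, but the $k$ itself is real unless one renormalizes $Y$ or tightens the Cauchy--Schwarz step. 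Your proposed workaround of reading $\epsilon_{\text{Bayes}}$ as already absorbing the dimension factor is not what the paper does, and the alternative with a small Young parameter $t$ does not help here because the exact decomposition makes that whole maneuver unnecessary. In short: replace the triangle-inequality split with the orthogonal decomposition, cite Lemma~\ref{lemma:bayes} as given (which produces $2k\,\epsilon_{\text{Bayes}}$), and note that the theorem as printed appears to have dropped the $k$.
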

\begin{proof}
	The law of total expectation gives $\E_{X_2}[\E[Y|X_1,X_2]|X_1] = \E[Y|X_1]$, thus it is easy to obtain the following decomposition
	\begin{align*}
		\E_{X_1,X_2} \left[\|\E[Y|X_1,X_2] - \hat\bW^\top \tilde{\psi}(X_1) \|_2^2\right] = 
		&\E_{X_1} \left[\|\E[Y|X_1] - \hat\bW^\top \tilde{\psi}(X_1) \|_2^2\right]\\
		& + \E_{X_1,X_2} \left[\|\E[Y|X_1] - \E[Y|X_1,X_2] \|_2^2\right]
	\end{align*}
	The first term can be upper bounded using Theorem~\ref{thm:main_result_approximate_CI}: $\E_{X_1} \left[\|\E[Y|X_1] - \hat\bW^\top \tilde{\psi}(X_1) \|_2^2\right] = \ER_{\tilde \psi}(\hat \bW) \leq \tilde\cO\left(\sigma^2\frac{d_2}{n_2} + \frac{\epsilon^2_{\tCI}}{\beta^2} +\frac{\epsilon_{\pre}^2}{\beta^2 }\right)$.
	The second term is upper bounded by $2\epsilon_{\text{Bayes}}$ by invoking Lemma~\ref{lemma:bayes}, and this completes the proof
\end{proof}

\begin{proof}[Proof of Lemma~\ref{lemma:bayes}]
	Notice the following inequality
	\begin{align*}
		\E_{X_1,X_2} &\left[\|\E[Y|X_1] - \E[Y|X_1,X_2]\|^2\right]
		= \E_{X_1,X_2} \left[\left\|\sum_{y\in\cY} y\left(P(y|X_1) - P(y|X_1,X_2)\right)\right\|^2\right]\\
		&\le |\cY|(\max_{y}\left\|y\right\|^2)\E_{X_1,X_2} \left[\sum_{y}\left(P(y|X_1) - P(y|X_1,X_2)\right)^2\right]\\
		&\le k\E_{X_1} \left[\E_{X_2}\left[\sum_{y}\left(P(y|X_1) - P(y| X_1,X_2)\right)^2\large\mid X_1\right]\right]
	\end{align*}
	where the first inequality follows from Cauchy-Schwartz and second inequality follows from $\|Y\|\le1$.
	Thus the problem reduces to bounding the inner expectation for every $X_1$.
	We first note that for every $X_1,y$, we have $P(y|X_1) = \E_{X_2} [P(y|X_1,X_2)|X_1]$ from the law of total expectation.
	This gives
	\begin{align*}
		\E_{X_2}&\left[\sum_{y}\left(P(y|X_1) - P(y| X_1,X_2)\right)^2\large\mid X_1\right]
		= \sum_{y} \E_{X_2}\left[P(y|X_1,X_2)^2 | X_1\right] - P(y|X_1)^2\\
		&\le \sum_{y} \E_{X_2}\left[P(y|X_1,X_2) | X_1\right] - P(y|X_1)^2 = \E_{X_2}\left[\sum_{y} P(y|X_1,X_2) | X_1\right]  - \sum_{y} P(y|X_1)^2\\
		&= 1 - \sum_{y} P(y|X_1)^2 \le 1 - \max_{y} P(y|X_1)^2 \le 2(1 - \max_{y} P(y|X_1))\\
	\end{align*}
	where the first inequality follows because $P(y|X_1,X_2)\in[0,1]$ and second follows trivially and third follows from $1-x^2 \le 2(1-x)$ for $x\in[0,1]$.
	Combining everything, we get $\E_{X_1,X_2} \left[\|\E[Y|X_1] - \E[Y|X_1,X_2]\|^2\right] \le 2k\E_{X_1}\left[1 - \max_{y} P(y|X_1)\right] = 2k~\text{Bayes-error}(P_{X_1,Y})$, thus proving the result.
\end{proof}

\section{Theoretical analysis for classification tasks}
\label{sec:classification}

\subsection{Classification tasks}\label{subsec:clf_tasks}
We now consider the benefit of learning $\psi$ from a class $\cH_1$ on linear classification task for label set $\cY=[k]$.
The performance of a classifier is measured using the standard logistic loss
\begin{definition}
	For a task with $\cY=[k]$, classification loss for a predictor $f:\cX_1\rightarrow\R^k$ is
	\begin{align*}
	\ell_{\text{clf}}(f) = \E [\ell_{\text{log}}(f(X_1), Y)]\text{ , where } \ell_{\text{log}}(\hat{y}, y)=\left[-\log\left(\frac{e^{\hat{y}_y}}{\sum_{y'}e^{\hat{y}_{y'}}}\right)\right]
	\end{align*}
	The loss for representation $\psi:\cX_1\rightarrow\R^{d_1}$ and linear classifier $\bW\in\R^{k\times d_1}$ is denoted by $\ell_{\text{clf}}(\bW\psi)$.
\end{definition}
We note that the function $\ell_{\text{log}}$ is 1-Lipschitz in the first argument.
The result will also hold for the hinge loss $\ell_{\text{hinge}}(\hat{y}, y) = (1 - \hat{y}_y + \max_{y'\neq y}\hat{y}_{y'})_+$ which is also 1-Lipschitz, instead of $\ell_{\text{log}}$.

We assume that the optimal regressor $f^*_{\cH_1}$ for one-hot encoding also does well on linear classification.
\begin{assumption}\label{assmp:onehot_is_good}
	The best regressor for 1-hot encodings in $\cH_1$ does well on classification, i.e. $\ell_{\text{clf}}(\gamma f^*_{\cH_1})\le \epsilon_{\text{one-hot}}$ is small for some scalar $\gamma$.
\end{assumption}
\begin{remark}
	Note that if $\cH_1$ is universal, then $f^*_{\cH_1}(\bx_1) = \E[Y|X_1=\bx_1]$ and we know that $f^*_{\cH_1}$ is the Bayes-optimal predictor for binary classification.
	In general one can potentially predict the label by looking at $\argmax_{i\in[k]} f^*_{\cH_1}(\bx_1)_i$.
	The scalar $\gamma$ captures the margin in the predictor $f^*_{\cH_1}$.
\end{remark}

We now show that using the classifier $\hat \bW$ obtained from linear regression on one-hot encoding with learned representations $\tilde{\psi}$ will also be good on linear classification.
The proof is in Section~\ref{sec:classification}
\begin{theorem}
	\label{thm:main_classification_approximate_CI}
	For a fixed $\delta\in (0,1)$, under the same setting as Theorem \ref{thm:main_result_approximate_CI} and Assumption \ref{assmp:onehot_is_good}, we have:
	\begin{align*}
	\ell_{\text{clf}}\left(\gamma\hat \bW \tilde{\psi}\right) \le \tilde{\cO}\left(  \gamma \sqrt{\sigma^2\frac{d_2}{n_2} + \frac{\epsilon^2}{\beta^2} +\frac{\epsilon_{\pre}^2}{\beta^2 }}\right) + \epsilon_{\text{one-hot}},
	\end{align*}
	with probability $1-\delta$. 
\end{theorem}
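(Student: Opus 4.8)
\textbf{Proof proposal for Theorem~\ref{thm:main_classification_approximate_CI}.}

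The plan is to reduce the classification loss bound to the regression bound already established in Theorem~\ref{thm:main_result_approximate_CI} by exploiting the $1$-Lipschitz property of $\ell_{\text{log}}$ (or $\ell_{\text{hinge}}$) in its first argument. The key object to control is the discrepancy between the learned predictor $\gamma\hat\bW\tilde\psi$ and the scaled optimal regressor $\gamma f^*_{\cH_1}$, measured in a way compatible with the loss. First I would write, for any input $\bx_1$, the pointwise Lipschitz bound $|\ell_{\text{log}}(\gamma\hat\bW^\top\tilde\psi(\bx_1), y) - \ell_{\text{log}}(\gamma f^*_{\cH_1}(\bx_1), y)| \le \gamma \|\hat\bW^\top\tilde\psi(\bx_1) - f^*_{\cH_1}(\bx_1)\|$, using that $\ell_{\text{log}}(\cdot,y)$ is $1$-Lipschitz with respect to the $\ell_2$ norm of the logit vector (this holds for the multiclass softmax cross-entropy; similarly for the hinge loss). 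Taking expectation over $(X_1,Y)$ and applying Jensen's inequality gives
\begin{align*}
\ell_{\text{clf}}(\gamma\hat\bW\tilde\psi) \le \ell_{\text{clf}}(\gamma f^*_{\cH_1}) + \gamma\, \E_{X_1}\big[\|\hat\bW^\top\tilde\psi(X_1) - f^*_{\cH_1}(X_1)\|\big] \le \ell_{\text{clf}}(\gamma f^*_{\cH_1}) + \gamma\sqrt{\ER_{\tilde\psi}(\hat\bW)},
\end{align*}
where the last step is again Jensen ($\E[\|Z\|]\le\sqrt{\E\|Z\|^2}$) and the definition of $\ER_{\tilde\psi}(\hat\bW)$.

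Next I would invoke Assumption~\ref{assmp:onehot_is_good} to bound the first term by $\epsilon_{\text{one-hot}}$, and invoke Theorem~\ref{thm:main_result_approximate_CI} (applicable since we are in exactly the same setting, so all its hypotheses — the ACI assumption, subgaussianity of $\tilde\psi,\psi^*$, the bounded pretext approximation error for non-universal feature maps, $n_1,n_2\gg\rho^4(d_2+\log1/\delta)$, and $\E\|\tilde\psi-\psi^*\|^2\le\epsilon_{\pre}^2$) to bound $\ER_{\tilde\psi}(\hat\bW)\le\tilde\cO\big(\sigma^2\frac{d_2}{n_2}+\frac{\epsilon_{\tCI}^2}{\beta^2}+\frac{\epsilon_{\pre}^2}{\beta^2}\big)$ with probability $1-\delta$. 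Substituting yields
\begin{align*}
\ell_{\text{clf}}(\gamma\hat\bW\tilde\psi) \le \gamma\sqrt{\tilde\cO\Big(\sigma^2\tfrac{d_2}{n_2}+\tfrac{\epsilon^2}{\beta^2}+\tfrac{\epsilon_{\pre}^2}{\beta^2}\Big)} + \epsilon_{\text{one-hot}},
\end{align*}
which is the claimed bound (the $\tilde\cO$ inside the square root can be pulled out since $\sqrt{\tilde\cO(t)}=\tilde\cO(\sqrt{t})$).

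The only genuinely non-routine point is verifying that the softmax cross-entropy loss $\ell_{\text{log}}(\hat y, y)$ is $1$-Lipschitz in $\hat y$ with respect to the $\ell_2$ norm: its gradient in $\hat y$ is $\text{softmax}(\hat y) - e_y$, whose $\ell_2$ norm is at most $\|\text{softmax}(\hat y)\|_1 + \|e_y\|_2 \le 2$, so strictly one gets $2$-Lipschitz; a tighter argument using $\|\text{softmax}(\hat y) - e_y\|_2 \le \|\text{softmax}(\hat y) - e_y\|_1 \le 2(1-\text{softmax}(\hat y)_y) \le 1$ when... actually a cleaner route is to note $\|\nabla_{\hat y}\ell_{\text{log}}\|_2\le 1$ fails in general, so I would instead absorb the universal constant $2$ into the $\tilde\cO$, or use the hinge loss for which $1$-Lipschitzness is immediate. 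I expect this constant-factor bookkeeping to be the main (minor) obstacle; everything else is a direct chaining of Lipschitzness, Jensen, Assumption~\ref{assmp:onehot_is_good}, and Theorem~\ref{thm:main_result_approximate_CI}. A union bound is unnecessary since only the single high-probability event from Theorem~\ref{thm:main_result_approximate_CI} is used.
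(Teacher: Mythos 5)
Your proof is correct and is essentially identical to the paper's: Lipschitzness of the loss in its first argument, Jensen's inequality to pass from $\E\|\cdot\|$ to $\sqrt{\E\|\cdot\|^2}$, Assumption~\ref{assmp:onehot_is_good} for the $\epsilon_{\text{one-hot}}$ term, and a direct invocation of Theorem~\ref{thm:main_result_approximate_CI} for the excess risk. Your observation about the Lipschitz constant of multiclass softmax cross-entropy is a fair point (the $\ell_2$-Lipschitz constant is in fact $\sqrt{2}$, from $\|\mathrm{softmax}(\hat y)-e_y\|_2^2 \le 2(1-\mathrm{softmax}(\hat y)_y)^2 \le 2$, rather than $1$ as the paper asserts or $2$ as your coarser bound gives), but as you note this constant is absorbed by the $\tilde\cO$ and does not affect the statement.
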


\begin{proof}[Proof of Theorem~\ref{thm:main_classification_approximate_CI}]
	We simply follow the following sequence of steps
	\begin{align*}
		\ell_{\text{clf}}\left(\gamma \hat{\bW}\tilde{\psi}\right)
		&= \E [\ell_{\text{log}}\left(\gamma \hat{\bW}\tilde{\psi}(X_1), Y\right)]\\
		&\le^{(a)} \E \left[\ell_{\text{log}}\left(\gamma f^*_{\cH_1}(X_1), Y\right) + \gamma \|\hat{\bW}\tilde{\psi}(X_1) - f^*_{\cH_1}(X_1)\|\right]\\
		&\le^{(b)} \epsilon_{\text{one-hot}} + \gamma  \sqrt{\E\left[\|\hat{\bW}\tilde{\psi}(X_1) - f^*_{\cH_1}(X_1)\|^2\right]}\\
		&= \epsilon_{\text{one-hot}} + \gamma \sqrt{\ER_{\tilde{\psi}}[{\hat \bW}]}
	\end{align*}
	where $(a)$ follows because $\ell_{\text{log}}$ is 1-Lipschitz and $(b)$ follows from Assumption \ref{assmp:onehot_is_good} and Jensen's inequality.
	Plugging in Theorem \ref{thm:main_result_approximate_CI} completes the proof.
\end{proof}

\section{Four Different Ways to Use CI}
In this section we propose four different ways to use conditional independence to prove zero approximation error, i.e.,
\begin{claim}[informal]
When conditional independence is satisfied: $X_1\bot X_2|Y$, and some non-degeneracy is satisfied, there exists some matrix $\bW$ such that $\E[Y|X_1] = \bW \E[X_2|X_1]$.  	
\end{claim}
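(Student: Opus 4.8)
The statement to prove is the "informal claim" at the very end: under conditional independence $X_1 \perp X_2 \mid Y$ plus a non-degeneracy condition, there exists a matrix $\bW$ such that $\E[Y \mid X_1] = \bW \E[X_2 \mid X_1]$. This is essentially Lemma~\ref{lemma:discrete_case_CI} restated, so the plan is to give the four different proof routes promised by the section title, all arriving at the same conclusion.

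\textbf{Plan.} The unifying idea in every route is the same two-step chain: (i) conditional independence collapses $\E[X_2 \mid X_1, Y]$ to $\E[X_2 \mid Y]$, so that $\psi^*(X_1) := \E[X_2 \mid X_1] = \E[\E[X_2\mid Y]\mid X_1]$ factors as $f(X_1)^\top \bA$ where $f(x_1)_y = P(Y=y\mid X_1=x_1)$ and $\bA_{y,:} = \E[X_2\mid Y=y]$; (ii) the non-degeneracy assumption — that $\bA \in \R^{|\cY|\times d_2}$ has full row rank $k=|\cY|$ — lets us left-invert $\bA$, recovering the posterior $f(X_1) = \bA^{\dagger}\psi^*(X_1)$ linearly from $\psi^*$, and then $\E[Y\mid X_1] = \bY f(X_1) = (\bY\bA^\dagger)\psi^*(X_1)$ where $\bY$ is the matrix whose columns enumerate the label encodings. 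So the first route is exactly the law-of-total-expectation argument from the proof sketch of Lemma~\ref{lemma:discrete_case_CI}, which I would write out cleanly first.

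\textbf{The other three routes.} The second route is the partial-covariance / cross-covariance-operator argument: by Lemma~\ref{lemma:CI_vs_cross_covariance} (and Fact~\ref{fact:CI_vs_partial_covariance}), CI forces $\bSigma_{\phi_1 X_2 \mid \phi_y} = \bSigma_{\phi_1 X_2} - \bSigma_{\phi_1\phi_y}\bSigma_{\phi_y\phi_y}^{-1}\bSigma_{\phi_y X_2} = 0$, i.e. $\bSigma_{\phi_1 X_2} = \bSigma_{\phi_1\phi_y}\bSigma_{\phi_y\phi_y}^{-1}\bSigma_{\phi_y X_2}$; substituting this into the closed form $\psi^*(x_1) = \bSigma_{X_2\phi_1}\bSigma_{\phi_1\phi_1}^{-1}\phi_1(x_1)$ shows $\psi^*$ lies in the column span of $\E^L[\phi_y(Y)\mid \phi_1(X_1)] = \E[\phi_y(Y)\mid X_1]$, and since $\bSigma_{\phi_y X_2}$ has rank $k$ the containment is actually equality up to an invertible transform; then $\E[Y\mid X_1]$ is a fixed linear image of $\E[\phi_y(Y)\mid X_1]$ and hence linear in $\psi^*$. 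The third route is the inverse-covariance / Gaussian-graphical-model viewpoint (the block structure in the appendix): writing the joint precision matrix of $(\phi_1(X_1), X_2, \phi_y(Y))$, CI makes the $(\phi_1(X_1), X_2)$ off-diagonal block vanish, and reading off the conditional expectations from the precision blocks (as in the commented-out Claim with $\E[X_2\mid\phi_1(X_1)]$ and $\E[Y\mid\phi_1(X_1)]$) exhibits both as linear combinations of $\rho_1^\top\phi_1$ and the same $\psi$ direction. The fourth route is the variational / SVD-of-$\cT$ characterization from Section~\ref{sec:learn_joint_distribution}: under CI the operator $\cT = \cL = \cB\circ\cA$ has rank $\le k$ and factors through $L^2(Y)$, so the top-$k$ left singular functions of $\cT$ span the same subspace of $L^2(X_1)$ as $\{x_1 \mapsto P(Y=y\mid X_1=x_1)\}_y$, and $\E[Y\mid X_1] = \cB\circ\id$ lives in that span — giving linearity of the target in the learned representation.

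\textbf{Main obstacle.} The routine parts (law of total expectation, pseudo-inverse manipulations) are easy; the genuinely delicate step, common to all four routes, is handling the \emph{non-degeneracy / rank} hypothesis carefully so that the left-inversion of $\bA$ (equivalently, the rank-$k$-ness of $\bSigma_{X_2 Y}$ or of $\cA$) is justified and so that "span containment" can be upgraded to "the target is recoverable" — in particular making sure $k = |\cY|$ is genuinely the relevant dimension and that $d_2 \ge k$ is used. In the operator/SVD route the extra subtlety is keeping track of which Hilbert space the inner products live in and ensuring $\cA^\dagger$ is bounded, but since we may invoke the earlier machinery (Lemma~\ref{lemma:CI_vs_cross_covariance}, Claim~\ref{claim:nonlinear_conditional_expectation}, and the operator definitions in Section~\ref{sec:proof_cca_ace}) wholesale, this reduces to bookkeeping rather than new mathematics.
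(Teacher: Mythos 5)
Your first three routes map cleanly onto three of the paper's four derivations: your route~1 (law of total expectation on the true conditional, discrete $Y$, factoring $\psi^*(X_1)=f(X_1)^\top\bA$) is the paper's final subsection ``From $\E[X_2|X_1,Y]=\E[X_2|Y]$''; your route~2 (partial covariance vanishes, substitute the factorization $\bSigma_{X_2\phi_1}=\bSigma_{X_2\phi_y}\bSigma_{\phi_y\phi_y}^{-1}\bSigma_{\phi_y\phi_1}$ into the closed form) is the paper's second subsection; your route~3 (block structure of the precision matrix) is the paper's first subsection. Your algebra in each is correct.

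However, your fourth route substitutes for, rather than reproduces, the paper's third derivation (``From Law of Iterated Expectation''). That derivation applies iterated expectation to the \emph{linear} conditional expectation and reads off regression coefficients:
\begin{align*}
\E^L[X_2\mid X_1] = \E^L\bigl[\E^L[X_2\mid X_1,Y]\mid X_1\bigr] = \bA\,X_1 + \bB\,\E^L[Y\mid X_1],
\end{align*}
with $\bA=\bSigma_{X_1X_2|Y}(\bSigma_{X_1X_1|Y})^{-1}$ and $\bB=\bSigma_{X_2Y|X_1}(\bSigma_{YY|X_1})^{-1}$. CI kills $\bA$, and non-degeneracy of $\bB$ (column rank $k$) lets you left-invert to recover $f^*$. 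This is worth having separately from your route~2 precisely because the residual term $\bA X_1$ is controlled by $\|\bSigma_{X_1X_2|Y}\|$, which is the $\epsilon_{\tCI}$ quantity that drives the paper's approximate-CI bound in Section~\ref{sec:beyondCI}; none of your four routes surfaces that decomposition.

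Your substituted SVD-of-$\cT$ route is a valid \emph{fifth} way to use CI, but as written it has a bridging gap: under CI the top-$k$ left singular functions of $\cT=\cL=\cB\circ\cA$ span $\text{Range}(\cB)\supseteq\{x_1\mapsto P(Y=y\mid X_1=x_1)\}_y$, which shows $\E[Y|X_1]$ is linear in the \emph{CCA} representation, not yet in $\psi^*(X_1)=\E[X_2|X_1]$. To close the loop you must note that the $j$-th coordinate of $\psi^*$ is $\cT e_j$ for the coordinate function $e_j(x_2)=x_{2,j}$, and that $\{\cA e_j\}_{j\le d_2}$ span $L^2(Y)$ exactly when the matrix $\bA_{y,:}=\E[X_2|Y=y]$ has rank $k$, so that $\{\psi^*_j\}_j$ span $\text{Range}(\cT)$. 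You gesture at this (``making sure $k=|\cY|$ is genuinely the relevant dimension'') but do not write the bridge out, and without it the route proves a statement about the singular functions rather than about $\E[X_2|X_1]$.
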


We note that for simplicity, most of the results are presented for the jointly Gaussian case, where everything could be captured by linear conditional expectation $\E^L[Y|X_1]$ or the covariance matrices. When generalizing the results for other random variables, we note just replace $X_1,X_2,Y$ by $\phi_1(X_1),\phi_2(X_2),\phi_y(Y)$ will suffice the same arguments. 

\subsection{Inverse Covariance Matrix} 
Write $\bSigma$ as the covariance matrix for the joint distribution $P_{X_1X_2Y}$. 
\begin{align*}
\bSigma =
\begin{bmatrix}
\bSigma_{XX} & \bSigma_{XY}\\
\bSigma^\top_{YY} & \bSigma_{YY}
\end{bmatrix},~~~
\bSigma^{-1} =
\begin{bmatrix}
\bA & \rho\\
\rho^\top & \bB
\end{bmatrix}
\end{align*}
where $\bA\in\Rr^{(d_1+d_2)\times (d_1+d_2)}, \rho\in\Rr^{(d_1+d_2)\times k}, \bB\in\Rr^{k\times k}$.
Furthermore
\begin{align*}
\rho = \begin{bmatrix}
\rho_1\\
\rho_2
\end{bmatrix};~~~
\bA = \begin{bmatrix}
\bA_{11} & \bA_{12}\\
\bA_{21} & \bA_{22}
\end{bmatrix}
\end{align*}
for $\rho_i\in\Rr^{d_i\times k},i=1,2$ and $\bA_{ij}\in\Rr^{d_i\times d_j}$ for $i,j\in\{1,2\}$.

\begin{claim}
	When conditional independence is satisfied, $\bA$ is block diagonal matrix, i.e., $\bA_{12}$ and $\bA_{21}$ are zero matrices.
\end{claim}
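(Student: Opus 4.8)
The plan is to compute the inverse covariance matrix block-by-block using the standard formula that relates the partial correlation structure to conditional independence, which was essentially stated in the ``Some Useful Facts'' section. The key identity is that for a random vector partitioned as $(X, Y)$ (here $X = (X_1, X_2)$), the top-left block $\bA$ of $\bSigma^{-1}$ satisfies $\bA^{-1} = \bSigma_{XX} - \bSigma_{XY}\bSigma_{YY}^{-1}\bSigma_{YX} = \bSigma_{XX\cdot Y}$, the partial covariance matrix of $X$ given $Y$. So it suffices to show that $\bSigma_{XX\cdot Y}$ is block diagonal (with respect to the $X_1, X_2$ split), since the inverse of a block-diagonal positive-definite matrix is block diagonal, and $\bA = (\bSigma_{XX\cdot Y})^{-1}$.

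First I would write out the $(1,2)$ block of $\bSigma_{XX\cdot Y}$ explicitly: $(\bSigma_{XX\cdot Y})_{12} = \bSigma_{X_1 X_2} - \bSigma_{X_1 Y}\bSigma_{YY}^{-1}\bSigma_{Y X_2} = \bSigma_{X_1 X_2 \cdot Y}$, the partial covariance of $X_1$ and $X_2$ given $Y$. Then I would invoke Fact~\ref{fact:CI_vs_partial_covariance} (stated in the appendix), which says precisely that conditional independence $X_1 \perp X_2 \mid Y$ implies $\bSigma_{X_1 X_2 \cdot Y} = 0$ when $Y$ has finite support (using that $\E^L[\,\cdot\mid Y]$ equals $\E[\,\cdot \mid Y]$ for the one-hot encoding of a discrete $Y$, so the partial covariance coincides with the expectation of the conditional covariance, which vanishes under CI). Hence $(\bSigma_{XX\cdot Y})_{12} = 0$ and by symmetry $(\bSigma_{XX\cdot Y})_{21} = 0$, so $\bSigma_{XX\cdot Y}$ is block diagonal, and therefore so is $\bA = (\bSigma_{XX\cdot Y})^{-1}$, i.e. $\bA_{12} = \bA_{21} = 0$. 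For the non-Gaussian case I would note, as the excerpt already remarks, that one replaces $X_1, X_2, Y$ by $\phi_1(X_1), \phi_2(X_2), \phi_y(Y)$ and uses Lemma~\ref{lemma:CI_vs_cross_covariance} / the cross-covariance operator version of the same fact.

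The only genuine obstacle is a bookkeeping one: being careful that the block formula for the inverse of a $2\times 2$ block matrix is being applied with the right partition (the $X$-block versus the $Y$-block) and that the sub-partition of the $X$-block into $X_1, X_2$ interacts correctly — i.e. one needs the elementary fact that if $M$ is symmetric positive definite and block-diagonal in a partition, then $M^{-1}$ is block-diagonal in the same partition, which is immediate. There is also a mild non-degeneracy assumption needed ($\bSigma_{YY}$ and $\bSigma_{XX}$ invertible, and $\bSigma_{XX\cdot Y}$ invertible so that $\bA$ is well-defined), which the paper has already assumed globally. So the proof is essentially a two-line invocation of the partial-covariance/CI correspondence plus the block-diagonal-inverse fact; I would keep it to that length rather than re-deriving the Schur complement identities.
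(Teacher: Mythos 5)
Your proof is correct and follows precisely the route the paper sets up in the ``Some Useful Facts'' appendix: the Schur-complement identity $\bA^{-1} = \bSigma_{XX\cdot Y}$ together with Fact~\ref{fact:CI_vs_partial_covariance}, which says CI annihilates the $(X_1,X_2)$ block of that partial covariance, and then the elementary observation that the inverse of a block-diagonal positive-definite matrix is block diagonal. The paper itself gives no written proof of this claim---it is stated bare, evidently treated as immediate from those two facts---so your argument is the intended one and fills in the step the authors skipped.
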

\begin{lemma}
	We have the following
	\begin{align}
	\E[X_1|X_2] &= (\bA_{11}-\brho_1\brho_1^\top)^{-1}(\brho_1\bar{\rho_2}^\top - \bA_{12})X_2\label{eq:1given2}\\
	\E[X_2|X_1] &= (\bA_{22}-\brho_2\brho_2^\top)^{-1}(\brho_2\bar{\rho_1}^\top - \bA_{21})X_1\label{eq:2given1}\\
	\E[Y|X] &= -B^{-\frac{1}{2}}(\brho_1^\top X_1 + \brho_2^\top X_2)\label{eq:ygiven12}
	\end{align}
	where $\brho_i=\rho_i \bB^{-\frac{1}{2}}$ for $i\in\{1,2\}$.
	Also,
	\begin{align*}
	(\bA_{11}-\brho_1\brho_1^\top)^{-1}\brho_1\brho_2^\top &= \frac{1}{1-\brho_1^\top \bA_{11}^{-1}\brho_1} \bA_{11}^{-1}\brho_1\brho_2^\top\\
	(\bA_{22}-\brho_2\brho_2^\top)^{-1}\brho_2\brho_1^\top &= \frac{1}{1-\brho_2^\top \bA_{22}^{-1}\brho_2} \bA_{22}^{-1}\brho_2\brho_1^\top
	\end{align*}
\end{lemma}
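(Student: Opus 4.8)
The plan is to derive the three conditional-expectation formulas from the block structure of the inverse covariance matrix, using the standard Gaussian fact that $\E[U \mid V] = \bSigma_{UV}\bSigma_{VV}^{-1}V$ together with the Schur-complement identities that relate blocks of $\bSigma^{-1}$ to conditional covariances. The cleanest route avoids inverting $\bSigma$ directly: instead I would work with the partition $\bSigma^{-1} = \begin{bmatrix} \bA & \rho \\ \rho^\top & \bB\end{bmatrix}$ and use the identity that, for a jointly Gaussian vector partitioned as $(W, Y)$ with precision matrix blocked this way, the conditional law of $W$ given $Y$ has precision matrix $\bA$ and mean $-\bA^{-1}\rho Y$. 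Applying this with $W = X = (X_1, X_2)$ immediately gives $\E[X \mid Y] = -\bA^{-1}\rho Y$, and then a second application of the same principle \emph{within} the Gaussian vector $X = (X_1, X_2)$ — whose precision matrix is exactly $\bA$ after we further condition, or rather whose marginal precision is the Schur complement — is what produces \eqref{eq:1given2} and \eqref{eq:2given1}.

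More concretely, for \eqref{eq:ygiven12} I would partition the full precision matrix by grouping $(X_1, X_2)$ versus $Y$; the block-conditioning formula gives that $Y \mid X$ is Gaussian with mean $-\bB^{-1}\rho^\top X = -\bB^{-1}(\rho_1^\top X_1 + \rho_2^\top X_2)$. Rewriting $\bB^{-1}\rho_i^\top = \bB^{-1/2}(\rho_i\bB^{-1/2})^\top = \bB^{-1/2}\brho_i^\top$ using $\brho_i = \rho_i\bB^{-1/2}$ yields $\E[Y\mid X] = -\bB^{-1/2}(\brho_1^\top X_1 + \brho_2^\top X_2)$, which is \eqref{eq:ygiven12}. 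For \eqref{eq:2given1}: the marginal precision matrix of $X=(X_1,X_2)$ is $\bA - \rho\bB^{-1}\rho^\top = \bA - \brho\brho^\top$, which has blocks $\bA_{11}-\brho_1\brho_1^\top$, $\bA_{12}-\brho_1\brho_2^\top$, etc. Applying the block-conditioning formula once more, now \emph{inside} this two-block precision matrix, gives $\E[X_2 \mid X_1] = -(\bA_{22}-\brho_2\brho_2^\top)^{-1}(\bA_{21}-\brho_2\brho_1^\top)X_1 = (\bA_{22}-\brho_2\brho_2^\top)^{-1}(\brho_2\brho_1^\top - \bA_{21})X_1$, which is \eqref{eq:2given1}; \eqref{eq:1given2} is symmetric. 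Finally the two displayed Sherman–Morrison identities follow from $(\bM - uu^\top)^{-1}u = \frac{1}{1 - u^\top \bM^{-1}u}\bM^{-1}u$ applied with $\bM = \bA_{ii}$ and $u$ ranging over the columns of $\brho_i$ (the rank-one update formula extended to the rank-$k$ update $\brho_i\brho_i^\top$, or applied columnwise), after which one right-multiplies by $\brho_j^\top$.

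The main obstacle — really the only nontrivial bookkeeping — is correctly tracking which precision matrix governs which conditional distribution: the full $\bA$ governs $X \mid Y$, but the \emph{Schur complement} $\bA - \brho\brho^\top$ governs the marginal of $X$, and it is the \emph{latter} whose blocks must be used in \eqref{eq:1given2}–\eqref{eq:2given1}. One has to be careful that $\bB^{-1}$, not $\bB$, appears and that the substitution $\brho_i = \rho_i\bB^{-1/2}$ is applied consistently (this requires $\bB\succ 0$, which holds under the non-degeneracy assumption that $\bSigma_{YY}$ is full rank, hence $\bB = (\bSigma_{YY} - \bSigma_{YX}\bSigma_{XX}^{-1}\bSigma_{XY})^{-1}\succ 0$). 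The rank-$k$ Sherman–Morrison step is routine once one notes $(\bA_{ii} - \brho_i\brho_i^\top)^{-1}\brho_i = \bA_{ii}^{-1}\brho_i(\bI - \brho_i^\top \bA_{ii}^{-1}\brho_i)^{-1}$; the displayed scalar form is the special case where one treats $\brho_i^\top\bA_{ii}^{-1}\brho_i$ as effectively scalar (or the statement is interpreted columnwise). I would present the argument for \eqref{eq:2given1} in full and remark that \eqref{eq:1given2} is obtained by swapping indices $1\leftrightarrow 2$.
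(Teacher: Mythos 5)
Your argument is essentially the same as the paper's: both hinge on the Schur-complement facts that the marginal precision matrix of $X=(X_1,X_2)$ is $\bA - \rho\bB^{-1}\rho^\top = \bA - \brho\brho^\top$ and that block-wise inversion of this precision yields the conditional means, with you invoking the standard Gaussian precision-conditioning identities directly while the paper extracts the same relations from $\bSigma\bSigma^{-1}=\bI$ and the explicit block-inverse formula — a purely presentational difference. Your final remark about the Sherman--Morrison identities is correct and in fact more careful than the paper: $\brho_i^\top\bA_{ii}^{-1}\brho_i$ is a $k\times k$ matrix, so the scalar-denominator form $\frac{1}{1-\brho_i^\top\bA_{ii}^{-1}\brho_i}$ only makes sense when $k=1$ (the paper applies the rank-one identity $(\bI-\ba\bb^\top)^{-1}=\bI+\frac{1}{1-\ba^\top\bb}\ba\bb^\top$, which silently assumes a rank-one update), and the correct general form you note is $(\bA_{ii}-\brho_i\brho_i^\top)^{-1}\brho_i = \bA_{ii}^{-1}\brho_i\bigl(\bI_k-\brho_i^\top\bA_{ii}^{-1}\brho_i\bigr)^{-1}$.
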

\begin{proof}
	We know that $\E[X_1|X_2]=\bSigma_{12}\bSigma_{22}^{-1}X_2$ and $\E[X_2|X_1]=\bSigma_{21}\bSigma_{11}^{-1}x_1$, where 
	\begin{align*}
	\bSigma_{XX} = \begin{bmatrix}
	\bSigma_{11} & \bSigma_{12}\\
	\bSigma_{21} & \bSigma_{22}\\
	\end{bmatrix}
	\end{align*}
	First using $\bSigma\bSigma^{-1}=I$, we get the following identities
	\begin{align}
	\bSigma_{XX}\bA+\bSigma_{XY}\rho^\top = \bI\label{eq:id1}\\
	\bSigma_{XY}^\top \bA+\bSigma_{YY}\rho^\top = 0\label{eq:id2}\\
	\bSigma_{XX} \rho+\bSigma_{XY}\bB = 0\label{eq:id3}\\
	\bSigma_{XY}^\top \rho+\bSigma_{YY}\bB = \bI\label{eq:id4}
	\end{align}
	From \Eqref{eq:id3} we get that $\bSigma_{XY}=-\bSigma_{XX}\rho \bB^{-1}$ and plugging this into \Eqref{eq:id1} we get
	\begin{align*}
	\bSigma_{XX}\bA-&\bSigma_{XX}\rho \bB^{-1}\rho^\top = \bI\\
	\implies\bSigma_{XX} &= (\bA-\rho \bB^{-1}\rho^\top)^{-1} = (\bA-\brho\brho^\top)^{-1}\\
	\implies\begin{bmatrix}
	\bSigma_{11} & \bSigma_{12}\\
	\bSigma_{21} & \bSigma_{22}\\
	\end{bmatrix} &= \left(\begin{bmatrix}
	\bA_{11}-\brho_1\brho_1^\top &
	\bA_{12}-\brho_1\brho_2^\top\\
	\bA_{21}-\brho_2\brho_1^\top &
	\bA_{22}-\brho_2\brho_2^\top\\
	\end{bmatrix}\right)^{-1}
	\end{align*}
	We now make use of the following expression for inverse of a matrix that uses Schur complement: $\bM/\alpha= \delta-\gamma \alpha^{-1} \beta$ is the Schur complement of $\alpha$ for $\bM$ defined below
	\begin{align*}
	\text{If } \bM = \begin{bmatrix}
	\alpha & \beta\\
	\gamma & \delta
	\end{bmatrix}, \text{ then, }
	\bM^{-1} = \begin{bmatrix}
	\alpha^{-1}+\alpha^{-1}\beta(\bM/\alpha)^{-1}\gamma\alpha^{-1} & -\alpha^{-1}\beta(\bM/\alpha)^{-1}\\
	-(\bM/\alpha)^{-1}\gamma\alpha^{-1} & (\bM/\alpha)^{-1}
	\end{bmatrix}
	\end{align*}
	For $\bM=(\bA-\brho\brho^\top)$, we have that $\bSigma_{XX}=\bM^{-1}$ and thus 
	\begin{align*}
	\bSigma_{12}\bSigma_{22}^{-1}
	&= -\alpha^{-1}\beta(\bM/\alpha)^{-1}((\bM/\alpha)^{-1})^{-1}\\
	&= -\alpha^{-1}\beta\\
	&= (\bA_{11}-\brho_1\brho_1^\top)^{-1}(\brho_1\brho_2^\top-\bA_{12})
	\end{align*}
	This proves \Eqref{eq:1given2} and similarly \Eqref{eq:2given1} can be proved.
	
	For \Eqref{eq:ygiven12}, we know that $\E[Y|X=(X_1,X_2)] = \bSigma_{YX}\bSigma_{XX}^{-1}X = \bSigma_{XY}^\top\bSigma_{XX}^{-1}X$.
	By using \Eqref{eq:id3} we get $\bSigma_{XY}=-\bSigma_{XX}\rho \bB^{-1}$ and thus
	\begin{align*}
	\E[Y|X=(X_1,X_2)]
	&= -\bB^{-1}\rho^\top\bSigma_{XX}\bSigma_{XX}^{-1}X\\
	&= -\bB^{-1}\rho^\top X = \bB^{-1}(\rho_1^\top X_1 + \rho_2^\top X_2)\\
	&= -\bB^{-\frac{1}{2}}(\brho_1^\top X_1 + \brho_2^\top X_2)
	\end{align*}
	For the second part, we will use the fact that $(\bI-\ba\bb^\top)^{-1} = \bI + \frac{1}{1-\ba^\top \bb}\ba\bb^\top$. Thus
	\begin{align*}
	(\bA_{11}-\brho_1\brho_1^\top)^{-1}\brho_1\brho_2
	&= (\bI-\bA_{11}^{-1}\brho_1\brho_1^\top)\bA_{11}^{-1}\brho_1\brho_2^\top\\
	&= (\bI+\frac{1}{1-\brho_1^\top \bA_{11}^{-1}\brho_1}\bA_{11}^{-1}\brho_1\brho_1)\bA_{11}^{-1}\brho_1\brho_2^\top\\
	&= \bA_{11}^{-1}(I+\frac{1}{1-\brho_1^\top \bA_{11}^{-1}\brho_1}\brho_1\brho_1\bA_{11}^{-1})\brho_1\brho_2^\top\\
	&= \bA_{11}^{-1}(\brho_1\brho_2^\top+\frac{\brho_1\bA_{11}^{-1}\brho_1}{1-\brho_1^\top \bA_{11}^{-1}\brho_1}\brho_1\brho_2^\top)\\
	&= \bA_{11}^{-1}\brho_1\brho_2^\top(1+\frac{\brho_1\bA_{11}^{-1}\brho_1}{1-\brho_1^\top \bA_{11}^{-1}\brho_1})\\
	&= \frac{1}{1-\brho_1^\top \bA_{11}^{-1}\brho_1}A_{11}^{-1}\brho_1\brho_2^\top
	\end{align*}
	The other statement can be proved similarly.
\end{proof}

\begin{claim}
\begin{align*}
\E[X_2|X_1] = (\bA_{22} -\bar\rho_2\bar\rho_2^\top )^{-1}\bar \rho_2\bar \rho_1^\top X_1. 
\E[Y|X_1] = -\bB^{-1/2}\bar \rho_1^\top X_1 - \bB^{-1/2}\brho_2^\top \E[X_2|X_1]
\end{align*}
Therefore $\E[Y|X_1]$ is in the same direction as $\E[X_2|X_1]$. 
\end{claim}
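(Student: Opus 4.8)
The plan is to read off the two displayed identities directly from the inverse-covariance formulas already established in this section, and then to convert the ``same direction'' statement into the assertion $e_{\apx}(\psi^*)=0$ by a short rank argument, thereby recovering Lemma~\ref{lemma:linear_CI} along a second route.

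First I would invoke \eqref{eq:2given1}, which states $\E[X_2|X_1]=(\bA_{22}-\brho_2\brho_2^\top)^{-1}(\brho_2\brho_1^\top-\bA_{21})X_1$ in general. Since conditional independence $X_1\bot X_2\mid Y$ forces the block $\bA$ of $\bSigma^{-1}$ to be block diagonal (the claim preceding the lemma), we have $\bA_{21}=0$, and the first displayed identity follows immediately. For the second identity, I would start from \eqref{eq:ygiven12}, namely $\E[Y|X_1,X_2]=-\bB^{-1/2}(\brho_1^\top X_1+\brho_2^\top X_2)$, and apply the tower property $\E[Y|X_1]=\E[\,\E[Y|X_1,X_2]\mid X_1\,]$; pulling the (deterministic, linear) operators through the conditional expectation in $X_2$ yields $\E[Y|X_1]=-\bB^{-1/2}\brho_1^\top X_1-\bB^{-1/2}\brho_2^\top\,\E[X_2|X_1]$, which is the second display.

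Next I would make the ``same direction'' claim precise as $e_{\apx}(\psi^*)=0$. Writing $M:=(\bA_{22}-\brho_2\brho_2^\top)^{-1}\brho_2$, the first identity reads $\psi^*(X_1)=\E[X_2|X_1]=M\,\brho_1^\top X_1$, and substituting this into the second gives $f^*(X_1)=\E[Y|X_1]=-\bB^{-1/2}(\bI+\brho_2^\top M)\,\brho_1^\top X_1$. Hence both $\psi^*$ and $f^*$ are linear images of the same random vector $\brho_1^\top X_1$. Under the non-degeneracy hypothesis of Lemma~\ref{lemma:linear_CI} (that $\bSigma_{X_2Y}$, equivalently $\rho_2$ and therefore $\brho_2=\rho_2\bB^{-1/2}$, has full column rank $k$), the matrix $(\bA_{22}-\brho_2\brho_2^\top)^{-1}$ is invertible, so $M$ has full column rank $k$ and admits a left inverse $M^{\dagger}$ with $M^{\dagger}M=\bI_{k}$. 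Then $\brho_1^\top X_1=M^{\dagger}\psi^*(X_1)$ almost surely, so $f^*(X_1)=\bW^{*}\psi^*(X_1)$ with $\bW^{*}:=-\bB^{-1/2}(\bI+\brho_2^\top M)M^{\dagger}$; i.e. $e_{\apx}(\psi^*)=0$, in agreement with Lemma~\ref{lemma:linear_CI}.

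The only real obstacle is the bookkeeping around the non-degeneracy condition: one must verify that the hypothesis ``$\bSigma_{X_2Y}$ has rank $k$'' (already used in Lemma~\ref{lemma:linear_CI}) genuinely translates, through $\bSigma_{XY}=-\bSigma_{XX}\rho\bB^{-1}$ and $\brho_2=\rho_2\bB^{-1/2}$ with $\bSigma_{XX},\bB$ invertible, into $\brho_2$ having full column rank, and hence that $M$ is injective and $M^{\dagger}M=\bI_k$. Beyond that, the argument is pure substitution into identities already proved in the excerpt.
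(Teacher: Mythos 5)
Your proof is correct and takes the route the paper leaves implicit: the claim is asserted without proof, sitting immediately after the lemma that establishes \eqref{eq:2given1} and \eqref{eq:ygiven12}, and the intended argument is exactly yours — substitute $\bA_{21}=0$ (block-diagonality of $\bA$ under CI) into \eqref{eq:2given1}, apply the tower property to \eqref{eq:ygiven12}, and read off that both $\psi^*(X_1)$ and $f^*(X_1)$ are linear in $\brho_1^\top X_1$. The only substantive addition you make is to convert the loose ``same direction'' phrase into $e_{\apx}(\psi^*)=0$ via the left-inverse $M^\dagger$, which is a faithful sharpening.

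On the one point you flag as a potential obstacle — that $\rank(\bSigma_{X_2Y})=k$ must translate into $\brho_2$ having full column rank — the bookkeeping does close. From \eqref{eq:id3} and the block inverse of $\bSigma_{XX}$ one gets $\rho_2 = -\bSigma_{X_2X_2|X_1}^{-1}\bSigma_{X_2Y|X_1}\bB$, and under CI the identity $\bSigma_{X_1X_2}=\bSigma_{X_1Y}\bSigma_{YY}^{-1}\bSigma_{YX_2}$ yields $\bSigma_{X_2Y|X_1}=\bSigma_{X_2Y}\bSigma_{YY}^{-1}\bSigma_{YY|X_1}$; both outer factors are invertible, so $\bSigma_{X_2Y|X_1}$, hence $\rho_2$ and $\brho_2=\rho_2\bB^{-1/2}$, inherit rank $k$ from $\bSigma_{X_2Y}$. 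This secures $M^\dagger M=\bI_k$ and the conclusion.
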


\subsection{Closed form of Linear Conditional Expectation} 
Refer to Claim \ref{claim:linear_conditional_expectation} and proof of Lemma \ref{lemma:linear_CI}. As this is the simplest proof we used in our paper.

\subsection{From Law of Iterated Expectation}

\begin{align*}
\E^L[X_2|X_1] = &\E^L[\E^L[ X_2|X_1,Y] |X_1 ]\\
= & \E\left[ [\bSigma_{X_2X_1} , \bSigma_{X_2 Y}  ]   \begin{bmatrix}
\bSigma_{X_1X_1} & \bSigma_{X_1Y} \\
\bSigma_{YX_1} & \bSigma_{YY} 
\end{bmatrix}^{-1}
\begin{bmatrix}
X_1 \\
Y 
\end{bmatrix}\mid X_1 \right] \\
= & \bA X_1 + \bB \E^L[Y|X_1].
\end{align*}
Using block matrix inverse,
\begin{align*}
\bA&=(\bSigma_{X_2X_1} -\bSigma_{X_2Y}\bSigma_{YY}^{-1}\bSigma_{YX_1} )   (\bSigma_{X_1X_1} - \bSigma_{X_1Y}\bSigma_{YY}^{-1}\bSigma_{YX_1} )^{-1} \in \mathbb{R}^{d_2 \times d_1}\\
&= \bSigma_{X_1X_2| Y}(\bSigma_{X_1X_1| Y})^{-1}\\
\bB &= \bSigma_{X_2 Y| X_1}(\bSigma_{YY| X_1})^{-1} \in \mathbb{R}^{d_2 \times \mathcal{Y}}.
\end{align*}

Therefore in general (without conditional independence assumption) our learned representation will be $\psi( x_1) = \bA x_1 + \bB f^* (x_1)$, where $f^*(\cdot):=\E^L[Y|X_1]$. 

It's easy to see that to learn $f^*$ from representation $\psi$, we need $A$ to have some good property, such as light tail in eigenspace, and $B$ needs to be full rank in its column space.

Notice in the case of conditional independence, $\bSigma_{X_1X_2| Y}=0$, and $A=0$. Therefore we could easily learn $f^*$ from $\psi$ if $X_2$ has enough information of $Y$ such that $\bSigma_{X_2 Y| X_1}$ is of the same rank as dimension of $Y$.

\subsection{From $\E[X_2|X_1,Y]=\E[X_2|Y]$}

\begin{proof}
	Let the representation function $\psi$ be defined as follows, and let we use law of iterated expectation:
	\begin{align*}
	\psi(\cdot):= \E[X_2|X_1 ] = & \E[\E[X_2|X_1,Y]|X_1] \\
	= &\E[\E[X_2|Y]|X_1]\tag{uses CI}\\
	= & \sum_{y}  P(Y=y|X_1)\E[X_2|Y=y]\\
	=: & f(X_1)^\top A,
	\end{align*}
	where $f: \R^{d_1}\rightarrow \Delta_{\cY} $ satisfies $f(x_1)_y = P(Y=y|X_1=x_1)$, and $\bA \in \R^{\cY\times d_2} $ satisfies $\bA_{y,:} = \E[X_2|Y=y]$. Here $\Delta_d$ denotes simplex of dimension $d$, which represents the discrete probability density over support of size $d$. 
	
	Let $\bB=\bA^{\dagger}\in \R^{\cY\times d_2} $ be the pseudoinverse of matrix $\bA$, and we get $\bB\bA=\bI$ from our assumption that $A$ is of rank $|\cY|$. Therefore $f(\bx_1) = \bB \psi(\bx_1),\forall x_1$. Next we have:
	\begin{align*}
	\E[Y|X_1=\bx_1] = & \sum_{y}   P(Y=y|X_1=\bx_1)\times y \\
	= &  \hat{\bY} f(\bx_1)  \\
	= & (\hat \bY \bB)\cdot \psi(X_1).
	\end{align*}
	Here we denote by $\hat \bY\in \R^{k\times \cY}, \hat \bY_{:,y}=y$ that spans the whole support $\cY$.
	Therefore let $\bW^*=\hat \bY \bB$ will finish the proof. 
	
\end{proof}


\newpage
\section{More on the experiments}
\label{sec:more_exp}
In this section, we include more experiment setup and results.

\begin{figure}[!tb]
   \centering     
   \hspace{-3pt}
   \subfigure{\includegraphics[width=0.46\textwidth]{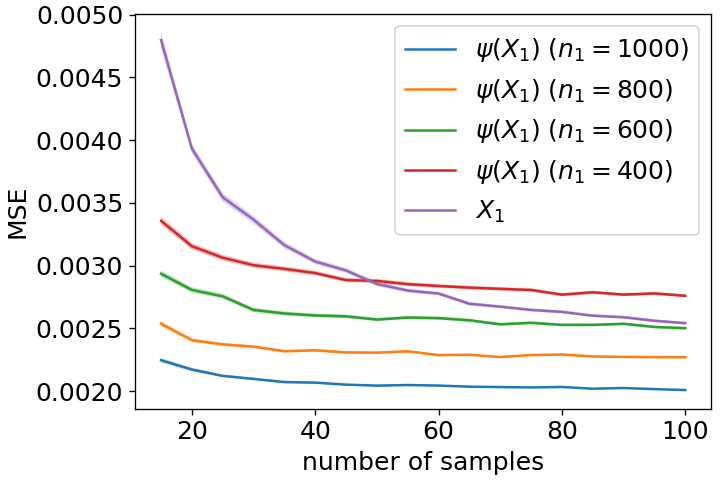}}
   \hspace{-5pt}
   \subfigure{\includegraphics[width=0.46\textwidth]{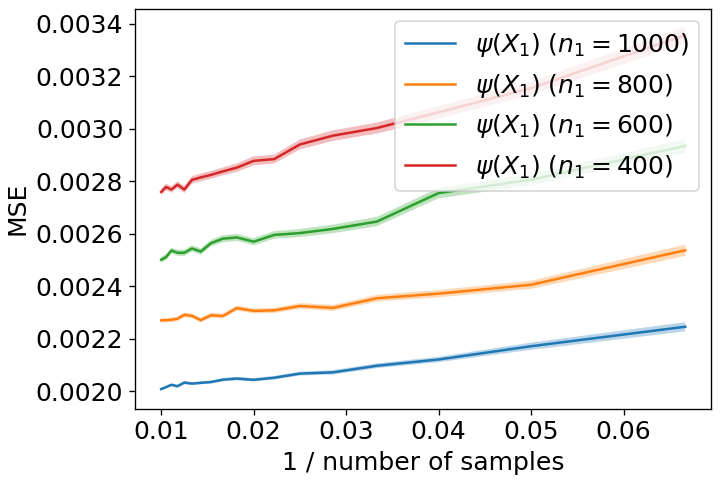}}
   \hspace{-10pt}
   \caption{\textbf{Left}: MSE of using $\psi$ to predict $Y$ versus using $X_1$ directly to predict $Y$. Using $\psi$ consistently outperforms using $X_1$. \textbf{Right}: MSE of $\psi$ learned with different $n_1$. The MSE scale with $1 / n_2$ as indicated by our analysis. Simulations are repeated 100 times, with the mean shown in solid line and one standard error shown in shadow.}
   \label{fig:simulation-appendix}
\end{figure}

\paragraph{Simulations. }

All the experiments are performed on a desktop computer with Intel i7-8700K, 16GB RAM.

Following Theorem \ref{thm:main_result_approximate_CI}, we know that the Excessive Risk (ER) is also controlled by (1) the number of samples for the pretext task ($n_1$), and (2) the number of samples for the downstream task ($n_2$), besides $k$ and $\epsilon_{CI}$ as discussed in the main text. In this simulation, we enforce strict conditional independence, and explore how ER varies with $n_1$ and $n_2$.
We generate the data the same way as in the main text, and keep $\alpha=0, k=2$, $d_1=50$ and $d_2=40$
We restrict the function class to linear model. 
Hence $\psi$ is the linear model to predict $X_2$ from $X_1$ given the pretext dataset.
We use Mean Squared Error (MSE) as the metric, since it is the empirical version of the ER. 
As shown in Figure \ref{fig:simulation-appendix}, $\psi$ consistently outperforms  $X_1$ in predicting $Y$ using a linear model learnt from the given downstream dataset, and ER does scale linearly with $1/n_2$, as indicated by our analysis.

\begin{figure}[!tb]
   \centering     
   \subfigure{\includegraphics[height=110pt]{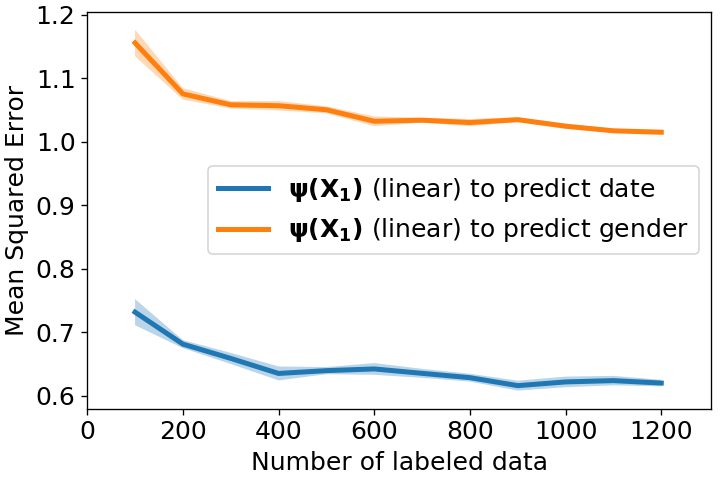}}
   \subfigure{\includegraphics[height=110pt]{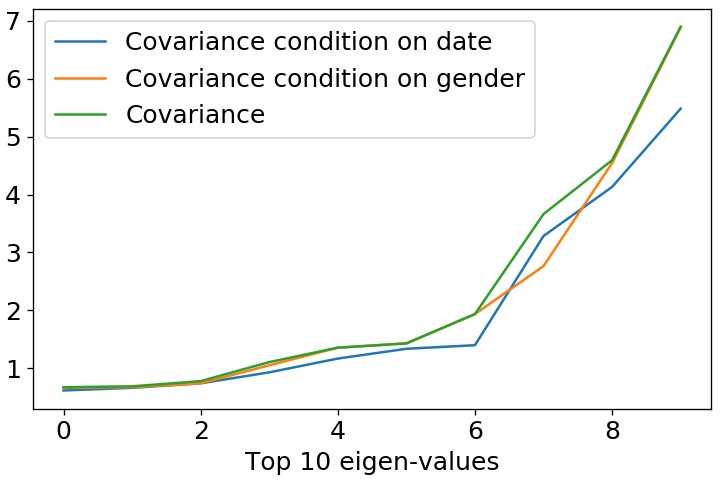}}
   \caption{\textbf{Left}: Mean Squared Error comparison of predicting gender and predicting date. \textbf{Right}: the spectrum comparison of covariance condition on gender and condition on date.}
   \label{fig:yb-gender-appendix}
\end{figure}

\paragraph{Computer Vision Task. }
For the context encoder part, we use all the recommended hyperparameter as in the provided source codes.
For the downstream resnet18 regression, we perform grid search over the hyperparameters to achieve best performance. Specifically, we set the batch size to be $24$, and traing the resnet18 for $50$ epoches. One pass of training (loops over all the settings with different number of labeled data) is finished within $6$ hours.
All the experiments are performed on a desktop computer with Intel i7-8700K, 16GB RAM, and NVIDIA Geforce 1080.
Training of the context encoder is finished within $12$ hours.
The yearbook dataset is distributed under BSD license.

Following the same procedure, we try to predict the gender $Y_G$. We normalize the label ($Y_G, Y_D$) to unit variance, and confine ourself to linear function class. That is, instead of using a context encoder to impaint $X_2$ from $X_1$, we confine $\psi$ to be a linear function. As shown on the left of Figure \ref{fig:yb-gender-appendix}, the MSE of predicting gender is higher than predicting dates. We find that $\|\bSigma_{\X_1\X_1}^{-1/2} \bSigma_{\X_1 X_2|Y_G } \|_F=9.32$, while $\|\bSigma_{\X_1\X_1}^{-1/2} \bSigma_{\X_1 X_2|Y_D } \|_F=8.15$. Moreover, as shown on the right of Figure \ref{fig:yb-gender-appendix}, conditioning on $Y_D$ cancels out more spectrum than conditioning on $Y_G$. In this case, we conjecture that, unlike $Y_D$, $Y_G$ does not capture much dependence between $X_1$ and $X_2$. And as a result, $\epsilon_{CI}$ is larger, and the downstream performance is worse, as we expected.


\paragraph{NLP Task. }

\begin{figure}[!tb]
   \centering     
   \subfigure{\includegraphics[width=0.40\textwidth]{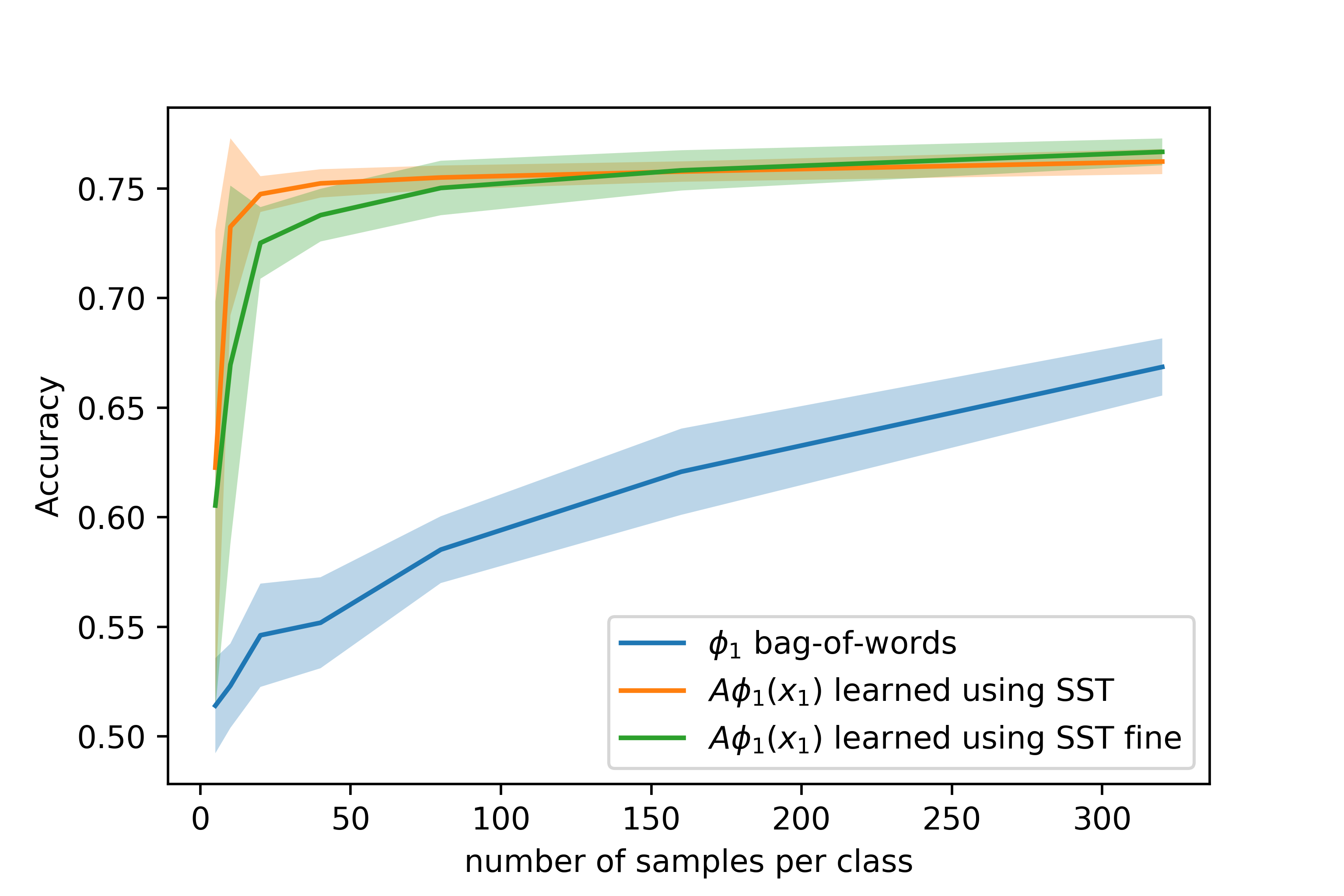}}
   \hspace{5pt}
   \subfigure{\includegraphics[width=0.40\textwidth]{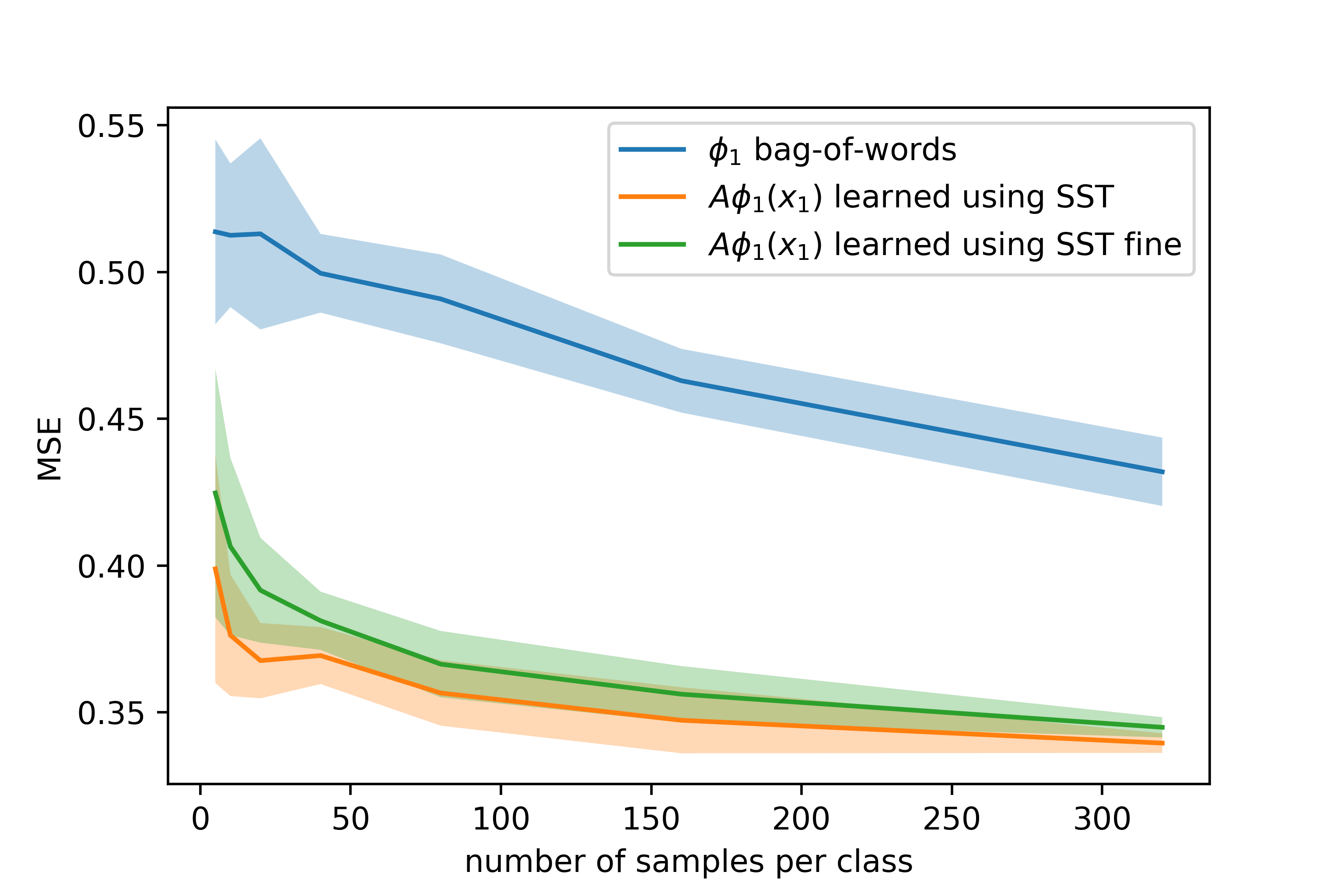}}
   \caption{Performance on SST of baseline $\phi_1(\bx_1)$, i.e. bag-of-words, and learned $\psi(\bx_1)$ for the two settings. \textbf{Left:} Classification accuracy, \textbf{Right:} Regression MSE.}
   \label{fig:exp-sst}
\end{figure}

We look at the setting where both $\cX_1$ and $\cX_2$ are the set of sentences and perform experiments by enforcing CI with and without latent variables.
The downstream task is sentiment classification with the Stanford Sentiment Treebank (SST) dataset \citep{socher2013recursive}, where inputs are movie reviews and the label set $\cY$ is $\{\pm1\}$.
We learn a linear representation $\psi(X_1) = \bB \phi(X_1)$ in the SSL phase as defined in Section~\ref{sec:beyondCI}.
Here we $X_1$, we pick $\phi(X_1)$ to be the bag-of-words representations of the movie review $X_1$, which has a vocabulary size of 13848
For $X_2$ we use a $d_2=300$ dimensional embedding of the sentence, that is the mean of word vectors (random Gaussians) for the words in the review $X_2$.
For SSL data we consider 2 settings, (a) enforce CI with the labels $\cY$, (b) enforce CI with extra latent variables, for which we use fine-grained version of SST with label set $\bar{\cY}=\{1,2,3,4,5\}$\footnote{Ratings $\{1,2\}$ correspond to $y=-1$ and $\{4,5\}$ correspond to $y=1$}..
In this setting, for every label $y\in\cY$ (or $\bar{y}\in\bar{\cY}$), we independently sample movie reviews $X_1$ and $X_2$ from the class $y$ (or $\bar{y}$), thus simulating the CI (or approximate CI) condition.
We test the learned $\psi$ on SST binary task with linear regression and linear classification; results are presented in Figure~\ref{fig:exp-sst}.
We observe that in both settings $\psi$ outperforms $\phi_1$, especially in the small-sample-size regime.
Exact CI is better than CI with latent variables, as suggested by theory.

The function $\psi$ (or equivalently matrix $\bB\in\R^{300\times 13848}$) is learnt by minimizing $\|X_2 - \bB \phi(X_1)\|^2$ averaged over the SSL train data with an $\|\cdot\|^2_F$ penalty on the matrix $\bB$. 
We use the scikit-learn RidgeRegressionCV\footnote{\url{https://scikit-learn.org/stable/modules/generated/sklearn.linear_model.RidgeCV.html}} solver for this with regularizer parameters in the list $[0.001, 0.1, 10, 1000]$.
Plotting Figure~\ref{fig:exp-sst} took less than an hour when using 8 Intel(R) Xeon(R) Silver 4214 CPUs on a cluster.

\end{document}